\documentclass[12pt,leqno]{amsart} 

\usepackage{graphicx,amssymb,colordvi,textcomp,latexsym} 

\usepackage{color}
\usepackage{soul}

\newcommand{\examend}{\hfill \mbox{\textreferencemark}}
\newcommand{\rend}{\hfill $\maltese$}

\newcommand{\fps}{$\text{FPS }$}
\newcommand{\wbxi}{{\widetilde{\boldsymbol{\xi}}}}

\newcommand{\bxi}{{\boldsymbol{\xi}}}

\newcommand{\grad}[1]{{\text{\rm grad}({#1})}}
\newcommand{\erf}[1]{{\text{\rm erf}({#1})}}

\newcommand{\erfc}[1]{{\text{\rm erfc}({#1})}}
\newcommand{\terfc}{{\text{\emph{erfc}}()}}
\newcommand{\erfcb}[1]{{\text{\rm erfc}\left({#1}\right)}}
\newcommand{\erfb}[1]{{\text{\rm erf}\left({#1}\right)}}

\newcommand{\JJ}{{\mathbf{J}}}

\newcommand{\jj}{{\mathbf{j}}}
\newcommand{\TT}{{\mathbf{T}}}

\newcommand{\is}[1]{{\mathbf{#1}}}

\newcommand{\dd}{\mbox{$\;|\;$}}

\newcommand{\Refb}[1]{(\ref{#1})}

\newcommand{\real}{{\mathbb{R}}}

\newcommand{\xx}{{\mathbf{x}}}

\newcommand{\bBB}{{\mathbf{B}}}

\newcommand{\bOmega}{{\boldsymbol{\Omega}}}
\newcommand{\bbbeta}{{\boldsymbol{\bbeta}}}
\newcommand{\bbgamma}{{\boldsymbol{\bgamma}}}

\newcommand{\defoo}{\stackrel{\mathrm{def}}{=}}

\newcommand{\vv}{{\is{v}}}
\newcommand{\bvv}{\overline{{\is{v}}}}

\newcommand{\VV}{{\is{V}}}
\newcommand{\ww}{{\is{w}}}
\newcommand{\bww}{\overline{{\is{w}}}}
\newcommand{\tww}{\widetilde{{\is{w}}}}
\newcommand{\bbeta}{\overline{\beta}}
\newcommand{\bgamma}{\overline{\gamma}}
\newcommand{\bdelta}{\overline{\delta}}

\newcommand{\WW}{{\is{W}}}
\newcommand{\wWW}{{\widetilde{\WW}}}
\newcommand{\uu}{{\is{u}}}
\newcommand{\UU}{{\is{U}}}

\newcommand{\RS}{{\is{r_s}}}
\newcommand{\RA}{{\is{r_a}}}
\newcommand{\RM}{{\is{r_m}}}
\newcommand{\wLambda}{{\widetilde{\Lambda}}}
\newcommand{\wlambda}{{\widetilde{\lambda}}}
\newcommand{\wTheta}{{\widetilde{\Theta}}}
\newcommand{\wtheta}{{\widetilde{\theta}}}
\newcommand{\wZZ}{{\widehat{Z}}}
\newcommand{\wzz}{{\widehat{z}}}

\newcommand{\wTT}{{\widehat{T}}}

\newcommand{\pint}{\mbox{$\mathbb{N}$}}
\newcommand{\sgn}{\mbox{sgn}}
\newcommand{\arr}{{\rightarrow}}

\newcommand{\sgstar}{{\boldsymbol{\Sigma^\star}}}

\newtheorem{lemma}{Lemma}[section]

\newtheorem{prop}[lemma]{Proposition}
\newtheorem{thm}[lemma]{Theorem}

\theoremstyle{definition}
\newtheorem{Def}[lemma]{Definition}
\newtheorem{exam}[lemma]{Example}
\newtheorem{exams}[lemma]{Examples}

\theoremstyle{remark}
\newtheorem{rem}[lemma]{Remark}
\newtheorem{rems}[lemma]{Remarks}

\title[Population loss in shallow ReLU networks]{Population loss in shallow ReLU networks:\\
Bias \& families of critical points} 
\author{Michael Field}
\address{Michael Field, Department of Mechanical Engineering, UC
Santa Barbara, CA 93106}
\email{mikefield@gmail.com}
\date{\today}
\begin{document}
\begin{abstract}
The main result presented is a formula for the population loss in the student-teacher kernel model 
that is applicable to shallow ReLU networks with bias. This extends previous work of Choo \& Saul (2009) and Brutzkus and Globerson (2017).
The formula makes essential use of Owen's $T$-function. The necessary theory of the $T$ function is given and a high precision coding using \emph{MPFR} for the $T$ function, 
based on an algorithm of Komelj (2023), is available on request. It is shown that various families of spurious minima described in past papers of 
Arjevani and the author extend to
biased networks and that the loss is always strictly decreased when bias is added.  The change in landscape geometry caused by adding bias appears to be relatively mild. 
Only the simplest examples  are described in this paper where it is assumed that the number of inputs is equal to the number of neurons (this restriction is for reasons of length). A review of relevant previous results 
on unbiased networks is included. 
Aside from Gaussian statistics, the main mathematical tools and ideas come from analytic geometry (analytic and subanalytic sets,
the Curve Selection Lemma). 
\end{abstract}
\maketitle

\tableofcontents

\section{Introduction}
In a past collaboration with Yossi Arjevani, a detailed analysis was made of families of spurious minima that appeared in a bias-free shallow student-teacher 
ReLU network with the identity matrix as target~\cite{ArjevaniField2019a,ArjevaniField2020b,ArjevaniField2021,ArjevaniField2021x,ArjevaniField2022b,ArjevaniField2022a}.
The investigation, initiated by Arjevani, was motivated in part by earlier work of Safran \& Shamir~\cite{SafranShamir2018} who undertook a numerical investigation 
of spurious minima in low dimensional bias free ReLU networks with $d \le 20$ inputs, $k$ neurons,  $d \le k \le d+2$, and the identity matrix $I_d$ as target. The authors remarked that
the spurious minima they found always had a symmetric structure (formally, after a permutation of rows, the solutions were fixed by a subgroup of the symmetric group $S_d$ acting diagonally on the rows and columns of the parameter matrix $\WW \in M(d,d)$). As Arjevani observed, the symmetric structure was 
associated to group invariance properties of the loss inherited from the symmetry of the target.

Using this symmetric structure, one can apply representation theory of the symmetric group to determine the spectrum of the Hessian; 
in particular, the asymptotics as $d \arr \infty$~\cite{ArjevaniField2020b,ArjevaniField2022b}. These results made essential use of a result in \cite{ArjevaniField2021x} that each
family of spurious minima had (computable) fractional power series expansions in $d^{-\frac{1}{2}}$ or $d^{-\frac{1}{4}}$ (the latter only if the number of neurons $k$ was greater than $d$). 
Along similar lines, it was possible to give precise asymptotics for the loss and show that there were `good' (type II) and `bad' (type I) spurious minima. 
The type II spurious minima had rapid decay of the loss to zero, while for the type I families 
the loss converged to a strictly positive constant as $d \arr \infty$. Some of this work is reviewed in more detail in Section~\ref{sec: revsymm}.

Taking account of the applications, the discussion of symmetry breaking will be in the framework of gradient dynamics and bifurcation. Specifically by gradient dynamics of the loss in the student-teacher model and the creation or annihilation of local minima.
Given a symmetric model, there are two different types of symmetry breaking that can occur. 
If the symmetry of the model is fixed, then as model parameters are varied new local minima may appear typically with less symmetry. This is characteristic of \emph{symmetry breaking}.
In symmetric bifurcation theory this phenomenon occurs 
when a group orbit of high symmetry critical points defining, for example, local minima, bifurcates into a group orbit of critical points defining local minima but each with less symmetry. 
However, this does \emph{not} happen in an obvious way in our setting: the global minima do not bifurcate. Instead, new 
(local) minima appear through, for example, bifurcation on subspace representations associated to the symmetric group on the parameter space (see~\cite[\S 4]{ArjevaniField2022a} for examples). Basically, this
is bifurcation from a saddle to a local minimum and is characteristic of all the examples described later in Section~15.

On the other hand, \emph{forced} symmetry breaking occurs when the symmetry of the \emph{model} is reduced, for example by a generic perturbation of the target $I_d$. This will sharply reduce the number of critical points
defining the \emph{global} minimum with similar effects on all the local minima. As long as the original critical points defining local minima are non-degenerate and the perturbation is not too
large, the perturbed critical points will continue to define local minima. 

Because the critical points defining local minima in the symmetric case are typically non-degenerate for sufficiently large $d$, the spectral properties persist under perturbations
of the target matrix $I_d$.  More formally, pick a $p \times p$ matrix $A$ which is not too far from the identity $I_p$ and for $ d \ge p$, embed $A$ as the lower
$p \times p$ diagonal block in $I_d$ to obtain a `target' matrix $T$ which has 
less symmetry than $I_d$ (typically $S_{d-p}$ under the diagonal action on the upper $d-p$ diagonal block). 
In this setup, basic results such as the existence of fractional power series for families of critical points, carry over to the partially asymmetric case. 
Computation is straightforward using standard path following techniques.

However, there is a problem. As pointed out in the recent paper by Zhang \emph{et al.}~\cite{ZSL2024}, the space of functions that 
can be modelled by a bias free ReLU network is always a \emph{proper} closed vector subspace of the 
space of continuous functions\footnote{This result is presumably well known~\cite{Pinkus1999}---and is not the main focus of the paper by Zhang \emph{et al.}---but 
reading their preprint was the trigger for this paper.}.
In summary,  without further work,  no inferences can be drawn about stability in a biased ReLU network under forced symmetry breaking on the basis of results on bias free ReLU networks. 
The aim of this paper is to develop the appropriate tools that show that the examples found previously for bias free networks do carry over to biased networks. But the 
correspondence is non-trivial since a local minimum of a bias free network is generally not a critical point of a biased network, with zero bias. 

So as to keep the paper to a reasonable length, the focus will be on examples with $k = d$  
(not `over-specified' in the terminology of \cite{ArjevaniField2022b}) and there are no computations relating to the Hessian spectrum.

\begin{rems} 
(1) Adding bias to the network adds strong constraints on the existence of families of spurious minima. This is on account of multiple terms in the formula for the loss which contain the
expressions $-\bgamma \cot(\lambda) + \bbeta \csc(\lambda)$, where $\bgamma,\bbeta$ are respectively (normalized) biases for the teacher and student network and $\lambda$ is the angle between  the corresponding rows of the target and parameter matrix.
Restricting to the case $k = d$, the typical behavior as $d \arr \infty$ is that every row of the parameter matrix converges to $\pm$ a row of the target (this holds for targets in a neighbourhood of $I_d$). Since $ \cot(\lambda) \arr \pm \infty$ and
$\csc(\lambda) \arr +\infty$ as $\theta \arr 0+$ or $\pi-$, this imposes strong constraints on $\bbeta = \bbeta(d)$ as $d \arr \infty$. In particular, $|\bgamma(d) - \bbeta(d)| \arr 0$ fast enough as $d \arr \infty$ else the
term $-\bgamma \cot(\lambda) + \bbeta \csc(\lambda)$ will diverge to $\pm \infty$. A natural question is to ask about the effects of noise and random perturbations even when $\bbeta(d)-\bgamma(d)$ has good asymptotics.\\
(2) From a mathematical perspective, the framework described in this paper shows that there are interesting 
connections to analytic geometry and potentially complex analysis. 
Further, unlike when sigmoidal 
functions are used to build the activation function, much of the geometry is attractive and  natural. Perhaps this setting has the potential to form 
a bridge between the limiting regimes, when $d$ and/or $k$ $\arr \infty$, 
and the natural regimes $d,k < \infty$.
\rend
\end{rems}

We conclude with a summary of the paper by section.

Section~\ref{sec2 review} starts with a short review of shallow neural networks (with and without bias), notational conventions (which are strictly followed and avoid use of the transpose) 
and the fundamental approximation theorem for shallow neural networks.  
The \emph{population loss} or just the \emph{loss} is defined as an ergodic limit of the empirical loss, using Gaussian statistics. The section concludes with a 
review of the results of Choo \& Saul~\cite{ChooSaul2009} and Brutzkus \& Globerson~\cite{BrutzkusGloberson2017} together with remarks about the subanalyticity of the loss function in the
bias free case. 

Sections 3 through 11 are devoted to obtaining explicit formulas for the loss and gradient of the loss in a shallow ReLU network with bias. Section~3 starts with an introduction to the problem of finding an expression for the loss. This amounts to 
evaluation of the expectation
\[
f((\ww,\beta),(\vv,\gamma)) = \|\ww\|\|\vv\| \mathbb{E}_{\xx \sim \mathcal{N}}(\sigma(\bww\xx-\bbeta)\sigma(\bvv\xx -\bgamma))
\]
where we use the \emph{normalized} biases $\bbeta = \beta/\|\ww\|$,  $\bgamma = \gamma/\|\vv\|$,  $\bww = \ww/|\ww\| $, $\bvv = \vv/\|\vv\|$. and 
$\mathcal{N}$ denotes the standard normal distribution $N(0,I_d)$ on $\real^d$. 
Rather than using polar coordinates (the method used in~\cite{ChooSaul2009}), we present a proof in 3.1 of Choo \& Saul's result using Cartesian coordinates on $\real^2$ as this 
indicates some of the difficulties of handling the general case in a simple situation where the result is known. The reader will note how all the terms involving error functions, exponentials,
cosecants and cotangents disappear in the final steps of the computation (remarkable cancellations also occur in the computation of the gradient of the loss in a biased ReLU network).

Sections 3.2, 3.3 evaluate the defining integrals for $f(\ww,\vv)$ with one exception:
\[
\sqrt{\frac{\pi}{2}}\int_{\bgamma}^\infty e^{-\frac{t^2}{2}} \erfb{\frac{-t\cot(\theta) +\bbeta \csc(\theta)}{\sqrt{2}}} dt
\]
This integral cannot be evaluated in terms of `standard functions'. However, it can be evaluated in terms of two instances of Owen's $T$-function, three inverse tangents and one error function and 
this is the topic of Section~\ref{sec: 34} and Appendix C. In Appendix B we indicate how high precision computation of the $T$-function can be done rapidly using an algorithm due to Komelj~\cite{Komelj2023} (see Appendix C).

Using results on the $T$-function, we complete the derivation of a computable formula for $f(\ww,\vv)$ in Sections 5 and 6.

In Sections 7, 8 and 9, the gradient of $f$ is computed with respect to $\ww$ and $\bbbeta$. A few brief notes are 
included on changes in the formulas if we use $\beta$ rather than $\bbeta = \beta/\|\ww\|$ as the bias parameter. 

In Section 10, the formula for the gradient of the loss $\mathcal{L}$ with respect to $\ww$ is given; in Section 11 is the formula for the gradient of the loss with respect to
$\boldsymbol{\bbeta}$.  In addition, formulas for the $\boldsymbol{\bbeta}$ derivative of the loss at $\boldsymbol{\bbeta} = 0$ are given. This is approximately the half-way point of the paper and 
readers not interested in applications related to symmetry can safely omit reading the remaining sections (but perhaps not the appendices).

In Section~12, we review notation and terminology for symmetry in a bias free ReLU network with the high symmetry target $I_d$. Next is a description of symmetries of the loss $\mathcal{L}$ in a biased network 
with target $(I_d, \bbgamma)\in M(d,d+1)$, where $\bbgamma \in M(d,1)$ is the matrix of biases which are assumed equal. The remainder of the section is devoted to a description of how we quantify the
symmetry of a critical point using the isotropy group of the action of $\Delta S_d$ on $M(d,d+1)$ and the vital role played by the associated fixed point 
spaces which allow a big reduction in the dimension of the space needed
when we search for families of critical points of the loss.  For example, the fixed point space $M(d,d+1)^{\Delta S_d}$ is 3-dimensional and is used to find type I families of critical points with isotropy $\Delta S_d$.
If we search for families with isotropy $\Delta(S_{d-q} \times S_q)$, where $q \ge 2$ and fixed, then the associated fixed point space is of dimension $8$ compared with the $d^2 + d$, the dimension of $M(d,d+1)$.
Typically, Newton's method is used to find critical points by treating $d$ as a real parameter (it is, if we restrict to the fixed point space and is fast). An alternative, is to use gradient descent on the fixed point space.

In the remainder of Section~12, we develop the necessary results to give the formula for the gradient of the loss (Proposition~\ref{prop: sym}). 
Although the presence of symmetry greatly simplifies the computation of critical points, the formula for the gradient is significantly more complex than the general formula for the asymmetric case given towards the end of Section~10. The section ends with necessary details about the computation of $\mathcal{L}$ in the symmetric case as well as the formula for the derivative of $\mathcal{L}$ with respect to $\bbeta_i$. 

Section~13 is a brief review of relevant past work on the bias free case and the role of fractional power series in parameterizing $d$-dependant families of critical points (types I and II). 

In Section~14, examples are given of the computation of $\frac{\partial \mathcal{L}}{\partial \bbeta}(\WW,\mathbf{0})$, where $\WW$ is a critical point of the loss in the zero bias network. 
Included are examples where $k > d$ or the target has an asymmetric diagonal block. In every case, $\frac{\partial \mathcal{L}}{\partial \bbeta}(\WW,\mathbf{0})\ne 0$.  
These results  imply that the extension of results obtained for unbiased networks to biased networks is non-trivial. For example, adding bias might annihilate the family of spurious minima and result in the point
$(\WW,\is{0})$ lying on a gradient trajectory convergent to a critical point defining the global minimum. However, it does not---at least in the cases so far analysed.

In Section~15, we give the results of numerical investigations that show that families
of spurious minima, and critical points generally, do persist when we add bias and limiting values as $d\arr \infty$ are the same as those in the unbiased case.   Moreover, adding bias reduces the loss (as indeed it should) but
as $d \arr \infty$, the difference converges rapidly to zero,

Appendices A, B \& C respectively review needed formulas for the error function, give details on the high precision computation of the $T$-function and lastly describe an alternative derivation 
of the key formula~\Refb{EQ: Tgen}~\cite{Prezmo} for the "gneralized $T$-function using only Owen's results~\cite{Owen1980}.

\section{Background on neural networks and machine learning}\label{sec2 review}
\subsection{Shallow Neural Networks}
Roughly speaking, a shallow neural network consists of an \emph{input} layer, one\footnote{More generally, a `small' number of hidden layers} \emph{hidden layer} comprised of ``neurons'', 
and an \emph{output} layer. The number of layers is defined as the number of hidden layers plus one. If the output layer is not counted--that is, the weighted summation over the hidden neuron outputs is not included---then the network is said to have $1\frac{1}{2}$ layers.
See Figure~\ref{Shallow} for a 2-layer neural network with one hidden layer.  
\begin{figure}\label{Shallow}
\centering
\includegraphics[width=0.9\textwidth]{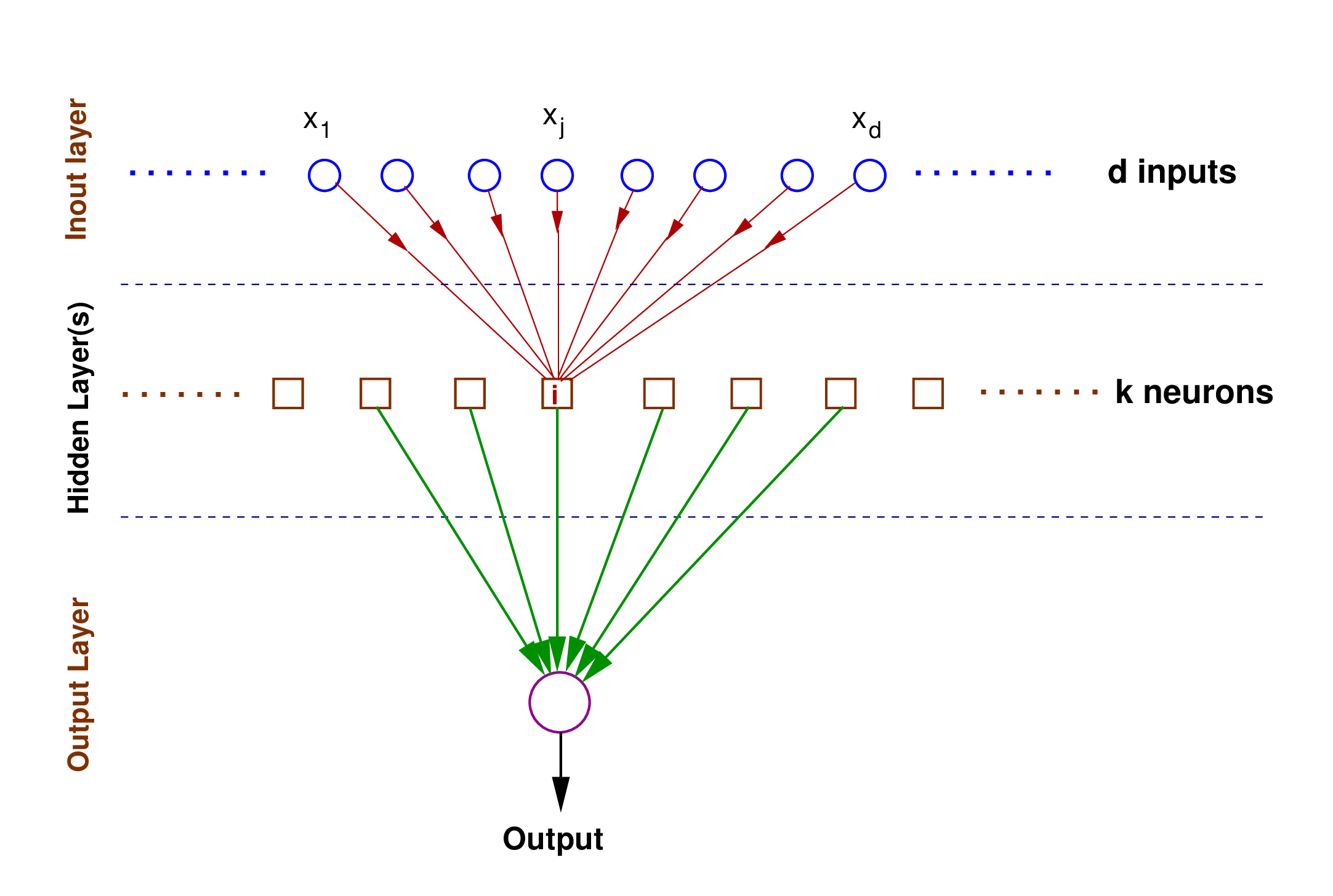}
\end{figure}
The \emph{width} of the network is the maximal number of nodes in a layer. Deep neural networks typically have many hidden layers of relatively small widths whereas 
shallow networks may have very large widths. In this work the emphasis is on shallow networks though results are likely relevant for deep networks. 

Referring to Figure~\ref{Shallow}, suppose that the input is $\xx = (x_1,\ldots,x_d) \in \real^d$, and that we are given
a set of $k$ affine linear maps $A_i : \real^d \arr \real$, $\mathbf{a} \in  (\real^k)^\star\approx M(1,k)$ 
and a continuous $\sigma:\real\arr \real$---the \emph{activation}\footnote{In this work, only ReLU activation is used: $\sigma(t) = \max\{0,t\}$.}.  The neural network defines the map $\widehat F: \real^d \arr \real$ by
\[
\widehat{F}(\xx) = \sum_{i \in [k]} a_i\sigma(A_i(\xx)) 
\]
Suppose that $A_i(\xx) = \sum_j w_{ij} x_j - \beta_i$. Then $[w_{ij}] = \WW \in M(k,d)$ defines the \emph{parameter matrix} and
$[\beta_i]=\boldsymbol{\beta} \in M(k,1)$ is the vector of neuron \emph{biases} or \emph{offsets}. Note that it is common to write $\sigma(\ww \xx + \beta)$ 
rather than $\sigma(\ww \xx -\beta)$ as done here (following \cite{Pinkus1999}).

Let $\ww^i \in (\real^d)^\star$ denote row $i$ of $\WW$, $i \in [k]$. Then 
\[
\widehat{F}(\xx) = \sum_{i \in [k]} a_i \sigma(\ww^i \xx - \beta_i) = \is{a} \sigma(\WW\xx - \boldsymbol{\beta}), 
\]
where $\sigma(\WW\xx - \boldsymbol{\beta}) \defoo (\sigma(\ww^1\xx - \beta_1),\ldots,\sigma(\ww^k\xx - \beta_k))\in \real^k$ 
and $\is{a} \sigma(W\xx - \boldsymbol{\beta})$ is matrix multiplication (or evaluation of a linear functional).  
In summary, neuron $i$ has bias $\beta_i\in \real$ and (input) parameters $\ww^i = (w_{i1},\cdots,w_{id})\in (\real^d)^\star$.
The notation generalizes straightforwardly to maps $F:\real^{n_1}\arr \real^{n_2}$, $n_2 > 1$, as well as to 
deep neural networks.  

Given the activation function $\sigma$, let $\mathcal{M}(\sigma)\subset C^0(\real^d)$ be the vector space spanned by $\{\sigma(\ww\xx - \beta)\dd \ww \in (\real^d)^\star, \beta \in \real\}$.
We recall the fundamental approximation theorem for shallow neural nets.
\begin{thm}[{\cite[Theorem 3.1]{Pinkus1999}}]
Let $\sigma:\real\arr\real$ be a continuous function and give $C^0(\real^d)$ the topology of uniform convergence on compact subsets. Then $\mathcal{M}(\sigma)$ is dense in $C^0(\real^d)$ iff $\sigma $ is \emph{not} a polynomial.
\end{thm}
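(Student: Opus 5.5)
The plan is the standard Leshno--Lin--Pinkus--Schocken argument, in two stages: first reduce the $d$-dimensional statement to $d=1$ via ridge functions, then prove the one-dimensional case by smoothing and differentiating with respect to the weight.

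\emph{Necessity.} Suppose $\sigma$ is a polynomial of degree $m$. Then every $\sigma(\ww\xx-\beta)$ is a polynomial of degree at most $m$ in $\xx$. So $\mathcal{M}(\sigma)$ lies in the finite-dimensional, hence closed, space of polynomials of degree $\le m$, which is proper. Thus $\mathcal{M}(\sigma)$ is not dense.

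\emph{Reduction to $d=1$.} Let $\mathcal{R}$ be the span of the ridge functions $g(\ww\xx)$ with $g\in C^0(\real)$ and $\ww\in(\real^d)^\star$.
\begin{enumerate}
\item $\mathcal{R}$ is dense in $C^0(\real^d)$. It contains every $(\ww\xx)^n$. Homogeneous polynomials of degree $n$ are spanned by the $(\ww\xx)^n$, since the polarization identity expresses monomials as combinations of powers of linear forms. Stone--Weierstrass then gives density on compacta.
\item Suppose $\mathcal{M}(\sigma)$ is dense in $C^0(\real)$ for $d=1$. On a compact $K\subset\real^d$, the values $\ww\xx$ range over a compact interval, so each $g(\ww\xx)$ is a uniform limit on $K$ of sums $\sum c_j\sigma(\lambda_j\ww\xx-\beta_j)$. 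These sums lie in $\mathcal{M}(\sigma)$.
\end{enumerate}
Together these reduce the theorem to $d=1$.

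\emph{The case $d=1$, $\sigma$ smooth.} First assume $\sigma\in C^\infty$. The difference quotients in $\lambda$ of $\sigma(\lambda t-\beta)$ lie in $\mathcal{M}(\sigma)$ and converge uniformly on compacta. Hence $\frac{\partial^n}{\partial\lambda^n}\sigma(\lambda t-\beta)\big|_{\lambda=0}=t^n\sigma^{(n)}(-\beta)$ lies in the closure $\overline{\mathcal{M}(\sigma)}$. Since $\sigma$ is not a polynomial, for each $n$ there is a $\beta$ with $\sigma^{(n)}(-\beta)\ne0$. (Otherwise $\sigma^{(n)}\equiv0$ and $\sigma$ would be a polynomial.) So every monomial $t^n$ lies in the closure, and Weierstrass finishes the smooth case.

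\emph{The case $d=1$, $\sigma$ merely continuous.} This is the main obstacle. Convolve with $\phi\in C_c^\infty(\real)$. The function $\sigma*\phi(\lambda t-\beta)$ is a uniform-on-compacta limit of Riemann sums $\sum\sigma(\lambda t-\beta-y_j)\phi(y_j)\Delta y$, so $\mathcal{M}(\sigma*\phi)\subset\overline{\mathcal{M}(\sigma)}$. If some $\sigma*\phi$ is not a polynomial, the smooth case applies and we are done.

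The delicate point is to exclude that every $\sigma*\phi$ is a polynomial.
\begin{enumerate}
\item Fix a support interval $[-a,a]$. The space $C_c^\infty([-a,a])$ is a Fréchet space, and it is the countable union of the closed sets $\{\phi:\deg(\sigma*\phi)\le m\}$.
\item By Baire, one of these sets has nonempty interior. Since each set is a linear subspace, having interior forces it to be the whole space. So there is a uniform bound $m$ on $\deg(\sigma*\phi)$ over all such $\phi$.
\item Take an approximate identity $\phi_\epsilon\to\delta$. Then $\sigma*\phi_\epsilon\to\sigma$ uniformly on compacta.
\item The polynomials of degree $\le m$ form a closed subspace, so $\sigma$ is a polynomial of degree $\le m$, a contradiction.
\end{enumerate}

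For the ReLU case actually used in this paper, the Baire step can be bypassed. The second differences
\[
\sigma(t+h)-2\sigma(t)+\sigma(t-h)
\]
are already hat functions, and their translates and dilates span the continuous piecewise linear functions. These are dense in $C^0(\real)$ on compacta.
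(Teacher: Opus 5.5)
Your proposal is correct and follows the standard Leshno--Lin--Pinkus--Schocken argument, which is the proof behind \cite[Theorem 3.1]{Pinkus1999}; the paper itself only cites that result and gives no proof. The steps match: reduce to $d=1$ via density of ridge functions, handle smooth $\sigma$ by differentiating $\sigma(\lambda t-\beta)$ in $\lambda$ at $\lambda=0$, and handle continuous $\sigma$ by mollification with the Baire-category bound on $\deg(\sigma*\phi)$. Your ReLU shortcut via second differences (hat functions) is also valid, and it is all the paper actually needs.
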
 
\begin{rem}
The approximation theorem fails if we replace \emph{affine linear} maps by \emph{linear} maps---that is, all biases are set to zero and $\mathcal{M}^\star(\sigma)$ 
is defined to be the vector space spanned by $\{\sigma(\ww\xx)\dd \ww \in (\real^d)^\star\}$:  the closure of $\mathcal{M}^\star(\sigma)$ has no interior points. 
For details we refer to the recent paper by Zhang, Saxe and Latham~\cite{ZSL2024}.  The main aim of the present work is to address the question of whether or not results obtained for the restricted 
class of bias free ReLU networks extend to class of ReLU networks with bias covered by the approximation theorem. For example, the existence of families of 
spurious minima of various types in bias free student teacher 
models~\cite{SafranShamir2018,ArjevaniField2020b,ArjevaniField2021x,ArjevaniField2022b} (see Section~\ref{sec: revsymm} 
for a review of theory for examples with symmetry). Since  
the closure of $\mathcal{M}^\star(\sigma)$ has no interior points, and non-degenerate critical points of a bias free network appear never to be critical points of
the associated biased network, with bias set to zero (see Section~\ref{Sec: 11}), the question concerning the existence of these families in a ReLU network  with bias 
cannot be inferred using simple stability (persistence under perturbation) arguments.
The answer, as indicated at the end of the paper, is positive but non-trivial. 
\rend
\end{rem}

\subsection{Statistical learning and a reduction}
Suppose given a physical system that we believe can be modelled by a function $F:\real^d \arr \real$.  Based on experimental 
data, it is required to construct a `good' approximation to the conjectured model $F$. To this end, assume given a `random' sequence of points 
$(\xx^i) \in \real^d$ and corresponding value sequence $(y^i) = (F(\xx^i))$ and train a neural network with this data. Roughly speaking, 
initialize the network with an appropriate\footnote{For example, Xavier~\cite{SafranShamir2018}} random distribution of 
parameters $\WW, \mathbf{a}$ and biases $\boldsymbol{\beta}$. Compute $\widehat{F}(\xx^i)$,  $i \ge 1$,
and then improve the estimates of $\widehat{F}(x^i)-y_i$ using gradient descent and back propagation on an appropriate cost function (see below). 

This process is difficult to handle theoretically since the function $F$ is unknown and the implied optimization is non-convex. There is also the matter of 
a lack of analyticity (the ReLU activation function $\sigma(t) = \max\{0,t\}$ most commonly used 
is only piecewise analytic). 
A more realistic approach is to study the \emph{realizable} case where the 
data is generated by a known function $F$. To this end, assume a trained neural network (the `teacher') and use that to train a second 
neural network (the `student'). Here we are given the parameter matrix $\VV$, bias vector $\boldsymbol{\gamma}$ and linear functional 
$\mathbf{b}$ defining the output for the teacher network. 
Assume the teacher and student networks both have $d$ inputs. 
If the teacher network has $k'$ neurons, then $(\VV,\boldsymbol{\gamma}) \in M(k',d+1)$, where the column matrix$\boldsymbol{\gamma}\in M(k',1)$ gives the 
neuron biases and 
$\mathbf{b} \in M(1,k)$.  Define $F:\real^d \arr \real$ by
\[
F(\xx) = F(\xx;\VV,\boldsymbol{\gamma},\mathbf{b}) = \sum_{i \in [k]} b_i \sigma(\vv^i \xx - \gamma_i) = \is{b} \sigma(\VV\xx - \boldsymbol{\gamma}) = y
\]
Here $\xx$ will be sampled according to a given distribution $\mathcal{D}$ on $\real^d$, typically the standard norm distribution $\mathcal{N}(0,I_d)$. If we assume $n$ training points $\xx_1,\ldots,\xx_n$
and set $F(\xx_i) = y_i$,
then the problem is to find  $\WW,\boldsymbol{\beta},\mathbf{a}$  for the student network that minimize the 
\emph{training} (or \emph{empirical}) loss:
\[
\frac{1}{n}\left(\frac{1}{2} \sum_i \big(\widehat{F}(\xx_i;\WW,\boldsymbol{\beta})-y_i\big)^2\right)
\]
Assume that the student has $k$ neurons and so $\WW \in M(k,d)$, $\boldsymbol{\beta} \in M(k,1)$ and $\mathbf{a} \in M(1,k)$. Letting $n\arr \infty$, 
leads to the problem of minimizing the \emph{population loss}\footnote{Also called the \emph{population risk}. Typically we just refer to the `loss'}: 
\[
\mathcal{L}(\WW,\VV) = \frac{1}{2}\mathbb{E}_{\xx\sim\mathcal{D}} \big(\widehat{F}(\xx,\WW) - F(\xx;\VV)\big)^2,
\]
where $\widehat{F}(\xx) = \widehat{F}(\xx;\WW,\boldsymbol{\beta},\mathbf{a})$. Here, as elsewhere
we usually write $\widehat{F}(\xx)$ or $\widehat{F}(\xx;\WW)$ rather than $\widehat{F}(\xx;\WW,\boldsymbol{\beta},\mathbf{a})$; similarly for $F(\xx)$.

This reduces the problem to one of non-convex optimization of $\mathcal{L}=\mathcal{L}(\WW)$ using gradient descent.  
Further simplifications are possible and often used. 
For example, assuming biases are constant and set to zero and the functional $\mathbf{a}$ is constant.  
By way of illustration, in Brutzkus \& Globerson's work on no-overlap convolutional networks~\cite{BrutzkusGloberson2017}, 
the networks studied are bias free, with ReLU activation, the $k$ neurons have the same number of inputs, none of which overlap, 
and the output is given by $F(\xx,\ww)=\frac{1}{k} \sum_i \sigma(\ww,\xx_i)$---average pooling.  

It is also natural to take $\mathbf{a}=\mathbf{b} = \mathbf{1}_k$~\cite{SafranShamir2018,ArjevaniField2020b}. Since ReLU
activation is positively homogeneous we can rescale rows of $\WW$ and $\boldsymbol{\beta}$ so that
$\mathcal{L}(\WW,\VV)$ is unchanged but $a_i \in \{\pm1, 0\}$, for all $i\in [k]$.  
Under these conditions, $\WW, \boldsymbol{\beta}$ and $\mathbf{a}$ are uniquely determined up to row permutations\footnote{Of course, 
we may remove rows $\ww^i$ if $a_i = 0$.}. 

Assume the target network is bias free and the target matrix of parameters is $\VV \in M(d.d)$ and $\VV$ has no zero rows. 
Take $\mathbf{b} = \mathbf{1}_d$.  This choice simplifies the optimization process (the network is said to have $1\frac{1}{2}$ layers).
Let $k \ge d$ and $\WW \in M(k,d)$ be a choice of initialization of the parameter matrix for the student network. Again assume no zero rows for $\WW$, bias 
free and $\mathbf{a}=\mathbf{1}_k$. Assume ReLU activation and bias free. The loss is given by
\[
\mathcal{L}(\WW,\VV)= \frac{1}{2} \mathbb{E}_{\xx\sim  \mathcal{N}(0,I_d)} \left(\sum_{i \in [k]} \sigma (\ww^i\xx) - \sum_{j \in [d]} \sigma (\vv^j\xx)\right)^2,
\]
where $\mathbb{E}$ is the expectation over the zero mean, unit variance Gaussian distribution on $\real^d$. Obviously $\mathcal{L}(\WW,\VV) = 0$ if
$\WW = \VV$. The same is true if we replace $\WW$ by any row permutation of $\WW$; similarly for $\VV$. 

Following Brutzkus \& Globerson, define 
\[ 
f(\ww,\vv) = \mathbb{E}_{\xx \sim \mathcal{N}(0,I_d)}\; \sigma(\ww\xx)\sigma(\vv\xx),\; \ww,\vv\in (\real^d)^\star
\] and note that 
\[
\mathcal{L}(\WW,\VV)= \frac{1}{2}\left[\sum_{i,ji\in [k]} f(\ww^i,\ww^j) -2\sum_{i\in [k], j \in [d]}f(\ww^i,\vv^j) + \frac{1}{2}\sum_{i,j \in [d]}f(\vv^i,\vv^j) \right]
\]
Using elementary methods, Choo \& Saul~\cite{ChooSaul2009} proved that  
\begin{equation}\label{EQ:  ChooSaul}
f(\ww,\vv) = \frac{\|\ww\|\|\vv\|}{2\pi}\left(\sin(\theta) + (\pi - \theta)\cos(\theta)\right),
\end{equation}
where $\theta$ is the angle between $\ww$ and $\vv$. Of course, if either $\vv$ or $\ww$ is zero, $f(\ww,\vv)$ is zero. Obviously, $f$ is a continuous function of
$(\ww,\vv)$ on $\real^d\times \real^d$ and is analytic precisely 
on the open and dense domain $\{(\ww,\vv) \dd \langle \ww,\vv\rangle \ne \pm \|\ww\|\|\vv\|\}\subset \real^d\times \real^d$. In particular, if $\ww,\vv$ are parallel, $f$ is not analytic at $(\ww,\vv)$.

Moreover, $f$ is a \emph{subanalytic} function. That is, the closure of the graph of $f$ is locally the projection of a relatively compact semianalytic 
set~\cite[\S 2]{Parusinski1994},\cite{BierMil1998}. This follows easily since the sum and product of subanalytic functions are subanalytic and the Euclidean norm
$\|\;\|$ is trivially a subanalytic function of $\ww \in \real^d$. Hence, it suffices to show that $\langle \ww,\vv\rangle/\|\ww\|\|\vv\|$ is subanalytic 
and this is elementary using the Cauchy-Schwartz inequality.  
For our applications, $\ww, \vv$ will never be zero and so proving subanalyticity amounts to analyzing limiting behaviour as $\langle \ww,\vv\rangle \arr 0$, where $\ww,\vv$ are bounded away from zero.

The subanalyticity of $f$ implies strong regularity results on $f$ that are exploited in
\cite{ArjevaniField2021x,ArjevaniField2022b,ArjevaniField2022a} when the target $\VV = I_d$ and methods from the
analytic geometry of stratified sets can be used to prove the existence of fractional power series (\fps) representations of families of critical points  
parametrized by $d$ and precise results on the Hessian spectrum (the \fps results are discussed in more detail in Section~\ref{sec: revsymm}).

The main aim of this work is to provide the framework for extending results on spurious minima and fractional power series to biased networks. The key steps involve generalizing the
formula of Choo \& Saul~\Refb{EQ:  ChooSaul} and quantifying the regularity of the associated population loss which is quite different from the bias free case.

\section{Bias}\label{sec: bias}   
We find an explicit formula for the loss when bias is included.  Our approach initially follows that of Choo \& Saul~\cite{ChooSaul2009} 
for zero bias (see~\cite{ArjevaniField2021x} for details in the case of `leaky ReLU'). 

Let $\|\;\|$ and $\langle\;,\;\rangle$ denote the standard Euclidean norm and inner product. 
Given $\ww\in (\real^d)^\star$, $\xx\in \real^d$, $\ww\xx$ is evaluation of $\ww$ at $\xx$ (matrix multiplication;  $\ww(\xx)$ if it is necessary to emphasize that $\ww\xx$ 
is the linear functional $\ww$ evaluated at $\xx$). If $(\real^d)^\star$ is identified with $\real^d$ using the inner product,  then
$\ww\xx = \langle \ww, \xx\rangle = \ww^T\xx$ (matrix multiplication) but we avoid the use of the transpose here and only use the inner product
to  define the angle between $\ww,\vv$ (row vectors). 
To emphasize: $\xx$ is a column vector and $\ww$ is a row vector (linear functional or dual vector). 
Care needs to be taken with the notation especially because of the use of (matrix) group representation theory where the formal theory is
most transparently expressed in terms of matrices and vectors rather than vectorized quantities.

Let  $\ww,\vv \in (\real^d)^\star$ and $\beta,\gamma\in \real$. Following the approach for the bias free case, we want to evaluate 
\[
f((\ww,\beta),(\vv,\gamma)) \defoo \mathbb{E}_{\xx \sim \mathcal{N}}(\sigma(\ww\xx-\beta)\sigma(\vv\xx -\gamma)),
\]
where $\xx\in\real^d$ and 
$\mathcal{N} = \mathcal{N}(0,I_d)$ (standard normal). Usually we write $f(\ww,\vv)$ and omit explicit dependence on the bias parameters. 
\begin{rem}\label{rem: fnotzero}
Unlike the bias free case, if $\ww=\is{0}$, $\vv\ne \is{0}$, then  $f(\ww,\vv)=0$ only if $\beta \ge 0$; similarly if $\vv=\is{0}$.  In particular, 
$f(\is{0},\is{0}) \ne 0$ iff $\beta,\gamma < 0$.  We generally assume that $\WW$ and $\VV$ have no zero rows. 
\rend
\end{rem}
Assume $\ww, \vv \ne \is{0}$. Write
\begin{equation}
\ww\xx - \beta = \|\ww\|(\bww\xx - \bar{\beta})\quad \vv\xx - \gamma = \|\vv\|(\bvv\xx - \bgamma)
\end{equation}
where 
\[
\bww = \frac{\ww}{\|\ww\|}, \;\;\bvv = \frac{\vv}{\|\vv\|},\;\;\bbeta= \frac{\beta}{\|\ww\|},\;\;\bgamma= \frac{\gamma}{\|\vv\|}
\]

It follows that
\begin{equation}\label{EQ: fundEx}
f((\ww,\beta),(\vv,\gamma)) = \|\ww\|\|\vv\| \mathbb{E}_{\xx \sim \mathcal{N}}(\sigma(\bww\xx-\bbeta)\sigma(\bvv\xx -\bgamma))
\end{equation}
\begin{rem}
In future, we regard $\bbeta,\bgamma$ as the (normalized) bias parameters. This will not change the critical point structure but gives a significant simplification
of computations and formulas. Of course, since $\beta = \|\ww\|\bbeta$, we can always recover the bias $\beta$. \rend
\end{rem}
If $\ww,\vv$ are parallel and not zero, computations are elementary, see Lemmas~\ref{lem: zero}, \ref{lem: pi}. So assume $\ww,\vv$ are not parallel; in particular,  $\ww,\vv \neq \mathbf{0}$.
Choose the orthogonal coordinate system on the 2-plane spanned by $\{\ww,\vv\}$ so that 
\[\bvv = (1,0,\ldots,0),\quad \bww = (\cos(\theta),\sin(\theta),0,\ldots,0)
\]
where $\theta=\theta_{\ww,\vv} \in (0,\pi)$ is the angle between $\ww$ and $\vv$ defined by
\[
\theta = \cos^{-1}\left(\frac{\langle\ww,\vv\rangle}{\|\ww\|\|\vv\|}\right)
\]
Given $\xx \in \real^d$,
\[
\ww \xx = \|\ww\|(x_1\cos(\theta)+x_2\sin(\theta)),\;\;\vv\xx = \|\vv\|x_1
\]
The term $\sigma(\ww\cdot\xx-\beta)\sigma(\vv\cdot\xx -\gamma)$ contributes to the expectation iff 
$\sigma(\ww\cdot\xx-\beta), \sigma(\vv\cdot\xx -\gamma)$ are both strictly positive. Setting 
\[
F(x_1,x_2) = (x_1\cos(\theta)+x_2\sin(\theta)-\bar{\beta})(x_1-\bgamma)
\]
and $p_{\mathcal{N}} = e^{-\frac{x_1^2+x_2^2}{2}}$ (so $\int_{\real^2} p_{\mathcal{N}}dx_1dx_2 = 2\pi$), we have 
\begin{equation}\label{EQ: fint}
f(\ww,\vv) =  \frac{\|\ww\|\|\vv\|}{2\pi}\int_{D_\theta} F(x_1,x_2)p_{\mathcal{N}}(x_1,x_2) dx_1dx_2,
\end{equation}
where 
\[
D_\theta = D = \{(x_1,x_2) \in \real^2 \dd x_1\cos(\theta)+x_2\sin(\theta) -\bar{\beta}, x_1 - \bgamma \ge 0\}
\]

In figure~\ref{D1}, we show the domain of integration if $\bgamma < 0 < \bbeta$ and $\theta \in (0,\pi/2]$. The boundary of $D$ is defined by the lines
$x_1 = \bgamma$ and $x_1\cos(\theta) +x_2\sin(\theta) = \bbeta$ (denoted $L$ in the figure).
\begin{figure}
\centering
\includegraphics[width=0.9\textwidth]{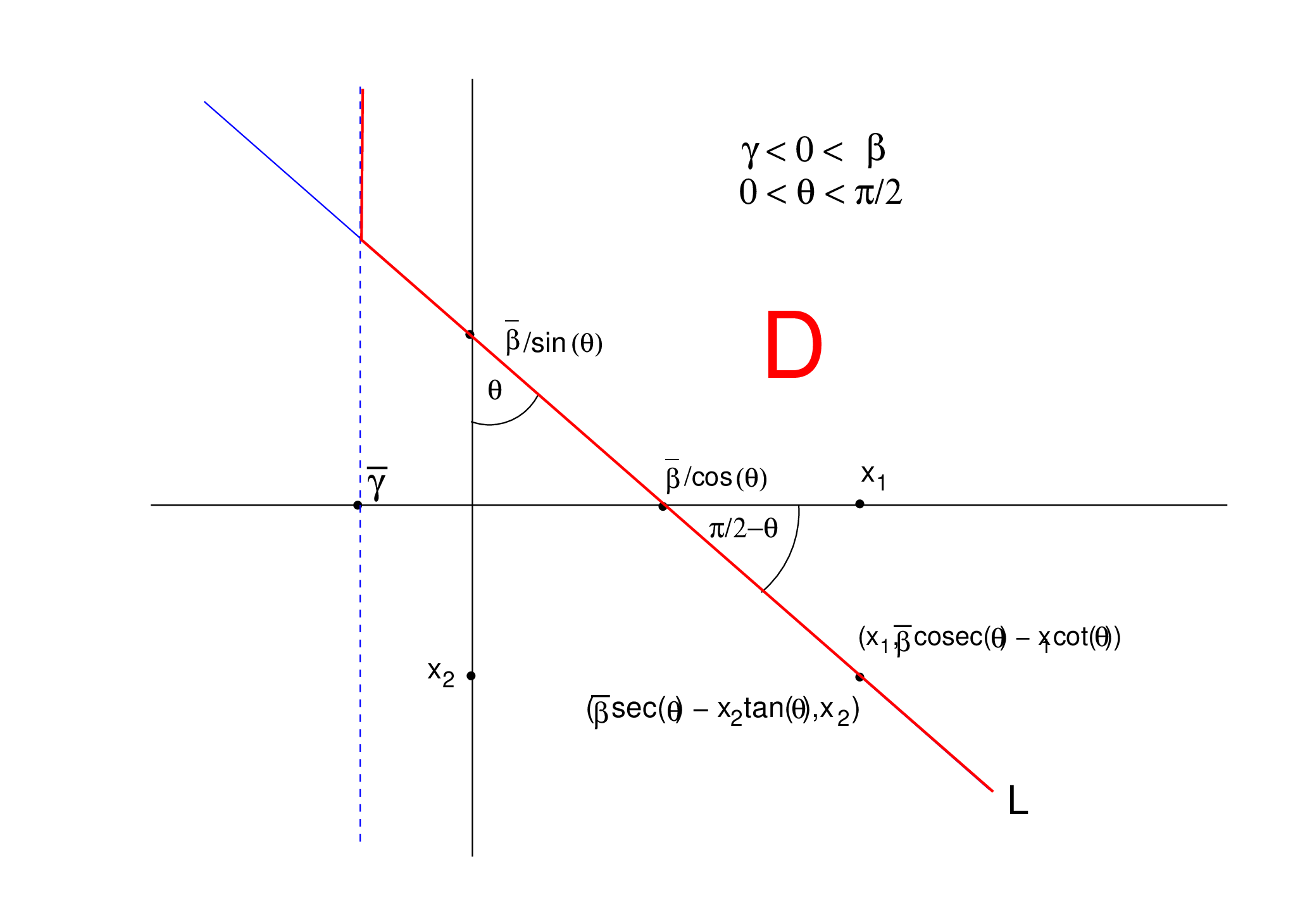}
\caption{The region of integration in the $x_1,x_2$-plane with boundary shown in red and $\bar{b} < 0 < \bar{\beta}$ and $\theta \in (0,\pi/2]$. As $\theta $ increases, the line $L$ defined by 
$x_1\cos(\theta)+x_2\sin(\theta) =\bar{\beta}$ rotates counter clockwise so that when $\theta \in (\pi/2,\pi)$, $L$ intersects the negative $x_1$-axis. }
\label{D1}
\end{figure}

\begin{rems}\label{rem: intdom}
(1) It is clear from \Refb{EQ: fint} that if $\theta \in (0,\pi)$, then $f(\ww,\vv)$ 
is a continuous function of $\ww,\vv, \bbeta$ and $\bgamma$. 
Just as for zero bias,  it is easy to see that $f$ will not be differentiable with respect to $\ww$ if $\ww = 0$ and will be analytic (in $\ww,\vv, \bbeta$ and $\bgamma$) provided that $\ww,\vv$
are not parallel: $\|\ww\| \|\vv\| \ne \langle\ww,\vv\rangle$ (which implies $\ww,\vv$ are non-zero). For differentiability see also Remarks~\ref{rem: intdom}(3). \\
(2) Suppose that $\ww,\vv$ are are non-zero and parallel with $\theta = 0$. Observe that as $\theta \arr 0+$, the line $L$ converges to $x_1 = \bbeta$. Consequently,
\[
D_0 = \{(x_1,x_2) \dd x_1 \ge \max\{\bgamma,\bbeta\}\}
\]
and \emph{not} $x_1 \ge \bgamma$ unless $\bgamma \ge \bbeta$. \\
If instead we consider the limiting domain $D_\pi$ obtained by letting $\theta \arr \pi-$, then $L$ converges to $x_1 = -\bbeta$ and
\[
D_\pi = \begin{cases} 
& \emptyset,\;\text{if } \bgamma + \bbeta > 0\\
& [\bgamma,-\bbeta] \times \real \subset \real^2,\;\text{if } \bgamma + \bbeta \le 0. 
\end{cases}
\]
In particular, $f(\ww,\vv) = 0$ if $\theta = \pi$ and $\bgamma + \bbeta \ge 0$. \\
However, although the limits of $D_\theta$ as $\theta \arr 0+,\pi-$ might suggest a discontinuity, it will be shown later that
$\|\ww\|\|\vv\|\int_{D_\theta} F p_{\mathcal{N}}$ is a continuous function of $\ww,\vv, \beta,\gamma$. In particular, the 
limit of $\int_{D_\theta} F p_{\mathcal{N}}$ as $\theta\arr 0+$ exists and is equal to $\int_{D_0} F p_{\mathcal{N}}$; similarly for the limit as $\theta\arr \pi-$.  \\
(3) In the case of zero bias, Safran, Yelundai \& Shamir~\cite{Safranetal2021} prove that the loss $\mathcal{L}$ is $C^2$ at $\WW\in M(k,d)$ provided that no row of $\WW$ is zero and $k \ge d$. However,
if $k > d$, there will be zero eigenvalues at critical points defining the global minima \emph{op.~cit.}, \cite[Appendix A.7]{ArjevaniField2022b}. This is so even though the Hessian is continuous 
and all eigenvalues are positive. This degeneracy is quite apparent when doing numerical investigations in the over-parametrized case. Of course, all eigenvalues at a global minimum are positive.
However, if $k \ge d+1$, there typically exist many non-degenerate critical points including spurious minima. \\
(4) Both domain $D_\theta$ and integrand $F$ in \Refb{EQ: fint} depend on the choice $\vv = \|\vv\|(1,0,\ldots,0)$. 
If instead we take $\ww = \|\ww\|(1,0,\ldots,0)$,  then $\theta$ is unchanged and 
$\bvv = \|\vv\|(\cos(\theta),\sin(\theta),0,\ldots,0)$.  The effect is to swap $\bbeta$ and $\bgamma$ in the expressions for $F$ and $D_\theta$. 
This leads to two terms in the expression we obtain for $f(\ww,\vv)$ depending
on which vector was chosen to be a multiple of the unit vector (of course, the value of $f(\ww,\vv)$ is unchanged). The associated identity leads to a surprising and important symmetry in $(\bbeta,\bgamma)$ that underpins much of our analysis.
\rend
\end{rems}
\begin{exam}\label{ex: simp}
If $\vv = \ww$ and $\beta = \gamma$. then  $\theta = 0$, $\bbeta = \bgamma$ and $D_0 = [\bgamma,\infty)\times \real$. The problem of computing $f$ reduces to computing
$\frac{1}{\sqrt{2\pi}}\int_{\bbeta}^\infty (x^2 -2\bbeta x-\bbeta^2) e^{-\frac{x^2}{2}} dx$ giving 
\begin{equation}\label{EQ: fww}
f(\ww,\ww) = \frac{\|\ww\|^2(\bbeta^2 +1)}{2}\erfcb{\frac{\bbeta}{\sqrt{2}}} -\frac{\|\ww\|^2}{\sqrt{2\pi}}\bbeta e^{-\frac{\bbeta^2}{2}},
\end{equation}
where $\terfc$ is the complementary error function (see Appendix~\ref{append: error}).  Since $\bbeta = \bgamma$, $D_0 = \lim_{\theta\arr 0+}D_\theta\subset \real^2$.  
\examend
\end{exam}
\begin{lemma}\label{lem: zero}
Suppose that $\ww,\vv$ are parallel and non-zero with $\theta = 0$. If $\beta,\gamma \in \real$, then
\begin{equation}
f(\ww,\vv) =  \frac{\|\ww\|\|\vv\|(\bbeta\bgamma + 1)}{2}\erfcb{\frac{M}{\sqrt{2}}} - \frac{m\|\vv\|\|\ww\|}{\sqrt{2\pi}}e^{-\frac{M^2}{2}},
\end{equation}
where $M = \max\{\bgamma,\bbeta\}$, $m  = \min\{\bgamma,\bbeta\}$. 
\end{lemma}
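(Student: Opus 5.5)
Since $\ww,\vv$ are parallel with $\theta=0$, we have $\bww=\bvv$. By \Refb{EQ: fundEx} the problem reduces to a one-dimensional Gaussian integral. Choose coordinates with $\bvv=\bww=(1,0,\ldots,0)$. Then $\sigma(\bww\xx-\bbeta)\sigma(\bvv\xx-\bgamma)$ depends only on $x_1$ and is nonzero exactly when $x_1>M=\max\{\bgamma,\bbeta\}$, which is the domain $D_0$ of Remarks~\ref{rem: intdom}(2). Integrating out $x_2,\ldots,x_d$ gives
\[
f(\ww,\vv)=\frac{\|\ww\|\|\vv\|}{\sqrt{2\pi}}\int_M^\infty (x-\bbeta)(x-\bgamma)e^{-\frac{x^2}{2}}\,dx .
\]
This is the same reduction used in Example~\ref{ex: simp}, with $\bbeta=\bgamma$ replaced by general values.

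Next expand $(x-\bbeta)(x-\bgamma)=x^2-(\bbeta+\bgamma)x+\bbeta\bgamma$ and use the three standard Gaussian tail integrals (Appendix~\ref{append: error}):
\[
\frac{1}{\sqrt{2\pi}}\int_M^\infty e^{-\frac{x^2}{2}}dx=\tfrac12\erfcb{\tfrac{M}{\sqrt2}},\qquad
\frac{1}{\sqrt{2\pi}}\int_M^\infty xe^{-\frac{x^2}{2}}dx=\frac{e^{-\frac{M^2}{2}}}{\sqrt{2\pi}},
\]
\[
\frac{1}{\sqrt{2\pi}}\int_M^\infty x^2e^{-\frac{x^2}{2}}dx=\frac{Me^{-\frac{M^2}{2}}}{\sqrt{2\pi}}+\tfrac12\erfcb{\tfrac{M}{\sqrt2}} .
\]
The last identity follows by integrating by parts against $xe^{-x^2/2}$. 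Collecting terms, the $\operatorname{erfc}$ coefficient is $\frac{1}{2}(1+\bbeta\bgamma)$, and the exponential coefficient is $\frac{1}{\sqrt{2\pi}}\bigl(M-(\bbeta+\bgamma)\bigr)$.

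The only step needing care is the final simplification. Since $\{M,m\}=\{\bbeta,\bgamma\}$ as a multiset, $\bbeta+\bgamma=M+m$, so $M-(\bbeta+\bgamma)=-m$. This yields exactly the stated formula, with the sign and the appearance of $m$ coming out correctly in both cases $\bbeta\ge\bgamma$ and $\bbeta<\bgamma$. Note that $\bbeta\bgamma=Mm$ is symmetric, so the $\operatorname{erfc}$ term needs no case split. As a sanity check, setting $\bbeta=\bgamma$ recovers \Refb{EQ: fww}. No real obstacle is expected; the main point is to correctly identify the integration domain as $x_1\ge M$ rather than $x_1\ge\bgamma$.
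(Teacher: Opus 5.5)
Your proposal is correct and follows essentially the same route as the paper: reduce to the one-dimensional Gaussian integral of $(x-\bbeta)(x-\bgamma)$ over $[M,\infty)$, evaluate it elementarily, and simplify. The observation $\bbeta+\bgamma=M+m$, which produces the $-m$ coefficient, is exactly the step the paper leaves implicit.
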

\begin{proof}  As in Example~\ref{ex: simp}, the result follows by evaluating the elementary integral
$\int_{\max\{\bgamma,\bbeta\}}^\infty (x_1^2 - (\bgamma+\bbeta)x_1 +\bbeta\bgamma) e^{-\frac{x^2}{2}}dx$.
\end{proof}
\begin{lemma}\label{lem: pi}
Suppose that $\ww,\vv$ are parallel and non-zero with $\theta = \pi$. If $\beta,\gamma \in \real$, then
\begin{enumerate}
\item For $\bgamma \le -\bbeta$,
\begin{multline}
\begin{aligned}
f(\ww,\vv) = & \frac{\|\ww\|\|\vv\|(1-\bbeta\bgamma)}{4}\left[\erfb{\frac{\bbeta}{\sqrt{2}}}+\erfb{\frac{\bgamma}{\sqrt{2}}}\right]\\ 
	&-\frac{\|\ww\|\|\vv\|}{2\sqrt{2\pi}}\left[\bgamma e^{-\frac{\bbeta^2}{2}}+\bbeta e^{-\frac{\bgamma^2}{2}}\right] 
\end{aligned}
\end{multline} 
\item For $\bgamma \ge -\bbeta$, $f(\ww,\vv) = 0$.
\end{enumerate}
\end{lemma}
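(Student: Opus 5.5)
Since $\theta=\pi$ we have $\bww=-\bvv$, so I would use the coordinates of Section~\ref{sec: bias} with $\bvv=(1,0,\ldots,0)$ and $\bww=(-1,0,\ldots,0)$. Then $\bww\xx-\bbeta=-x_1-\bbeta$ and $\bvv\xx-\bgamma=x_1-\bgamma$. By \Refb{EQ: fundEx}, the integrand depends only on $x_1$, so the integral over the remaining coordinates contributes a factor $1$ and the problem becomes one-dimensional. The product $\sigma(-x_1-\bbeta)\sigma(x_1-\bgamma)$ is nonzero exactly when $\bgamma< x_1<-\bbeta$, which is the limiting domain $D_\pi$ of Remarks~\ref{rem: intdom}(2). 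If $\bgamma\ge-\bbeta$ this interval is empty or a single point, so $f(\ww,\vv)=0$, which is part (2).

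For part (1), with $\bgamma\le-\bbeta$, I would write
\[
f(\ww,\vv)=\frac{\|\ww\|\|\vv\|}{\sqrt{2\pi}}\int_{\bgamma}^{-\bbeta}\bigl(-x^2+(\bgamma-\bbeta)x+\bbeta\bgamma\bigr)e^{-\frac{x^2}{2}}\,dx .
\]
I would then use the three elementary antiderivatives already implicit in Example~\ref{ex: simp} and Lemma~\ref{lem: zero}. For $xe^{-x^2/2}$ the antiderivative is $-e^{-x^2/2}$. For $x^2e^{-x^2/2}$, integration by parts gives $-xe^{-x^2/2}$ plus the integral of $e^{-x^2/2}$. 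The integral of $e^{-x^2/2}$ is written with $\mathrm{erf}$ as in Appendix~\ref{append: error}.

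Collecting terms, the $\mathrm{erf}$ part has coefficient $(\bbeta\bgamma-1)$ times $\int_{\bgamma}^{-\bbeta}e^{-x^2/2}\,dx/\sqrt{2\pi}$. By oddness of $\mathrm{erf}$ this integral equals $-\tfrac12\bigl[\erfb{\frac{\bbeta}{\sqrt{2}}}+\erfb{\frac{\bgamma}{\sqrt{2}}}\bigr]$, which produces the factor $(1-\bbeta\bgamma)$. The exponential boundary terms simplify:
\begin{itemize}
\item at $x=-\bbeta$ the coefficient of $e^{-\bbeta^2/2}$ collapses to $-\bgamma$;
\item at $x=\bgamma$ the coefficient of $e^{-\bgamma^2/2}$ collapses to $-\bbeta$.
\end{itemize}
This gives the symmetric bracket $\bgamma e^{-\frac{\bbeta^2}{2}}+\bbeta e^{-\frac{\bgamma^2}{2}}$ in the statement.

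The only real obstacle is keeping track of signs and normalising constants. I would check the result on special cases:
\begin{itemize}
\item At $\bgamma=-\bbeta$ the expression vanishes, which matches part (2) and confirms continuity.
\item At $\bbeta=\bgamma=0$ it gives $0$, matching Choo \& Saul \Refb{EQ:  ChooSaul} at $\theta=\pi$.
\item At $\bbeta=\bgamma=-1$ I would compare with a direct numerical evaluation of $\mathbb{E}\,\sigma(1-x)\sigma(1+x)$.
\end{itemize}

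My own quick computation gives the $\mathrm{erf}$ coefficient $\tfrac12$ and the exponential coefficient $1/\sqrt{2\pi}$, both a factor $2$ larger than in the displayed formula. The reason is that $\int_{\bgamma}^{-\bbeta}e^{-x^2/2}dx/\sqrt{2\pi}$ already equals half the $\mathrm{erf}$ sum. The numerical check at $\bbeta=\bgamma=-1$ (value $\approx 0.484=2e^{-1/2}/\sqrt{2\pi}$) supports the larger constants. So this check is where I would settle which normalisation is correct before finalising.
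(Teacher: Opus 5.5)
Your approach is the paper's own: you evaluate the one-dimensional integral over $D_\pi=[\bgamma,-\bbeta]\times\real$ directly, exactly as for Lemma~\ref{lem: zero}. Your constants are the right ones. The formula displayed in part (1) is too small by a factor of $2$ and should read
\[
f(\ww,\vv)=\frac{\|\ww\|\|\vv\|(1-\bbeta\bgamma)}{2}\left[\erfb{\frac{\bbeta}{\sqrt{2}}}+\erfb{\frac{\bgamma}{\sqrt{2}}}\right]-\frac{\|\ww\|\|\vv\|}{\sqrt{2\pi}}\left[\bgamma e^{-\frac{\bbeta^2}{2}}+\bbeta e^{-\frac{\bgamma^2}{2}}\right].
\]
This matches the normalisation of Lemma~\ref{lem: zero} and your value $2e^{-1/2}/\sqrt{2\pi}$ at $\bbeta=\bgamma=-1$.

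The corrected formula also follows from the paper's own data in two ways:
\begin{itemize}
\item insert $\int_{D_\pi}1$, $\int_{D_\pi}x_1$ and $\int_{D_\pi}x_1^2$ from Lemmas~\ref{LEM: X1}(3) and~\ref{lem: spcases} into $f=\frac{\|\ww\|\|\vv\|}{2\pi}\int_{D_\pi}F$, with $F=(-x_1-\bbeta)(x_1-\bgamma)$;
\item let $\theta\to\pi-$ in \Refb{EQ: fwv} with $\bbeta+\bgamma<0$, where both $\mathrm{erfc}$ factors tend to $2$.
\end{itemize}
So commit to these constants and prove the corrected statement rather than leaving the normalisation open.
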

\proof Similar to that of Lemma~\ref{lem: zero} using Remarks~\ref{rem: intdom}(2). \qed
\begin{rems} 
(1) Lemmas~\ref{lem: zero}, \ref{lem: pi} imply that $f(\ww,\vv)$ is a continuous function of $\bbeta,\bgamma$ (and $\beta,\gamma$) on $D_0$ and $D_\pi$. It may be shown that $f$ is
$C^1$ in $\bbeta,\bgamma$ on these subspaces but not $C^2$. \\
(2) Note the vanishing of $f$ if $\bgamma \ge -\bbeta$ in Lemma~\ref{lem: pi}(2). If the network is not biased, then
$f(\ww,\vv)$ is zero iff $\ww$ is a non-zero negative multiple of $\vv$.
\rend
\end{rems}

\subsection{Computation of $f$ if the network is not biased$^\star$}\label{sec: zerobias}
Before completing the computation of $f(\ww,\vv)$ for biased ReLU networks in section~\ref{sec: 22}--\ref{sec: 27}, 
the reader may find it helpful to review 
the computation for bias free networks without the use of polar coordinates~\cite{ChooSaul2009}. 

If $\beta = \gamma = 0$, then
\begin{eqnarray*}
f(\ww,\vv) & = & \frac{\|\ww\|\|\vv\|}{2\pi}\int_D (x_1^2\cos(\theta)+x_1x_2\sin(\theta))e^{-\frac{x_1^2+x_2^2}{2}}dx_1dx_2\\
& = & \frac{\|\ww\|\|\vv\|}{2\pi}\int_0^\infty \left(\int_{\widetilde{x}_2}^\infty x_1^2\cos(\theta)+x_1x_2\sin(\theta)e^{-\frac{x_2^2}{2}}dx_2\right)e^{-\frac{x_1^2}{2}}dx_1,
\end{eqnarray*}
where $\widetilde{x}_2 = -x_1\cot(\theta)$ (see Figure~\ref{D1}).  We evaluate the double integral directly, rather than use the 
transformation to polar coordinates, so that we can see how the angle term $(\pi-\theta)\cos(\theta)$ enters into the equation for $f(\ww,\vv)$. 
Of course, there is no scope for a simple (or useful) application of polar coordinates if $\beta\gamma \ne 0$.

Write  
\begin{equation*}
\int_{\widetilde{x}_2}^\infty x_1^2\cos(\theta)+x_1x_2\sin(\theta)e^{-\frac{x_2^2}{2}}dx_2=\cos(\theta)x_1^2 A_{11} + \sin(\theta)x_1A_{12}
\end{equation*}
where
\[
A_{11} =  \int_{\widetilde{x}_2}^\infty e^{-\frac{x_2^2}{2}}dx_2 \quad A_{12} = \int_{\widetilde{x}_2}^\infty x_2e^{-\frac{x_2^2}{2}}dx_2 
\]
An elementary integration gives
\[
A_{12}=\int_{\widetilde{x}_2}^\infty x_2e^{-\frac{x_2^2}{2}}dx_2 = e^{-\frac{x_1^2\cot^2(\theta)}{2}}
\] 
and so the contribution to the integral over $D$ will be
\begin{equation}
\label{EQ8}
\sin(\theta)\int_0^\infty x_1 e^{-\frac{x_1^2\cot^2(\theta)}{2}} e^{-\frac{x_1^2}{2}}dx_1 = \sin(\theta)\int_0^\infty x_1  e^{-\frac{x_1^2\csc^2(\theta)}{2}} dx_1 = \sin^3(\theta).
\end{equation}
Turning to the first term in the integral over $D$, 
\[
A_{11} = \sqrt{\frac{\pi}{2}}\erfcb{\frac{-x_1\cot(\theta)}{\sqrt{2}}} = \sqrt{\frac{\pi}{2}}\left[1- \erfb{\frac{-x_1\cot(\theta)}{\sqrt{2}}}\right]
\]
and so we must evaluate 
\begin{eqnarray*}
F(\theta)&=& \sqrt{\frac{\pi}{2}}\int_0^\infty x_1^2 \big(1 - \erfb{\frac{-x_1\cot(\theta)}{\sqrt{2}}}\big) e^{-\frac{x_1^2}{2}}dx_1\\
& = & \frac{\pi}{2} - \sqrt{\frac{\pi}{2}}\int_0^\infty x_1^2  \erfb{\frac{-x_1\cot(\theta)}{\sqrt{2}}} e^{-\frac{x_1^2}{2}}dx_1,
\end{eqnarray*}
where the last equality follows from~\Refb{EQ4}. Since $\cot(\pi/2) = 0$, 
\begin{equation}
\label{EQ9}
F(\pi/2) = \frac{\pi}{2}
\end{equation}
By \Refb{EQ0a}, $\erf{ax} = \frac{2}{\sqrt{\pi}} \int_0^{ax} e^{-t^2}dt$ and so since $\frac{d}{d\theta}\cot(\theta) = -\csc^2(\theta)$
\[
\frac{\partial}{\partial \theta} \erfb{\frac{-x_1\cot(\theta)}{\sqrt{2}}} = \sqrt{\frac{2}{\pi}}x_1\csc^2(\theta)e^{\frac{-x^2_1\cot^2(\theta)}{2}}
\]
Using $1+\cot^2(\theta) = \csc^2(\theta)$, 
\begin{eqnarray*}
F'(\theta)& = & -\csc^2(\theta)\int_0^\infty x_1^3 e^{-\frac{x_1^2\csc^2(\theta)}{2}} dx_1\\
& = &  -2\sin^2(\theta)=-1+\cos(2\theta).  
\end{eqnarray*}
where the last line follows from \Refb{EQ4a}.  Integrating and using $F(\pi/2) = \frac{\pi}{2}$, gives
\[
F(\theta) = (\pi-\theta) + \sin(\theta)\cos(\theta)
\]
Combining the computations, the integral involving $A_{12}$ (resp.~$A_{11}$) contributes $\sin^3(\theta)$ (resp.~$(\pi-\theta) \cos(\theta) + \sin(\theta)\cos^2(\theta)$)
and so 
\begin{eqnarray*}
\label{EQ10}
 \int_D (x_1^2\cos(\theta)+x_1x_2\sin(\theta))e^{-\frac{x_1^2+x_2^2}{2}}dx_1dx_2& =& \sin(\theta) + (\pi-\theta) \cos(\theta) 
\end{eqnarray*}
\begin{rems}
(1) The key to the computation lies in evaluating the integral $\int_0^\infty \erfb{\frac{-x\cot(\theta)}{\sqrt{2}}} x^2 e^{-\frac{x^2}{2}}dx$. 
Using integration by parts, this is easily reduced to evaluating $\int_0^\infty \erfb{\frac{-x\cot(\theta)}{\sqrt{2}}} e^{-\frac{x^2}{2}}dx$ which can be done by the same method used above.\\
(2) For the bias free network, it sufficed to compute the integrals of $x_1x_2$ and $x_1^2$ over $D$ with respect to the normal distribution $p_{\mathcal{N}}$. 
When we add bias, there are five integrals to compute over $D$:  $x_1^2, x_1x_2, x_1, x_2$ and $1$. Here the more difficult integrals are those of $x_1^2$ and $1$ both of which use 
Owen's $T$-function (see sections~\ref{sec: 34}, \ref{sec: 35}). 
\rend
\end{rems}

\subsection{Evaluation of the defining integrals for $f(\ww,\vv)$: I}\label{sec: 22}
Our approach will be to represent $\int_D$ as the double integral
\[
\int_{\bgamma}^\infty \left(\int_{\widetilde{x}_2}^\infty F (x_1,x_2) e^{-\frac{x_2^2}{2}}dx_2\right)e^{-\frac{x_1^2}{2}}dx_1,
\]
where $\widetilde{x}_2 = \bbeta\, \csc(\theta) - x_1 \cot(\theta)$ (see Figure~\ref{D1}), $\theta \in (0,\pi)$, and
\[
F(x_1,x_2) = (x_1\cos(\theta)+x_2\sin(\theta)-\bbeta)(x_1 - \bgamma).
\]
In order to compute the double integral, it suffices, by linearity of the integral, to compute the integral over $D$ for each of the polynomial functions $x_1^2, x_1x_2, x_1, x_2,1$ since
\[
F(x_1,x_2) = \cos(\theta)x_1^2 + \sin(\theta)x_1x_2 - (\bgamma\cos(\theta) + \bbeta)x_1 -\bgamma\sin(\theta)x_2 + \bbeta \bgamma 
\]

\begin{exam}
If $\theta = \pi/2$, the integral of $F$ is easily computed since 
$
\int_D F = \int_{\bgamma}^\infty \left(\int_{\bbeta}^\infty [x_1x_2-\bbeta x_1-\bgamma x_2 + \bgamma\bbeta] e^{-\frac{x_2^2}{2}}dx_2\right)e^{-\frac{x_1^2}{2}}dx_1
$
and  
\begin{enumerate}
\item $\int_D x_1x_2 = e^{-\frac{\bbeta^2+\bgamma^2}{2}}$.
\item $\int_D x_1 = \sqrt{\frac{\pi}{2}}e^{-\frac{\bgamma^2}{2}}\erfcb{\frac{\bbeta}{\sqrt{2}}}$.
\item $\int_D x_2 = \sqrt{\frac{\pi}{2}}e^{-\frac{\bbeta^2}{2}}\erfcb{\frac{\bgamma}{\sqrt{2}}}$.
\item $\int_D 1 = \frac{\pi}{2}\erfcb{\frac{\bgamma}{\sqrt{2}}}\erfcb{\frac{\bbeta}{\sqrt{2}}}$.
\end{enumerate}
Hence  if $\theta = \pi/2$, $f(\ww,\vv) = \frac{\|\ww\|\|\vv\|}{2\pi} \int_D F$ is given by
\begin{multline*}
\begin{aligned}
f(\ww,\vv) =\;\; & \frac{\|\ww\|\|\vv\|\bbeta \bgamma}{4}\erfcb{\frac{\bgamma}{\sqrt{2}}}\erfcb{\frac{\bbeta}{\sqrt{2}}}\\
& - \frac{\|\ww\|\|\vv\|}{2\sqrt{2\pi}}\left( \bbeta e^{-\frac{\bgamma^2}{2}}\erfcb{\frac{\bbeta}{\sqrt{2}}}
+ \gamma e^{-\frac{\bbeta^2}{2}}\erfcb{\frac{\bgamma}{\sqrt{2}}}\right)\\
&+\frac{\|\ww\|\|\vv\|}{2\pi}e^{-\frac{\bbeta^2+\bgamma^2}{2}}
\end{aligned}
\end{multline*}
The symmetry $f(\ww,\vv) = f(\vv,\ww)$ is reflected in the $(\bbeta,\bgamma) \leftrightarrow (\bgamma,\bbeta)$ invariance of the formula. 
\examend
\end{exam}

In the remainder of this section $\int_D$ is computed for the integrands $x_2,x_1,x_1x_2$. Results are given in terms of standard functions: the exponential and error function. 
Computations are valid for $\theta \in (0,\pi)$.  The limiting cases $\theta\arr 0+,\pi-$ are given only for the integrand $x_1$ 
since the contribution to the integral of $F$ from the
the terms $x_1x_2\sin(\theta), -\bgamma x_2\sin(\theta)$ is zero in the limit on account of the factor $\sin(\theta)$. 

Throughout, we set $- x_1 \cot(\theta)+\bbeta \csc(\theta) = \widetilde{x}_2(x_1) = \widetilde{x}_2$ ($x_1$ is omitted if clear from the context).
For future reference,  we record the limits of $\widetilde{x}_2(x_1)$ as $\theta \arr 0+,\pi-$.
\[
\lim_{\theta \arr 0+} \widetilde{x}_2(x_1) = 
\begin{cases}
&+\infty,\;  x_1 < \bbeta \\
&0,\;\;\quad  x_1 = \bbeta \\
&-\infty,\;  x_1 > \bbeta 
\end{cases}
\qquad\lim_{\theta \arr \pi-} \widetilde{x}_2(x_1) = 
\begin{cases}
&+\infty,\;  x_1 > -\bbeta \\
&0,\;  x_1 = -\bbeta \\
&-\infty,\;  x_1 < -\bbeta 
\end{cases}
\]
We frequently use the abbreviations $A = -\cot(\theta)$ and $B = \csc(\theta)$. 
\begin{lemma}\label{LEM: X2}
\[
\int_D x_2 = \sqrt{\frac{\pi}{2}}e^{-\frac{\bbeta^2}{2}}\sin(\theta)\erfcb{\frac{\bgamma\csc(\theta) -\bbeta\cot(\theta)}{\sqrt{2}}}
\]
\end{lemma}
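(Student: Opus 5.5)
The plan is to compute the integral as the iterated integral set up at the start of \S\ref{sec: 22}: integrate first in $x_2$ over $[\widetilde{x}_2,\infty)$, where $\widetilde{x}_2 = Ax_1 + B\bbeta$ with $A=-\cot(\theta)$ and $B=\csc(\theta)$, and then in $x_1$ over $[\bgamma,\infty)$. Throughout, $\theta\in(0,\pi)$, so $\sin(\theta)>0$.

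The inner integral is elementary, exactly as for $A_{12}$ in \S\ref{sec: zerobias}:
\[
\int_{\widetilde{x}_2}^\infty x_2 e^{-\frac{x_2^2}{2}}dx_2 = e^{-\frac{\widetilde{x}_2^2}{2}} .
\]
This leaves the one-dimensional integral
\[
\int_{\bgamma}^\infty e^{-\frac{x_1^2+(\bbeta\csc(\theta)-x_1\cot(\theta))^2}{2}}dx_1 .
\]

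The main step is to complete the square in the exponent. I will use $1+\cot^2(\theta)=\csc^2(\theta)$. The quadratic becomes
\[
\csc^2(\theta)x_1^2 - 2\bbeta\cot(\theta)\csc(\theta)x_1 + \bbeta^2\csc^2(\theta) = \csc^2(\theta)\bigl(x_1-\bbeta\cos(\theta)\bigr)^2 + \bbeta^2\bigl(\csc^2(\theta)-\cot^2(\theta)\bigr).
\]
The last term is simply $\bbeta^2$. The factor $e^{-\frac{\bbeta^2}{2}}$ therefore comes out of the integral, and what remains is a shifted and rescaled Gaussian tail.

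Next I substitute $u = (x_1-\bbeta\cos(\theta))\csc(\theta)/\sqrt{2}$, so that $dx_1 = \sqrt{2}\sin(\theta)\,du$. The lower limit becomes $(\bgamma\csc(\theta)-\bbeta\cot(\theta))/\sqrt{2}$. The definition of $\terfc$ (Appendix~\ref{append: error}), $\int_a^\infty e^{-u^2}du = \frac{\sqrt{\pi}}{2}\erfc{a}$, then gives $\sqrt{2}\sin(\theta)\cdot\frac{\sqrt{\pi}}{2} = \sqrt{\frac{\pi}{2}}\sin(\theta)$ times the stated erfc. This yields the formula in Lemma~\ref{LEM: X2}.

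There is no real obstacle; the only point needing care is the square completion and checking that the constant term collapses to $\bbeta^2$. As a sanity check, at $\theta=\pi/2$ the formula reduces to item (3) of the preceding Example.
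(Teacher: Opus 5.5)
Your proof is correct and is essentially the paper's argument: the same iterated integral, the same inner integral $e^{-\widetilde{x}_2^2/2}$, and the same rewriting of the exponent as $\bbeta^2 + (x_1\csc(\theta) - \bbeta\cot(\theta))^2$. The only difference is that you carry out the final Gaussian-tail substitution explicitly, whereas the paper cites \Refb{EQ2b} with $a = \csc(\theta)/\sqrt{2} > 0$.
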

\begin{proof}  $\int_D x_2 = \int_{\bgamma}^\infty \left(\int_{\widetilde{x}_2}^\infty x_2 e^{-\frac{x_2^2}{2}}dx_2\right)e^{-\frac{x_1^2}{2}}dx_1$.
Clearly,  
\begin{eqnarray*}
\int_{\widetilde{x}_2}^\infty x_2 e^{-\frac{x_2^2}{2}}dx_2& =& e^{-\frac{\widetilde{x}_2^2}{2}} 
= e^{-\frac{(x_1\cot(\theta)-\bbeta\csc(\theta))^2}{2}}
\end{eqnarray*}
Hence 
\begin{equation*}
\int_D x_2 = \int_{\bgamma}^\infty e^{-\frac{(x_1\cot(\theta)-\bbeta\csc(\theta))^2}{2}}e^{-\frac{x_1^2}{2}}dx_1 = \int_{\bgamma}^\infty e^{-(Bx_1+A\bbeta)^2/2}e^{-\frac{\bbeta^2}{2}}dx_1
\end{equation*}
The result follows from \Refb{EQ2b} since $\csc(\theta)=B > 0$ on $(0,\pi)$. \end{proof}
\begin{lemma}\label{LEM: X1}
\mbox{ } \\
\vspace*{-0.15in}

\begin{enumerate}
\item If $\theta \in (0,\pi)$, then
\begin{multline*}
\begin{aligned}
\hspace*{0.35in}\int_{D_\theta} x_1 =&\sqrt{\frac{\pi}{2}}e^{-\frac{\bgamma^2}{2}}\erfcb{\frac{-\bgamma\cot(\theta) + \bbeta\csc(\theta)}{\sqrt{2}}} \\  
&+ \sqrt{\frac{\pi}{2}}e^{-\frac{\bbeta^2}{2}}\cos(\theta) \erfcb{\frac{\bgamma\,\csc(\theta)-\bbeta\cot(\theta)}{\sqrt{2}}}
\end{aligned}
\end{multline*}
\item Set $M = \max \{\bgamma,\bbeta\}$. If $\theta = 0$, then $D_0 = [M,\infty) \times \real$ and
\[
\int_{D_0} x_1 = \lim_{\theta \arr 0+} \int_{D_{\theta}} x_1 = \sqrt{2\pi}e^{-\frac{M^2}{2}} 
\]
\item If $\theta = \pi$ and $\bgamma \le -\bbeta$, then $D_\pi = [\bgamma,-\bbeta] \times \real$ and 
\[
\int_{D_\pi} x_1 = \lim_{\theta \arr \pi-} \int_{D_{\theta}} x_1 =  \sqrt{2\pi} \big[e^{-\frac{\bgamma^2}{2}} -e^{-\frac{\bbeta^2}{2}}\big]
\]
If  $\theta = \pi$ and $\bgamma \ge -\bbeta$, then $\int_{D_\pi} x_1 = \lim_{\theta \arr \pi-} \int_{D_{\theta}} x_1 = 0$.
\end{enumerate}
For (2) (resp.~(3)) see also Lemma~\ref{lem: zero} (resp.~Lemma~\ref{lem: pi}).
\end{lemma}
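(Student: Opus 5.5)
Write $\int_{D_\theta} x_1$ as the iterated integral
\[
\int_{\bgamma}^\infty x_1\left(\int_{\widetilde{x}_2}^\infty e^{-\frac{x_2^2}{2}}dx_2\right)e^{-\frac{x_1^2}{2}}dx_1, \qquad \widetilde{x}_2 = Ax_1+\bbeta B,
\]
with $A=-\cot(\theta)$, $B=\csc(\theta)$, as in Lemma~\ref{LEM: X2}. Rather than evaluate the inner integral, I would integrate by parts in $x_1$ and avoid error-function integrals entirely.

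Set $G(x_1)=\int_{\widetilde{x}_2(x_1)}^\infty e^{-x_2^2/2}dx_2$, so that $G(x_1)=\sqrt{\pi/2}\,\erfcb{\widetilde{x}_2/\sqrt{2}}$ and $G'(x_1) = -A\,e^{-\widetilde{x}_2^2/2}$. Since $x_1e^{-x_1^2/2}dx_1 = -d(e^{-x_1^2/2})$, integration by parts gives
\[
\int_{D_\theta} x_1 = e^{-\frac{\bgamma^2}{2}}G(\bgamma) - A\int_{\bgamma}^\infty e^{-\frac{\widetilde{x}_2^2}{2}}e^{-\frac{x_1^2}{2}}dx_1 .
\]
The boundary term at $\infty$ vanishes because $G$ is bounded by $\sqrt{2\pi}$.

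The boundary term at $\bgamma$ is $\sqrt{\pi/2}\,e^{-\bgamma^2/2}\erfcb{(-\bgamma\cot\theta+\bbeta\csc\theta)/\sqrt{2}}$, which is the first summand. The remaining integral is exactly the one computed in the proof of Lemma~\ref{LEM: X2}. There, completing the square gives $x_1^2+\widetilde{x}_2^2=(Bx_1+A\bbeta)^2+\bbeta^2$, using $B^2=1+A^2$. Hence the integral equals $\int_D x_2/\sin(\theta)$. Multiplying by $-A=\cot(\theta)$ gives $\cos(\theta)\sqrt{\pi/2}\,e^{-\bbeta^2/2}\erfcb{(\bgamma\csc\theta-\bbeta\cot\theta)/\sqrt{2}}$, which is the second summand. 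This proves (1).

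For (2) and (3) I would take the limits $\theta\to0+$ and $\theta\to\pi-$ in formula (1), using the recorded limits of the arguments. In the first term, the argument $B\bbeta+A\bgamma$ behaves like $(\bbeta-\bgamma)/\theta$ as $\theta\to0+$. The erfc therefore tends to $2$, $1$, or $0$ according as $\bbeta<\bgamma$, $\bbeta=\bgamma$, or $\bbeta>\bgamma$. The second term behaves symmetrically, with argument about $(\bgamma-\bbeta)/\theta$ and $\cos\theta\to1$.

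In each case the sum is $\sqrt{2\pi}e^{-M^2/2}$. At equality the two contributions each give $\sqrt{\pi/2}e^{-\bbeta^2/2}$, which sum to $\sqrt{2\pi}\cdot\frac12\cdot 2e^{-M^2/2}$; this matches.

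As $\theta\to\pi-$, the arguments behave like $(\bbeta+\bgamma)/(\pi-\theta)$ for both terms, and $\cos\theta\to-1$. If $\bgamma+\bbeta>0$, both erfc's tend to $0$, giving $0$. If $\bgamma+\bbeta<0$, both tend to $2$, giving $\sqrt{2\pi}(e^{-\bgamma^2/2}-e^{-\bbeta^2/2})$. On the boundary $\bgamma=-\bbeta$, both terms give $\sqrt{\pi/2}e^{-\bbeta^2/2}$ with opposite signs, so the limit is $0$, consistent with both stated cases.

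Finally, I would check these limits against direct integration over $D_0$ and $D_\pi$ from Remarks~\ref{rem: intdom}(2). For example, $\int_M^\infty\int_\real x_1e^{-|x|^2/2} = \sqrt{2\pi}e^{-M^2/2}$. Justifying that the limit equals the integral over the limiting domain is dominated convergence, since $|x_1|p_{\mathcal N}$ is integrable. The only delicate point is the careful sign and limit bookkeeping at the boundary cases $\bbeta=\bgamma$ and $\bbeta=-\bgamma$, where the erfc arguments tend to $0$ rather than $\pm\infty$.
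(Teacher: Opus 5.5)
Your argument is correct. Part (1) is essentially the paper's proof. The paper writes the inner integral as $\sqrt{\pi/2}\,\bigl(1-\mathrm{erf}(\cdot)\bigr)$, splits off the constant, integrates the $\mathrm{erf}$ piece by parts in $x_1$, and finishes by completing the square. That is the same computation as your single integration by parts with $G$.

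There is one slip. The remaining integral $\int_{\bgamma}^\infty e^{-(x_1^2+\widetilde{x}_2^2)/2}\,dx_1$ is exactly $\int_D x_2$, since it is literally the integral in the proof of Lemma~\ref{LEM: X2}. So it equals $\sin(\theta)\sqrt{\pi/2}\,e^{-\bbeta^2/2}\erfcb{(\bgamma\csc\theta-\bbeta\cot\theta)/\sqrt{2}}$, not $\int_D x_2/\sin(\theta)$. With the extra factor $1/\sin(\theta)$, multiplying by $\cot(\theta)$ would put $\cot(\theta)$, not $\cos(\theta)$, in front of the second summand. The summand you actually write down is the correct one.

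For (2) and (3) your route differs from the paper's. The paper argues on the domain. For $\bgamma<\bbeta$ it shows that the part of $D_\theta$ over the strip $[\bgamma,\bbeta]\times\real$ contributes nothing as $\theta\arr 0+$, using either a far tail plus a shrinking triangle, or uniform convergence of the $\mathrm{erfc}$ to $0$ on $[\bgamma,\bbeta-\varepsilon]$. It omits (3).

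You instead pass to the limit in the closed form from (1), sorting by the sign of $\bbeta-\bgamma$ (resp.\ $\bbeta+\bgamma$). You handle the degenerate cases through $\bbeta(\csc\theta-\cot\theta)=\bbeta\tan(\theta/2)\arr 0$ and $\bbeta(\csc\theta+\cot\theta)=\bbeta\cot(\theta/2)\arr 0$. Your case analysis, including these boundary cases, is correct and gives (3) at no extra cost.

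The cost of your route is that it depends on an elementary closed form. No such form is available for $\int_{D_\theta}1$ and $\int_{D_\theta}x_1^2$, where the $T$-function enters. The paper's formula-free argument is the kind it reuses in Lemmas~\ref{lem: ITlim} and~\ref{lem: spcases}.

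Your dominated-convergence remark is the cleanest formula-free version and works for every integrand. State what it actually needs: $\mathbf{1}_{D_\theta}\arr\mathbf{1}_{D_0}$ (resp.\ $\mathbf{1}_{D_\pi}$) pointwise off the null line $x_1=\bbeta$ (resp.\ $x_1=-\bbeta$), with $|x_1|\,p_{\mathcal{N}}$ as dominating function. For this particular lemma even that is superfluous. You evaluate both the limit of the formula and the integral over the limiting domain explicitly, and they agree.
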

\begin{proof} (1)  $\int_D x_1 = \int_{\bgamma}^\infty \left(\int_{\widetilde{x}_2}^\infty e^{-\frac{x_2^2}{2}}dx_2\right)x_1e^{-\frac{x_1^2}{2}}dx_1$. 
By~\Refb{EQ2b},
\begin{eqnarray*}
\int_{\widetilde{x}_2}^\infty e^{-\frac{x_2^2}{2}}dx_2 & = & \sqrt{\frac{\pi}{2}}\big[1 - \erfb{\frac{\widetilde{x}_2}{\sqrt{2}}}\big]\\
& = & \sqrt{\frac{\pi}{2}}\left[1 - \erfb{\frac{-x_1\cot(\theta) + \bbeta\csc(\theta)}{\sqrt{2}}}\right]
\end{eqnarray*}
It remains to compute
\begin{eqnarray*}
\sqrt{\frac{\pi}{2}} \int^\infty_{\bgamma} x_1\left[1 - \erfb{\frac{-x_1\cot(\theta) + \bbeta\csc(\theta)}{\sqrt{2}}}\right]e^{-\frac{x_1^2}{2}}dx_1 &=&\sqrt{\frac{\pi}{2}}e^{-\frac{\bgamma^2}{2}} -\\
\hspace*{0.3in} \sqrt{\frac{\pi}{2}} \int_{\bgamma}^\infty x_1 e^{-\frac{x_1^2}{2}} \erfb{\frac{-x_1\cot(\theta) + \bbeta\csc(\theta)}{\sqrt{2}}}dx_1&\defoo & I_1-I_2,
\end{eqnarray*}
where $I_1 = \sqrt{\frac{\pi}{2}}e^{-\frac{\bgamma^2}{2}}$.
Integrating by parts  and using \Refb{EQ1a}.
\begin{eqnarray*}
I_2& =& \sqrt{\frac{\pi}{2}} e^{-\frac{\bgamma^2}{2}}\erfb{\frac{A\bgamma + B\bbeta}{\sqrt{2}}}   -
 \cot(\theta)\int_{\bgamma}^\infty e^{-\frac{x_1^2}{2}} e^{-\frac{(Ax_1 + B\bbeta)^2}{2}}dx_1
\end{eqnarray*} 
Now $ e^{-\frac{x_1^2}{2}} e^{-\frac{(Ax_1 + B\bbeta)^2}{2}} = e^{-(Bx_1+A\bbeta)^2/2}e^{-\frac{\bbeta^2}{2}} $ and so,
by \Refb{EQ2b}, with $a = \csc(\theta)/\sqrt{2}>0$ on $(0,\pi)$ and $b = -\cot(\theta)/\sqrt{2}$,
\[
\hspace*{-0.3in}I_2  = \sqrt{\frac{\pi}{2}}\left[e^{\frac{-\bgamma^2}{2}}\erfb{\frac{A\bgamma + B\bbeta}{\sqrt{2}}}  - e^{-\frac{\bbeta^2}{2}}\cos(\theta) \erfcb{\frac{B\bgamma+A\bbeta}{\sqrt{2}}}\right]
\]
and the result follows.   \\
(2) For the evaluation of $\lim_{\theta \arr 0+}\int_{D_\theta} x_1$, the main step is to show that if $\bgamma < \bbeta$, then 
$\lim_{\theta \arr 0+}\int_{D_\theta \cap([\bgamma,\bbeta]\times\real)} x_1 = 0$. Referring to Figure~\ref{fig: intR}, $L_\theta$ is the line
defining the lower boundary of $D_\theta$ and is given by the equation $x_1 \cos(\theta) + x_2\sin(\theta) = \bbeta$. The line $L_\theta$
intersects $x_1 = \bbeta$ at $X_2(\theta) = X_2 = \bbeta(\cot(\theta) - \csc(\theta))$. Hence 
$X_2(\theta)$ is monotone function of $\theta$ and $\lim_{\theta \arr 0+} X_2(\theta) = 0$. In the figure, $\bbeta > 0$ and so $X_2(\theta) < 0$ and $X_2(\theta)$ is strictly monotone increasing.  On the other hand, $L_\theta$ meets 
$x_1 = \bgamma$ at the point
$Y_2(\theta) = \bbeta \csc(\theta) - \bgamma \cot(\theta)$ and since $\bbeta\csc(\theta) >  \bgamma \cot(\theta)$ on $(0,\pi/2)$, $\lim_{\theta \arr 0+} Y_2(\theta) = +\infty$.  

There are two approaches. (1) A direct proof that if $\bbeta \ge \bgamma$ then
\[
\lim_{\theta \arr 0+}\int_{\bgamma}^{\bbeta} \left(\int_{\widetilde{x}_2}^\infty e^{-\frac{x_2^2}{2}}dx_2\right)x_1e^{-\frac{x_1^2}{2}}dx_1 = 0
\]
Alternatively, (2) A Proof that $\lim_{\theta \arr 0+}\int_{\bgamma}^{\bbeta}\erfcb{\frac{-x_1\cot(\theta) + \bbeta\csc(\theta)}{\sqrt{2}}} = 0$ if $\bbeta > \bgamma$.
We use the first method here; later we use the second method.

\begin{figure}[h]
\centering
\includegraphics[width=0.7\textwidth]{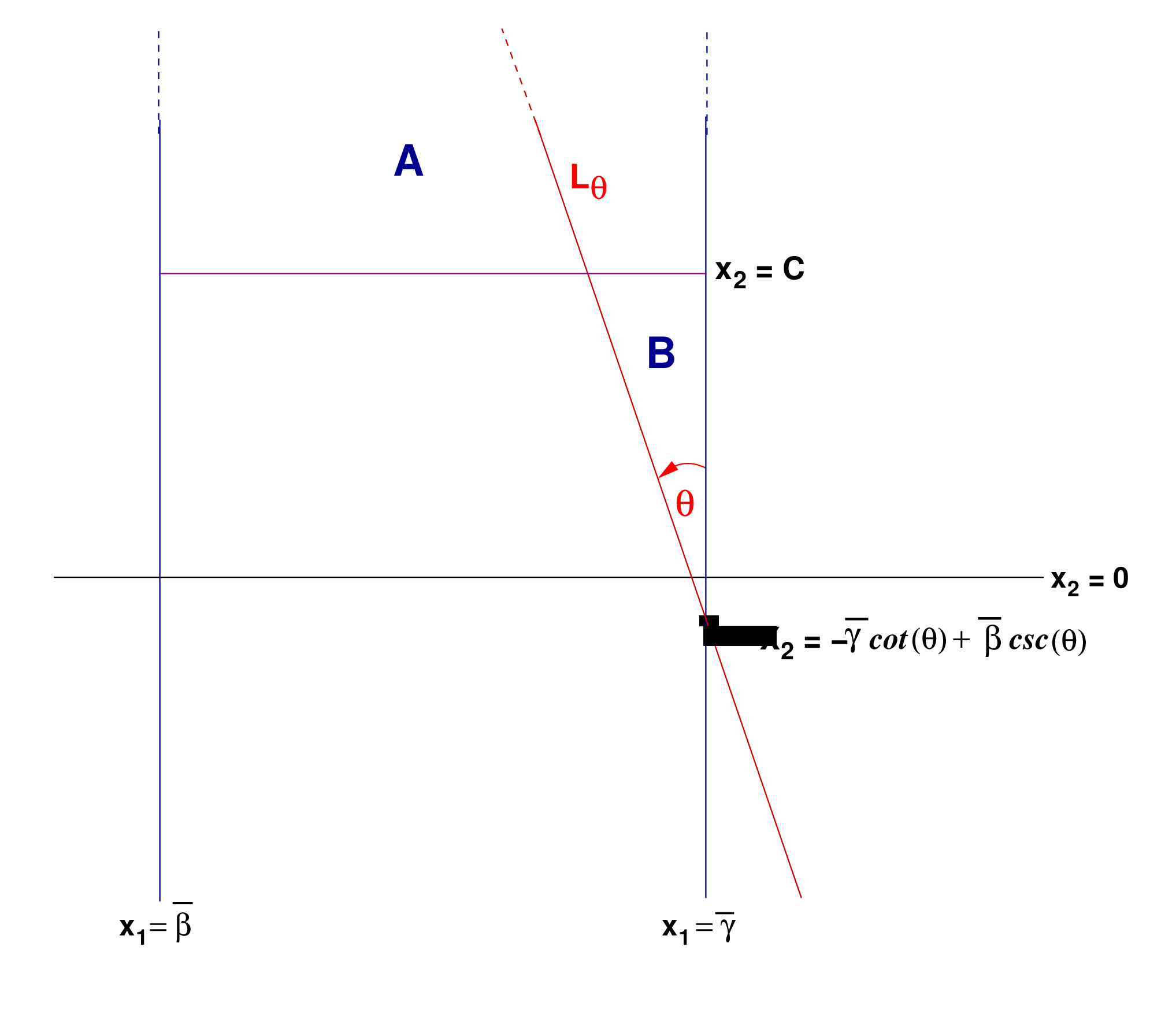}
\caption{The region of integration on the strip $[\bgamma,\bbeta]\times \real$, $\theta$ near zero and $\bbeta < \bgamma$.}
\label{fig: intR}
\end{figure}

For the first approach, it is easy to see that the integral over the region $A$ in  Figure~\ref{fig: intR} is bounded by $k e^{-C^2/2}$, where $k$ is a constant independent of $C$, and
the integral over the triangular region $B$ converges to zero as $\theta \arr 0+$. Formally, given $\varepsilon > 0$, choose $C$ so that the integral over $A$ is bounded by $\varepsilon/2$, then choose $\kappa(\varepsilon)  > 0$, so that 
the integral over the triangular region $B$ is bounded by $\varepsilon/2$ for $\theta \in (0,\kappa(\varepsilon))$. 

Alternatively, $\lim_{\theta\arr 0+} \erfcb{\frac{-x_1\cot(\theta) + \bbeta\csc(\theta)}{\sqrt{2}}} = 0$ uniformly for $x_1 \in [\bgamma,\bbeta-\varepsilon]$, $\varepsilon > 0$, and the error arising from the integral
over $[\bbeta-\varepsilon,\bbeta]$ is bounded by a constant multiple of $\varepsilon$. 

Either approach gives the explicit formula for $D_0$. \\
(3) The proof is similar to that of (2) and is omitted. 
\end{proof}

Neither of the terms $x_1$ or $x_2$ occurs in the bias free case. However,  $x_1 x_2$ does occur if there is zero bias. 
\begin{lemma}\label{LEM: X1X2}
\begin{multline}
\begin{aligned}
\int_D x_1x_2  =&e^{-\frac{\bbeta^2}{2}}\sin(\theta)\sqrt{\frac{\pi}{2}}\bbeta \cos(\theta)\erfcb{\frac{\bgamma\csc(\theta)-\bbeta \cot(\theta)}{\sqrt{2}}}\\
& +e^{-\frac{\bbeta^2}{2}}\sin^2(\theta)e^{-\frac{(\bgamma\csc(\theta) - \bbeta \cot(\theta))^2}{2}}
\end{aligned}
\end{multline}
\end{lemma}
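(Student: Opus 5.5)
The plan is to proceed exactly as in Lemmas~\ref{LEM: X2} and \ref{LEM: X1}: write the integral over $D$ as an iterated integral with the $x_2$-integration innermost, namely
\[
\int_D x_1x_2 = \int_{\bgamma}^\infty \left(\int_{\widetilde{x}_2}^\infty x_2 e^{-\frac{x_2^2}{2}}dx_2\right)x_1e^{-\frac{x_1^2}{2}}dx_1,
\]
with $\widetilde{x}_2 = Ax_1 + B\bbeta$, where $A=-\cot(\theta)$ and $B=\csc(\theta)$ as before and $\theta\in(0,\pi)$. The inner integral is elementary and equals $e^{-\widetilde{x}_2^2/2}$, exactly as in the proof of Lemma~\ref{LEM: X2}. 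This leaves the one-variable integral $\int_{\bgamma}^\infty x_1 e^{-\frac{x_1^2}{2}}e^{-\frac{(Ax_1+B\bbeta)^2}{2}}dx_1$.

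Next I would use the completing-the-square identity already used in Lemmas~\ref{LEM: X2} and \ref{LEM: X1}:
\[
x_1^2 + (Ax_1+B\bbeta)^2 = (Bx_1+A\bbeta)^2 + \bbeta^2 .
\]
It follows from $1+A^2=B^2$ and the fact that the cross terms $2AB\bbeta x_1$ agree on both sides. This pulls out the factor $e^{-\bbeta^2/2}$ and reduces the problem to $\int_{\bgamma}^\infty x_1 e^{-(Bx_1+A\bbeta)^2/2}dx_1$.

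I would then substitute $u = Bx_1 + A\bbeta$, which is legitimate since $B>0$ on $(0,\pi)$. Then $x_1 = (u-A\bbeta)/B$ and $dx_1 = du/B$, and the lower limit becomes $u_0 = B\bgamma + A\bbeta = \bgamma\csc(\theta)-\bbeta\cot(\theta)$. The integral becomes
\[
\frac{1}{B^2}\int_{u_0}^\infty (u - A\bbeta)e^{-\frac{u^2}{2}}du .
\]
Its two pieces are $e^{-u_0^2/2}$ and $-A\bbeta\sqrt{\pi/2}\,\erfcb{\frac{u_0}{\sqrt{2}}}$, the second by \Refb{EQ2b}.

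Finally, $1/B^2=\sin^2(\theta)$ and $-A\sin^2(\theta) = \sin(\theta)\cos(\theta)$. Multiplying back by $e^{-\bbeta^2/2}$ gives precisely the two terms in the statement.

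There is no real obstacle here. The only point needing care is the sign bookkeeping: that $-A\bbeta\sin^2(\theta)$ yields $+\bbeta\sin(\theta)\cos(\theta)$, and that positivity of $B$ preserves the orientation of the substitution.
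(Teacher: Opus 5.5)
Your proposal is correct and follows the paper's proof. The paper uses the same iterated integral, the same inner integral $e^{-\widetilde{x}_2^2/2}$, and the same completion of the square to extract $e^{-\bbeta^2/2}$. The only difference is that you evaluate the remaining integral by the substitution $u = Bx_1 + A\bbeta$, while the paper quotes Example~\ref{Ex: EQ3} with $a=\csc(\theta)/\sqrt{2}$ and $b=-\bbeta\cot(\theta)/\sqrt{2}$; this is the same computation.
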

\begin{proof} We have
\[
\int_D x_1x_2 = \int_{\bgamma}^\infty \left(\int_{\widetilde{x}_2}^\infty x_2 e^{-\frac{x_2^2}{2}}dx_2\right)x_1 e^{-\frac{x_1^2}{2}}dx_1
\]
As in Lemma~\ref{LEM: X2},
\[
\int_{\widetilde{x}_2}^\infty x_2 e^{-\frac{x_2^2}{2}}dx_2 =  e^{-\frac{(-x_1\cot(\theta)+\bbeta\csc(\theta))^2}{2}}
\]
Hence
\[ 
\int_D x_1 x_2  =  \int_{\bgamma}^\infty x_1 e^{-\frac{x_1^2}{2}}e^{-\frac{(-x_1\cot(\theta)+\bbeta\csc(\theta))^2}{2}} dx_1 
\]
Now
\[
-\frac{x_1^2}{2} - \frac{(-x_1\cot(\theta)+\bbeta\csc(\theta))^2}{2} = -\frac{\beta^2}{2} - \frac{(x_1\csc(\theta) - \bbeta \cot(\theta))^2}{2}
\]
and so 
\[
\int_D x_1x_2 =e^{-\frac{\bbeta^2}{2}} \int_{\bgamma}^\infty x_1 e^{-\frac{(x_1\csc(\theta) - \bbeta \cot(\theta))^2}{2}} dx_1
\]
Apply Example~\ref{Ex: EQ3} with $a = \frac{\csc(\theta)}{\sqrt{2}}$ and $b = -\frac{\bbeta\cot(\theta)}{\sqrt{2}}$ to deduce that
\[
\int_D x_1x_2 =e^{-\frac{\bbeta^2}{2}}\sin(\theta)\left[\sqrt{\frac{\pi}{2}}\bbeta \cos(\theta)\erfcb{\frac{B\bgamma +A\bbeta}{\sqrt{2}}} +\sin(\theta)e^{-\frac{(B\bgamma + A\bbeta)^2}{2}}\right]
\]
since $\csc(\theta) > 0$ on $(0,\pi)$.  

On account of the factor $\sin(\theta)$, there is no need to analyze the limits as $\theta \arr 0+,\pi-$.
\end{proof}
\subsection{Evaluation of the defining integrals for $f(\ww,\vv)$: II}
In this section, we start the analysis of $\int_D 1$ and $\int_D x_1^2$ for $\theta \in (0,\pi)$ and find the limits of these integrals as $\theta \arr 0+,\pi-$.  

The methods of the previous section show that for $\theta\in (0,\pi)$
\begin{eqnarray*}
\int_D 1 & = & \int_{\bgamma}^\infty \left(\int^\infty_{\widetilde{x}_2} 1e^{-\frac{x_2^2}{2}}dx_2\right)e^{-\frac{x_1^2}{2}}dx_1 \\
& = & \sqrt{\frac{\pi}{2}}\int_{\bgamma}^\infty \left(1 - \erfb{\frac{-x_1\cot(\theta) +\bbeta \csc(\theta)}{\sqrt{2}}}\right)e^{-\frac{x_1^2}{2}}dx_1 \\
& = & \frac{\pi}{2}\erfcb{\frac{\bgamma}{\sqrt{2}}}- \sqrt{\frac{\pi}{2}}\int_{\bgamma}^\infty e^{-\frac{x_1^2}{2}}\erfb{\frac{-x_1\cot(\theta) +\bbeta \csc(\theta)}{\sqrt{2}}}dx_1
\end{eqnarray*}

For the integral of $x_1^2$ over $D$ we have
\begin{eqnarray*}
\int_D x_1^2 & = & \int_{\bgamma}^\infty \left(\int^\infty_{\widetilde{x}_2} 1e^{-\frac{x_2^2}{2}}dx_2\right)x_1^2e^{-\frac{x_1^2}{2}}dx_1 \\
& = & \sqrt{\frac{\pi}{2}}\int_{\bgamma}^\infty \left(1 - \erfb{\frac{-x_1\cot(\theta) +\bbeta \csc(\theta)}{\sqrt{2}}}\right)x_1^2e^{-\frac{x_1^2}{2}}dx_1 \\
& = & \sqrt{\frac{\pi}{2}}\bgamma e^{-\frac{\bgamma^2}{2}} + \frac{\pi}{2}\erfcb{\frac{\bgamma}{\sqrt{2}}} -\\
&& \sqrt{\frac{\pi}{2}} \int_{\bgamma}^\infty x_1^2e^{-\frac{x_1^2}{2}}\erfb{\frac{-x_1\cot(\theta) +\bbeta \csc(\theta)}{\sqrt{2}}} dx_1
\end{eqnarray*}

Using~\Refb{EQ5a}, we have
 
\begin{multline*}
\sqrt{\frac{\pi}{2}}\int_{\bgamma}^\infty x_1^2e^{-\frac{x_1^2}{2}} \erfb{\frac{-x_1\cot(\theta) +\bbeta \csc(\theta)}{\sqrt{2}}} dx_1  =  \\
\begin{aligned}
&\sqrt{\frac{\pi}{2}}\bgamma e^{-\frac{\bgamma^2}{2}}\erfb{\frac{-\bgamma\cot(\theta)+\bbeta \csc(\theta)}{\sqrt{2}}} \\
&-\sqrt{\frac{\pi}{2}}\bbeta\cos^2(\theta)e^{-\frac{\bbeta^2}{2}}\left[1 -\erfb{\frac{\bgamma\csc(\theta)-\bbeta\cot(\theta)}{\sqrt{2}}}\right] \\
&-\sin(\theta)\cos(\theta)e^{-\frac{\bbeta^2}{2}}e^{-\frac{(\bgamma\csc(\theta)-\bbeta \cot(\theta))^2}{2}} \\
&+ \sqrt{\frac{\pi}{2}}\int_{\bgamma}^\infty e^{-\frac{x_1^2}{2}}\erfb{\frac{-x_1\cot(\theta) + \bbeta\csc(\theta)}{\sqrt{2}}}dx_1
\end{aligned}
\end{multline*}
Hence
\begin{eqnarray*}
\int_D x_1^2& =& \sqrt{\frac{\pi}{2}}\bgamma e^{-\frac{\bgamma^2}{2}}\erfcb{\frac{-\bgamma\cot(\theta)+\bbeta \csc(\theta)}{\sqrt{2}}} +\frac{\pi}{2}\erfcb{\frac{\bgamma}{\sqrt{2}}}\\
&& + \sqrt{\frac{\pi}{2}}\bbeta\cos^2(\theta)e^{-\frac{\bbeta^2}{2}}\erfcb{\frac{\bgamma\csc(\theta)-\bbeta\cot(\theta)}{\sqrt{2}}} \\
&& + \sin(\theta)\cos(\theta)e^{-\frac{\bbeta^2}{2}}e^{-\frac{(\bgamma\csc(\theta)-\bbeta \cot(\theta))^2}{2}} \\
&& - \sqrt{\frac{\pi}{2}} \int_{\bgamma}^\infty e^{-\frac{x_1^2}{2}}\erfb{\frac{-x_1\cot(\theta) +\bbeta \csc(\theta)}{\sqrt{2}}} dx_1
\end{eqnarray*}

The integral 
\begin{equation}\label{EQ: integral} 
\sqrt{\frac{\pi}{2}}\int_{\bgamma}^\infty e^{-\frac{t^2}{2}} \erfb{\frac{-t\cot(\theta) +\bbeta \csc(\theta)}{\sqrt{2}}} dt
\end{equation}
occurs in both formulas obtained above for $\int_D 1$ and $\int_D x_1^2$. 
In general, $\int_b^\infty  e^{-\frac{t^2}{2}}\erfb{\frac{at+b}{\sqrt{2}}} dt$ cannot be expressed in terms of standard elementary functions. However, there are some exceptions: 
\begin{eqnarray}
\label{genT1}
\sqrt{\frac{\pi}{2}}\int_0^\infty e^{-\frac{t^2}{2}}\erfb{\frac{at}{\sqrt{2}}}dt&=& \tan^{-1}(a) \\
\label{genT2}
\sqrt{\frac{\pi}{2}}\int_{-\infty}^\infty e^{-\frac{t^2}{2}}\erfb{\frac{at+b}{\sqrt{2}}}dt&=& \pi\,\erfb{\frac{b}{\sqrt{2(a^2+1)}}}  
\end{eqnarray}
\begin{multline} \label{genT3}
\lim_{\theta\arr 0+} \sqrt{\frac{\pi}{2}} \int_{\bbeta}^\infty e^{-\frac{t^2}{2}}\erfb{\frac{-t\cot(\theta)+\bbeta \csc(\theta)}{\sqrt{2}}}dt 
 = -\frac{\pi}{2}\erfcb{\frac{\bbeta}{\sqrt{2}}}
\end{multline}
If $a = -\cot(\theta)$, $b = \bbeta \csc(\theta)$, then \Refb{genT1}
equals $\theta - \frac{\pi}{2}$ and \Refb{genT2} is equal to $\pi\erfb{\frac{\bbeta}{\sqrt{2}}}$. 

Notwithstanding the lack of a simple representation of~\Refb{EQ: integral}, we can evaluate the limit of the integral as $\theta \arr 0+,\pi-$.

\begin{lemma}\label{lem: ITlim}
Set $I_\theta = \sqrt{\frac{\pi}{2}}\int_{\bgamma}^\infty e^{-\frac{t^2}{2}} \erfb{\frac{-t\cot(\theta) +\bbeta \csc(\theta)}{\sqrt{2}}}dt$.
\begin{enumerate}
\item The limit as $\theta \arr 0+$ 
\[
\lim_{\theta\arr 0+} I_\theta = 
\begin{cases}
& = \frac{\pi}{2}\left[ 2 \erfb{\frac{\bbeta}{\sqrt{2}}} - \erfb{\frac{\bgamma}{\sqrt{2}}} -1\right], \;\text{if } \bbeta \ge \bgamma. \\
& = -\frac{\pi}{2} \erfcb{\frac{\bgamma}{\sqrt{2}}}, \; \text{if } \bbeta \le  \bgamma. 
\end{cases}
\]
\item The limit as $\theta \arr \pi-$ 
\[
\lim_{\theta\arr \pi-} I_\theta = 
\begin{cases}
& = \frac{\pi}{2}\left[ 2 \erfb{\frac{\bbeta}{\sqrt{2}}} + \erfb{\frac{\bgamma}{\sqrt{2}}} +1\right], \;\text{if }  \bgamma + \bbeta \le 0. \\
& = \frac{\pi}{2} \erfcb{\frac{\bgamma}{\sqrt{2}}},\;\text{if } \bgamma + \bbeta \ge  0.
\end{cases}
\]
\end{enumerate}
\end{lemma}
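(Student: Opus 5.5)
Write $\phi_\theta(t) = \frac{-t\cot(\theta)+\bbeta\csc(\theta)}{\sqrt{2}}$. The plan is to pass to the limit pointwise inside the integral and then justify the exchange by dominated convergence. The domination is trivial: $|\erf{\cdot}|\le 1$, so $e^{-t^2/2}$ is an integrable majorant on $[\bgamma,\infty)$ uniformly in $\theta$. Everything therefore reduces to computing the pointwise limit of $\erf{\phi_\theta(t)}$ for almost every $t$. After that, the four cases are just elementary integrals of $e^{-t^2/2}$ against a step function.

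For (1), take $\theta\arr 0+$. Write $-t\cot(\theta)+\bbeta\csc(\theta) = \frac{\bbeta - t\cos(\theta)}{\sin(\theta)}$. The numerator tends to $\bbeta - t$ and $\sin(\theta)\arr 0+$, which matches the limits of $\widetilde{x}_2$ recorded before Lemma~\ref{LEM: X2}. Hence $\erf{\phi_\theta(t)} \arr 1$ for $t<\bbeta$ and $\arr -1$ for $t>\bbeta$. The single point $t=\bbeta$ is irrelevant.

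If $\bbeta\ge\bgamma$, the limit integral is
\[
\sqrt{\tfrac{\pi}{2}}\Big(\int_{\bgamma}^{\bbeta} e^{-t^2/2}dt - \int_{\bbeta}^\infty e^{-t^2/2}dt\Big).
\]
The two pieces are $\sqrt{\pi/2}\,\big(\erf{\bbeta/\sqrt{2}}-\erf{\bgamma/\sqrt{2}}\big)$ and $\sqrt{\pi/2}\,\erfc{\bbeta/\sqrt{2}}$ by \Refb{EQ2b}. Multiplying by $\sqrt{\pi/2}$ and combining gives $\frac{\pi}{2}\big[2\erf{\bbeta/\sqrt2}-\erf{\bgamma/\sqrt2}-1\big]$.

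If $\bbeta\le\bgamma$, the integrand tends to $-e^{-t^2/2}$ on all of $(\bgamma,\infty)$. The limit is then $-\frac{\pi}{2}\erfc{\bgamma/\sqrt2}$, consistent with \Refb{genT3}. At $\bbeta=\bgamma$ both formulas agree, which is a useful check.

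For (2), take $\theta\arr\pi-$. Now $\bbeta - t\cos(\theta)\arr \bbeta + t$ and $\sin(\theta)\arr 0+$. So $\erf{\phi_\theta(t)}\arr 1$ for $t>-\bbeta$ and $\arr -1$ for $t<-\bbeta$.

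If $\bgamma+\bbeta\ge 0$, then $t>\bgamma\ge -\bbeta$ on the whole range, so the limit is $\sqrt{\pi/2}\int_{\bgamma}^\infty e^{-t^2/2}dt = \frac{\pi}{2}\erfc{\bgamma/\sqrt2}$.

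If $\bgamma\le-\bbeta$, the limit is
\[
\sqrt{\tfrac{\pi}{2}}\Big(-\int_{\bgamma}^{-\bbeta}+\int_{-\bbeta}^\infty\Big)e^{-t^2/2}dt.
\]
This evaluates to $\frac{\pi}{2}\big[-\erf{-\bbeta/\sqrt2}+\erf{\bgamma/\sqrt2}+\erfc{-\bbeta/\sqrt2}\big]$. Using oddness of $\erf{\cdot}$, this simplifies to $\frac{\pi}{2}\big[2\erf{\bbeta/\sqrt2}+\erf{\bgamma/\sqrt2}+1\big]$. Again the two formulas agree on the boundary $\bgamma=-\bbeta$.

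The only step needing care is the pointwise limit when $\bbeta - t\cos\theta$ vanishes, i.e.\ at $t=\bbeta$ (respectively $t=-\bbeta$). This is a single point, of measure zero, so it does not affect the dominated convergence argument. No uniformity estimate of the kind used in Lemma~\ref{LEM: X1}(2) is needed, because the erf factor is bounded by 1. I expect the main work to be only the bookkeeping of signs in the $\erf{\cdot}$/$\erfc{\cdot}$ identities from Appendix~\ref{append: error}.
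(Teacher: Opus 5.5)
Your proposal is correct and uses the same idea as the paper: away from the single point $t=\bbeta$ (resp.\ $t=-\bbeta$), the erf factor tends to $\pm 1$, so the limit is the integral of $e^{-t^2/2}$ against a step function. The difference is in how the exchange of limit and integral is justified. The paper does it by hand, excising a short interval $[\bbeta-\delta_\varepsilon,\bbeta]$ and using monotonicity of $f_\theta$ to get uniform convergence on $[\bgamma,\bbeta-\delta_\varepsilon]$; it treats only that piece explicitly and calls the rest similar. Your dominated-convergence argument with majorant $e^{-t^2/2}$ covers the unbounded piece $[\bbeta,\infty)$ and part (2) in one stroke.
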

\begin{proof} We prove (1); details for (2) are similar.  For $\theta \in (0,\pi/4]$, define
\begin{eqnarray*}
f_\theta(t)& =& \frac{-t\cot(\theta) +\bbeta \csc(\theta)}{\sqrt{2}}\\
& =& \frac{(\bbeta -t) \csc(\theta)}{\sqrt{2}} + \frac{t(\csc(\theta) - \cot(\theta))}{\sqrt{2}} , \; t \in \real.
\end{eqnarray*}
Observe that 
\[
0 < t \frac{(\csc(\theta) - \cot(\theta))}{\sqrt{2}} = t\frac{\tan(\theta/2)}{\sqrt{2}} \le C t \theta, \theta \in (0,\pi/4],
\]
where $C>0$ is independent of $t$ and $\theta \in (0,\pi/4]$. Consequently, on any closed interval $J$ not containing $\bbeta$, we can always choose $\theta_0 \in (0,\pi/4]$ such that
$f_\theta$ is strictly monotone decreasing on $J$ for all $\theta \in (0,\theta_0]$.

Suppose that $\bbeta \ge \bgamma$. Given $\varepsilon > 0$, choose $\delta_\varepsilon > 0$ and $\theta_0 \in (0,\pi/4]$ so that $$\sqrt{\frac{\pi}{2}}\int_{[\bbeta-\delta_\varepsilon,\bbeta]} e^{-\frac{t^2}{2}} dt < \varepsilon/2$$
and $f_\theta(t)$ is strictly monotone decreasing on $[\bgamma,\bbeta-\delta_\varepsilon]$ for all $\theta \in (0,\theta_0]$. Hence the contribution to $I_\theta$ from the integral over  
$[\bbeta-\delta_\varepsilon,\bbeta]$ is less than $ \varepsilon/2$. Using the monotonicity of $f_\theta$ on $[\bgamma,\bbeta-\delta_\varepsilon]$, choose $\theta_\varepsilon \in (0,\theta_0]$ so that
$\sqrt{\frac{\pi}{2}}|f_\theta(t) - 1| < \varepsilon/2|\bbeta-\bgamma|$ for $\theta \in (0,\theta_\varepsilon]$. Hence for all $\theta\in (0,\theta_\varepsilon)$, 
\[
\sqrt{\frac{\pi}{2}}\left|\int_{[\bgamma,\bbeta]}e^{-\frac{t^2}{2}} \erfb{\frac{-t\cot(\theta) +\bbeta \csc(\theta)}{\sqrt{2}}}dt  - \int_{[\bgamma,\bbeta]} e^{-\frac{t^2}{2}} dt\right| < \varepsilon,
\]
proving the first part of (1). The proof of the second part of (1) is similar but simpler.
\end{proof}
\begin{rem}
Let $I_0 = I_0(\bbeta,\bgamma)$ denote the limiting value $\lim_{\theta \arr 0+} I_\theta$ and similarly define $I_\pi$, then
$I_\theta$ is a continuous function of $(\bbeta,\bgamma,\theta)\in\real^2\times [0,\pi]$. \rend
\end{rem}

\begin{lemma}\label{lem: spcases}
If $\theta \in \{0,\pi\}$, the integrals $\int_{D_\theta} 1, x_1^2$ can be explicitly computed and 
\begin{enumerate}
\item 
\begin{eqnarray*}
\int_{D_0} 1&=&\lim_{\theta \arr 0+} \int_{D_\theta} 1 = \pi\,\erfcb{\frac{M}{\sqrt{2}}}\\
\int_{D_\pi} 1&=&\lim_{\theta \arr \pi-} \int_{D_\theta} 1 = 
\begin{cases}
&-\pi \left[\erfb{\frac{\bbeta}{\sqrt{2}}}+\erfb{\frac{\bgamma}{\sqrt{2}}}\right],\;\text{if } \bgamma+\bbeta \le 0\\
& 0,\; \text{if } \bgamma+\bbeta \ge 0
\end{cases}
\end{eqnarray*}
\item
\begin{eqnarray*}
\int_{D_0} x_1^2&=&\lim_{\theta \arr 0+} \int_{D_\theta} x_1^2 = \sqrt{2\pi} Me^{-\frac{M^2}{2}} +\pi\,\erfcb{\frac{M}{\sqrt{2}}}\\
\int_{D_\pi} x_1^2 &=&\lim_{\theta \arr \pi-} \int_{D_\theta} x_1^2 = \begin{cases}
& \sqrt{2\pi}\big(\bgamma e^{-\frac{\bgamma^2}{2}}+\bbeta e^{-\frac{\bbeta^2}{2}}\big)\\
& -\pi \left[\erfb{\frac{\bbeta}{\sqrt{2}}}+\erfb{\frac{\bgamma}{\sqrt{2}}}\right],\;\text{if } \bgamma+\bbeta \le 0\\
& 0, \; \text{if } \bgamma+\bbeta \ge 0
\end{cases}
\end{eqnarray*}
where $M = \max\{\bbeta,\bgamma\}$.
\end{enumerate}
\end{lemma}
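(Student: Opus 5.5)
The plan is to substitute the limits of $I_\theta$ from Lemma~\ref{lem: ITlim} into the two representations of $\int_{D_\theta} 1$ and $\int_{D_\theta} x_1^2$ derived at the start of this subsection, valid for $\theta\in(0,\pi)$. Both representations contain only the non-elementary term $I_\theta$ together with error functions and exponentials whose arguments are $\pm\bgamma\cot(\theta)+\bbeta\csc(\theta)$ or $\bgamma\csc(\theta)-\bbeta\cot(\theta)$. The limits of these elementary terms are then found case by case. Independently, I would check the results directly on the limiting domains $D_0=[M,\infty)\times\real$ and $D_\pi=[\bgamma,-\bbeta]\times\real$ (empty if $\bgamma+\bbeta>0$) from Remarks~\ref{rem: intdom}(2). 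These direct computations are elementary one-variable Gaussian integrals. For example, $\int_{D_0}1=\sqrt{2\pi}\int_M^\infty e^{-t^2/2}dt=\pi\,\erfc{M/\sqrt2}$, and $\int_{D_0}x_1^2=\sqrt{2\pi}\,[Me^{-M^2/2}+\sqrt{\pi/2}\,\erfc{M/\sqrt2}]$ after one integration by parts. The $D_\pi$ formulas follow the same way over $[\bgamma,-\bbeta]$, using that erf is odd. Agreement of the two computations gives both the stated values and the equality ``integral over $D_\theta$ at $\theta\in\{0,\pi\}$ $=$ limit''.

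For the limit as $\theta\to0+$ of $\int_{D_\theta}1$, the formula is $\frac{\pi}{2}\erfc{\bgamma/\sqrt2}-I_\theta$. If $\bbeta\ge\bgamma$, Lemma~\ref{lem: ITlim}(1) gives
\[
\tfrac{\pi}{2}\bigl[1-\erf{\bgamma/\sqrt2}\bigr]-\tfrac{\pi}{2}\bigl[2\erf{\bbeta/\sqrt2}-\erf{\bgamma/\sqrt2}-1\bigr]=\pi\,\erfc{\bbeta/\sqrt2}.
\]
If $\bbeta\le\bgamma$ it gives $\pi\,\erfc{\bgamma/\sqrt2}$, so in both cases the result is $\pi\,\erfc{M/\sqrt2}$. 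The $\theta\to\pi-$ case works the same way with Lemma~\ref{lem: ITlim}(2).

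For $x_1^2$, I additionally need the limits of the terms $\sqrt{\pi/2}\,\bgamma e^{-\bgamma^2/2}\erfc{(-\bgamma\cot\theta+\bbeta\csc\theta)/\sqrt2}$, $\sqrt{\pi/2}\,\bbeta\cos^2\theta\, e^{-\bbeta^2/2}\erfc{(\bgamma\csc\theta-\bbeta\cot\theta)/\sqrt2}$, and the $\sin\theta\cos\theta$ term; the last tends to $0$. As $\theta\to0+$, $-\bgamma\cot\theta+\bbeta\csc\theta=((\bbeta-\bgamma)+\bgamma(1-\cos\theta))/\sin\theta$ tends to $+\infty$, $-\infty$, or $0$ according as $\bbeta>\bgamma$, $\bbeta<\bgamma$, or $\bbeta=\bgamma$. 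So erfc tends to $0$, $2$, or $1$, and similarly with $\bbeta,\bgamma$ swapped for the other argument. This gives the sum $\sqrt{2\pi}\,Me^{-M^2/2}$ plus the $1$-type contribution $\pi\,\erfc{M/\sqrt2}$; in the tie case $\bbeta=\bgamma$ the two halves add up correctly. At $\theta\to\pi-$, $\cot\theta\to-\infty$, and the arguments behave like $(\bgamma+\bbeta)/\sin\theta$. Hence both erfc factors tend to $0$ or $2$ according to the sign of $\bgamma+\bbeta$, and combining with Lemma~\ref{lem: ITlim}(2) gives the stated piecewise formula. At $\bgamma+\bbeta=0$ both branches agree (both vanish), confirming continuity.

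The main obstacle is the boundary cases $\bbeta=\bgamma$ (at $0$) and $\bgamma+\bbeta=0$ (at $\pi$). There the erfc arguments have an indeterminate limit governed by the second-order term $\bgamma(1-\cos\theta)/\sin\theta\to0$, so the limit is $\erfc{0}=1$. This must be checked to match the consistency of the two branches. I would handle this by the explicit half-angle identity $(1-\cos\theta)/\sin\theta=\tan(\theta/2)$, as in the proof of Lemma~\ref{lem: ITlim}, and by the continuity of $I_\theta$ noted in the subsequent remark.
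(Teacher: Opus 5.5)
Your proposal is correct and is essentially the paper's argument. The paper's one-line proof says to proceed as in Lemma~\ref{LEM: X1} starting from the representations involving $I_\theta$, which is exactly what you do: substitute the limits from Lemma~\ref{lem: ITlim} and take the case-by-case limits of the elementary erfc terms, including the tie cases $\bbeta=\bgamma$ and $\bbeta+\bgamma=0$, where the arguments tend to $0$ through $\tan(\theta/2)$, respectively $\cot(\theta/2)$. Your extra one-variable computation over $D_0$ and $D_\pi$ is a worthwhile addition, since it is what actually certifies the identities $\int_{D_0}=\lim_{\theta\to 0+}\int_{D_\theta}$ and $\int_{D_\pi}=\lim_{\theta\to\pi-}\int_{D_\theta}$, not merely the values of the limits.
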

\begin{proof}
The proof follows along the lines of the proof of Lemma~\ref{LEM: X1} starting from the integral \Refb{EQ: integral}.
\end{proof}
\begin{rem}
Using the results of Lemmas~\ref{LEM: X1} and \ref{lem: spcases}, we recover the results of Lemmas~\ref{lem: zero} and \ref{lem: pi}. \rend
\end{rem}

\section{The Owen $T$-function}\label{sec: 34}   
The T-function $T(h,a)$ was introduced by Owen in 1956~\cite{Owen1956} and is defined by
\[
T(h,a) = \frac{1}{2\pi} \int_0^a \frac{e^{-\frac{h^2(1+x^2)}{2}}}{1+x^2} dx,\;\;h,a \in \real
\]
The T-function gives the probability of the event $X >0 \wedge 0 < Y < aX$ where $X$ and $Y$ are independent standard normal variables.
Let $G(x)$ denote the CDF $\frac{1}{\sqrt{2\pi}} \int_{-\infty}^xe^{-\frac{t^2}{2}}dt $ for the standard normal density.  Some important properties of the $T$-function
from Owen~\cite[Table II]{Owen1980}: 
\[
\begin{aligned}
&T(h,a)=T(-h,a),\; T(h.-a) = -T(h,a),\;
T(0,a) =\frac{1}{2\pi}\tan^{-1}(a) \\
&T(h,a) + T(ha,a^{-1}) = \begin{cases}
& G(h)/2+G(ah)/2 -G(h)G(ah),\; \text{if } a \ge 0 \\
& G(h)/2+G(ah)/2 -G(h)G(ah) - 1/4,\; \text{if } a < 0
\end{cases}
\end{aligned} 
\]
\begin{eqnarray}\label{EQ: OwenEval1}
T(h,1) & = &G(h)(1-G(h))/2=\frac{1}{8} \left(1 - \erfb{\frac{h}{\sqrt{2}}}^2\right)  \\
\label{EQ: OwenEval}
T(h,\infty)&=&(1-G(|h|)/2 = \frac{1}{4}\erfcb{\frac{|h|}{\sqrt{2}}}
\end{eqnarray}

In~\cite{Owen1980}, there are tables of integrals of functions related to the standard normal density, both univariate and multivariate. 
See also Appendix A (\ref{EQ0b}---\ref{EQ0c}) for the relation between the cumulative density (CDF) and error function. 

The result from~\cite{Owen1980} of special interest here is the indefinite integral from Table 1, formula 10,010.3 (p.~403). 
\begin{multline}
\label{EQT}
\int G'(x) G(ax +b) dx = T\left(x,\frac{b}{x\sqrt{1+a^2}}\right)\\ 
\begin{aligned}
&+ T\left(\frac{b}{\sqrt{1+a^2}},\frac{x\sqrt{1+a^2}}{b}\right)  -T\left(x,\frac{ax +b}{x}\right)\\
& - T\left(\frac{b}{\sqrt{1+a^2}},\frac{ab+ x(1+a^2)}{b}\right) + G(x)G\left(\frac{b}{\sqrt{1+a^2}}\right)
\end{aligned}
\end{multline}
Note that for consistency with our notation, we have switched $a$ and $b$ in~\cite{Owen1980}. That is, we write $ax+b$ rather than the $a + bx$ used in~\cite{Owen1980}.

In our context, it is more natural to express results in terms of error functions as opposed to the CDF $G(x)$.
Indeed, this emphasis leads to a more attractive and concise formula that better fits our applications. 
We refer to Appendix~\ref{AppendixC} for more details on Owen's formula 
and how to derive the formula described below \emph{directly} from~\Refb{EQT}.  

\subsection{The Generalized Owen $T$-function~\cite{Prezmo}}\label{sec: 35}
The generalized $T$-function is the analytic function $\wTT$ on $\real^3$ defined by
\begin{multline}\label{EQ: Tgen}
\begin{aligned}
\wTT(h,a,b)  = &\frac{1}{2\sqrt{2\pi}}\int_h^\infty e^{-\frac{t^2}{2}}\erfb{\frac{at + b}{\sqrt{2}}}dt 
\end{aligned}
\end{multline}

For $\theta \in (0,\pi)$, the integral we need for the evaluation of $\int_{D_\theta} 1$ and $\int_{D_\theta} x_1^2$ has a simple relationship to the generalized $T$-function: 
\[
\sqrt{\frac{\pi}{2}}\int_{\bgamma}^\infty e^{-\frac{t^2}{2}}\erfb{\frac{-t\cot(\theta) + \bbeta\csc(\theta)}{\sqrt{2}}}dt = 2\pi \wTT(\bgamma,-\cot(\theta),\bbeta\csc(\theta))
\]
The definition of the generalized $T$-function appears in~\cite{Prezmo}\footnote{The symbol used is $T$ rather than $\wTT$ but $\wTT$ is preferred here to avoid possible confusion with
Owen's $T$-function.} where it is shown that
\begin{multline}\label{EQ: fullT}
\begin{aligned}
\hspace*{-0.13in}\wTT(h,a,b)  =  &\frac{1}{2\pi} \left[\tan^{-1}(a) -\tan^{-1}\left(a+\frac{b}{h}\right) - \tan^{-1}\left(a+\frac{h(1+a^2)}{b}\right)\right] \\
& + T\left(h,a+\frac{b}{h}\right) +T\left(\frac{b}{\sqrt{1+a^2}},a +\frac{h(1+a^2)}{b}\right) \\
& + \frac{1}{4}\erfb{\frac{b}{\sqrt{2(1+a^2)}}} 
\end{aligned}
\end{multline}
Four of the six functions on the right hand side are not continuous functions on all of $\real^3$---the continuous (analytic) exceptions being $\tan^{-1}(a)$ and  $\frac{1}{4}\erfb{\frac{b}{\sqrt{2(1+a^2)}}} $.
Consequently, \Refb{EQ: fullT}  is a decomposition of an analytic function of $(h,a,b)$ into a sum of six functions, four of which are not continuous, let alone analytic, on $\real^3$.
\begin{lemma}\label{lem: ex2}
Regard $\wTT(h,a,b)$ as given by \Refb{EQ: fullT}. 
\begin{enumerate}
\item $\lim_{b \arr 0} \wTT(h,a,b)= T(h,a)$, $h \ne 0$.
\item $\lim_{h \arr 0} \wTT(h,a,b)= \frac{1}{4}\erf{\frac{b}{\sqrt{2(1+a^2)}}}+ T\big(\frac{b}{\sqrt{1+a^2}},a\big)$, $b \ne 0$.
\end{enumerate}
In particular, $\wTT(0,a,b) = \frac{1}{4}\erf{\frac{b}{\sqrt{2(1+a^2)}}}+ T\big(\frac{b}{\sqrt{1+a^2}},a\big)$ and $\wTT(h,a,0) = T(h,a)$, for all $a,b,h \in \real$.
\end{lemma}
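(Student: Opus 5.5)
The plan is to work directly from the decomposition \Refb{EQ: fullT}, which holds wherever all six terms are defined, i.e.\ for $hb\neq 0$. I take each of the limits $b\arr 0$ (with $h\neq 0$ fixed) and $h\arr 0$ (with $b\neq 0$ fixed) term by term. The discontinuous terms go off to $\pm$ infinite arguments, and their jumps should cancel in pairs.

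The one auxiliary fact I need is a joint limit for Owen's function: if $k\arr 0$ and $c\arr\pm\infty$ simultaneously, then $T(k,c)\arr \pm\frac14$. This follows from the defining integral $T(k,c)=\frac{1}{2\pi}\int_0^c e^{-k^2(1+x^2)/2}(1+x^2)^{-1}dx$. Rewrite it as an integral over $[0,\infty)$ or $(-\infty,0]$ with the indicator of $[0,c]$; the integrand is dominated by $(1+x^2)^{-1}$. By dominated convergence the limit is $\pm\frac{1}{2\pi}\cdot\frac{\pi}{2}$. I also use that $T$ is jointly continuous on $\real^2$, which is clear from the same integral.

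For (1), fix $h\ne0$, fix $a$, and let $b\arr 0$; set $s=\sgn(bh)$. The terms behave as follows:
\begin{itemize}
\item $\tan^{-1}(a)-\tan^{-1}(a+b/h)\arr 0$.
\item $\tan^{-1}(a+h(1+a^2)/b)\arr s\pi/2$, so the bracket contributes $-s/4$.
\item $T(h,a+b/h)\arr T(h,a)$ by continuity.
\item $T\big(b/\sqrt{1+a^2},\,a+h(1+a^2)/b\big)\arr s/4$ by the auxiliary fact.
\item The error function term tends to $0$.
\end{itemize}
The $\pm\frac14$ contributions cancel for either sign $s$, leaving $T(h,a)$.

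For (2), fix $b\neq0$ and let $h\arr 0$, again with $s=\sgn(bh)$. The terms behave as follows:
\begin{itemize}
\item $\tan^{-1}(a+b/h)\arr s\pi/2$ and $\tan^{-1}(a+h(1+a^2)/b)\arr\tan^{-1}(a)$, so the bracket tends to $\frac{1}{2\pi}\left[\tan^{-1}(a)-s\frac{\pi}{2}-\tan^{-1}(a)\right]=-s/4$.
\item $T(h,a+b/h)\arr s/4$ by the auxiliary fact.
\item $T\big(b/\sqrt{1+a^2},\,a+h(1+a^2)/b\big)\arr T\big(b/\sqrt{1+a^2},a\big)$ by continuity.
\item The error function term is independent of $h$.
\end{itemize}
Again the $\pm\frac14$ terms cancel, which gives the stated limit.

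For the ``in particular'' clause I use the integral definition \Refb{EQ: Tgen}. That integral is continuous (indeed analytic) on $\real^3$: differentiate under the integral sign, with dominating function $e^{-t^2/2}$. So $\wTT(h,a,0)$ equals the limit in (1) for $h\neq0$, and $\wTT(0,a,b)$ equals the limit in (2) for $b\neq 0$. The remaining point $h=b=0$ follows by continuity of both sides, since $T(0,a)=\frac{1}{2\pi}\tan^{-1}(a)$ and the formula in (2) also reduces to this at $b=0$. This case can also be checked directly against \Refb{genT1}.

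The main obstacle is the joint limit $T(k,c)\arr\pm\frac14$. I want to handle it cleanly by dominated convergence rather than by quoting \Refb{EQ: OwenEval}, which only gives the iterated value $T(0,\infty)$. Keeping track of the signs $s$ so that the cancellation is seen to hold for both signs of $bh$ also needs care.
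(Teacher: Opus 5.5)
Your proposal is correct and takes essentially the same route as the paper. Like the paper, you take the limits in \Refb{EQ: fullT} term by term, note that the jumping $\tan^{-1}$ term contributes $-\frac{s}{4}$ while the Owen $T$-term whose second argument runs off to $\pm\infty$ contributes $+\frac{s}{4}$, so the two cancel, and deduce the ``in particular'' clause from continuity of the defining integral \Refb{EQ: Tgen}. Your dominated-convergence proof of the joint limit $T(k,c)\to\pm\frac14$, together with your explicit handling of both signs of $bh$, of part (2), and of the point $h=b=0$, fills in details that the paper's one-sided computation for $b\to 0_+$ leaves implicit.
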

\begin{proof} We prove (1), the proof of (2) is similar.
\begin{eqnarray*}
\lim_{b \arr 0_+} \wTT(h,a,b)&=&\frac{1}{2\pi} \left(\tan^{-1}(a)-\tan^{-1}(a)-\frac{\pi}{2} \sgn(h)\right)\\
&& +\; T(h,a) + \frac{1}{4} \sgn(h) + 0 
= T(h,a)
\end{eqnarray*}
The same argument proves $\lim_{b \arr 0_-} \wTT(h,a,b) = T(h,a)$ and
the final statement follows by the continuity of $\wTT(h,a,b)$, 
\end{proof}
\begin{rems}
(1) The formula for $\wTT(h,a,0)$ (resp.~$\wTT(0,a,b)$) is written in terms of the CDF $G$ and is in \cite[1.010.1]{Owen1980} (resp.~\cite[10.010.6]{Owen1980}). The argument for 
$\lim_{b \arr 0+} \wTT(h,a,b)$ is in \cite{Prezmo}. The result for $\wTT(h,a,0)$ can be proved directly by differentiating the integral for $\wTT(h,a,0)$ with respect to $a$, then integrating with respect to 
$t$, and finally with respect to $a$ to recover the formula for $T(h,a)$. \\
(2) Using the continuity of $\wTT(h,a,b)$, it follows easily from  Lemma~\ref{lem: ex2} that $\wTT(0,a,0) = \frac{1}{2\pi}\tan^{-1}(a)$ (cf.~\Refb{genT1}).\\
(3) Notwithstanding Lemma~\ref{lem: ex2}, the appearance of discontinuous terms in~\Refb{EQ: fullT}
indicates the need for care in numerical simulations involving
the $T$-function as well as in analytical studies of bifurcation from zero bias. See also Remarks~\ref{rems: sub}(3). \rend
\end{rems}
\section{Evaluation of the defining integrals for $f(\ww,\vv)$: III}\label{Sec: Owen}   
For $\theta \in (0,\pi)$, 
\begin{equation}\label{EQ: defint}
\sqrt{\frac{\pi}{2}}\int_{\bgamma}^\infty e^{-\frac{t^2}{2}}\erfb{\frac{-t\cot(\theta) + \bbeta\csc(\theta)}{\sqrt{2}}}dt = 2\pi \wTT(\bgamma,-\cot(\theta),\bbeta\csc(\theta)),
\end{equation}
where the integral on the left occurs twice in our computations of $f(\ww,\vv)$.  Simplifying notation, define $H(\bbeta,\bgamma,\theta)$ by
\[
H(\bbeta,\bgamma,\theta) = 2\pi \wTT(\bgamma,-\cot(\theta),\bbeta\csc(\theta)), \; (\bbeta,\bgamma) \in \real^2, \; \theta \in (0.\pi).
\]

Applying the results from the previous section, we have
\begin{multline}\label{EQ: GENT}
\begin{aligned}
H(\bbeta,&\bgamma,\theta)=\\&+\tan^{-1}(-\cot(\theta))\\
&-\left[\tan^{-1}\left(-\cot(\theta) + \frac{\bbeta\csc(\theta)}{\bgamma}\right) + \tan^{-1}\left(-\cot(\theta) + \frac{\bgamma\csc(\theta)}{\bbeta}\right)\right] \\
&+2\pi\left[T\left(\bgamma,-\cot(\theta) + \frac{\bbeta\csc(\theta)}{\bgamma}\right)+T\left(\bbeta,-\cot(\theta) + \frac{\bgamma\csc(\theta)}{\bbeta}\right)\right]\\
&+\frac{\pi}{2}\erfb{\frac{\bbeta}{\sqrt{2}}} 
\end{aligned}
\end{multline}

\begin{rems}\label{rems: sub}
(1) The sum of the first three rows of the expression for $H$ are unchanged if we permute $\bgamma, \bbeta$. Although the last row does not have this symmetry,
the symmetry will reappear when we evaluate $\int_D 1, \int_D x_1^2$: the symmetry is a reflection of the trivial symmetry $f(\ww,\vv) = f(\vv,\ww)$.
A consequence of the symmetry is the identity
\begin{equation}\label{EQ: bgsymm}
H(\bbeta,\bgamma,\theta)+\frac{\pi}{2}\erfb{\frac{\bgamma}{\sqrt{2}}} = H(\bgamma,\bbeta,\theta)+\frac{\pi}{2}\erfb{\frac{\bbeta}{\sqrt{2}}}
\end{equation}
See also Remarks~\ref{rem: intdom}(3).\\
(2) Although $H$ is a continuous function of $(\bbeta,\bgamma,\theta)$ (since this is true for the defining integral \Refb{EQ: defint}), the terms 2---5 in 
the above decomposition of the generalized $T$-function in terms of inverse tangents and Owen's $T$-function may not be defined when $\bbeta\bgamma = 0$.  However, the approach of Lemma~\ref{lem: ex2} can be used to handle cases where only one of $\bbeta,\bgamma$ is zero. See also Proposition~\ref{prop: spec} below. \\
(3) The function $H$ is \emph{not} subanalytic on $\real^2 \times (0,\pi)$ even though it is analytic (that is, the graph is not subanalytic). It is, however, subanalytic on $\real^2 \times [a,b]$ for any closed interval $[a,b] \subset (0,\pi)$. 
Although this appears to be a significant barrier to the extension of the fractional power series results for bias free networks (\cite{ArjevaniField2020b}---\cite{ArjevaniField2022b}) to networks with bias, this turns 
out not to be the case. \rend 
\end{rems}
It is sometimes  possible to extract useful information about the inverse tangent terms in the above expression for $H$. First, an elementary lemma.
\begin{lemma}\label{lemDetails} 
(Notation of \Refb{EQ: GENT})
\begin{enumerate}
\item $\tan^{-1}(-\cot(\theta)) = \theta - \frac{\pi}{2},\;\theta \in [0,\pi]$.
\item If $\theta \in [0,\pi]$, then
\[
\tan^{-1}(-\cot(\theta)+\csc(\theta))=\frac{\theta}{2},\;\;  
\tan^{-1}(-\cot(\theta)-\csc(\theta))=\frac{\theta-\pi}{2}
\]
where the left hand side is defined as $\lim_{\theta \arr 0+}$ if $\theta = 0$ and $\lim_{\theta \arr \pi-}$ if $\theta = \pi$.
\end{enumerate}
\end{lemma}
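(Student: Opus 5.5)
The plan is to reduce everything to the half-angle formulas for $\csc$ and $\cot$. The one fact needed about $\tan^{-1}$ is that it is the inverse of $\tan$ restricted to $(-\pi/2,\pi/2)$. So it suffices to exhibit, for each left-hand side, an angle $\phi\in(-\pi/2,\pi/2)$ with $\tan\phi$ equal to the argument, and to check that the claimed right-hand side is that $\phi$. The endpoints $\theta=0,\pi$ are then handled by continuity, as the statement's convention prescribes.

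For (1), write $-\cot(\theta) = -\tan(\pi/2-\theta) = \tan(\theta-\pi/2)$. For $\theta\in(0,\pi)$ we have $\theta-\pi/2\in(-\pi/2,\pi/2)$, so $\tan^{-1}(-\cot(\theta)) = \theta-\pi/2$. At $\theta=0$ the argument tends to $-\infty$ and $\tan^{-1}\arr -\pi/2$; at $\theta=\pi$ it tends to $+\infty$ and $\tan^{-1}\arr \pi/2$. These agree with $\theta-\pi/2$, which gives the identity on $[0,\pi]$ under the stated limiting convention.

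For (2), use the half-angle identities:
\[
\csc(\theta)-\cot(\theta) = \frac{1-\cos(\theta)}{\sin(\theta)} = \tan(\theta/2), \qquad
-\cot(\theta)-\csc(\theta) = -\frac{1+\cos(\theta)}{\sin(\theta)} = -\cot(\theta/2) = \tan\left(\frac{\theta-\pi}{2}\right).
\]
Both are valid for $\theta\in(0,\pi)$ because $\sin(\theta)>0$ there. For such $\theta$, $\theta/2\in(0,\pi/2)$ and $(\theta-\pi)/2\in(-\pi/2,0)$ both lie in the principal branch, so the two formulas follow directly.

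At the endpoints, $\tan(\theta/2)$ is in fact continuous on $[0,\pi)$, tending to $0$ as $\theta\arr 0+$ and to $+\infty$ as $\theta\arr\pi-$. Likewise $-\cot(\theta/2)$ tends to $-\infty$ as $\theta\arr 0+$ and to $0$ as $\theta\arr\pi-$. Taking $\tan^{-1}$ gives limits $0,\pi/2$ and $-\pi/2,0$ respectively, which match $\theta/2$ and $(\theta-\pi)/2$ evaluated at $0$ and $\pi$.

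The only point requiring care, which I regard as the main (mild) obstacle, is checking that each half-angle lies in $(-\pi/2,\pi/2)$, so that no branch shift by $\pm\pi$ occurs. This is also what makes the endpoint limits agree with the closed-form expressions. Beyond that, the argument is a direct trigonometric verification.
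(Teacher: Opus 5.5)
Your proposal is correct and follows the paper's proof, which rests on the same half-angle identities $-\cot(\theta)+\csc(\theta)=\tan(\theta/2)$ and $-\cot(\theta)-\csc(\theta)=-\cot(\theta/2)=\tan\bigl(\frac{\theta-\pi}{2}\bigr)$. Your explicit checks that each angle lies in the principal branch, and your computation of the limits at $\theta=0,\pi$, fill in details the paper leaves implicit.
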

\proof For (2) observe that $-\cot(\theta) + \csc(\theta) = \tan(\theta/2)$ and
$-\cot(\theta) - \csc(\theta) = -\cot(\theta/2) = \tan(\frac{\theta-\pi}{2})$. \qed
\begin{prop}\label{prop: spec}
\mbox{ }\\
\vspace*{-0.1in}
\begin{enumerate}
\item $H(\bbeta,0,\theta) = \frac{\pi}{2}\erf{\frac{\bbeta}{\sqrt{2}}}+ 2\pi T(\bbeta,-\cot(\theta))$.
\item $H(0,\bgamma,\theta) = 2\pi T(\bgamma,-\cot(\theta))$.
\item $H(\bbeta,\bbeta,\theta) = 4\pi T(\bbeta,\tan(\frac{\theta}{2})) - \frac{\pi}{2}\erfc{\frac{\bbeta}{\sqrt{2}}}$ and
\begin{equation}\label{EQ: bderiv} 
\frac{\partial H}{\partial \bbeta}(\bbeta,\bbeta,\theta) = \sqrt{\frac{\pi}{2}}e^{-\frac{\bbeta^2}{2}}\left[2\, \erfcb{\frac{\bbeta \big(-\cot(\theta)+\csc(\theta)\big)}{\sqrt{2}}} - 1\right] 
\end{equation}
\end{enumerate}
\end{prop}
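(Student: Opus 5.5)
For (1) and (2) I would not use the decomposition \Refb{EQ: GENT}, since its inverse tangent and $T$ terms are undefined when $\bbeta\bgamma=0$. Instead I would go straight back to $H(\bbeta,\bgamma,\theta)=2\pi\wTT(\bgamma,-\cot(\theta),\bbeta\csc(\theta))$ and apply Lemma~\ref{lem: ex2}.

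For (1), set $\bgamma=0$ and use $\wTT(0,a,b)=\frac14\erf{\frac{b}{\sqrt{2(1+a^2)}}}+T\big(\frac{b}{\sqrt{1+a^2}},a\big)$ with $a=-\cot(\theta)$ and $b=\bbeta\csc(\theta)$. Then $1+a^2=\csc^2(\theta)$, and since $\csc(\theta)>0$ on $(0,\pi)$ we get $b/\sqrt{1+a^2}=\bbeta$. Multiplying by $2\pi$ gives the stated formula.

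For (2), set $\bbeta=0$, so $b=0$. Then $\wTT(\bgamma,-\cot(\theta),0)=T(\bgamma,-\cot(\theta))$ by the same lemma, and (2) follows.

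For the first formula in (3), take $\bbeta=\bgamma\neq0$ and use \Refb{EQ: GENT}. Both inverse tangent arguments become $-\cot(\theta)+\csc(\theta)=\tan(\theta/2)$. By Lemma~\ref{lemDetails}, the bracketed inverse tangent terms contribute $(\theta-\frac{\pi}{2})-2\cdot\frac{\theta}{2}=-\frac{\pi}{2}$. The two $T$ terms coincide and give $4\pi T(\bbeta,\tan(\frac{\theta}{2}))$. Adding $\frac{\pi}{2}\erf{\frac{\bbeta}{\sqrt2}}$ yields $4\pi T(\bbeta,\tan(\frac\theta2))-\frac\pi2\erfc{\frac{\bbeta}{\sqrt2}}$. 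The case $\bbeta=0$ then follows from continuity of $H$ and of $T$; alternatively, check it directly from (2) using $T(0,a)=\frac{1}{2\pi}\tan^{-1}(a)$.

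For \Refb{EQ: bderiv} I would work from the defining integral
\[
H(\bbeta,\bgamma,\theta)=\sqrt{\tfrac{\pi}{2}}\int_{\bgamma}^\infty e^{-\frac{t^2}{2}}\erfb{\tfrac{-t\cot(\theta)+\bbeta\csc(\theta)}{\sqrt2}}\,dt .
\]
The point to settle is what the left side of \Refb{EQ: bderiv} means. A quick computation shows that the partial derivative in the first slot alone gives only the $\erfc{}$ term. So the formula must be the derivative of the diagonal function $\bbeta\mapsto H(\bbeta,\bbeta,\theta)$, that is, the sum of the two partials $\partial_1H+\partial_2H$ evaluated at $(\bbeta,\bbeta,\theta)$. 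I would state this interpretation explicitly.

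The two partials are computed as follows.
\begin{itemize}
\item Differentiating under the integral in $\bbeta$ gives $\csc(\theta)\int_{\bgamma}^\infty e^{-\frac{t^2}{2}-\frac{(\bbeta\csc(\theta)-t\cot(\theta))^2}{2}}dt$. Completing the square as in Lemma~\ref{LEM: X2} turns the exponent into $-\frac{\bbeta^2}{2}-\frac{(t\csc(\theta)-\bbeta\cot(\theta))^2}{2}$. By \Refb{EQ2b} this partial equals $\sqrt{\frac{\pi}{2}}e^{-\frac{\bbeta^2}{2}}\erfcb{\frac{\bgamma\csc(\theta)-\bbeta\cot(\theta)}{\sqrt2}}$.
\item The $\bgamma$-partial is minus the integrand at the lower limit, $-\sqrt{\frac\pi2}e^{-\frac{\bgamma^2}{2}}\erfb{\frac{-\bgamma\cot(\theta)+\bbeta\csc(\theta)}{\sqrt2}}$.
\end{itemize}
Setting $\bgamma=\bbeta$, both arguments become $x=\bbeta(\csc(\theta)-\cot(\theta))/\sqrt2$. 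The sum is $\sqrt{\frac\pi2}e^{-\frac{\bbeta^2}{2}}[\erfc{x}-\erf{x}]=\sqrt{\frac\pi2}e^{-\frac{\bbeta^2}{2}}[2\erfc{x}-1]$, which is \Refb{EQ: bderiv}.

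As a cross-check, I would also differentiate $4\pi T(\bbeta,\tan(\frac\theta2))-\frac\pi2\erfc{\frac{\bbeta}{\sqrt2}}$ directly, using $\partial_hT(h,a)=-\frac{1}{2\sqrt{2\pi}}e^{-h^2/2}\erf{\frac{ah}{\sqrt2}}$, which follows from the definition of $T$. This reproduces the same formula.

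The main obstacle is this interpretive point about the derivative, not any of the computations. Everything else is direct substitution into Lemma~\ref{lem: ex2} and Lemma~\ref{lemDetails}.
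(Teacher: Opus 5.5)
Your proposal is correct. For (1), (2) and the closed form in (3) it is the paper's argument: Lemma~\ref{lem: ex2} handles the two degenerate slots, and substitution into \Refb{EQ: GENT} together with Lemma~\ref{lemDetails} handles the diagonal. Your separate treatment of $\bbeta=0$ in (3), by continuity or via (2), fills a small gap the paper leaves implicit, since \Refb{EQ: GENT} requires $\bbeta\bgamma\neq 0$.

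For \Refb{EQ: bderiv}, the paper's proof is what you call the cross-check. It differentiates $4\pi T(\bbeta,\tan(\theta/2))-\frac{\pi}{2}\erfc{\frac{\bbeta}{\sqrt2}}$ in $\bbeta$ and writes $4\pi\,\partial_hT$ as $-2\bbeta e^{-\bbeta^2/2}\int_0^{\tan(\theta/2)}e^{-(\bbeta t)^2/2}\,dt$. It then rescales this integral to $\bbeta^{-1}\int_0^{\bbeta\tan(\theta/2)}e^{-s^2/2}\,ds$. This is quick once the closed form is available. It also confirms your reading of the left-hand side as the derivative of the diagonal function $\bbeta\mapsto H(\bbeta,\bbeta,\theta)$, not the first-slot partial alone.

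Your primary route sums the two partials of the defining integral. This is the approach the paper itself uses later for Proposition~\ref{lem: Hderiv}(3). What it buys:
\begin{itemize}
\item It never touches the $T$-decomposition.
\item It holds uniformly in $\bbeta$, including $\bbeta=0$, without first establishing the closed form there.
\item It shows directly that the $\mathrm{erfc}$ term comes from the bias inside the integrand and the $-\mathrm{erf}$ term from the moving lower limit.
\end{itemize}
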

\begin{proof} Statements (1,2) are immediate from Lemma~\ref{lem: ex2} and (3) is substitution in \Refb{EQ: GENT} and Lemma~\ref{lemDetails}. Finally~\Refb{EQ: bderiv}
follows by noting that
\[
\frac{\partial H}{\partial \bbeta} = -2\bbeta e^{-\frac{\bbeta^2}{2}} \int_0^{\tan(\theta/2)} e^{-\bbeta t)^2/2}dt + \sqrt{\frac{\pi}{2}}e^{-\frac{\bbeta^2}{2}}
\]
The integral is equal to $\bbeta^{-1} \int_0^{\bbeta \tan(\theta/2)} e^{-\frac{s^2}{2}} ds$ and the result follows since $\bbeta \tan(\theta/2) = \bbeta \big(-\cot(\theta)+\csc(\theta)\big)$ (proof of Lemma~\ref{lemDetails}(2)).
\end{proof}

\begin{lemma}[Results for $\lim_{\theta \arr 0+,\pi-}$ I]\label{lemDetails2}
Let $\bbeta,\bgamma \in \real$ be non-zero. 
\begin{enumerate}
\item Limits as $\theta \arr 0+$. \vspace*{-0.15in} 
\[
\lim_{\theta \arr 0+}\tan^{-1}\big(-\cot(\theta)+\frac{\bbeta}{\bgamma}\csc(\theta)\big) =
\begin{cases}
& \frac{\pi}{2},\;\text{if } \frac{\bbeta}{\bgamma} > 1\\
& 0,\;\text{if } \bbeta = \bgamma \\
& -\frac{\pi}{2},\;\text{if } \frac{\bbeta}{\bgamma} < 1.
\end{cases}
\]
(If $\bbeta$ and $\bgamma$ are interchanged, cases 1 and 3 are switched.)
\item Limits as $\theta \arr \pi-$. \vspace*{-0.15in}
\[
\lim_{\theta \arr \pi-}\tan^{-1}(-\cot(\theta)+\frac{\bbeta}{\bgamma}\csc(\theta)) = 
\begin{cases}
& \frac{\pi}{2},\;\text{if } \frac{\bbeta}{\bgamma} > 0\\
& \frac{\pi}{2},\;\text{if } \frac{\bbeta}{\bgamma} \in (-1,0)\\
& 0,\;\text{if } \bbeta=-\bgamma \\
& -\frac{\pi}{2},\;\text{if }  \frac{\bbeta}{\bgamma} < -1
\end{cases}
\]
(If $\bbeta$ and $\bgamma$ are interchanged, cases 2 and 4 are switched.)
\end{enumerate}
\end{lemma}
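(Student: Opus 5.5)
The plan is to reduce everything to the behaviour of a single rational trigonometric expression. Put $r = \bbeta/\bgamma$, which is defined and non-zero because $\bbeta,\bgamma\neq 0$, and set
\[
g_r(\theta) = -\cot(\theta) + r\csc(\theta) = \frac{r - \cos(\theta)}{\sin(\theta)}, \qquad \theta\in(0,\pi).
\]
Since $\tan^{-1}$ is continuous and increasing with $\tan^{-1}(\pm\infty)=\pm\frac{\pi}{2}$, it suffices to find $\lim g_r(\theta)$ in $[-\infty,\infty]$ as $\theta\arr 0+$ and as $\theta\arr\pi-$. The key fact throughout is that $\sin(\theta)>0$ on $(0,\pi)$ and $\sin(\theta)\arr 0+$ at both endpoints. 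So the sign of the limit of the numerator decides the answer whenever that limit is non-zero.

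For $\theta\arr 0+$ the numerator tends to $r-1$. If $r>1$ then $g_r\arr+\infty$, so the limit of $\tan^{-1}(g_r)$ is $\frac{\pi}{2}$. If $r<1$ then $g_r\arr-\infty$, giving $-\frac{\pi}{2}$. In the borderline case $r=1$, i.e.\ $\bbeta=\bgamma$, the numerator also vanishes. Here I will use Lemma~\ref{lemDetails}(2), namely $g_1(\theta)=\tan(\theta/2)\arr 0$, so the limit is $0$.

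For $\theta\arr\pi-$ the numerator tends to $r+1$. If $r>-1$, which covers both $r>0$ and $r\in(-1,0)$, then $g_r\arr+\infty$ and the limit is $\frac{\pi}{2}$. If $r<-1$ the limit is $-\frac{\pi}{2}$. In the borderline case $r=-1$, i.e.\ $\bbeta=-\bgamma$, Lemma~\ref{lemDetails}(2) gives $g_{-1}(\theta)=-\cot(\theta/2)\arr 0$, so the limit is again $0$.

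The interchange remarks follow by replacing $r$ with $1/r$ in the case analysis above.
\begin{itemize}
\item At $\theta=\pi$, $r\in(-1,0)$ holds iff $1/r<-1$. So cases 2 and 4 are switched, and cases 1 and 3 are fixed.
\item At $\theta=0$, for $r>0$ we have $r>1$ iff $1/r<1$, so cases 1 and 3 are switched.
\item At $\theta=0$, if $r<0$ then $1/r<0<1$ as well. Both orders of $\bbeta,\bgamma$ then give $-\frac{\pi}{2}$, so the parenthetical remark should be read for $\bbeta\bgamma>0$. I will note this explicitly.
\end{itemize}

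I do not expect a genuine obstacle. The only delicate points are the two degenerate cases $r=\pm1$, where numerator and denominator both vanish, and these are handled by the half-angle identities already recorded in Lemma~\ref{lemDetails}. The remaining care is to state the sign conventions correctly when $r<0$.
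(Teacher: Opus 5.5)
Your argument is correct and is the elementary computation the paper omits (``Elementary and omitted''): write the argument as $(r-\cos\theta)/\sin\theta$, read off the sign of $r\mp1$, and use the half-angle identities behind Lemma~\ref{lemDetails}(2) for $r=\pm1$. Your caveat is also right: when $\bbeta\bgamma<0$ both orderings give $-\frac{\pi}{2}$ as $\theta\arr 0+$, so the interchange remark in (1) holds only for $\bbeta\bgamma>0$. For the same reason the two arctangent terms then sum to $-\pi$ rather than $0$, contrary to the remark after the lemma; $-\pi$ is the value consistent with Lemma~\ref{lem: ITlim}(1).
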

\proof Elementary and omitted. \qed
\begin{rem}
For the limit as $\theta \arr 0+$. the sum of both terms is always $0$; for the limit as $\theta \arr \pi-$, the sum is zero if $\frac{\bbeta}{\bgamma} < 0$, otherwise it is $\pi$. \rend
\end{rem}

In the limiting cases $\theta \arr 0+,\pi-$, useful information can be given about the terms in the decomposition of the generalized $T$-function \Refb{EQ: GENT}.
Owen $T$-function terms.
\begin{lemma}[Results for $\lim_{\theta \arr 0+,\pi-}$ II]\label{lemDetails3}
Suppose that $\bgamma  \ne 0$.
\begin{enumerate}
\item Limit as $\theta \arr 0+$. \vspace*{-0.15in}
\[
\lim_{\theta \arr 0+}T\big(\bgamma,-\cot(\theta)+ \frac{\bbeta\csc(\theta)}{\bgamma}\big) = 
\begin{cases}
& \frac{1}{4}\,\erfcb{\frac{|\bgamma|}{\sqrt{2}}}\;\text{if } \bbeta/\bgamma > 1 \\
& 0,\;\text{if } \bbeta=\bgamma \\
& -\frac{1}{4}\,\erfcb{\frac{|\bgamma|}{\sqrt{2}}}\;\text{if }  \bbeta / \bgamma < 1
\end{cases}
\]
(If $\bbeta$ and $\bgamma$ are interchanged, cases 1 and 3 are switched---assuming $\bbeta \ne 0$.)
\item Limit as $\theta \arr \pi-$. \vspace*{-0.15in}
\[
\lim_{\theta \arr \pi-} T\big(\bgamma,-\cot(\theta)+ \frac{\bbeta\csc(\theta)}{\bgamma}\big) =
\begin{cases}
&\frac{1}{4}\,\erfcb{\frac{|\bgamma|}{\sqrt{2}}}\;\text{if } \bbeta/\bgamma > 0 \\
&\frac{1}{4}\,\erfcb{\frac{|\bgamma|}{\sqrt{2}}}\;\text{if } \bbeta/\bgamma \in (-1,0] \\
& 0\;\text{if } \bbeta = -\bgamma \\
& -\frac{1}{4}\,-\erfcb{\frac{|\bgamma|}{\sqrt{2}}}\;\text{if } \bbeta/\bgamma < -1
\end{cases}
\]
(If $\bbeta$ and $\bgamma$ are interchanged, cases 2 and 4 are switched---assuming $\bbeta \ne 0$.)
\end{enumerate}
\end{lemma}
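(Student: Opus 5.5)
The plan is to reduce everything to two facts about Owen's $T$-function. First, $T(h,\cdot)$ is continuous and odd in its second argument. Second, by \Refb{EQ: OwenEval}, $T(h,a)\arr \frac{1}{4}\erfc{|h|/\sqrt{2}}$ as $a\arr+\infty$ and $T(h,a)\arr-\frac{1}{4}\erfc{|h|/\sqrt{2}}$ as $a\arr-\infty$. Both follow from the defining integral $T(h,a)=\frac{1}{2\pi}\int_0^a e^{-h^2(1+x^2)/2}(1+x^2)^{-1}dx$, which converges absolutely on $[0,\infty)$, so $T(h,\pm\infty)$ exist. With $\bgamma\ne0$ fixed, the first argument of $T$ is constant, and only the behaviour of the second argument
\[
a(\theta)=-\cot(\theta)+\frac{\bbeta}{\bgamma}\csc(\theta)=\frac{\bbeta/\bgamma-\cos(\theta)}{\sin(\theta)}
\]
as $\theta\arr0+,\pi-$ needs to be determined. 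This is exactly the computation behind Lemma~\ref{lemDetails2}, whose limits of $\tan^{-1}(a(\theta))$ are $\pm\frac{\pi}{2}$ or $0$. I will therefore read off the sign and divergence of $a(\theta)$ from that lemma, or recompute it directly, and then apply the limits of $T$.

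For $\theta\arr0+$: the denominator $\sin(\theta)\arr0+$ and the numerator tends to $r-1$, where $r=\bbeta/\bgamma$. If $r>1$, then $a(\theta)\arr+\infty$ and the limit is $\frac14\erfc{|\bgamma|/\sqrt2}$. If $r<1$, then $a(\theta)\arr-\infty$, which gives the negative value. If $r=1$, Lemma~\ref{lemDetails}(2) gives $a(\theta)=\tan(\theta/2)\arr0$, so by continuity and oddness the limit is $T(\bgamma,0)=0$.

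For $\theta\arr\pi-$: the numerator tends to $r+1$. If $r>-1$, which covers both $r>0$ and $r\in(-1,0]$, then $a(\theta)\arr+\infty$. If $r<-1$, then $a(\theta)\arr-\infty$. If $r=-1$, then $a(\theta)=-\cot(\theta/2)$ by Lemma~\ref{lemDetails}(2), which tends to $0$ as $\theta\arr\pi-$, so the limit is $0$. The case $r=0$ is included, since then $a(\theta)=-\cot(\theta)\arr+\infty$. In the last line of the statement I would read the expression as the typographical slip $-\frac14\erfc{|\bgamma|/\sqrt2}$, since oddness forces this value.

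For the parenthetical remarks about interchanging $\bbeta$ and $\bgamma$: we replace $r$ by $1/r$ and the first argument by $\bbeta\ne0$. Near $0$, $r>1$ and $r<1$ are swapped only when $r>0$; since the statement matches Lemma~\ref{lemDetails2}, I would verify the case table by the same sign analysis of $1/r-1$ and $1/r+1$. For example, $1/r+1<0$ exactly when $r\in(-1,0)$, which swaps cases 2 and 4.

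The main obstacle is bookkeeping rather than analysis: the boundary cases $r=\pm1$ and the sign cases for negative $r$. The only genuinely analytic input is the existence and value of $T(h,\pm\infty)$, which is \Refb{EQ: OwenEval} together with oddness.
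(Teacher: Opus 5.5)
Your proposal is correct and takes the same route as the paper, whose proof is just ``straightforward using \Refb{EQ: OwenEval}''. You supply the omitted details: the sign of $\bbeta/\bgamma-1$ (resp.\ $\bbeta/\bgamma+1$) decides whether the second argument tends to $+\infty$ or $-\infty$ as $\theta\arr0+$ (resp.\ $\pi-$), the boundary cases $\bbeta=\pm\bgamma$ follow from Lemma~\ref{lemDetails}(2), and the last case of (2) contains a typo.

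Your observation that near $\theta=0$ the interchange swaps cases only when $\bbeta/\bgamma>0$ is right, and you should state it as a conclusion rather than defer to Lemma~\ref{lemDetails2}. If $\bbeta\bgamma<0$, both second arguments tend to $-\infty$, so both limits are negative ($-\frac{1}{4}\erfcb{\frac{|\bgamma|}{\sqrt{2}}}$ and $-\frac{1}{4}\erfcb{\frac{|\bbeta|}{\sqrt{2}}}$). Hence the parenthetical in (1) fails there, as does the one in Lemma~\ref{lemDetails2}(1), whereas the parenthetical in (2) is correct.
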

\begin{proof} Straightforward using \Refb{EQ: OwenEval}. \end{proof}
\begin{rem}
Lemmas~\ref{lemDetails2}, \ref{lemDetails3} can be used to give a simple direct proof of the continuity of $H$ in 
$\bbeta,\bgamma$ when $\theta \in \{0,\pi\}$ and $\beta\bgamma \ne 0$. Note that analyticity of $H$ fails if $\theta \in \{0,\pi\}$, as in the bias free case. \rend
\end{rem}

It follows from~\Refb{EQ: GENT} that if we define $S(\bbeta,\bgamma,\theta) = H(\bbeta,\bgamma,\theta)- \frac{\pi}{2}\erfb{\frac{\bbeta}{\sqrt{2}}}$, then   for $\bbeta\bgamma \ne 0$,
\begin{equation}\label{EQN: Sform}
\begin{split}
&\hspace*{0.4in}S(\bbeta,\bgamma,\theta) = \tan^{-1}(-\cot(\theta)){}-\\
&\left[\tan^{-1}\left(-\cot(\theta) + \frac{\bbeta\csc(\theta)}{\bgamma}\right) + \tan^{-1}\left(-\cot(\theta) + \frac{\bgamma\csc(\theta)}{\bbeta}\right)\right] \\
&+2\pi\left[T\left(\bgamma,-\cot(\theta) + \frac{\bbeta\csc(\theta)}{\bgamma}\right)+T\left(\bbeta,-\cot(\theta) + \frac{\bgamma\csc(\theta)}{\bbeta}\right)\right]
\end{split}
\end{equation}
The function $S(\bbeta,\bgamma,\theta)$ is symmetric in $\bbeta,\bgamma$: $S(\bbeta,\bgamma,\theta)= S(\bgamma,\bbeta,\theta)$. 
Since  $\frac{\pi}{2}\erfb{\frac{\bbeta}{\sqrt{2}}}$ is real analytic in $\bbeta\in\real$ and the integral defining $H$ is real analytic on $\real^2 \times (0,\pi)$, we have
\begin{prop}\label{prop: det}
(Notation and assumptions as above.) $S(\bbeta,\bgamma,\theta)$ is real analytic and bounded on $\real^2 \times (0,\pi)$. Moreover,
\begin{enumerate}
\item $S(\bbeta,0,\theta) = 2\pi T(\bbeta,-\cot(\theta))$, $\bbeta \ne 0$.
\item $S(0,\bgamma,\theta) = 2 \pi T(\bgamma,-\cot(\theta))$, $\bgamma \ne 0$.
\item $S(\bbeta,\bbeta,\theta) = 4\pi T(\bbeta,\tan(\theta/2)) - \pi/2$, $\bbeta \ne 0$.
\item $S(0,0,\theta) = \theta - \frac{\pi}{2}$.
\end{enumerate}
\end{prop}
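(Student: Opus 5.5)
The plan is to argue directly from the integral representation of $H$ rather than from the decomposition \Refb{EQN: Sform}. The decomposition contains terms that are discontinuous when $\bbeta\bgamma=0$, so it is badly suited to proving regularity; the integral is not. Recall that $S(\bbeta,\bgamma,\theta) = H(\bbeta,\bgamma,\theta) - \frac{\pi}{2}\erfb{\frac{\bbeta}{\sqrt{2}}}$ and, by \Refb{EQ: defint},
\[
H(\bbeta,\bgamma,\theta)=\sqrt{\frac{\pi}{2}}\int_{\bgamma}^\infty e^{-\frac{t^2}{2}}\erfb{\frac{-t\cot(\theta)+\bbeta\csc(\theta)}{\sqrt{2}}}\,dt .
\]
The term $\frac{\pi}{2}\erfb{\frac{\bbeta}{\sqrt{2}}}$ is entire in $\bbeta$. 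So both analyticity and boundedness of $S$ reduce to the corresponding properties of $H$ on $\real^2\times(0,\pi)$.

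\emph{Boundedness.} Since $|\erf{x}|\le 1$ for real $x$, we get $|H|\le \sqrt{\pi/2}\int_{\real}e^{-t^2/2}\,dt=\pi$. Hence $|S|\le \pi+\pi/2 = 3\pi/2$ on all of $\real^2\times(0,\pi)$.

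\emph{Analyticity.} This is the step needing the most care. Fix $(\bbeta_0,\bgamma_0,\theta_0)$ with $\theta_0\in(0,\pi)$ and complexify $(\bbeta,\theta)$ in a small polydisc around $(\bbeta_0,\theta_0)$; the dependence on $\bgamma$ through the lower limit is handled separately below. On the polydisc, $\cot\theta$ and $\csc\theta$ are holomorphic and bounded, with imaginary parts of size at most $\varepsilon$. The argument $z(t)=(-t\cot\theta+\bbeta\csc\theta)/\sqrt 2$ then satisfies $|\mathrm{Im}\,z(t)|\le \varepsilon(1+|t|)$. Using the elementary estimate $|\erf{z}|\le 1+C e^{(\mathrm{Im}\,z)^2}$, the integrand is dominated by $C'e^{-t^2/2+2\varepsilon^2(1+t^2)}$, which is integrable once $\varepsilon$ is small. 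Morera's theorem (or differentiation under the integral sign) then shows that $\int_{c}^\infty(\cdots)\,dt$ is holomorphic in $(\bbeta,\theta)$ for each fixed $c$. The dependence on the lower limit is handled by writing $\int_{\bgamma}^\infty=\int_{c}^\infty+\int_{\bgamma}^{c}$; the second piece is jointly analytic in $(\bbeta,\bgamma,\theta)$ because its integrand is entire in $t$. Joint real analyticity of $H$, and hence of $S$, follows.

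\emph{Special values.} These are read off from Proposition~\ref{prop: spec} and Lemma~\ref{lem: ex2}.
\begin{enumerate}
\item By Proposition~\ref{prop: spec}(1), $H(\bbeta,0,\theta)=\frac{\pi}{2}\erf{\frac{\bbeta}{\sqrt2}}+2\pi T(\bbeta,-\cot\theta)$. Subtracting the erf term gives $S(\bbeta,0,\theta)=2\pi T(\bbeta,-\cot\theta)$.
\item By Proposition~\ref{prop: spec}(2), $H(0,\bgamma,\theta)=2\pi T(\bgamma,-\cot\theta)$. Since $\erf{0}=0$, this is also $S(0,\bgamma,\theta)$.
\item By Proposition~\ref{prop: spec}(3), $H(\bbeta,\bbeta,\theta)=4\pi T(\bbeta,\tan(\theta/2))-\frac{\pi}{2}\erfc{\frac{\bbeta}{\sqrt2}}$. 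Subtracting $\frac{\pi}{2}\erf{\frac{\bbeta}{\sqrt2}}$ and using $\erf{x}+\erfc{x}=1$ gives $S(\bbeta,\bbeta,\theta)=4\pi T(\bbeta,\tan(\theta/2))-\pi/2$.
\item We have $S(0,0,\theta)=H(0,0,\theta)=2\pi\wTT(0,-\cot\theta,0)$. By Remark (2) after Lemma~\ref{lem: ex2} this equals $\tan^{-1}(-\cot\theta)$, which is $\theta-\frac{\pi}{2}$ by Lemma~\ref{lemDetails}(1). Equivalently, this follows directly from \Refb{genT1} with $a=-\cot\theta$.
\end{enumerate}

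The stated restrictions $\bbeta\neq0$ and $\bgamma\ne0$ are needed only so that the $T$-function formulas come from the decomposition \Refb{EQN: Sform}. Because $S$ is continuous, the identities in fact extend by continuity.
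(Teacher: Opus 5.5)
Your proof is correct and follows essentially the same route as the paper: the four special values are read off from Proposition~\ref{prop: spec} (that is, Lemma~\ref{lem: ex2} together with Lemma~\ref{lemDetails}), and analyticity is taken from the integral representation of $H$ rather than from the discontinuous decomposition \Refb{EQN: Sform}. The paper only asserts analyticity and leaves boundedness implicit, so your complexification and dominated-convergence argument and the bound $|S|\le 3\pi/2$ supply details rather than a different approach. Your closing remark is also correct: the identities extend to $\bbeta=0$ or $\bgamma=0$ by continuity.
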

\begin{proof} For (1--4), use Lemma~\ref{lemDetails}. \end{proof}
\begin{rems}
(1) Although $S$ is real analytic and bounded on $\real^2 \times (0,\pi)$, it is not true that $S$ is subanalytic. In particular, the closure of the graph of $S$ is not subanalytic. \\
(2) The right hand side of~\Refb{EQN: Sform} comprises four \emph{discontinuous} functions and one analytic function---$\tan^{-1}(-\cot(\theta))$ (cf.~Lemmas~\ref{lemDetails2}, \ref{lemDetails3}).  \rend
\end{rems}

It is helpful and suggestive to express
$S(\bbeta,\bgamma,\theta)$ in terms of ``angles". To this end,  define $\widetilde \theta=\widetilde \theta(\bbeta,\bgamma,\theta)$ by  the equation
\begin{equation}
S(\bbeta,\bgamma,\theta) = \widetilde \theta - \frac{\pi}{2}
\end{equation}
and note that
{\small
\begin{eqnarray}\label{EQ: tildatheta}
\hspace*{0.5in} \wtheta&=&H(\bbeta,\bgamma,\theta)+\frac{\pi}{2}\erfcb{\frac{\bbeta}{\sqrt{2}}} \\
\label{EQ: tildatheta2}
& = &\sqrt{\frac{\pi}{2}}\int_{\bgamma}^\infty e^{-\frac{t^2}{2}}\erfb{\frac{-t\cot(\theta) + \bbeta\csc(\theta)}{\sqrt{2}}}dt +\frac{\pi}{2}\erfcb{\frac{\bbeta}{\sqrt{2}}} \\
& = &\sqrt{\frac{\pi}{2}}\int_{\bbeta}^\infty e^{-\frac{t^2}{2}}\erfb{\frac{-t\cot(\theta) + \bgamma\csc(\theta)}{\sqrt{2}}}dt +\frac{\pi}{2}\erfcb{\frac{\bgamma}{\sqrt{2}}} 
\end{eqnarray}
}\normalsize


If $\bbeta = \bgamma = 0$, then $S(0,0,\theta) = \theta - \frac{\pi}{2}$ (Proposition~\ref{prop: det}) and so $\wtheta = \theta$. In the sequel, the variable dependence of $\widetilde \theta$ will be clear from the context and
we always write  $\widetilde \theta$ and omit the variable dependence.

\begin{exam}
Using either Lemmas~\ref{lemDetails} and \ref{lemDetails2} or Proposition~\ref{prop: det},  
\begin{eqnarray*}
\lim_{\theta \arr 0+} H(\bbeta,\bbeta,\theta)& =& S(\bbeta,\bbeta,0) + \frac{\pi}{2}\erfb{\frac{\bbeta}{\sqrt{2}}}\\
& =& -\frac{\pi}{2}\, \erfcb{\frac{\bbeta}{\sqrt{2}}}  
\end{eqnarray*}
and $\widetilde \theta = 0$.
Here, exactly two of the terms in \Refb{EQ: GENT} are non-zero: term 1 (which is $-\frac{\pi}{2}$) and the final term. A similar analysis can be done for 
$\lim_{\theta \arr \pi-} H(\bbeta,-\bbeta, \theta) = 0$ (see also~Lemma~\ref{lem: pi}). \examend
\end{exam}

\section{Evaluation of $f(\ww,\vv)$ concluded}\label{sec: 27}  

Throughout this section,  take $A = -\cot(\theta)$, $B = \csc(\theta)$ and assume $\theta \in (0,\pi)$.
\subsubsection{The constant term $ \bbeta\bgamma\int_D 1$}
We have 
\begin{eqnarray*}
\bbeta\bgamma\int_D  1&=&\bbeta\bgamma\left[\frac{\pi}{2}\erfcb{\frac{\bgamma}{\sqrt{2}}} - \big(\widetilde \theta - \frac{\pi}{2}\big) - \frac{\pi}{2}\erfb{\frac{\bbeta}{\sqrt{2}}}\right]\\
 & = & \bbeta\bgamma\left(\frac{\pi}{2}\left[\erfcb{\frac{\bgamma}{\sqrt{2}}}+\erfcb{\frac{\bbeta}{\sqrt{2}}}\right]  - \widetilde \theta \right)
\end{eqnarray*}
\emph{Check:} $\bbeta\bgamma\int_D  1$ is $(\bbeta,\bgamma)$-symmetric.
\subsubsection{The linear term $-\int_D (\bgamma\cos(\theta)+\bbeta)x_1 + \bgamma\sin(\theta)x_2$}
\begin{multline*}
\begin{aligned}
-\int_D& (\bgamma\cos(\theta)+\bbeta)x_1 + \bgamma\sin(\theta)x_2  = \\
& - \sqrt{\frac{\pi}{2}}e^{-\frac{\bgamma^2}{2}}\left[(\bgamma\cos(\theta) + \bbeta) \erfcb{\frac{A\bgamma + B\bbeta}{\sqrt{2}}}\right] \\
& -\sqrt{\frac{\pi}{2}}e^{-\frac{\bbeta^2}{2}}\left[(\bgamma + \bbeta\cos(\theta))\erfcb{\frac{A\bbeta + B\bgamma}{\sqrt{2}}}\right]
\end{aligned}
\end{multline*}
\emph{Check:} $-\int_D (\bgamma\cos(\theta)+\bbeta)x_1 + \bgamma\sin(\theta)x_2$ is $(\bbeta,\bgamma)$-symmetric.
\subsubsection{The first quadratic term $\int_D \sin(\theta)x_1x_2$}
\[
\int_D \sin(\theta)x_1x_2 = \sin^2(\theta)e^{-\frac{\bbeta^2}{2}}\left[\sqrt{\frac{\pi}{2}}\bbeta\cos(\theta)\erfcb{\frac{A\bbeta+B\bgamma}{\sqrt{2}}}+\sin(\theta)e^{-\frac{(A\bbeta+B\bgamma)^2}{2}}\right]
\]
This is consistent with the result for $\int_D \sin(\theta)x_1x_2$ when the bias is zero. 
Obviously this term is not $(\bbeta,\bgamma)$-symmetric. Consequently, the sum of the quadratic terms
$\int_D \sin(\theta)x_1x_2 + \int_D \cos(\theta) x_1^2$ must be $(\bbeta,\bgamma)$-symmetric.
\subsubsection{The second quadratic term $\int_D \cos(\theta)x_1^2$}
\begin{multline*}
\begin{aligned}
\int_D\cos(\theta) x_1^2 =& \cos(\theta)\frac{\pi}{2} \big(1 - \erfb{\frac{\bgamma}{\sqrt{2}}}\big) \\
&+\sqrt{\frac{\pi}{2}}\left[\cos(\theta) \bgamma e^{-\frac{\bgamma^2}{2}}\erfcb{\frac{A\bgamma + B\bbeta}{\sqrt{2}}}\right] \\
& +\sqrt{\frac{\pi}{2}}\left[ \cos^3(\theta) \bbeta e^{-\frac{\bbeta^2}{2}} \erfcb{\frac{A\bbeta + B\bgamma}{\sqrt{2}}}\right] \\
& + \sin(\theta)\cos^2(\theta) e^{-\frac{\bbeta^2}{2}} e^{-\frac{(A\bbeta + B\bgamma)^2}{2}}\\
& -\cos(\theta)\big(\widetilde\theta - \frac{\pi}{2}\erfcb{\frac{\bbeta}{\sqrt{2}}}\big)- \cos(\theta)\frac{\pi}{2} \erfb{\frac{\bbeta}{\sqrt{2}}}
\end{aligned}
\end{multline*}
It is convenient to define functions $k, K$ of two biases and one angle by
\begin{eqnarray*}
k(\bbeta,\bgamma,\theta)&=&\frac{\bbeta^2}{2} + \frac{(A\bbeta + B\bgamma)^2}{2} = \frac{\bgamma^2}{2} +\frac{(A\bgamma + B\bbeta)^2}{2}\\
&=& \frac{(\bbeta^2+\bgamma^2)\csc^2(\theta) - 2 \bbeta\bgamma\cot(\theta)\csc(\theta)}{2}
\end{eqnarray*}
We use capital $K$ to indicate a function $K(\bbeta_i,\bbeta_j, \Theta)$ where $\Theta$ will now be an angle between two rows $\ww^i,\ww^j$ of $\WW$ 
as opposed to the angle between a row of $\WW$ and a row of $\VV$. 

After using some elementary trigonometric identities, we find that if $\|\ww\|,\|\vv\| \ne 0$, then
\begin{multline*}
\begin{aligned}
\int_D&\cos(\theta) x_1^2 + \sin(\theta) x_1x_2 = \\
& \cos(\theta)\frac{\pi}{2}\left[ \erfcb{\frac{\bgamma}{\sqrt{2}}} +\erfcb{\frac{\bbeta}{\sqrt{2}}}\right] -\cos(\theta)\widetilde\theta \\
&+\sqrt{\frac{\pi}{2}}\cos(\theta)\left[\bbeta  e^{-\frac{\bbeta^2}{2}} + \bgamma e^{-\frac{\bgamma^2}{2}}\right] \\
&-\sqrt{\frac{\pi}{2}}\cos(\theta)\left[\bbeta  e^{-\frac{\bbeta^2}{2}}\erfb{\frac{A\bbeta+B\bgamma}{\sqrt{2}}} +\bgamma  e^{-\frac{\bgamma^2}{2}}\erfb{\frac{A\bgamma+B\bbeta}{\sqrt{2}}}\right] \\
&+\sin(\theta)e^{-k(\bbeta,\bgamma,\theta)}
\end{aligned}
\end{multline*}
\emph{Checks:} $\int_D\cos(\theta) x_1^2 + \sin(\theta) x_1x_2$ is $(\bbeta,\bgamma)$-symmetric and if
$\bgamma,\bbeta = 0$, $\int_D\cos(\theta) x_1^2 + \sin(\theta) x_1x_2= \sin(\theta)+(\pi-\theta)\cos(\theta)$.

\subsubsection{The formula for $f(\ww,\vv)$}\label{sec: notcon}
For this we use the expressions computed above and the abbreviations $A = -\cot(\theta), B = \csc(\theta)$.   Many terms cancel and we find that 
\begin{multline}\label{EQ: fwv}
\begin{aligned}
f(\ww,&\vv)=  \\
& \frac{\|\ww\|\|\vv\|[\bbeta\bgamma +  \cos(\theta)]}{2\pi}\left(\frac{\pi}{2}\left[\erfcb{\frac{\bgamma}{\sqrt{2}}}+\erfcb{\frac{\bbeta}{\sqrt{2}}}\right]  - \widetilde\theta \right)\\
&-\frac{\|\ww\|\|\vv\|}{2\sqrt{2\pi}}\left[\bgamma e^{-\frac{\bbeta^2}{2}} \erfcb{\frac{A\bbeta + B\bgamma}{\sqrt{2}}} + \bbeta e^{-\frac{\bgamma^2}{2}}\erfcb{\frac{A\bgamma + B\bbeta}{\sqrt{2}}} \right]\\
&+\frac{\|\ww\|\|\vv\|}{2\pi}\sin(\theta)e^{-k(\bbeta,\bgamma,\theta)}
\end{aligned}
\end{multline}

\begin{rem}
Using \Refb{EQ: fwv} for $f(\ww,\vv)$, it is a useful exercise to deduce the result for $f(\ww,\ww)$ given in \Refb{EQ: fww} as well as Lemmas~\ref{lem: zero}, \ref{lem: pi} covering the cases $\theta\in\{0,\pi\}$. \rend
\end{rem}
The section concludes with some examples and computations used later. Some preliminary notational conventions are needed first.
\subsection{Notational conventions \& formulas for $f(\ww^i,\vv^j), f(\ww^i,\ww^j)$}\label{sec: notconv}.
Given $d \in \pint$, define $\is{d} = \{0,1,\ldots,d-1\}$ and $[d] = \{1,2,\ldots,d\}$. We fix a matrix $\VV \in M(d,d)$ and denote the rows by $\vv^0,\ldots,\vv^{d-1}$. Later we 
refer to $\VV$ as the ``target'': the hoped for end result of the learning process (gradient descent in our set up). Generally, it is assumed 
that $\VV$ has no parallel rows.  Typically, we take $\VV$ to be the identity matrix and $\{\vv^i\}_{i \in \is{d}}$ to be the standard orthonormal basis of $(\real^d)^\star \approx \real^d$ (by the Euclidean inner product).

Next, let $\WW \in M(k,d)$, $k \ge d\in \pint$. Assume $\WW$ has no parallel rows and label rows of $\WW$ by $\is{k}$: $\ww^0,\ldots,\ww^{k-1}$.  Given rows $\ww^i, \ww^j$ in $\WW$, $\vv^j$ in $\VV$, define
\[
\Lambda_{ij} = \cos^{-1}\left(\frac{\langle \ww^i,\ww^j\rangle}{\|\ww^i\|\|\ww^j\|}\right),\; \lambda_{ij} = \cos^{-1}\left(\frac{\langle \ww^i,\vv^j\rangle}{\|\ww^i\|\|\vv^j\|}\right).
\]
Note that $\Lambda_{ij} = \Lambda_{ji}$, all $i,j \in \is{k}$ and that $\Lambda_{ii} = 0$. This is not the case for the angles $\lambda_{ij}$;  indeed, $\lambda_{ji}$ is not be defined if $j > d$.

It is helpful to develop a more concise notation for the terms that occur in $f(\ww,\vv)$. Continuing with our assumptions on $\WW, \VV$, 
\begin{enumerate}
\item[(a1)] For $i,j \in \is{k}$, $j \ne i$, define
\[
Z_{ij} =  -\cot(\Lambda_{ij}) \bbeta_j + \csc(\Lambda_{ij}) \bbeta_i, \qquad  \wZZ_{ij} = -\cot(\Lambda_{ij}) \bbeta_i + \csc(\Lambda_{ij}) \bbeta_j
\]
\[
K(\bbeta_i,\bbeta_j,\Lambda_{ij}) = \frac{1}{2}\big((\bbeta_i^2 + \bbeta_j^2)\csc^2(\Lambda_{ij}) - 2\bbeta_i\bbeta_j  \csc(\Lambda_{ij})\cot(\Lambda_{ij})\big)
\]
\item[(a2)] For $i \in \is{k}, j \in \is{d}$, define
\[
z_{ij} =  -\cot(\lambda_{ij}) \bgamma_j + \csc(\lambda_{ij}) \bbeta_i, \qquad  \wzz_{ij} = -\cot(\lambda_{ij}) \bbeta_i + \csc(\lambda_{ij}) \bgamma_j
\]
\[
k(\bbeta_i,\bgamma_j,\lambda_{ij}) = \frac{1}{2}\big((\bbeta_i^2 + \bgamma_j^2)\csc^2(\lambda_{ij}) - 2\bbeta_i\bgamma_j  \csc(\lambda_{ij})\cot(\lambda_{ij})\big)
\]
\end{enumerate}
With these conventions,  
\begin{multline*}
\begin{aligned}
f(\ww^i,&\vv^j)= \\
&\frac{\|\ww^i\|\|\vv^j\|[\bbeta_i\bgamma_j +  \cos(\lambda_{ij})]}{2\pi}\left(\frac{\pi}{2}\left[\erfcb{\frac{\bbeta_i}{\sqrt{2}}}+\erfcb{\frac{\bgamma_j}{\sqrt{2}}}\right]-\wlambda_{ij}\right) \\
& -\frac{\|\ww^i\|\|\vv^j\|}{2\sqrt{2\pi}}\left[\bgamma_j e^{-\frac{\bbeta_i^2}{2}} \erfcb{\frac{\wzz_{ij}}{\sqrt{2}}} + \bbeta_i e^{-\frac{\bgamma_j^2}{2}}\erfcb{\frac{z_{ij}}{\sqrt{2}}} \right]\\
&+\frac{\|\ww^i\|\|\vv^j\|}{2\pi}\sin(\lambda_{ij})e^{-k(\bbeta_i,\bgamma_j,\lambda_{ij})},\;\; i \in \is{k}, j \in \is{d}, \\
f(\ww^i,&\ww^j)= \\
&\frac{\|\ww^i\|\|\ww^j\|[\bbeta_i\bbeta_j +  \cos(\Lambda_{ij})]}{2\pi}\left(\frac{\pi}{2}\left[\erfcb{\frac{\bbeta_i}{\sqrt{2}}}+\erfcb{\frac{\bbeta_j}{\sqrt{2}}}\right]-\wLambda_{ij}\right) \\
& -\frac{\|\ww^i\|\|\ww^j\|}{2\sqrt{2\pi}}\left[\bbeta_j e^{-\frac{\bbeta_i^2}{2}} \erfcb{\frac{\wZZ_{ij}}{\sqrt{2}}} + \bbeta_i e^{-\frac{\bbeta_j^2}{2}}\erfcb{\frac{Z_{ij}}{\sqrt{2}}} \right]\\
&+\frac{\|\ww^i\|\|\ww^j\|}{2\pi}\sin(\Lambda_{ij})e^{-K(\bbeta_i,\bbeta_j,\Lambda_{ij})}, \;\; i,j \in \is{k}, j \ne i
\end{aligned}
\end{multline*}
(For $i = j$, see Example~\ref{ex: simp}, equation \Refb{EQ: fww} and also \Refb{EQ: iisj} below.)

If $\bgamma_i = 0$, for all $i \in \is{d}$---the only case considered in this paper---then for $i \in \is{k}$ and $j \in \is{d}$ 
\begin{multline*}
\begin{aligned}
f(\ww^i,&\vv^j)= \frac{\|\ww^i\|\|\vv^j\|\cos(\lambda_{ij})}{2\pi}\left(\pi -\wlambda_{ij} - \frac{\pi}{2} \erfb{\frac{\bbeta_i}{\sqrt{2}}}\right) \\
& -\frac{\|\ww^i\|\|\vv^j\|}{2\sqrt{2\pi}}\bbeta_i \erfcb{\frac{\csc(\lambda_{ij}) \bbeta_i}{\sqrt{2}}} 
 +\frac{\|\ww^i\|\|\vv^j\|}{2\pi}\sin(\lambda_{ij})e^{-\frac{\bbeta_i^2\csc^2(\lambda_{ij})}{2}} 
\end{aligned}
\end{multline*}
When symmetry is present, we may have $\bbeta_i = \bbeta_j$, $i \ne j$. Setting $\bbeta_i,\bbeta_j = \bbeta$ and $\Lambda_{ij} = \Theta$,  the formula for $f(\ww^i,\ww^j)$ is 
\begin{multline*}
\begin{aligned}
f(\ww^i,\ww^j)= &
\frac{\|\ww^i\|\|\ww^j\|[\bbeta^2 + \cos(\Theta)]}{2\pi}\left(\pi\erfcb{\frac{\bbeta}{\sqrt{2}}}-\wTheta\right) \\
& \hspace*{-0.4in}-\frac{\|\ww^i\|\|\ww^j\|}{\sqrt{2\pi}}\bbeta e^{-\frac{\bbeta^2}{2}} \erfcb{\frac{\beta\big(-\cot(\Theta)+\csc(\Theta)\big)}{\sqrt{2}}}\\
&\hspace*{-0.4in}+\frac{\|\ww^i\|\|\ww^j\|}{2\pi}\sin(\Theta)e^{-\bbeta^2\csc(\Theta)\big(\csc(\Theta) -\cot(\Theta)\big)} \;\; i,j \in \is{k}, j \ne i
\end{aligned}
\end{multline*}

\begin{exam}\label{EX: on_set_target}
Let $\{\uu^i\}_{i \in \is{d}}$ be an orthogonal basis of $\real^d$ such that each $\uu^i$ is a non-zero multiple of $\vv^i$, $i \in \is{d}$. For $i \ne j$, the angle $\phi_{ij}$ between
$\uu^i$ and $\uu^j$ (or $\vv^j$) is $\pi/2$.  
For $i \in \is{d}$,
\begin{equation}\label{EQ: iisj}
f(\uu^i,\uu^i)  = \frac{\|\uu^i\|^2}{2}\big(\bbeta_i^2+1\big)\erfcb{\frac{\bbeta_i}{\sqrt{2}}} - \frac{\|\uu^i\|^2}{\sqrt{2\pi}}\bbeta_i e^{-\frac{\bbeta_i^2}{2}}
\end{equation}
For $i,j \in \is{d}$, $i \ne j$,
\begin{multline}
\begin{aligned}
f(\uu^i,&\uu^j)  = \frac{\|\uu^i\|\|\uu^j\|}{4}\bbeta_i\bbeta_j\erfcb{\frac{\bbeta_i}{\sqrt{2}}}\erfcb{\frac{\bbeta_j}{\sqrt{2}}} \\
& -\frac{\|\uu^i\|\|\uu^j\|}{2\sqrt{2\pi}}\left[\bbeta_j e^{-\frac{\bbeta_i^2}{2}} \erfcb{\frac{\bbeta_j}{\sqrt{2}}} + \bbeta_i e^{-\frac{\bbeta_j^2}{2}}\erfcb{\frac{\bbeta_i}{\sqrt{2}}} \right]\\
&+\frac{\|\uu^i\|\|\uu^j\|}{2\pi}e^{-K(\bbeta_i,\bbeta_j,\pi/2)},
\end{aligned}
\end{multline}
where $K(\bbeta_i,\bbeta_j,\pi/2) = \frac{1}{2}(\bbeta_i^2+\bbeta_j^2)$ and we used $\frac{2}{\pi}H(\bbeta_i,\bbeta_j,\pi/2) = \erfcb{\frac{\bbeta_j}{\sqrt{2}}}\erfb{\frac{\bbeta_i}{\sqrt{2}}}$.  
The reader may confirm that the result is consistent with the alternative description in terms of $\widetilde{\phi}_{ij}$.
\examend
\end{exam}
\section{Gradient of $f(\ww,\vv)$ with respect to $(\ww,\bbeta)$: I}\label{sec: sec4}
Our strategy will be to minimize the number of evaluations of the $T$ function. In particular, when we consider derivatives of $H(\bbeta,\bgamma,\theta) =2\pi T(\bgamma,-\cot(\theta), \bbeta\csc(\theta))$, we differentiate
the defining integral~\Refb{EQ: integral} rather than the Owen $T$-functions and inverse tangents that appear in \Refb{EQ: GENT}. 
\subsubsection{Notational conventions}
If there are $d$ inputs and $k$-neurons and there is non-zero bias, then the parameter space will be $M(k,d+1)$. If $i \in \is{k}$, and $\WW \in M(k,d)\subset M(k,d+1)$, then row $i$ of $\WW$ is denoted by $\ww^i$ and
\[
\ww^i = (w_{i0},\ldots,w_{id-1})\in (\real^d)^\star
\]
Note that $\ww^i$ is a row vector---linear functional. Thus the matrix multiplication $\ww^i\xx$ is well defined for all vectors $\xx \in \real^d$. With bias included, we sometimes define $\wWW \in M(k,d+1)$ by 
$\wWW = [\WW,\boldsymbol{\bbeta}]$ where $\boldsymbol{\bbeta}$ is the $k \times 1$ column matrix of biases and row $i$ of $\wWW$ is denoted by $\tww^i$ so that
\[
\tww^i = (w_{i0},\ldots,w_{id-1},\bbeta_i) \in  (\real^d)^\star \times \real,
\]
where $\bbeta_i$ is the (normalized) bias associated to neuron $i$.  Mostly we stick to $\WW, \ww$ when the meaning is clear from the context.

We regard $\partial/\partial \ww$ as shorthand for the gradient operator in the variable $\ww$. That is, $\frac{\partial f}{\partial \ww}$ 
is the derivative $Df_\ww \in (\real^\star)^{d}$ followed by identification of $Df_\ww$ with the point in $\real^{d}\subset \real^{d+1}$ defined by the Euclidean inner product.
Note that absence of bold face for $w$ indicates the standard partial derivative. Set $\grad{f}(\tww) = (\frac{\partial f}{\partial \ww},\frac{\partial f}{\partial \boldsymbol{\bbeta}})$. 

\begin{rem}
From Section~\ref{sec: symmsec9}, the focus will be on networks with a symmetric target and then $\lambda$ is used for the angle between $\ww$ and $\vv$ in $f(\ww,\vv)$ and $\Lambda$ for angle between $\ww, \ww'$ in $f(\ww,\ww')$ when there is no symmetry relation.  Moreover, $\theta$ (resp.~$\Theta$) will be used for the angle between $\ww$ and $\vv$ (resp.~$\ww$ and $\ww'$) when the associated rows have the same symmetry or \emph{rowtype} (see Section~\ref{sec: symmsec9}).  Since 
$\theta$ is often used to denote angles between rows in the literature (for example, \cite{ChooSaul2009,BrutzkusGloberson2018} where the networks are bias free), 
it seemed best to start by following this convention. However, from now on $\lambda$ will generally be used for the angle between a row of $\WW$ and a row of $\VV$ and
$\Lambda$ for angles between (different) rows of $\WW$. 
\rend
\end{rem}
\begin{exam}\label{EX: fwwderivex}
Recall from Example~\ref{ex: simp} that 
\[
f(\ww,\ww) = \frac{\|\ww\|^2(\bbeta^2+1)}{2}\erfcb{\frac{\bbeta}{\sqrt{2}}} - \frac{\|\ww\|^2}{\sqrt{2\pi}}\bbeta e^{-\frac{\bbeta^2}{2}}
\]
Computing we find that
\begin{eqnarray}
\label{EQ: iiw_deriv}
\frac{\partial f}{\partial \ww}(\ww,\ww)&=&\left[(\bbeta^2+1)\erfcb{\frac{\bbeta}{\sqrt{2}}} - \sqrt{\frac{2}{\pi}}\bbeta e^{-\frac{\bbeta^2}{2}}\right]\ww\\
\label{EQ: iibeta_deriv}
\frac{\partial f}{\partial \bbeta} & = & \|\ww\|^2\left[\bbeta\erfcb{\frac{\bbeta}{\sqrt{2}}}-\sqrt{\frac{2}{\pi}}e^{-\frac{\bbeta^2}{2}}\right]
\end{eqnarray}
and so, setting $(\ww,\bbeta) = \widetilde{\ww} \in M(d+1,k)$.
\[
\grad{f}(\tww) = \erfcb{\frac{\bbeta}{\sqrt{2}}}\big((\bbeta^2+1)\ww, \|\ww\|^2\bbeta) - \sqrt{\frac{2}{\pi}}e^{-\frac{\bbeta^2}{2}}\big(\bbeta \ww,1\big) 
\]
\examend
\end{exam}
If $\uu \in \real^n$ (or $(\real^n)^\star$) is non-zero, define $\overline{\uu} = \frac{\uu}{\|\uu\|}$.

\subsubsection{Some frequently used derivatives}\label{deriv: useful}
Suppose $\ww,\vv\in\real^d$ are not parallel (and therefore non-zero), $\cos(\theta) = \frac{\langle \ww,\vv\rangle}{\|\ww\|\|\vv\|}$ and 
\begin{equation}\label{def: Jvector}
\jj =  \cot(\theta) \overline{\ww} - \csc(\theta)\overline{\vv}
\end{equation}
Similarly define $\JJ = \JJ(\ww,\ww') \in \real^d$ when $\ww,\ww'$ are distinct rows of $\WW$.
The following derivatives will be used repeatedly in this and subsequent sections.
\begin{eqnarray*}
\frac{\partial}{\partial \ww} \|\ww\|&=&\overline{\ww},\quad \frac{\partial}{\partial \ww} \frac{1}{\|\ww\|} = -\frac{\overline{\ww}}{\|\ww\|^2}, \quad
\frac{\partial \theta}{\partial \ww}=\frac{\jj}{\|\ww\|}\\
\frac{\partial\, \cos(\theta)}{\partial \ww} & = &  -\frac{\sin(\theta)}{\|\ww\|}\jj, \quad
\frac{\partial\, \sin(\theta)}{\partial \ww}  =  \frac{\cos(\theta)}{\|\ww\|}\jj \\
\frac{\partial\, \cot(\theta)}{\partial \ww} & = & -\frac{\csc^2(\theta)}{\|\ww\|}\jj ,\quad
\frac{\partial\, \csc(\theta)}{\partial \ww}  =   -\frac{\cot(\theta)\csc(\theta)}{\|\ww\|}\jj 
\end{eqnarray*}

\begin{lemma}\label{lem: Jlemma}
(Notation and assumptions as above.)
\begin{enumerate}
\item $\jj(\ww,\vv) \perp \ww$, $\|\jj\| =1$,
\item $\JJ(\ww,\ww') \perp \ww$, $\|\JJ\| =1$.
\item $\{\overline{\ww},\jj\}$ is an orthonormal basis for the space spanned by $\ww,\vv$.
\end{enumerate}
\end{lemma}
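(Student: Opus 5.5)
The plan is to work directly from the definition $\jj = \cot(\theta)\bww - \csc(\theta)\bvv$, using only $\langle \bww,\bvv\rangle = \cos(\theta)$, $\|\bww\| = \|\bvv\| = 1$ and $\theta\in(0,\pi)$. The last condition holds because $\ww,\vv$ are assumed not parallel, so $\sin(\theta)>0$ and $\jj$ is well defined.

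For (1), I will first compute the inner product
\[
\langle \jj,\bww\rangle = \cot(\theta) - \csc(\theta)\cos(\theta) = 0 .
\]
Orthogonality to $\ww$ then follows, since $\ww$ is a positive multiple of $\bww$. Next I will expand the norm:
\[
\|\jj\|^2 = \cot^2(\theta) - 2\cot(\theta)\csc(\theta)\cos(\theta) + \csc^2(\theta) = \csc^2(\theta) - \cot^2(\theta) = 1,
\]
using $\cot(\theta)\csc(\theta)\cos(\theta) = \cot^2(\theta)$ and $1+\cot^2(\theta) = \csc^2(\theta)$.

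Part (2) is the same computation with $\vv$ replaced by the row $\ww'$ and $\theta$ by the angle $\Lambda$ between $\ww$ and $\ww'$. Since $\ww,\ww'$ are distinct rows and $\WW$ is assumed to have no parallel rows, $\Lambda\in(0,\pi)$ and the identical identities apply.

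For (3), $\jj$ is by definition a linear combination of $\bww$ and $\bvv$, so it lies in $\mathrm{span}\{\ww,\vv\}$. Together with (1), $\{\bww,\jj\}$ is an orthonormal pair inside this span. The span is two-dimensional because $\ww,\vv$ are not parallel, so the pair is an orthonormal basis. As a consistency check, in the coordinates of Section~\ref{sec: bias} (with $\bvv = (1,0,\ldots)$ and $\bww = (\cos\theta,\sin\theta,0,\ldots)$) one finds $\jj = (-\sin\theta,\cos\theta,0,\ldots)$, which is visibly a unit vector orthogonal to $\bww$. This also matches the formula $\partial\theta/\partial\ww = \jj/\|\ww\|$.

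There is no real obstacle here. The only care needed is to use the non-parallel hypothesis to ensure $\csc(\theta)$ is finite and that the span is two-dimensional.
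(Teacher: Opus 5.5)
Your proof is correct. The paper leaves this lemma as an exercise, and your direct verification is the intended routine check: orthogonality and unit length from $\langle\bww,\bvv\rangle=\cos(\theta)$ and $\csc^2(\theta)-\cot^2(\theta)=1$, and the basis claim from the non-parallel hypothesis, which makes $\mathrm{span}\{\ww,\vv\}$ two-dimensional.
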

The proof is left as an exercise for the reader.
\subsection{Derivatives of $H(\bbeta,\bgamma,\lambda)$ and $H(\bbeta_i,\bbeta_j,\Lambda_{ij})$, $i \ne j$}
The variable $\bgamma$ is not used in computing the gradient of $f(\ww,\vv)$ since the $\vv$-variables are part of the target and fixed.
For the term $f(\ww^i,\ww^j)$, $i \ne j$, we need to consider the derivatives of $H(\bbeta_i,\bbeta_j,\Lambda_{ij})$ with respect to
$\bbeta_i$ and $\bbeta_j$---including the case when $\bbeta_i = \bbeta_j$ which is important when there is symmetry.

\begin{prop}\label{lem: Hderiv}
(Notation and assumptions as above.)
\begin{enumerate}
\item Derivatives of $H(\bbeta,\bgamma,\lambda)$.
\begin{eqnarray}
\label{EQ: Hwdervb}
\frac{\partial H}{\partial \bbeta} & = & \sqrt{\frac{\pi}{2}}e^{-\frac{\bbeta^2}{2}}\erfcb{\frac{-\cot(\lambda)\bbeta+\csc(\lambda)\bgamma}{\sqrt{2}}}\\
\label{EQ: Hwdervw}
\frac{\partial H}{\partial \ww}&=& \frac{e^{-k(\bbeta,\bgamma,\lambda)}}{\|\ww\|}\jj
\end{eqnarray}
\item Derivatives of $H(\bbeta_i,\bbeta_j,\Lambda_{ij})$, $i \ne j$.
\begin{eqnarray}
\label{EQ: Hwdervb2}
\frac{\partial H}{\partial \bbeta_i} & = &   \sqrt{\frac{\pi}{2}}e^{-\frac{\bbeta_i^2}{2}}\erfcb{\frac{-\cot(\Lambda_{ij})\bbeta_i+\csc(\Lambda_{ij})\bbeta_j}{\sqrt{2}}}\\
\label{EQ: Hwdervb3}
\frac{\partial H}{\partial \bbeta_j} & = & -\sqrt{\frac{\pi}{2}}e^{-\frac{\bbeta_j^2}{2}}\erfb{\frac{-\cot(\Lambda_{ij})\bbeta_j+\csc(\Lambda_{ij})\bbeta_i}{\sqrt{2}}} \\
\label{EQ: Hwdervw2}
\frac{\partial H}{\partial \ww^i}&=& \frac{e^{-K(\bbeta_i,\bbeta_j,\Lambda_{ij})}}{\|\ww^i\|}\JJ_{ij} 
\end{eqnarray}
\item Derivative of $H(\bbeta,\bbeta,\Theta)$.
\begin{eqnarray}
\label{EQ: Hwdervw4}
\frac{\partial H}{\partial \bbeta} & = & \sqrt{\frac{\pi}{2}}e^{-\frac{\bbeta^2}{2}}\left[2\,\erfcb{\frac{\bbeta\big(-\cot(\Theta) +\csc(\Theta)\big)}{\sqrt{2}}} - 1\right]
\end{eqnarray}
\end{enumerate}
\end{prop}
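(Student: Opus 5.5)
The plan is to work throughout from the defining integral rather than from the decomposition \Refb{EQ: GENT}, as announced at the start of Section~\ref{sec: sec4}:
\[
H(\bbeta,\bgamma,\lambda)=\sqrt{\frac{\pi}{2}}\int_{\bgamma}^\infty e^{-\frac{t^2}{2}}\erfb{\frac{-t\cot(\lambda)+\bbeta\csc(\lambda)}{\sqrt{2}}}dt,\qquad \lambda\in(0,\pi).
\]
The variables $\bbeta,\bgamma,\lambda$ are treated as independent. $H$ depends on $\ww$ only through $\lambda$, so the $\ww$-gradient comes from $\partial H/\partial\lambda$ and the chain rule $\partial\lambda/\partial\ww=\jj/\|\ww\|$ (Section~\ref{deriv: useful}). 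Differentiation under the integral sign is justified by dominated convergence. The integrand and its $\bbeta$- and $\lambda$-derivatives are bounded by a polynomial in $t$ times $e^{-t^2/2}$, uniformly for $\lambda$ in compact subsets of $(0,\pi)$ and $\bbeta$ in bounded sets.

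The key algebraic step is the completion of the square already used in Lemmas~\ref{LEM: X1} and \ref{LEM: X1X2}. With $A=-\cot(\lambda)$ and $B=\csc(\lambda)$,
\[
t^2+(At+B\bbeta)^2=\bbeta^2+(Bt+A\bbeta)^2 .
\]
Set $u=Bt+A\bbeta=t\csc(\lambda)-\bbeta\cot(\lambda)$. Then $du=\csc(\lambda)\,dt$, and $u(\bgamma)=B\bgamma+A\bbeta$.

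\emph{The $\bbeta$-derivative, \Refb{EQ: Hwdervb}.} Since $\frac{d}{dx}\erf{x}=\frac{2}{\sqrt\pi}e^{-x^2}$, differentiating the integrand in $\bbeta$ gives the factor $\csc(\lambda)e^{-(At+B\bbeta)^2/2}$, the constants $\sqrt{\pi/2}\cdot\frac{2}{\sqrt\pi}\cdot\frac{1}{\sqrt2}$ combining to $1$. Hence
\[
\frac{\partial H}{\partial\bbeta}=e^{-\frac{\bbeta^2}{2}}\int_{B\bgamma+A\bbeta}^\infty e^{-\frac{u^2}{2}}du=\sqrt{\frac{\pi}{2}}\,e^{-\frac{\bbeta^2}{2}}\erfcb{\frac{-\cot(\lambda)\bbeta+\csc(\lambda)\bgamma}{\sqrt2}},
\]
using \Refb{EQ2b}.

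\emph{The $\ww$-gradient, \Refb{EQ: Hwdervw}.} Differentiating the erf argument in $\lambda$ gives
\[
\frac{\partial}{\partial\lambda}\big(t\csc^2(\lambda)-\bbeta\cot(\lambda)\csc(\lambda)\big)\cdot\frac{1}{\sqrt2}\;\leadsto\;\csc(\lambda)\,u\,\frac{1}{\sqrt2}.
\]
So the same substitution yields
\[
\frac{\partial H}{\partial\lambda}=e^{-\frac{\bbeta^2}{2}}\int_{u(\bgamma)}^\infty u\,e^{-\frac{u^2}{2}}du=e^{-\frac{\bbeta^2}{2}-\frac{(A\bbeta+B\bgamma)^2}{2}}=e^{-k(\bbeta,\bgamma,\lambda)}.
\]
Multiplying by $\jj/\|\ww\|$ gives \Refb{EQ: Hwdervw}.

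\emph{Part (2).} Formulas \Refb{EQ: Hwdervb2} and \Refb{EQ: Hwdervw2} are part (1) with $\bgamma\to\bbeta_j$, $\lambda\to\Lambda_{ij}$ and $\jj\to\JJ_{ij}$; here $\ww^j$ is held fixed, so only the dependence of $\Lambda_{ij}$ on $\ww^i$ enters. The new item is \Refb{EQ: Hwdervb3}, where $\bbeta_j$ now sits in the lower limit of integration. The fundamental theorem of calculus gives
\[
\frac{\partial H}{\partial \bbeta_j}=-\sqrt{\frac{\pi}{2}}\,e^{-\frac{\bbeta_j^2}{2}}\erfb{\frac{-\cot(\Lambda_{ij})\bbeta_j+\csc(\Lambda_{ij})\bbeta_i}{\sqrt2}},
\]
as stated.

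\emph{Part (3).} This is the total derivative along the diagonal $\bbeta_i=\bbeta_j=\bbeta$, namely the sum of \Refb{EQ: Hwdervb2} and \Refb{EQ: Hwdervb3}. Both arguments reduce to $\bbeta(\csc(\Theta)-\cot(\Theta))$, so $\erfc{x}-\erf{x}=2\,\erfc{x}-1$ gives \Refb{EQ: Hwdervw4}. This agrees with \Refb{EQ: bderiv} of Proposition~\ref{prop: spec}, which serves as a check.

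The only real obstacle is bookkeeping. One must keep the normalized biases as independent variables, so that $\partial/\partial\ww$ acts only through the angle, and get the signs and constants in the substitution right. Analytically nothing beyond dominated convergence on $(0,\pi)$ is needed.
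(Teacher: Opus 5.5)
Your proof is correct and follows the paper's overall strategy: differentiate the defining integral rather than the $T$-function decomposition, complete the square $t^2+(At+B\bbeta)^2=\bbeta^2+(Bt+A\bbeta)^2$, and finish with \Refb{EQ2b}. Your part (3) is exactly the paper's. Two steps are organized differently.

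For \Refb{EQ: Hwdervw}, the paper splits $\partial H/\partial\ww=I_L-I_M$. It evaluates the $t$-weighted integral $I_L$ via Example~\ref{Ex: EQ3} and $I_M$ via \Refb{EQ2b}, and the two $\mathrm{erfc}$ terms then cancel. You observe instead that $\partial_\lambda(At+B\bbeta)=\csc(\lambda)\,u$, where $u=Bt+A\bbeta$ is the completed-square variable. The integrand therefore becomes the exact differential $e^{-\bbeta^2/2}u\,e^{-u^2/2}\,du$, and no $\mathrm{erfc}$ term ever appears. This explains the paper's cancellation rather than merely observing it.

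For \Refb{EQ: Hwdervb3}, the paper differentiates the symmetry identity \Refb{EQ: bgsymm}. That identity is read off from the decomposition \Refb{EQ: GENT}, whose individual terms are undefined when $\bbeta\bgamma=0$. You apply the fundamental theorem of calculus to the lower limit instead, which is more self-contained. Combined with \Refb{EQ: Hwdervb}, your route even reproves \Refb{EQ: bgsymm}: the difference of its two sides has zero gradient in $(\bbeta,\bgamma)$ and vanishes at $(0,0)$.

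One cosmetic slip: in your display for the $\lambda$-derivative, drop the prefix $\partial/\partial\lambda$. The expression $t\csc^2(\lambda)-\bbeta\cot(\lambda)\csc(\lambda)$ is already $\partial_\lambda\big(-t\cot(\lambda)+\bbeta\csc(\lambda)\big)$.
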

\begin{proof}
(1) We have
\begin{multline*}
\begin{aligned}
\frac{\partial H}{\partial \bbeta}(\bbeta,\bgamma,\lambda) & = \frac{\partial}{\partial \bbeta}\sqrt{\frac{\pi}{2}}\int^\infty_{\bgamma} e^{-t^2/2} \erfb{\frac{-t\cot(\lambda) + \csc(\lambda)\bbeta}{\sqrt{2}}}dt\\
&= \sqrt{\frac{\pi}{2}} \int^\infty_{\bgamma} e^{-t^2/2} \frac{2\csc(\lambda)}{\sqrt{2\pi}}e^{-(t\cot(\lambda)-\bbeta\csc(\lambda))^2/2} dt\\
& = e^{-\frac{\bbeta^2}{2}} \int^\infty_{\bgamma} \csc(\lambda) e^{-(t \csc(\lambda) - \bbeta\cot(\lambda))^2/2}dt\\
& =\sqrt{\frac{\pi}{2}}e^{-\frac{\bbeta^2}{2}}\erfcb{\frac{-\cot(\lambda)\bbeta+ \csc(\theta)\bgamma}{\sqrt{2}}}
\end{aligned}
\end{multline*}
where the last line follows from \Refb{EQ2b}. 

In order to prove \Refb{EQ: Hwdervw},
differentiate $H$ with respect to $\ww$ to obtain
\[
\frac{\partial H}{\partial \ww} = \int_{\bgamma}^\infty e^{-\frac{t^2}{2}}\big [ tL - M\big] e^{-\frac{(-t\cot(\lambda)+\csc(\lambda)\bbeta)^2}{2}} dt = I_L -I_M,
\]
where
\[
L  =  \csc^2(\lambda)\frac{\partial \lambda}{\partial \ww} = \frac{\csc^2(\lambda)}{\|\ww\|}\jj\; \text{and }
M  =  \bbeta \csc(\lambda)\cot(\lambda)\frac{\partial \lambda}{\partial \ww} =  \bbeta \frac{\csc(\lambda)\cot(\lambda)}{\|\ww\|}\jj 
\]
Setting $A = -\cot(\lambda)$, $B = \csc(\lambda)$, evaluate the integrals $I_L, I_M$ using  Example~\ref{Ex: EQ3} and (\ref{EQ2b}):
\begin{eqnarray*}
I_L & = & \frac{e^{-\frac{\bbeta^2}{2}}}{\|\ww\|}e^{-\frac{(A\bbeta + B\bgamma)^2}{2}}\jj 
  -\sqrt{\frac{\pi}{2}}\bbeta\frac{e^{-\frac{\bbeta^2}{2}}}{\|\ww\|}\erfcb{\frac{A\bbeta + B\bgamma}{\sqrt{2}}}\cot(\lambda)\,\jj\\
-I_M & = & \sqrt{\frac{\pi}{2}}\bbeta\frac{e^{-\frac{\bbeta^2}{2}}}{\|\ww\|}\erfcb{\frac{A\bbeta + B\bgamma}{\sqrt{2}}}\cot(\lambda)\,\jj  
\end{eqnarray*}
and so
$
\frac{\partial H}{\partial \ww} = 
\frac{e^{-k(\bbeta,\bgamma,\lambda)}}{\|\ww\|}\jj. 
$

\noindent (2)  The proofs of (\ref{EQ: Hwdervb2},\ref{EQ: Hwdervb3},\ref{EQ: Hwdervw2}) are similar to those of (1) and omitted (note that \Refb{EQ: Hwdervb3} follows easily from
\Refb{EQ: Hwdervb2} using \Refb{EQ: bgsymm}).\\

\noindent (3)  Simply deduced from (\ref{EQ: Hwdervb2},\ref{EQ: Hwdervb3}). Alternatively, the result follows from Proposition~\ref{prop: spec}(3). 
\end{proof}

Since $H(\bbeta,\bgamma,\lambda) = \widetilde \lambda -\frac{\pi}{2}\erfcb{\frac{\bbeta}{\sqrt{2}}}$ (see \Refb{EQ: tildatheta}), the next result
follows trivially from Proposition~\ref{lem: Hderiv}. 
\begin{prop}\label{prop: tildederiv}
(Notation and assumptions as above.)
\begin{enumerate}
\item $\bbeta$ and $\ww$ derivatives of $\wlambda(\bbeta,\bgamma,\lambda)$
\begin{equation*}
\frac{\partial  \wlambda}{\partial \bbeta}  =  -\sqrt{\frac{\pi}{2}}e^{-\frac{\bbeta^2}{2}}\erfb{\frac{-\cot(\lambda)\bbeta + \csc(\lambda)\bgamma}{\sqrt{2}}},\quad
\frac{\partial  \wlambda}{\partial \ww}  =  \frac{e^{-k(\bbeta,\bgamma,\lambda)}}{\|\ww\|}\jj
\end{equation*}
\item $\bbeta_i,\bbeta_j$ and $\ww$ derivatives of $\wLambda_{ij}(\bbeta_i,\bbeta_j,\Lambda_{ij})$
\begin{eqnarray*}
\frac{\partial  \wLambda_{ij}}{\partial \bbeta_i} & = & -\sqrt{\frac{\pi}{2}}e^{-\frac{\bbeta_i^2}{2}}\erfb{\frac{-\cot(\Lambda_{ij})\bbeta_i + \csc(\Lambda_{ij})\bbeta_j}{\sqrt{2}}}\\
\frac{\partial  \wLambda_{ij}}{\partial \bbeta_j}&  = & -\sqrt{\frac{\pi}{2}}e^{-\frac{\bbeta_j^2}{2}}\erfb{\frac{-\cot(\Lambda_{ij})\bbeta_j + \csc(\Lambda_{ij})\bbeta_i}{\sqrt{2}}} \\
\frac{\partial  \wLambda_{ij}}{\partial \ww} &=& \frac{e^{-K(\bbeta_i,\bbeta_j,\Lambda_{ij})}}{\|\ww\|}\JJ_{ij}
\end{eqnarray*}

\end{enumerate}

\end{prop}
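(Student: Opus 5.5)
The plan is to differentiate the identity \Refb{EQ: tildatheta}, $\wlambda = H(\bbeta,\bgamma,\lambda)+\frac{\pi}{2}\erfcb{\frac{\bbeta}{\sqrt{2}}}$, term by term and substitute Proposition~\ref{lem: Hderiv}. The only new ingredient is the elementary derivative
\[
\frac{\partial}{\partial \bbeta}\,\frac{\pi}{2}\erfcb{\frac{\bbeta}{\sqrt{2}}} = -\frac{\pi}{2}\cdot\frac{2}{\sqrt{\pi}}\cdot\frac{1}{\sqrt{2}}e^{-\frac{\bbeta^2}{2}} = -\sqrt{\frac{\pi}{2}}e^{-\frac{\bbeta^2}{2}} ,
\]
which follows from \Refb{EQ0a}. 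The $\ww$-derivative of this term is zero, since it does not involve $\ww$ or $\lambda$.

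For part (1), the $\ww$-derivative of $\wlambda$ is therefore exactly \Refb{EQ: Hwdervw}. For the $\bbeta$-derivative, I add \Refb{EQ: Hwdervb} to the term above. This gives
\[
\sqrt{\frac{\pi}{2}}e^{-\frac{\bbeta^2}{2}}\left[\erfc{z}-1\right] = -\sqrt{\frac{\pi}{2}}e^{-\frac{\bbeta^2}{2}}\erf{z},
\qquad z = \frac{-\cot(\lambda)\bbeta+\csc(\lambda)\bgamma}{\sqrt{2}} ,
\]
where I use $\erfc{z} = 1-\erf{z}$. This is the stated formula.

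For part (2), the same identity holds with $(\bbeta,\bgamma,\lambda)$ replaced by $(\bbeta_i,\bbeta_j,\Lambda_{ij})$, so $\wLambda_{ij} = H(\bbeta_i,\bbeta_j,\Lambda_{ij}) + \frac{\pi}{2}\erfcb{\frac{\bbeta_i}{\sqrt{2}}}$.
\begin{itemize}
\item The $\bbeta_i$-derivative follows from \Refb{EQ: Hwdervb2} exactly as in part (1).
\item The $\ww$-derivative is \Refb{EQ: Hwdervw2}.
\item For the $\bbeta_j$-derivative, the $\erfc{}$ term contributes nothing, so the result is \Refb{EQ: Hwdervb3} as stated. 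One can cross-check it via the symmetric representation \Refb{EQ: bgsymm}: $\wLambda_{ij}$ is symmetric in $(\bbeta_i,\bbeta_j)$, so its $\bbeta_j$-derivative must equal the $\bbeta_i$-formula with $i$ and $j$ swapped. That is exactly what appears.
\end{itemize}

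The main obstacle is bookkeeping rather than analysis. Two points need care. First, one must confirm that the identity $\wlambda = H + \frac{\pi}{2}\erfc{}$ holds with $H$ built from the integral \Refb{EQ: defint}, not from the decomposition \Refb{EQ: GENT}. This is what makes the differentiation legitimate on all of $\real^2\times(0,\pi)$, including where $\bbeta\bgamma=0$. Second, the $\ww$-derivative in (2) concerns $\ww^i$. For $\ww^j$ the same formula holds with $\JJ_{ji}$, by the symmetry of $\Lambda_{ij}$.
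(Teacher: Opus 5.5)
Your proposal is correct and is the paper's argument: the paper also obtains the proposition by differentiating $\wlambda = H(\bbeta,\bgamma,\lambda)+\frac{\pi}{2}\erfcb{\frac{\bbeta}{\sqrt{2}}}$ and substituting Proposition~\ref{lem: Hderiv}, and you simply write out the step $\frac{d}{d\bbeta}\frac{\pi}{2}\erfcb{\frac{\bbeta}{\sqrt{2}}}=-\sqrt{\frac{\pi}{2}}e^{-\frac{\bbeta^2}{2}}$ that the paper calls trivial. Your two extra remarks are also sound and make explicit what the paper leaves implicit: using the integral form of $H$ covers $\bbeta\bgamma=0$, and the symmetry of $\wLambda_{ij}$ in $(\bbeta_i,\bbeta_j)$ gives a consistency check on the $\bbeta_j$-derivative.
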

\begin{rems}
(1) The expression for $\frac{\partial H}{\partial \ww}$ is singular if $\theta \in \{0,\pi\}$ even for zero bias: a similar term appears if we follow the same analysis in 
section~\ref{sec: zerobias} where the kernel for zero bias is derived without using polar coordinates. That singular term is cancelled out by another $\ww$-derivative in the 
expression for $\frac{\partial f}{\partial \ww}$; if bias is non-zero it will not be. \\
(2) The additional term
$-\sqrt{\frac{\pi}{2}}\,\bbeta\frac{e^{-\frac{\bbeta^2}{2}}}{\|\ww\|}\erfcb{\frac{A\bbeta + B\bgamma}{\sqrt{2}}}\bww$ appears in the formula for $\frac{\partial H}{\partial \ww}$ if
$\beta$ is the bias parameter. Of course, $\frac{\partial H}{\partial \beta} = \frac{1}{\|\ww\|}\frac{\partial H}{\partial \bbeta}$.\rend
\end{rems}

\section{Computation of $\frac{\partial f}{\partial \bbeta}$}\label{sec8: betederivoff}
\begin{lemma}\label{EQ:betaderv}
(Notations and assumptions of previous section). \\
(1) Assuming $\ww^i,\vv^j$ are not parallel,  
\begin{multline*}
\begin{aligned}
\hspace*{0.2in}\frac{\partial f}{\partial \bbeta_i}(\ww^i,\vv^j)&= \frac{\|\ww^i\|\|\vv^j\|\bgamma_j}{2\pi}\left(\frac{\pi}{2}\left[\erfcb{\frac{\bgamma_j}{\sqrt{2}}}+\erfcb{\frac{\bbeta_i}{\sqrt{2}}}\right]  - \wlambda_{ij}\right)\\
&\hspace*{-0.6in} -\frac{\|\ww^i\|\|\vv^j\|}{2\sqrt{2\pi}}\left[e^{-\frac{\bgamma_j^2}{2}}\erfcb{\frac{A\bgamma_j+B\bbeta_i}{\sqrt{2}}}+\cos(\lambda_{ij})e^{-\frac{\bbeta_i^2}{2}}\erfcb{\frac{A\bbeta_i+B\bgamma_j}{\sqrt{2}}}\right]
\end{aligned}
\end{multline*}
\noindent (2) If $\bgamma_j = 0$, then
\begin{multline*}
\begin{aligned}
\hspace*{-0.2in}\frac{\partial f}{\partial \bbeta_i}(\ww^i,\vv^j)&= 
-\frac{\|\ww^i\|\|\vv^j\|}{2\sqrt{2\pi}}\left[\erfcb{\frac{\csc(\lambda_{ij})\bbeta_i}{\sqrt{2}}}+\cos(\lambda_{ij})e^{-\frac{\bbeta_i^2}{2}}\erfcb{\frac{-\cot(\lambda_{ij})\bbeta_i}{\sqrt{2}}}\right]
\end{aligned}
\end{multline*}
\end{lemma}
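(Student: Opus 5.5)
The plan is to differentiate the closed form \Refb{EQ: fwv} for $f(\ww^i,\vv^j)$ directly with respect to $\bbeta_i$. Throughout, $\ww^i$, $\vv^j$ and $\bgamma_j$ are held fixed, so $\|\ww^i\|$, $\|\vv^j\|$ and $\lambda=\lambda_{ij}$ are constants. Write $C=\|\ww^i\|\|\vv^j\|$, $A=-\cot(\lambda)$, $B=\csc(\lambda)$, $\wzz=A\bbeta+B\bgamma$ and $z=A\bgamma+B\bbeta$. The only non-elementary ingredient is $\wlambda$, and its $\bbeta$-derivative is supplied by Proposition~\ref{prop: tildederiv}(1):
\[
\frac{\partial\wlambda}{\partial\bbeta}=-\sqrt{\frac{\pi}{2}}\,e^{-\frac{\bbeta^2}{2}}\erfb{\frac{\wzz}{\sqrt2}}.
\]
Everything else is differentiation of error functions and exponentials, using $\frac{d}{dx}\erf{x}=\frac{2}{\sqrt\pi}e^{-x^2}$. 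I split $f$ into the three rows $T_1,T_2,T_3$ of \Refb{EQ: fwv} and differentiate each separately.

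\emph{Row $T_1$.} This is $\frac{C(\bbeta\bgamma+\cos\lambda)}{2\pi}(P-\wlambda)$ with $P=\frac{\pi}{2}[\erfc{\bgamma/\sqrt2}+\erfc{\bbeta/\sqrt2}]$. The product rule gives $\frac{C\bgamma}{2\pi}(P-\wlambda)$, which is exactly the first term of the claimed formula. The remaining piece combines $\partial P/\partial\bbeta=-\sqrt{\pi/2}\,e^{-\bbeta^2/2}$ with $-\partial\wlambda/\partial\bbeta$. Since $-1+\erf{\cdot}=-\erfc{\cdot}$, this gives $-\frac{C(\bbeta\bgamma+\cos\lambda)}{2\sqrt{2\pi}}e^{-\bbeta^2/2}\erfc{\wzz/\sqrt2}$.

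\emph{Row $T_2$.} This is $-\frac{C}{2\sqrt{2\pi}}[\bgamma e^{-\bbeta^2/2}\erfc{\wzz/\sqrt2}+\bbeta e^{-\bgamma^2/2}\erfc{z/\sqrt2}]$. Differentiating produces four terms:
\begin{itemize}
\item a term $+\frac{C\bbeta\bgamma}{2\sqrt{2\pi}}e^{-\bbeta^2/2}\erfc{\wzz/\sqrt2}$, which cancels the $\bbeta\bgamma$ part coming from $T_1$;
\item the term $-\frac{C}{2\sqrt{2\pi}}e^{-\bgamma^2/2}\erfc{z/\sqrt2}$, which is wanted in the final formula;
\item two Gaussian terms, with coefficients $\bgamma A$ and $\bbeta B$ respectively. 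Using the identity $\frac{\bbeta^2}{2}+\frac{\wzz^2}{2}=\frac{\bgamma^2}{2}+\frac{z^2}{2}=k(\bbeta,\bgamma,\lambda)$ from Section~\ref{sec: 27}, both collapse to a single term $\frac{C}{2\pi}(A\bgamma+B\bbeta)e^{-k}$.
\end{itemize}

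\emph{Row $T_3$.} This is $\frac{C}{2\pi}\sin(\lambda)e^{-k}$. Its derivative is $-\frac{C}{2\pi}\sin(\lambda)\,\partial_{\bbeta}k\;e^{-k}$, and $\sin(\lambda)\partial_{\bbeta}k=\sin(\lambda)(\bbeta\csc^2\lambda-\bgamma\csc\lambda\cot\lambda)=B\bbeta+A\bgamma$. Hence this term cancels the Gaussian term left over from $T_2$.

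The surviving terms are exactly the formula in (1), with the $\cos(\lambda_{ij})$ factor coming from $T_1$. For (2), I set $\bgamma_j=0$. Then the first line vanishes, $e^{-\bgamma_j^2/2}=1$, $z=\csc(\lambda)\bbeta_i$ and $\wzz=-\cot(\lambda)\bbeta_i$.

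I expect the main obstacle to be bookkeeping rather than conceptual. The two points that need care are the sign conventions for $A$ and $B$ in $\wzz$ versus $z$, and verifying that the Gaussian exponents genuinely coincide with $k$, so that the cancellation between $T_2$ and $T_3$ is exact. As a check, I would compare against the $\theta=\pi/2$ example, where all terms are elementary.
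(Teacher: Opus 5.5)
Your proposal is correct and matches the paper's proof: the paper also differentiates \Refb{EQ: fwv} term by term in $\bbeta$, using the $\bbeta$-derivative of $\wlambda$, and obtains exactly your contributions. In both, the Gaussian terms from the second row cancel the derivative of the $\sin(\lambda)e^{-k}$ row, and the $\bbeta\bgamma$ terms cancel. Setting $\bgamma_j=0$ then gives (2) immediately.
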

\begin{proof} Omitting sub- and superscripts, a straightforward computation gives
\begin{multline*}
\begin{aligned}
\frac{\partial f}{\partial \bbeta}(\ww,\vv)&= \frac{\|\ww\|\|\vv\|\bgamma}{2\pi}\left(\frac{\pi}{2}\left[\erfcb{\frac{\bgamma}{\sqrt{2}}}+\erfcb{\frac{\bbeta}{\sqrt{2}}}\right]  - \widetilde{\lambda}\right) \\
& -\frac{\|\ww\|\|\vv\|(\bbeta\bgamma + \cos(\lambda))}{2\sqrt{2\pi}}e^{-\frac{\bbeta^2}{2}}\erfcb{\frac{A\bbeta+B\bgamma}{\sqrt{2}}} \\
& -\frac{\|\ww\|\|\vv\|}{2\sqrt{2\pi}}\left[-\bbeta\bgamma e^{-\frac{\bbeta^2}{2}}\erfcb{\frac{A\bbeta+B\bgamma}{\sqrt{2}}}+e^{-\frac{\bgamma^2}{2}}\erfcb{\frac{A\bgamma+B\bbeta}{\sqrt{2}}}\right] \\
& + \frac{\|\ww\|\|\vv\|}{2\pi} \left[A\bgamma e^{-\frac{\bbeta^2}{2}}e^{-\frac{(A\bbeta+B\bgamma)^2}{2}}+B\bbeta e^{-\frac{\bgamma^2}{2}}e^{-\frac{(A\bgamma+B\bbeta)^2}{2}}\right] \\
& -  \frac{\|\ww\|\|\vv\|}{2\pi}\big(\bbeta \csc(\lambda) -\bgamma \cot(\lambda)\big)e^{-k(\bbeta,\bgamma,\lambda)}
\end{aligned}
\end{multline*}
After some cancellation---the last two lines and terms involving $\bbeta\bgamma$---the result follows.
\end{proof}
\begin{prop}\label{prop: lossbeta}
\mbox{ }\\
(1) Assuming $i \ne j$ and $\ww^i,\ww^j$ are not parallel,
\begin{multline*}
\begin{aligned}
\hspace*{0.2in}\frac{\partial f}{\partial \bbeta_i}(\ww^i,\ww^j)&= \frac{\|\ww^i\|\|\ww^j\|\bbeta_j}{2\pi}\left(\frac{\pi}{2}\left[\erfcb{\frac{\bbeta_i}{\sqrt{2}}}+\erfcb{\frac{\bbeta_j}{\sqrt{2}}}\right]  - \wLambda_{ij}\right)\\
&\hspace*{-0.6in} -\frac{\|\ww^i\|\|\ww^j\|}{2\sqrt{2\pi}}\left[e^{-\frac{\bbeta_j^2}{2}}\erfcb{\frac{A\bbeta_j+B\bbeta_i}{\sqrt{2}}}+\cos(\Lambda_{ij})e^{-\frac{\bbeta_i^2}{2}}\erfcb{\frac{A\bbeta_i+B\bbeta_j}{\sqrt{2}}}\right]
\end{aligned}
\end{multline*}
(2) If $\bbeta_i = \bbeta_j = \bbeta$, then
\begin{multline*}
\begin{aligned}
\hspace*{0.2in}\frac{\partial f}{\partial \bbeta}(\ww^i,\ww^j)&= \frac{\|\ww^i\|\|\ww^j\|\bbeta}{\pi}\left(\pi\,\erfcb{\frac{\bbeta}{\sqrt{2}}} - \wLambda_{ij}\right)\\
&\hspace*{-0.6in} -\frac{\|\ww^i\|\|\ww^j\|}{\sqrt{2\pi}}\left[e^{-\frac{\bbeta^2}{2}}\erfcb{\frac{\bbeta(-\cot(\Lambda_{ij})+\csc(\Lambda_{ij})}{\sqrt{2}}}(1+\cos(\Lambda_{ij}))\right]
\end{aligned}
\end{multline*}
\end{prop}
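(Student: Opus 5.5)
Part (1) is Lemma~\ref{EQ:betaderv}(1) with the target row $(\vv^j,\bgamma_j)$ replaced by the student row $(\ww^j,\bbeta_j)$. The formula for $f(\ww^i,\ww^j)$ in Section~\ref{sec: notconv} is the formula for $f(\ww^i,\vv^j)$ with $\lambda_{ij},\bgamma_j,z_{ij},\wzz_{ij},k$ replaced by $\Lambda_{ij},\bbeta_j,Z_{ij},\wZZ_{ij},K$. The derivative $\partial/\partial\bbeta_i$ holds $\ww^i,\ww^j,\bbeta_j$ fixed, hence also $\Lambda_{ij}$. So the computation in the proof of Lemma~\ref{EQ:betaderv} goes through verbatim. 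Concretely, I would differentiate the three lines of $f(\ww^i,\ww^j)$ term by term. In the first line the prefactor $\bbeta_i\bbeta_j+\cos(\Lambda_{ij})$ gives the $\bbeta_j(\ldots-\wLambda_{ij})$ term. Also in the first line, $\partial\wLambda_{ij}/\partial\bbeta_i$ comes from Proposition~\ref{prop: tildederiv}(2), and $\partial_{\bbeta_i}\erfcb{\bbeta_i/\sqrt2}=-\sqrt{2/\pi}\,e^{-\bbeta_i^2/2}$. These two combine, since $\erf{}+\erfc{}=1$, into $-\sqrt{\pi/2}\,e^{-\bbeta_i^2/2}\erfcb{\wZZ_{ij}/\sqrt2}$. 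In the second line I use the product rule, with $\partial\wZZ_{ij}/\partial\bbeta_i=-\cot(\Lambda_{ij})$ and $\partial Z_{ij}/\partial\bbeta_i=\csc(\Lambda_{ij})$. In the third line, $\partial K/\partial\bbeta_i = \csc(\Lambda_{ij})\big(\bbeta_i\csc(\Lambda_{ij})-\bbeta_j\cot(\Lambda_{ij})\big)=\csc(\Lambda_{ij}) Z_{ij}$.

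This gives the same five-line intermediate expression as in the proof of Lemma~\ref{EQ:betaderv}. The main bookkeeping step, and the only real obstacle, is the cancellation. It rests on the identity $e^{-\bbeta_i^2/2}e^{-\wZZ_{ij}^2/2}=e^{-\bbeta_j^2/2}e^{-Z_{ij}^2/2}=e^{-K}$, which is the two expressions for $k$ in Section~\ref{sec: 27}. With it, the two Gaussian terms from differentiating the erfc's, $\frac{1}{2\pi}\big[A\bbeta_j+B\bbeta_i\big]e^{-K}$, cancel exactly against $-\frac{1}{2\pi}\sin(\Lambda_{ij})\csc(\Lambda_{ij})Z_{ij}e^{-K}$ from the last line, because $A\bbeta_j+B\bbeta_i=Z_{ij}$. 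The $\bbeta_i\bbeta_j$ multiples of $e^{-\bbeta_i^2/2}\erfcb{\wZZ_{ij}/\sqrt2}$ also cancel. What remains is precisely the stated formula (1), with $A=-\cot(\Lambda_{ij})$, $B=\csc(\Lambda_{ij})$.

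For (2), a single bias $\bbeta=\bbeta_i=\bbeta_j$ is varied, so $\partial f/\partial\bbeta=\partial f/\partial\bbeta_i+\partial f/\partial\bbeta_j$ evaluated on the diagonal. By the symmetry $f(\ww^i,\ww^j)=f(\ww^j,\ww^i)$ (equivalently the $(\bbeta_i,\bbeta_j)$-symmetry noted in Section~\ref{sec: 27}), $\partial f/\partial\bbeta_j$ is formula (1) with $i,j$ swapped, and $\Lambda_{ij}$ is symmetric. On the diagonal, $Z_{ij}=\wZZ_{ij}=\bbeta(-\cot(\Lambda_{ij})+\csc(\Lambda_{ij}))$ and both erfc brackets equal $\pi\,\erfcb{\bbeta/\sqrt2}$. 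Doubling (1) therefore gives $\frac{\|\ww^i\|\|\ww^j\|\bbeta}{\pi}\big(\pi\erfcb{\bbeta/\sqrt2}-\wLambda_{ij}\big)$. It also gives $-\frac{\|\ww^i\|\|\ww^j\|}{\sqrt{2\pi}}e^{-\bbeta^2/2}\erfcb{Z_{ij}/\sqrt2}(1+\cos\Lambda_{ij})$, which is (2). As a consistency check, I would compare with Proposition~\ref{lem: Hderiv}(3) and with the symmetric formula for $f(\ww^i,\ww^j)$ displayed in Section~\ref{sec: notconv}.
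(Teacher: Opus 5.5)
Your proposal is correct and takes essentially the paper's approach. Part (1) is the computation of Lemma~\ref{EQ:betaderv}(1) with $(\vv^j,\bgamma_j,\lambda_{ij})$ replaced by $(\ww^j,\bbeta_j,\Lambda_{ij})$, with exactly the cancellations you identify, and part (2) is the sum of the $\bbeta_i$- and $\bbeta_j$-derivatives on the diagonal. The only cosmetic difference is that the paper handles the $\wLambda_{ij}$ term in (2) through the diagonal derivative from \Refb{EQ: bderiv}, whereas you use the $i\leftrightarrow j$ symmetry of $f$ and of $\wLambda_{ij}$ to double (1); both give the same formula.
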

\begin{proof} The proof of (1) is formally identical to that of Lemma~\ref{EQ:betaderv}(1). The proof of (2) amounts to adding the derivatives of $f$ with respect to $\bbeta_i$ and $\bbeta_j$ and then setting 
$\bbeta,\bbeta_j = \bbeta$. Of special interest is the term $\frac{\partial \wLambda_{ij}}{\partial \bbeta}$. It is shown in section~\ref{Sec: Owen} that
\[
\wLambda_{ij} = S(\bbeta,\bbeta,\Lambda_{ij}) + \frac{\pi}{2} = H(\bbeta,\bbeta,\Lambda_{ij}) + \frac{\pi}{2}\erfcb{\frac{\bbeta}{\sqrt{2}}}
\]
and so 
\[
\frac{\partial \wLambda_{ij}}{\partial \bbeta}  = -2\sqrt{\frac{\pi}{2}}e^{-\frac{\bbeta^2}{2}}\erfb{\frac{\bbeta(-\cot(\Lambda_{ij})+\csc(\Lambda_{ij}))}{\sqrt{2}}}
\]
by Proposition~\ref{prop: spec}, \Refb{EQ: bderiv}.  Adding the derivative of $\pi\,\erfcb{\frac{\bbeta}{\sqrt{2}}}$ to $-\frac{\partial \wLambda_{ij}}{\partial \bbeta}$ gives the coefficient of $\cos(\Lambda_{ij})$ in the displayed formula. 
\end{proof}
\begin{exams}\label{EX: dervbeta}
(1) If $\bbeta = \bgamma = 0$, then
\[
\frac{\partial f}{\partial \bbeta}(\ww,\vv) =-\frac{\|\ww\|\|\vv\|}{2\sqrt{2\pi}}(1+\cos(\lambda)) 
\]
If $\vv = \ww$ and $\bgamma = \bbeta$, it follows from Example~\ref{ex: simp} that if $\bbeta = 0$ then
\[
\frac{\partial f}{\partial \bbeta}(\ww,\ww) =-\frac{2\|\ww\|^2}{\sqrt{2\pi}}
\]
(2) Suppose $\ww,\vv \ne 0$ are parallel. Using Lemma~\ref{lem: ITlim}, we have
\begin{enumerate}
\item  If $\lambda = 0$:
\[
\frac{\partial f}{\partial \bbeta} = \begin{cases}
& \frac{\|\ww\|\|\vv\|\bgamma}{2} \erfcb{\frac{\bbeta}{\sqrt{2}}}
 -\frac{\|\ww\|\|\vv\|}{\sqrt{2\pi}}e^{-\frac{\bbeta^2}{2}},\;\text{if } \bbeta \ge \bgamma \\
& \frac{\|\ww\|\|\vv\|\bgamma}{2} \erfcb{\frac{\bgamma}{\sqrt{2}}}-\frac{\|\ww\|\|\vv\|}{\sqrt{2\pi}}e^{-\frac{\bgamma^2}{2}},\;\text{if } \bbeta \le \bgamma
\end{cases}
\]
\item  If $\lambda = \pi$:
\[
\frac{\partial f}{\partial \bbeta} = \begin{cases}
& \frac{\|\ww\|\|\vv\|\bgamma}{2}\left[\erfb{\frac{\bbeta}{\sqrt{2}}}-\erfb{\frac{\bgamma}{\sqrt{2}}}\right]+ \frac{\|\ww\|\|\vv\|}{\sqrt{2\pi}}e^{-\frac{\bbeta^2}{2}},\; \text{if } \bbeta + \bgamma \ge 0\\
& -\|\ww\|\|\vv\|\bgamma \erfb{\frac{\bgamma}{\sqrt{2}}} + \frac{\|\ww\|\|\vv\|}{\sqrt{2\pi}}e^{-\frac{\bgamma^2}{2}},\;\text{if } \bbeta + \bgamma \le 0
\end{cases}
\]
\end{enumerate}
\examend
\end{exams}
\begin{rem}\label{rem: fC1beta}
It follows from the last example that $f$ is a $C^1$ function of $\bbeta$ (or $\beta$) if $\lambda \in \{0,\pi\}$ and $f$ is analytic in $\bbeta$ (or $\beta$) if $\lambda  =0$ (resp.~$\lambda  =\pi$) provided that
$ \bbeta \ne \bgamma $ (resp.~$ \bbeta \ne -\bgamma $). Of course, real analyticity of $f$ in $\bbeta$ holds without restriction if $\lambda\in(0,\pi)$ and $\ww,\vv \ne 0$. \rend 
\end{rem} 
\section{Computation of $\frac{\partial f}{\partial \ww}$ (normalized bias $\bbeta$)}\label{sec: sec9}
We supplement the list of derivatives given in Section~\ref{sec: sec4} before Lemma~\ref{lem: Jlemma} with
\begin{multline*}
\begin{aligned}
\frac{\partial}{\partial \ww}  \erfb{\frac{A\bgamma + B\bbeta}{\sqrt{2}}} =& 
 \sqrt{\frac{2}{\pi}}\frac{e^{-(A\bgamma+B\bbeta)^2/2}}{\|\ww\|}\big[\csc(\lambda)\left(A\bbeta + B\bgamma\right)\big]\jj  \\
\frac{\partial}{\partial \ww}  \erfb{\frac{A\bbbeta + B\bgamma}{\sqrt{2}}} =& 
 \sqrt{\frac{2}{\pi}}\frac{e^{-(A\bbeta+B\bgamma)^2/2}}{\|\ww\|}\big[\csc(\lambda)\left(A\bgamma + B\bbeta\right)\big]\jj  \\
 \frac{\partial k}{\partial \ww}(\bbeta,\bgamma,\lambda) = & 
 \frac{\csc(\lambda)}{\|\ww\|} \left[(A\bbeta + B\bgamma)(A\bgamma + B\bbeta)\right]\jj\\
\frac{\partial}{\partial \ww}e^{-k}  =& -\frac{\partial k}{\partial \ww}e^{-k},\quad \frac{\partial H}{\partial \ww} =\frac{e^{-k(\bbeta,\bgamma,\lambda)}}{\|\ww\|}\jj 
\end{aligned}
\end{multline*}
($A = -\cot(\lambda), B = \csc(\lambda)$ and $\lambda$ is the angle between $\ww$ and $\vv$.)
\begin{eqnarray*}
\frac{\partial  \wlambda_{ij}}{\partial \bbeta_i}& =& -\sqrt{\frac{\pi}{2}}e^{-\frac{\bbeta_i^2}{2}}\erfb{\frac{-\cot(\lambda_{ij})\bbeta_i + \csc(\lambda_{ij})\bgamma_j}{\sqrt{2}}}\quad
\frac{\partial  \wlambda_{ij}}{\partial \ww} = \frac{e^{-k(\bbeta_i,\bgamma_j,\lambda_{ij})}}{\|\ww\|}\jj \\
\frac{\partial  \wLambda_{ij}}{\partial \bbeta_i} & = & -\sqrt{\frac{\pi}{2}}e^{-\frac{\bbeta_i^2}{2}}\erfb{\frac{-\cot(\Lambda_{ij})\bbeta_i + \csc(\Lambda_{ij})\bbeta_j}{\sqrt{2}}}\quad
\frac{\partial  \wLambda_{ij}}{\partial \ww} = \frac{e^{-K(\bbeta_i,\bbeta_j,\Lambda_{ij})}}{\|\ww\|}\JJ_{ij}
\end{eqnarray*}
Here $j \in \is{d}, i\in \is{k}$ and $\lambda_{ij}$ (resp.~$\Lambda_{ij}$) is the angle between $\ww^i$ and $\vv^j$ (resp.~$\ww^j$, $j \ne i$). Note that the partial derivatives of $\wLambda_{ij}$ with respect to
$\bbeta_i,\bbeta_j$ will not generally be equal unless $\bbeta_i=\bbeta_j$ (see Section~\ref{sec8: betederivoff} for more details and note that the $\ww$-derivatives of $\wLambda_{ij},\wLambda_{ji}$ are equal)).

\subsection{The formula for $\frac{\partial f}{\partial \ww}(\ww,\vv)$ (normalized bias $\bbeta$)}\label{sec: Glob}.
A straightforward computation of the gradient gives
\begin{multline*}
\begin{aligned}
\frac{\partial f}{\partial \ww}  = &\frac{\|\vv\|}{2\pi}\left(\frac{\pi}{2}\left[\erfcb{\frac{\bgamma}{\sqrt{2}}}+\erfcb{\frac{\bbeta}{\sqrt{2}}}\right]  - \widetilde{\lambda}\right)\big[(\bbeta\bgamma + \cos(\lambda))\overline{\ww}-\sin(\lambda)\jj\big]\\
& -\frac{\|\vv\|}{2\pi}e^{-k}\big(\bbeta\bgamma + \cos(\lambda)\big)\jj\\
& - \frac{\|\vv\|}{2\sqrt{2\pi}}\left[\bbeta e^{-\frac{\bgamma^2}{2}}\erfcb{\frac{A\bgamma + B\bbeta}{\sqrt{2}}} +\bgamma e^{-\frac{\bbeta^2}{2}}\erfcb{\frac{A\bbeta + B\bgamma}{\sqrt{2}}}\right]\overline{\ww} \\
& + \frac{\|\vv\|e^{-k}}{2\pi}\big[\bgamma \csc(\lambda)(A\bgamma+B\bbeta) + \bbeta \csc(\lambda)(A\bbeta+B\bgamma)\big]\jj \\
& +\frac{\|\vv\|}{2\pi}e^{-k}\big(\sin(\lambda)\overline{\ww} + \cos(\lambda)\jj\big) \\
& -\frac{\|\vv\|e^{-k}}{2\pi}\big[(A\bbeta + B\bgamma)(A\bgamma + B\bbeta)\big]\jj
\end{aligned}
\end{multline*}
In the last line, the $\csc(\lambda)$ term that comes from $\frac{\partial k}{\partial \ww}$ is multiplied by $\sin(\lambda)$ and so the final expression is quadratic in both $\bbeta,\bgamma$ and $\csc(\lambda)$ and
$\cot(\lambda)$, as is the corresponding expression on line 4.
After some rearrangement and two major cancellations---most notably lines 4 \& 6 together with the first term in line 2---we obtain
\begin{prop}\label{propgen}
(Notation and assumptions as above.)
\begin{multline*}
\begin{aligned}
\frac{\partial f}{\partial \ww}  = &\frac{\|\vv\|}{2\pi} \left(\frac{\pi}{2}\left[\erfcb{\frac{\bgamma}{\sqrt{2}}}+\erfcb{\frac{\bbeta}{\sqrt{2}}}\right]  - \widetilde{\lambda}\right) (\bbeta\bgamma \bww +\bvv)\\
& +\frac{\|\vv\|}{2\pi}e^{-k}\sin(\lambda)\bww \\
& - \frac{\|\vv\|}{2\sqrt{2\pi}}\left[\bbeta e^{-\frac{\bgamma^2}{2}}\erfcb{\frac{A\bgamma + B\bbeta}{\sqrt{2}}} +\bgamma e^{-\frac{\bbeta^2}{2}}\erfcb{\frac{A\bbeta + B\bgamma}{\sqrt{2}}}\right]\overline{\ww} 
\end{aligned}
\end{multline*}
\end{prop}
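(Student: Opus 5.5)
The plan is to start from the six-line ``straightforward'' gradient displayed just before the statement and simplify it. Because $\frac{\partial f}{\partial \ww}$ is a vector in the plane spanned by $\ww,\vv$, Lemma~\ref{lem: Jlemma}(3) lets us write it as a combination of the orthonormal pair $\{\bww,\jj\}$. The whole argument then reduces to matching the $\bww$- and $\jj$-components.

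First, I would justify the six-line expression itself. Differentiate \Refb{EQ: fwv} term by term, holding $\bbeta,\bgamma,\vv$ fixed. For the derivatives of $\|\ww\|$, $\cos(\lambda)$, $\sin(\lambda)$, $\cot(\lambda)$ and $\csc(\lambda)$, use the list in \S\ref{deriv: useful}. For $\wlambda$ use Proposition~\ref{prop: tildederiv}(1), $\frac{\partial \wlambda}{\partial \ww} = \frac{e^{-k}}{\|\ww\|}\jj$. The error-function derivatives and $\frac{\partial k}{\partial \ww}$ come from the list at the start of Section~\ref{sec: sec9}. The factor $\|\ww\|$ in front of each term of $f$ contributes $\bww$ times the bracket, and this cancels the $1/\|\ww\|$ coming from the angle derivatives. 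This produces exactly lines 1--6.

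Second, I would collect the $\jj$-coefficients. There are three groups. Line 1 contributes $-\sin(\lambda)\,(\tfrac{\pi}{2}[\ldots]-\wlambda)$. Line 5 contributes $e^{-k}\cos(\lambda)$. The remaining group consists of line 2, line 4 and line 6, each with the common factor $\frac{\|\vv\|e^{-k}}{2\pi}$. For this group I would expand in $A=-\cot(\lambda)$, $B=\csc(\lambda)$ using $B^2-A^2=1$ and $\csc(\lambda)\cdot\sin(\lambda)=1$. Line 4 gives $B(\bgamma A\bgamma+\bgamma B\bbeta+\bbeta A\bbeta+\bbeta B\bgamma)$. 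Line 6 gives $-(A^2\bbeta\bgamma+AB\bbeta^2+AB\bgamma^2+B^2\bbeta\bgamma)$. Their sum should collapse to $\bbeta\bgamma(B^2-A^2)=\bbeta\bgamma$. That cancels the $-\bbeta\bgamma$ part of line 2, leaving $-\cos(\lambda)e^{-k}\jj$ from line 2, which cancels line 5's $\jj$ term. This is the ``major cancellation'' the authors mention. I expect this algebra to be the main obstacle: the signs of $A$ in the two error-function derivatives must be tracked carefully, and I would check it against the zero-bias case.

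Third, I would reassemble the vector. After the cancellation, the only surviving $\jj$-term is $-\sin(\lambda)(\ldots)\jj$ from line 1. Combining it with line 1's $\cos(\lambda)\bww$ uses the identity $\cos(\lambda)\bww-\sin(\lambda)\jj=\bvv$. This follows from \Refb{def: Jvector}: $\sin(\lambda)\jj=\cos(\lambda)\bww-\bvv$. So line 1 becomes $(\ldots)(\bbeta\bgamma\bww+\bvv)$. The $\bww$-terms in lines 3 and 5 pass through unchanged, giving the stated formula. As a final check, setting $\bbeta=\bgamma=0$ should recover the known bias-free gradient.
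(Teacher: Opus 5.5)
Your proposal is correct and follows the paper's own route. You derive the six-line gradient by differentiating \Refb{EQ: fwv} term by term. Lines 4 and 6 then sum to $\frac{\|\vv\|e^{-k}}{2\pi}(B^2-A^2)\bbeta\bgamma\,\jj=\frac{\|\vv\|e^{-k}}{2\pi}\bbeta\bgamma\,\jj$, cancelling the $\bbeta\bgamma$ part of line 2, and the remaining $\cos(\lambda)\jj$ part of line 2 cancels line 5. Finally, line 1's $\jj$-term is absorbed via $\sin(\lambda)\jj=\cos(\lambda)\bww-\bvv$, which is exactly the mechanism described in the remark after the proposition.
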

\begin{rems}
(1) If we take $\bbeta = \bgamma = 0$ (zero bias), then the last three terms are zero and the first two terms give the formula for grad $f(\ww,\vv)$ derived by Brutzkus and Globerson~\cite{BrutzkusGloberson2017}:
\[
\frac{\partial f}{\partial \ww}  = \frac{\|\vv\|}{2\pi}\big( \sin(\lambda) \bww + (\pi - \lambda) \bvv \big)
\]
The computation uses $H(0,0,\lambda) = \lambda - \frac{\pi}{2}$. Note that in Brutzkus and Globerson the second term is written $\frac{1}{2\pi}(\pi-\lambda)\vv$.  \\
(2) Terms in $\jj$ do not appear. This is because of the cancellation of lines 4 \& 6 together with the first term in line 2. The only place where $\jj$ 
is expanded as $\cot(\lambda)\bww - \csc(\lambda) \bvv$ is in line 1. This results in the replacement of the factor $(\bbeta\bgamma + \cos(\lambda))\overline{\ww}-\sin(\lambda)\jj)$ by
$(\bbeta\bgamma \bww +\bvv)$.  Thus the gradient  is a linear combination of $\bvv$ and $\bww$. That is not a surprise since $\jj$ is a linear combination of $\bvv$ and $\bww$ and also, of course, since 
the gradient $\frac{\partial f}{\partial \ww} $ is obviously a point in the space spanned by $\ww,\vv$. What is surprising is the relative simplicity of the formula for  $\frac{\partial f}{\partial \ww}$ together 
with the absence of products and powers of $\cot(\lambda)$ and $\csc(\lambda)$ except for the exponential term $e^{-k}$.
\rend
\end{rems}
All examples in this paper assume $\bgamma = 0$ and, for future reference, the next proposition gives the gradients of $f(\ww^i,\vv^j)$ and $f(\ww^i,\ww^j), i \ne j$ under the 
assumption that $\bgamma = 0$.
\begin{prop}\label{prop: gradgen}
Assume $\bgamma = 0$ and follow the notation conventions of section~\ref{sec: notconv}.
\begin{enumerate}
\item For $i \in \is{k}, j \in \is{d}$:
\begin{multline*}
\begin{aligned}
\frac{\partial f}{\partial \ww}(\ww^i,\vv^j)  = &\frac{\|\vv^j\|}{2\pi} \left[ \pi - \wlambda_{ij} - \frac{\pi}{2}\erfb{\frac{\bbeta_i}{\sqrt{2}}}\right]\bvv^j  \\
& +\frac{\|\vv^j\|}{2\pi}\left[e^{-k_{ij}}\sin(\lambda_{ij})- \sqrt{\frac{\pi}{2}} \bbeta_i \erfcb{\frac{\csc(\lambda_{ij})\bbeta_i}{\sqrt{2}}}\right]\overline{\ww}^i 
\end{aligned}
\end{multline*}
where $k_{ij} = \frac{1}{2}\bbeta_i^2 \csc^2(\lambda_{ij})$. 
\item For $i,j \in \is{k}$, $i \ne j$:
\begin{multline*}
\begin{aligned}
\frac{\partial f}{\partial \ww}(\ww^i,\ww^j)  = &\frac{\|\ww^j\|}{2\pi} \left[\pi-\wLambda_{ij} -\frac{\pi}{2}\erfb{\frac{\bbeta_i}{\sqrt{2}}}-\frac{\pi}{2}\erfb{\frac{\bbeta_j}{\sqrt{2}}} \right] (\bbeta_i\bbeta_j\bww^i +\bww^j) \\
 +\frac{\|\ww^j\|}{2\pi}&e^{-K_{ij}}\sin(\Lambda_{ij})\bww^i \\
 - \frac{\|\ww^j\|}{2\sqrt{2\pi}}&\left[\bbeta_i e^{-\frac{\bbeta_j^2}{2}}\erfcb{\frac{Z_{ij}}{\sqrt{2}}} +\bbeta_j e^{-\frac{\bbeta_i^2}{2}}\erfcb{\frac{\wZZ_{ij}}{\sqrt{2}}}\right]\overline{\ww}^i 
\end{aligned}
\end{multline*}
where $Z_{ij} = -\cot(\Lambda_{ij})\bbeta_j +\csc(\Lambda_{ij})\bbeta_i = \wZZ_{ji}$.
\item For $i \in \is{k}$,
\begin{eqnarray*}
\frac{\partial f}{\partial \bbeta_i}(\ww^i,\ww^i)& = & \left[1 + \bbeta_i^2\erfcb{\frac{\beta_i}{\sqrt{2}}} -\erfb{\frac{\beta_i}{\sqrt{2}}} -\sqrt{\frac{2}{\pi}}\bbeta_i e^{-\frac{\bbeta_i^2}{2}}\right]\ww^i
\end{eqnarray*}
\end{enumerate}
\end{prop}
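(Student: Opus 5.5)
Parts (1) and (2) are specialisations of Proposition~\ref{propgen}; part (3) comes from Example~\ref{EX: fwwderivex}.

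For (1), set $\bgamma=0$ in Proposition~\ref{propgen} with $\ww=\ww^i$, $\vv=\vv^j$, $\bbeta=\bbeta_i$, $\lambda=\lambda_{ij}$. Then $\erfcb{0}=1$, so the first bracket becomes
\[
\frac{\pi}{2}\Bigl[1+\erfcb{\tfrac{\bbeta_i}{\sqrt{2}}}\Bigr]-\wlambda_{ij}=\pi-\wlambda_{ij}-\frac{\pi}{2}\erfb{\tfrac{\bbeta_i}{\sqrt{2}}}.
\]
The factor $\bbeta\bgamma\bww+\bvv$ reduces to $\bvv^j$. In the last line, the term carrying the factor $\bgamma$ vanishes. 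The surviving term has $e^{-\bgamma^2/2}=1$ and argument $A\bgamma+B\bbeta=\csc(\lambda_{ij})\bbeta_i$, so it gives $-\frac{\|\vv^j\|}{2\sqrt{2\pi}}\bbeta_i\erfcb{\frac{\csc(\lambda_{ij})\bbeta_i}{\sqrt{2}}}\bww^i$. Rewriting $\frac{1}{2\sqrt{2\pi}}=\frac{1}{2\pi}\sqrt{\frac{\pi}{2}}$ lets me collect it with the $e^{-k}\sin(\lambda_{ij})\bww^i$ term. Finally $k(\bbeta_i,0,\lambda_{ij})=\frac{1}{2}\bbeta_i^2\csc^2(\lambda_{ij})$ by the definition of $k$, which is the stated $k_{ij}$.

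For (2), apply Proposition~\ref{propgen} with $\vv$ replaced by $\ww^j$ and $\bgamma$ replaced by $\bbeta_j$. Here $\ww^j$ is held fixed while differentiating in $\ww=\ww^i$. This is legitimate because the derivation of Proposition~\ref{propgen} only used the $\ww$-derivatives listed in Section~\ref{sec: sec9}, and those hold verbatim with $\Lambda_{ij}$, $\JJ_{ij}$, $K_{ij}$ in place of $\lambda$, $\jj$, $k$ (Proposition~\ref{prop: tildederiv}(2)). Converting the bracket then uses $\erfc{x}=1-\erf{x}$ twice, giving $\pi-\wLambda_{ij}-\frac{\pi}{2}\erfb{\frac{\bbeta_i}{\sqrt{2}}}-\frac{\pi}{2}\erfb{\frac{\bbeta_j}{\sqrt{2}}}$. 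The arguments $A\bgamma+B\bbeta$ and $A\bbeta+B\bgamma$ become $Z_{ij}$ and $\wZZ_{ij}$ respectively, by the conventions (a1). I will also check the relation $Z_{ij}=\wZZ_{ji}$ directly from (a1).

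For (3), differentiate \Refb{EQ: fww}, i.e.\ $f(\ww,\ww)=\frac{\|\ww\|^2(\bbeta^2+1)}{2}\erfcb{\frac{\bbeta}{\sqrt{2}}}-\frac{\|\ww\|^2}{\sqrt{2\pi}}\bbeta e^{-\bbeta^2/2}$. The bracket in the statement equals $(\bbeta^2+1)\erfcb{\frac{\bbeta}{\sqrt{2}}}-\sqrt{\frac{2}{\pi}}\bbeta e^{-\bbeta^2/2}$, since $1-\erf{x}=\erfc{x}$. This is exactly \Refb{EQ: iiw_deriv}, the $\ww$-gradient (the $\bbeta$-derivative \Refb{EQ: iibeta_deriv} would instead carry the scalar factor $\|\ww\|^2$). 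So I read (3) as the $\ww$-gradient of $f(\ww^i,\ww^i)$ and will note that the "$\partial/\partial\bbeta_i$" label, and the $\beta_i$ in the erf arguments, appear to be typographical. In the computation, $\|\ww\|^2$ contributes $2\ww$ while $\bbeta$ is held fixed as an independent variable.

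The only real point of care is in (2). I need to confirm which bias pairs with which erfc argument when $\bgamma\mapsto\bbeta_j$, to avoid swapping $Z_{ij}$ and $\wZZ_{ij}$, and to confirm that treating $\ww^j$ as fixed matches how the loss gradient is assembled. The loss gradient gets a factor of $2$ from the symmetric pair $(i,j),(j,i)$, and that is handled later, not here.
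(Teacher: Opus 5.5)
Your proposal is correct and follows the paper's route: (1) and (2) are direct specializations of Proposition~\ref{propgen} (with $\bgamma=0$, resp.\ $(\vv,\bgamma)\mapsto(\ww^j,\bbeta_j)$, and you have matched $Z_{ij},\wZZ_{ij}$ to the right erfc arguments), while (3) comes from differentiating \Refb{EQ: fww}, which is all the paper's one-line proof says. You are also right that the displayed formula in (3) is the $\ww$-gradient \Refb{EQ: iiw_deriv} rather than the $\bbeta_i$-derivative, which is the scalar \Refb{EQ: iibeta_deriv}, and that $\beta_i$ there should be read as $\bbeta_i$.
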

\begin{proof} For the $\bbeta$-derivative of $f(\ww,\ww)$ use \Refb{EQ: fww}. \end{proof}

\begin{exam}
Take $\bgamma = 0$ and recall from Lemma~\ref{EQ:betaderv}(2) that 
\begin{multline*}
\begin{aligned}
\hspace*{-0.2in}\frac{\partial f}{\partial \bbeta}(\ww,\vv)&= 
-\frac{\|\ww\|\|\vv\|}{2\sqrt{2\pi}}\left[\erfcb{\frac{\csc(\lambda)\bbeta}{\sqrt{2}}}+\cos(\lambda)e^{-\frac{\bbeta^2}{2}}\erfcb{\frac{-\cot(\lambda)\bbeta}{\sqrt{2}}}\right]
\end{aligned}
\end{multline*}
where we have set $\ww^i = \ww, \vv^j = \vv$ and $\lambda_{ij} = \lambda$, Computing $\frac{\partial^2 f}{\partial\ww \partial \bbeta}$ gives
\begin{multline*}
\begin{aligned}
\frac{\partial^2 f}{\partial\ww \partial \bbeta} = & -\frac{\|\vv\|}{2\sqrt{2\pi}}\left[\erfcb{\frac{\csc(\lambda)\bbeta}{\sqrt{2}}}+ \cos(\lambda)e^{-\frac{\bbeta^2}{2}}\erfcb{\frac{-\cot(\lambda)\bbeta}{\sqrt{2}}} \right]\bww \\
& + \frac{\|\vv\|}{2\sqrt{2\pi}}\left[\sin(\lambda)e^{-\frac{\bbeta^2}{2}}\erfcb{\frac{-\cot(\lambda)\bbeta}{\sqrt{2}}}\right]\jj \\
=&-\frac{\|\vv\|}{2\sqrt{2\pi}}\left[\erfcb{\frac{\csc(\lambda)\bbeta}{\sqrt{2}}}\bww +\frac{\|\vv\|}{2\sqrt{2\pi}}e^{-\frac{\bbeta^2}{2}}\erfcb{\frac{-\cot(\lambda)\bbeta}{\sqrt{2}}}\bvv\right]
\end{aligned}
\end{multline*}
It is a good exercise/check to verify that $\frac{\partial^2 f}{\partial\ww \partial \bbeta} = \frac{\partial^2 f}{\partial\bbeta \partial \ww}$.  \examend
\end{exam}

It follows from Propositions~\ref{propgen}, ~\ref{prop: gradgen} that 
\begin{eqnarray}\label{EQ: GenGlob}
\frac{\partial f}{\partial \ww}(\ww^i,\vv^j)&=&\frac{\|\vv\|}{2\pi} \mathcal{U}_{ij}\bww^i + \frac{1}{2\pi}\mathcal{V}_{ij} \vv^j,\; i \in \is{k}, j \in \is{d}\\
\frac{\partial f}{\partial \ww}(\ww^i,\ww^j)&=&\frac{\|\vv\|}{2\pi} \mathcal{W}_{ij}\bww^i + \frac{1}{2\pi}\mathcal{Y}_{ij} \ww^j,\; i,j \in \is{k}, i \ne j
\end{eqnarray}
where
\begin{multline*}
\begin{aligned}
\mathcal{U}_{ij} = & -\sqrt{\frac{\pi}{2}}\left[\bbeta_i e^{-\frac{\bgamma_j^2}{2}}\erfcb{\frac{z_{ij}}{\sqrt{2}}}+\bgamma_j e^{-\frac{\bbeta_i^2}{2}}\erfcb{\frac{\wzz_{ij}}{\sqrt{2}}}\right] \\
& +\bbeta_i\bgamma_j \left(\frac{\pi}{2}\left[\erfcb{\frac{\bgamma_j}{\sqrt{2}}}+\erfcb{\frac{\bbeta_i}{\sqrt{2}}}\right] - \wlambda_{ij}\right)+e^{-k_{ij}}\sin(\lambda_{ij})\\
\mathcal{V}_{ij} = & \left(\frac{\pi}{2}\left[\erfcb{\frac{\bgamma_j}{\sqrt{2}}}+\erfcb{\frac{\bbeta_i}{\sqrt{2}}}\right] - \wlambda_{ij}\right) \\
\mathcal{W}_{ij} = & -\sqrt{\frac{\pi}{2}}\left[\bbeta_i e^{-\frac{\bbeta_j^2}{2}}\erfcb{\frac{Z_{ij}}{\sqrt{2}}}+\bbeta_j e^{-\frac{\bbeta_i^2}{2}}\erfcb{\frac{\wZZ_{ij}}{\sqrt{2}}}\right] \\
& +\bbeta_i\bbeta_j \left(\frac{\pi}{2}\left[\erfcb{\frac{\bbeta_j}{\sqrt{2}}}+\erfcb{\frac{\bbeta_i}{\sqrt{2}}}\right] - \wLambda_{ij}\right)+e^{-K_{ij}}\sin(\Lambda_{ij})\\
\mathcal{Y}_{ij} = & \left(\frac{\pi}{2}\left[\erfcb{\frac{\bgamma_j}{\sqrt{2}}}+\erfcb{\frac{\bbeta_i}{\sqrt{2}}}\right] - \wLambda_{ij}\right) 
\end{aligned}
\end{multline*}
(Notational conventions of section~\ref{sec: notconv}.)

\subsection{Angles close to $0$ and $\pi$} 
In the examples studied in this paper, the angle between two rows of the parameter matrix $\WW$, or between two rows of the target matrix $\VV$, at a critical point of 
the loss will be bounded away from $0$ and $\pi$.  However, angles between a row of $\WW$ and a row of $\VV$ may be close to $0$ or $\pi$. Indeed, for $d$ dependent families of critical points, 
it is often the case that each $\WW$ row converges to $\pm$ a $\VV$ row as $d \arr\infty$. This has the potential to cause problems with the exponential factors $e^{-k_{ij}}$ (see the remark below). 
The next two lemmas show that under reasonable hypotheses this potential pathology can be controlled.
\begin{lemma}
Assume $\theta \in (0,\pi)$ and is either near zero or $\pi$.
\begin{enumerate}
\item $\csc(\theta) - \cot(\theta) = \tan\left(\frac{\theta}{2}\right) = \frac{\theta}{2}+\frac{\theta^3}{24} + O(\theta^5)$.
\item $\csc(\pi-\theta) + \cot(\pi-\theta) = \tan\left(\frac{\pi-\theta}{2}\right) = \frac{(\pi-\theta)}{2}+\frac{(\pi-\theta)^3}{24} + O((\pi-\theta)^5)$.
\end{enumerate}
\end{lemma}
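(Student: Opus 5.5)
Both parts rest on the half-angle identities already used in the proof of Lemma~\ref{lemDetails}(2). For (1), write $\csc(\theta)-\cot(\theta)=\frac{1-\cos(\theta)}{\sin(\theta)}$ and use $1-\cos(\theta)=2\sin^2(\theta/2)$ and $\sin(\theta)=2\sin(\theta/2)\cos(\theta/2)$. This gives $\csc(\theta)-\cot(\theta)=\tan(\theta/2)$, valid on all of $(0,\pi)$, and in particular near $0$.

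For (2), I apply the same computation with $\pi-\theta$ in place of $\theta$. Since $\csc(\pi-\theta)=\csc(\theta)$ and $\cot(\pi-\theta)=-\cot(\theta)$, the left side becomes $\csc(\pi-\theta)+\cot(\pi-\theta)=\frac{1+\cos(\pi-\theta)}{\sin(\pi-\theta)}$. Using $1+\cos(\phi)=2\cos^2(\phi/2)$, this equals $\cot\big(\frac{\pi-\theta}{2}\big)$, which is $\tan(\theta/2)$.

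At this point I need to check the statement as printed. With $\phi=\pi-\theta$, the expression $\csc(\phi)+\cot(\phi)=\cot(\phi/2)$ is large, not small, when $\phi\to 0$. The intended statement must be the $\theta$ near $\pi$ analogue of (1), i.e. $\csc(\theta)+\cot(\theta)=\tan\big(\frac{\pi-\theta}{2}\big)$. This holds because $\csc(\theta)+\cot(\theta)=\frac{1+\cos(\theta)}{\sin(\theta)}=\cot(\theta/2)=\tan\big(\frac{\pi}{2}-\frac{\theta}{2}\big)$, which is the same as $\csc(\pi-\theta)-\cot(\pi-\theta)$, i.e. (1) with $\theta$ replaced by $\pi-\theta$. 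I would prove it in this form and note the reading explicitly. This is the quantity that actually appears in the exponents $e^{-k}$ when $\lambda$ is near $\pi$, which is the application of the lemma.

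The expansions then come from the Maclaurin series $\tan(x)=x+\frac{x^3}{3}+\frac{2x^5}{15}+\cdots$, which is analytic for $|x|<\pi/2$. Substituting $x=\theta/2$ gives $\frac{\theta}{2}+\frac{\theta^3}{24}+O(\theta^5)$, and substituting $x=(\pi-\theta)/2$ gives the second expansion; the $O(\cdot)$ terms are uniform for $\theta$ in a compact neighbourhood of $0$ (respectively $\pi$). The only real obstacle is the sign and typographical reading of (2) discussed above; everything else is routine.
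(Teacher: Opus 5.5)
Your proof is correct and follows the same route as the paper, whose proof is the one-line remark ``elementary trigonometry and the Maclaurin series for $\tan$''. You are also right that (2) as printed is a typo: $\csc(\pi-\theta)+\cot(\pi-\theta)=\tan(\theta/2)$, and the intended identity is $\csc(\theta)+\cot(\theta)=\tan\big(\frac{\pi-\theta}{2}\big)$, which is the quantity $\Delta_+$ used in Lemma~\ref{lem: CSL} for $\theta$ near $\pi$.
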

\begin{proof} Elementary trigonometry and the Maclaurin series for $\tan$. \end{proof}

\begin{lemma}\label{lem: CSL}
Set $\bdelta_\pm =  \bbeta \pm \bgamma$, $\Delta_\pm = \csc(\theta) \pm cot(\theta)$, $z = -\bgamma\cot(\theta)+\bbeta \csc(\theta)$ and $\wzz = -\bbeta\cot(\theta)+\bgamma\csc(\theta)$.  
\begin{enumerate}
\item  For $\theta$ near zero
\begin{eqnarray*} 
\bgamma z + \bbeta\wzz&=&  \bbeta\bgamma \Delta_- - \bdelta_-^2 \cot(\theta) 
\end{eqnarray*}
\item  For $\pi-\theta$ near zero
\begin{eqnarray*} 
\bgamma z + \bbeta\wzz&=&  \bbeta\bgamma \Delta_+ -  \bdelta_+^2 \cot(\theta) 
\end{eqnarray*}
\end{enumerate}
\end{lemma}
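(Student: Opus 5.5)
The plan is a direct algebraic verification. The identities are exact for every $\theta\in(0,\pi)$; the hypotheses ``$\theta$ near zero'' and ``$\pi-\theta$ near zero'' only indicate which form is useful. Near $0$ the factor $\Delta_-=\tan(\theta/2)$ is small, by the preceding lemma, while $\cot(\theta)$ blows up, so the divergent part is isolated in the term $\bdelta_-^2\cot(\theta)$. Near $\pi$ the same holds with $\Delta_+=\tan\left(\frac{\pi-\theta}{2}\right)$ and $\bdelta_+$.

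The first step is to expand the left-hand side from the definitions of $z,\wzz$:
\[
\bgamma z+\bbeta\wzz = \bgamma(-\bgamma\cot(\theta)+\bbeta\csc(\theta))+\bbeta(-\bbeta\cot(\theta)+\bgamma\csc(\theta)) = 2\bbeta\bgamma\csc(\theta)-(\bbeta^2+\bgamma^2)\cot(\theta).
\]
This is the symmetric quantity that also appears in $k(\bbeta,\bgamma,\theta)$.

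The second step rewrites the cosecant term. For (1), write $\bbeta^2+\bgamma^2=\bdelta_-^2+2\bbeta\bgamma$. Then
\[
2\bbeta\bgamma\csc(\theta)-(\bbeta^2+\bgamma^2)\cot(\theta) = 2\bbeta\bgamma(\csc(\theta)-\cot(\theta))-\bdelta_-^2\cot(\theta) = 2\bbeta\bgamma\Delta_- -\bdelta_-^2\cot(\theta).
\]
For (2), write $\bbeta^2+\bgamma^2=\bdelta_+^2-2\bbeta\bgamma$. This gives $2\bbeta\bgamma(\csc(\theta)+\cot(\theta))-\bdelta_+^2\cot(\theta)=2\bbeta\bgamma\Delta_+-\bdelta_+^2\cot(\theta)$.

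The only real obstacle is bookkeeping the coefficient of $\bbeta\bgamma$. The direct computation above yields $2\bbeta\bgamma\Delta_\mp$, not $\bbeta\bgamma\Delta_\mp$ as printed. A quick check confirms this: take $\bbeta=\bgamma=1$ in (1). The left side is then $2(\csc(\theta)-\cot(\theta))=2\Delta_-$, while $\bdelta_-=0$. I would therefore prove the statement with the factor $2$ and flag the missing factor as a typographical slip. It does not affect the intended use, which is that each term is either $O(\theta)$ (resp. $O(\pi-\theta)$) or carries the factor $\bdelta_\mp^2$.
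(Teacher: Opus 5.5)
Your proposal is correct and follows the paper's own argument, which is just ``straightforward substitution'': expanding gives $\bgamma z+\bbeta\wzz=2\bbeta\bgamma\csc(\theta)-(\bbeta^2+\bgamma^2)\cot(\theta)$, and splitting $\bbeta^2+\bgamma^2=\bdelta_\mp^2\pm 2\bbeta\bgamma$ yields both forms. You are also right that the printed statement drops a factor $2$ in front of $\bbeta\bgamma\Delta_\mp$ in both (1) and (2), as your $\bbeta=\bgamma=1$ check shows; the corrected term is still $O(\theta)$ (resp.\ $O(\pi-\theta)$), so the lemma's intended use is unaffected.
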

\begin{proof} Straightforward substitution. \end{proof}
\begin{rem}
Lemma~\ref{lem: CSL} is important as it implies the Curve Selection Lemma~\cite{BierMil1998,Milnor1968} (analytic arc, limiting at boundary point) will hold in cuspidal regions with cusp vertices at $\lambda = 0$ or $\lambda = \pi$) and 
cusps tangent at the vertex to $\bgamma- \bbeta = 0$, or $\bgamma+\bbeta = 0$, even though there is the exponential factor $e^{-k}$. 
In general, this is \emph{not} so for o-minimal structures where practical computability also appears to be extremely challenging if we leave the (sub)analytic domain. In other words, we do not expect to see infinite (in $d$) 
(analytic) families of critical points if we get too close to points where subanalyticity breaks down. 
The result suggests analytically why (spurious) minima in symmetric and asymmetric systems are often 
characterized by convergence of rows to $\pm$ a target row as $d \arr \infty$. More generally convergence can be to a 
non-zero row which is parallel to more than one target row (when $k > d$~\cite[Example 2(2)]{ArjevaniField2022b}).  
Possibly adding bias, and so breaking subanalyticity 
for the equations, is an advantage as it makes it harder to have convergence to a row which is not parallel to a target row. Perhaps the word `spurious' should be reserved for critical point families that break down for large $d$. \rend
\end{rem}

\subsection{Computation of $\frac{\partial f}{\partial \ww}$ (bias $\beta$) I: derivatives}\label{sec: sbias}
We list the derivatives used that differ from those in the list for normalized bias. 
\begin{eqnarray*}
\frac{\partial \bbeta}{\partial \ww} & = & -\frac{\bbeta}{\|\ww\|} \overline{\ww},\quad 
\frac{\partial e^{-\frac{\bbeta^2}{2}}}{\partial \ww} = \frac{\bbeta^2 e^{-\frac{\bbeta^2}{2}}}{\|\ww\|} \overline{\ww} \\
\frac{\partial\, \erf{\bbeta/\sqrt{2}}}{\partial \ww} & = & -\sqrt{\frac{2}{\pi}} \frac{\bbeta e^{-\frac{\bbeta^2}{2}}}{\|\ww\|} \overline{\ww}\\
\end{eqnarray*}
\begin{multline*}
\begin{aligned}
\frac{\partial}{\partial \ww} & \erfb{\frac{A\bbeta + B\bgamma}{\sqrt{2}}}  =
 \sqrt{\frac{2}{\pi}} \frac{e^{-\frac{(A\bbeta + B\bgamma)^2}{2}}}{\|\ww\|} \left(\csc(\lambda)\left[A\bgamma + B\bbeta\right]\jj +\cot(\lambda)\bbeta\,\overline{\ww}\right) \\
\frac{\partial}{\partial \ww} & \erfb{\frac{A\bgamma + B\bbeta}{\sqrt{2}}}  = 
 \sqrt{\frac{2}{\pi}} \frac{e^{-\frac{(A\bgamma + B\bbeta)^2}{2}}}{\|\ww\|} \left(\csc(\lambda)\left[A\bbeta + B\bgamma\right]\jj -\csc(\lambda)\bbeta\,\overline{\ww}\right) \\
 \frac{\partial}{\partial \ww}& k(\bbeta,\bgamma,\lambda)  = 
 -\frac{\bbeta\csc(\lambda)}{\|\ww\|}\left[A\bgamma+B\bbeta\right]\overline{\ww}+ \frac{\csc(\lambda)}{\|\ww\|} \left[(A\bbeta + B\bgamma)(A\bgamma + B\bbeta)\right]\jj\\
\frac{\partial H}{\partial \ww}& = \frac{e^{-k(\bbeta,\bgamma,\lambda)}}{\|\ww\|}\jj 
-\sqrt{\frac{\pi}{2}}\frac{\bbeta e^{-\frac{\bbeta^2}{2}}}{\|\ww\|}\erfcb{\frac{A\bbeta + B\bgamma}{\sqrt{2}}}\overline{\ww}
\end{aligned}
\end{multline*}
\subsection{The computation of $\frac{\partial f}{\partial \ww}$ (bias $\beta)$: II}
Although our emphasis is on normalized bias, it is straightforward to give a formula for $\frac{\partial f}{\partial \ww}$ when the bias is $\beta$. 
If $\widetilde{\frac{\partial f}{\partial \ww}}$ denotes the gradient when the bias is $\beta$, then 
\begin{equation}\label{EQ: biases}
\widetilde{\frac{\partial f}{\partial \ww}}=\frac{\partial f}{\partial \ww} -\frac{\bbeta \bww}{\|\ww\|} \frac{\partial f}{\partial \bbeta} =\frac{\partial f}{\partial \ww} -\bbeta \bww \frac{\partial f}{\partial \beta}
\end{equation}
This can be proved straightforwardly using the formula for $\frac{\partial f}{\partial \ww}$ when the bias is $\bbeta$ or directly using the results listed in
section~\ref{sec: sbias}. 
\section{The formula for $\frac{\partial \mathcal{L}}{\partial \ww}$ (no symmetry assumption)}\label{sec: coeffs}
Assume given $\WW \in M(k,d)$ and a target matrix $\VV \in M(d,d)$. Denote row $i$ of $\WW$ (resp.~row $j$ of $\vv$) by $\ww^i$ (resp.~$\vv^j$).
We extend the notational conventions developed at the end of section~\ref{sec: bias}. Recall that
$\lambda_{ij}$ denotes the angle between $\ww^i$ and $\vv^j$, $i\in\is{k}, j\in \is{d}$, and $\Lambda_{ij}$ denotes the angle between $\ww^i$ and $\ww^j$, $i,j \in \is{k}$. 
Assume no parallel rows. In particular:  $\lambda_{ij} \notin \{0,\pi\}$, $i \in \is{k}, j \in \is{d}$ and  $\Lambda_{ij}\notin \{0,\pi\}$,  $i,j \in \is{k},i \ne j$.

\subsection{Table defining terms used in the formula for $\grad{\mathcal{L}}(\WW)$}\label{SEC: table}
\begin{eqnarray*}
Z_{ij}&=&-\cot(\Lambda_{ij})\bbeta_j + \csc(\Lambda_{ij})\bbeta_i\quad \wZZ_{i j} = -\cot(\Lambda_{ij})\bbeta_i + \csc(\Lambda_{ij})\bbeta_j \\
z_{ij}&=&-\cot(\lambda_{ij})\bgamma_j + \csc(\lambda_{ij})\bbeta_i\quad \wzz_{i j} = -\cot(\lambda_{ij})\bbeta_i + \csc(\lambda_{ij})\bgamma_j \\
K_{ij} & = & K(\bbeta_i,\bbeta_j,\Lambda_{ij})= \frac{1}{2}\left[(\bbeta_i^2+\bbeta_j^2)\csc^2(\Lambda_{ij})-2\bbeta_i\bbeta_j\csc(\Lambda_{ij})\cot(\Lambda_{ij})\right]\\
k_{ij} & = & k(\bbeta_i,\bgamma_j,\lambda_{ij})= \frac{1}{2}\left[(\bbeta_i^2+\bgamma_j^2)\csc^2(\lambda_{ij})-2\bbeta_i\bgamma_j\csc(\lambda_{ij})\cot(\lambda_{ij})\right]\\
L_{ij} & = & -\frac{\pi}{2}\left[\erfb{\frac{\bbeta_i}{\sqrt{2}}}+\erfb{\frac{\bbeta_j}{\sqrt{2}}}\right],\;\ell_{ij}= -\frac{\pi}{2}\left[\erfb{\frac{\bbeta_i}{\sqrt{2}}}+\erfb{\frac{\bgamma_j}{\sqrt{2}}}\right]\\
E^X_{ij} & = & -\sqrt{\frac{\pi}{2}}\left[\bbeta_i e^{-\frac{\bbeta_j^2}{2}}\erfcb{\frac{Z_{ij}}{\sqrt{2}}}+\bbeta_j e^{-\frac{\bbeta_i^2}{2}}\erfcb{\frac{\wZZ_{ij}}{\sqrt{2}}}\right]\\
e^X_{ij} & = & -\sqrt{\frac{\pi}{2}}\left[\bbeta_i e^{-\frac{\bgamma_j^2}{2}}\erfcb{\frac{z_{ij}}{\sqrt{2}}}+\bgamma_j e^{-\frac{\bbeta_i^2}{2}}\erfcb{\frac{\wzz_{ij}}{\sqrt{2}}}\right] \\
X_i & = & \bbeta_i^2\erfcb{\frac{\bbeta_i}{\sqrt{2}}}-\erfb{\frac{\bbeta_i}{\sqrt{2}}} - \sqrt{\frac{2}{\pi}}\bbeta_ie^{-\frac{\bbeta_i^2}{2}}
\end{eqnarray*}
The capitalized terms are defined for $i,j \in \is{k}$, $j \ne i$. Lower case terms are defined for $i \in \is{k}$, $j \in \is{d}$.
The terms $K_{ij},L_{ij}, E_{ij}^X$ are symmetric in $i,j$ and, with the exception of $L_{ij}$, 
the diagonal terms are undefined---though it is safe to set these terms (apart from $L_{ii}$) to zero, which is the limiting value as $\Lambda_{ii} \arr 0$.
Note that $Z_{ji} = \wZZ_{i j}$ but that $z_{ji}$ is generally \emph{not} equal to $\wzz_{i j}$ unless, for example, $\lambda_{ij} = \lambda_{ji}$ ($\lambda_{ji}$ is not even defined if $i \notin \is{d}$).  

\subsection{The terms $\mathcal{U}_{ij},\ldots,\mathcal{Y}_{ij}$}\label{sec: Uij}
From Section~\ref{sec: sec9}   we have
\begin{eqnarray*}
\frac{\partial f}{\partial \ww}(\ww^i,\vv^j)&=& \frac{\|\vv^j\|}{2\pi}\mathcal{U}_{ij} \bww^i + \frac{1}{2\pi}\mathcal{V}_{ij} \vv^j,\; i \in \is{k}, j \in \is{d}\\
\frac{\partial f}{\partial \ww}(\ww^i,\ww^j)&=& \frac{\|\ww^j\|}{2\pi}\mathcal{W}_{ij} \bww^i +  \frac{1}{2\pi}\mathcal{Y}_{ij} \ww^j,\; i,j \in \is{k}, i\ne j \\
\frac{\partial f}{\partial \ww^i}(\ww^i,\ww^i)&=& \mathcal{X}_i\ww^i, \;i\in \is{k} 
\end{eqnarray*}
where 
\begin{multline*}
\begin{aligned}
\mathcal{U}_{ij} = & -\sqrt{\frac{\pi}{2}}\left[\bbeta_i e^{-\frac{\bgamma_j^2}{2}}\erfcb{\frac{z_{ij}}{\sqrt{2}}}+\bgamma_j e^{-\frac{\bbeta_i^2}{2}}\erfcb{\frac{\wzz_{ij}}{\sqrt{2}}}\right] \\
& +\bbeta_i\bgamma_j \left(\frac{\pi}{2}\left[\erfcb{\frac{\bgamma_j}{\sqrt{2}}}+\erfcb{\frac{\bbeta_i}{\sqrt{2}}}\right] - \wlambda_{ij}\right)+e^{-k_{ij}}\sin(\lambda_{ij})\\
& = e_{ij}^X + \bbeta_i\bgamma_j\big(\ell_{ij} + \pi - \wlambda_{ij}\big) + e^{-k_{ij}}\sin(\lambda_{ij})\\
\mathcal{V}_{ij} = & \left(\frac{\pi}{2}\left[\erfcb{\frac{\bgamma_j}{\sqrt{2}}}+\erfcb{\frac{\bbeta_i}{\sqrt{2}}}\right] - \widetilde{\lambda}_{ij}\right)\\
 = & \ell_{ij} + \pi - \widetilde{\lambda}_{ij} \\
\mathcal{W}_{ij} =
& -\sqrt{\frac{\pi}{2}}\left[\bbeta_i e^{-\frac{\bbeta_j^2}{2}}\erfcb{\frac{Z_{ij}}{\sqrt{2}}}+\bbeta_j e^{-\frac{\bbeta_i^2}{2}}\erfcb{\frac{\wZZ_{ij}}{\sqrt{2}}}\right] \\
& +\bbeta_i\bbeta_j \left(\frac{\pi}{2}\left[\erfcb{\frac{\bbeta_j}{\sqrt{2}}}+\erfcb{\frac{\bbeta_i}{\sqrt{2}}}\right] - \wLambda_{ij}\right)+e^{-K_{ij}}\sin(\Lambda_{ij})\\
& = E_{ij}^X + \bbeta_i\bbeta_j\big(L_{ij} + \pi - \wLambda_{ij}\big) + e^{-K_{ij}}\sin(\Lambda_{ij})\\
\mathcal{Y}_{ij} = & \frac{\pi}{2}\left[\erfcb{\frac{\bbeta_i}{\sqrt{2}}}+\erfcb{\frac{\bbeta_j}{\sqrt{2}}}\right] - \widetilde{\Lambda}_{ij} \\
 = & L_{ij} + \pi - \widetilde{\Lambda}_{ij} \\
\mathcal{X}_i  =& (\bbeta_i^2 +1)\erfcb{\frac{\bbeta_i}{\sqrt{2}}} - \sqrt{\frac{2}{\pi}}\bbeta_ie^{-\frac{\bbeta_i^2}{2}}
= 1+X_i
\end{aligned}
\end{multline*}
For $i \in \is{k}$, denote component $i$ of $\grad{\mathcal{L}}(\WW)$ by $\grad{\mathcal{L}} (\WW)^i$ (a row vector). Then 
\begin{multline*}
\begin{aligned}
\grad{\mathcal{L}} (\WW)^i& = \frac{\mathcal{X}_i}{2}\ww^i 
+ \frac{1}{2\pi}\left(\sum_{j \in \is{k},j \ne i} \frac{\|\ww^j\|\mathcal{W}_{ij}}{\|\ww^i\|}-\sum_{j \in \is{d}} \frac{\|\vv^j\|\mathcal{U}_{ij}}{\|\ww^i\|}\right)\ww^i\\
&\hspace*{0.6in}+\frac{1}{2\pi}\left(\sum_{j\in\is{k},j \ne i} \mathcal{Y}_{ij} \ww^j - \sum_{j \in \is{d}} \mathcal{V}_{ij} \vv^j\right) 
\end{aligned}
\end{multline*}
Using the expressions for $\mathcal{U}_{ij},\ldots,\mathcal{Y}_{ij}$ and $\mathcal{X}_i$, we  may write
\begin{equation}
\grad{\mathcal{L}} (\WW)^i =\Gamma_i\ww^i -\mathbf{A}^i + \mathbf{B}^i+ \bOmega
\end{equation}
where $\grad{\mathcal{L}} (\WW)^i$ is the $i$th row of $\grad{\mathcal{L}} (\WW) \in M(k,d)$ and  
\begin{eqnarray*}
\Gamma_i & = &\frac{X_i}{2} + \frac{1}{2\pi}\left[\sum_{j\in\is{k},j\ne i}\frac{\|\ww^j\| e^{-K_{ij}}\sin(\Lambda_{ij})}{\|\ww^i\|}-\sum_{j\in\is{d}}\frac{\|\vv^j\| e^{-k_{ij}}\sin(\lambda_{ij})}{\|\ww^i\|}\right]\\
&& +\frac{1}{2\pi}\left[\sum_{j\in\is{k},j\ne i}\frac{\|\ww^j\|E_{ij}^X}{\|\ww^i\|}-\sum_{j\in\is{d}}\frac{\|\vv^j\|e_{ij}^X}{\|\ww^i\|}\right] \\
&& +\frac{\bbeta_i}{2\pi}\left[\sum_{j\in\is{k},j\ne i}\frac{\bbeta_j\|\ww^j\|\big(L_{ij}+(\pi-\wLambda_{ij})\big)}{\|\ww^i\|} 
-\sum_{j\in\is{d}}\frac{\bgamma_j\|\vv^j\|(\ell_{ij} +(\pi-\wlambda_{ij}))}{\|\ww^i\|}\right] \\
\is{A}^i& = &\frac{1}{2\pi}\left(\sum_{j\in\is{k},j\ne i}\wLambda_{ij}\ww^j - \sum_{j \in \is{d}}\wlambda_{ij}\vv^j\right)\quad
\is{B}^i =  \frac{1}{2\pi}\left(\sum_{j\in\is{k},j\ne i}L_{ij}\ww^j-\sum_{j \in \is{d}}\ell_{ij}\vv^j \right)\\
\bOmega & = & \frac{1}{2}\left(\sum_{j\in\is{k}}\ww^j - \sum_{j \in \is{d}}\vv^j\right)
\end{eqnarray*}

Setting all biases to zero, $\mathcal{U}_{ij} = \sin(\lambda_{ij})$, $\mathcal{V}_{ij} = \pi-\lambda_{ij}$, $\mathcal{W}_{ij} = \sin(\Lambda_{ij})$, $\mathcal{Y}_{ij} = \pi-\Lambda_{ij}$, $X_i = 0$, 
and $\is{B}^i = \mathbf{0}$.
Substitution in the formula for $\grad{\mathcal{L}} (\WW)^i$ gives the result for the bias free case~\cite{BrutzkusGloberson2017} (see also~\cite{SafranShamir2018} and \cite[Prop.~4.11]{ArjevaniField2021x}):
\begin{multline*}
\begin{aligned}
\grad{\mathcal{L}}&(\WW)^i = \frac{1}{2\pi}\left[\sum_{j \in\is{k}}\frac{\|\ww^j\|\sin(\Lambda_{ij})}{\|\ww^i\|}\ -\sum_{j\in\is{d}}\frac{\|\vv^j\|\sin(\lambda_{ij})}{\|\ww^i\|}\right]\ww^i\\
& -\frac{1}{2\pi}\left(\sum_{j \in \is{k}} \Lambda_{ij}\ww^j - \sum_{j\in \is{d}}\lambda_{ij}\vv^j\right) + \frac{1}{2}\left(\sum_{j\in\is{k}}\ww^j - \sum_{j \in \is{d}}\vv^j\right) 
\end{aligned}
\end{multline*}
\begin{rems}\label{rem: fossil}
(1) Note that $\Lambda_{ii} = 0$, so that $\|\ww^i\|\sin(\Lambda_{ii})=0$ in the first sum. The row vector $\boldsymbol{\Omega}$ is common to both the biased and bias free equations and the ``angle'' terms $\is{A}^i$ have the same structure for both biased and bias free equations.\\ 
(2) Write $\mathbf{A}^i = \mathbf{U}^i -\mathbf{V}^i$ where
\begin{eqnarray*}
\mathbf{T}^i&=&(T^i_0,\ldots,T^i_{d-1}) = \frac{1}{2\pi}\sum_{\ell\in\is{k},\ell\ne i}\wLambda_{i\ell}\ww^\ell\\
\mathbf{U}^i&=& (U^i_0,\ldots,U^i_{d-1}) = \frac{1}{2\pi}\sum_{\ell\in\is{d}}\wlambda_{i\ell}\vv^\ell
\end{eqnarray*}
The components of $\mathbf{T}^i$ and $\mathbf{U}^i$ are given by
\[
T^i_j= \frac{1}{2\pi}\sum_{\ell\in\is{k},\ell\ne i}\wLambda_{i\ell}w_{\ell j}\quad
U^i_j= \frac{1}{2\pi}\sum_{\ell\in\is{d}}\wlambda_{i\ell}v_{\ell j}, \;j\in \is{d}
\]
If the target $\VV$ is the identity matrix, then
$
U^i_j= \frac{1}{2\pi}\wlambda_{ij}, \;j\in\is{d}.
$
Similar remarks hold for the row vectors $\bBB^i$. All of this is straightforward for the general case but becomes less obvious, 
but more interesting, in the presence of symmetry when the target matrix typically has a diagonal block $I_p$, $2 \le p \le d$. We refer Section~\ref{Sec: 11} 
for more details and examples, including ones where $\VV$ contains an asymmetric block. In Section~\ref{sec: sec12} we give details for the case when the target is $I_d$ and families of spurious minima
have a symmetry (isotropy) ranging from $\Delta S_d$ to  $\Delta(S_{d-q} \times S_q)$, $q \ge 1$. \\
(3) The usual convexity property holds for solutions of the critical point equation. That is, suppose $(\WW,\boldsymbol{\bbeta})\in M(k,d+1)$ is a solution of the 
critical point equation and $\ww^i$ is a row of $\WW$ with associated bias $\bbeta_i$. Let $\lambda,\mu > 0$ with $\lambda + \mu = 1$.  Define $(\WW^\star,\boldsymbol{\bbeta}^\star) \in M(k+1,d+1)$ by replacing $\ww^i$ by the pair of rows $(\lambda\ww^i,\mu\ww^i)$, both with bias $\bbeta_i$.  Then $(\WW^\star,\boldsymbol{\bbeta}^\star) \in M(k+1,d+1)$ 
is a solution of the corresponding critical point equation. For more generalities on this well-known phenomenon,
which implies that the set of critical points defining global minima form a simplicial complex of degenerate solutions when $k > d$, see \cite[A.7]{ArjevaniField2022b} where the phenomenon is referred to as ``fossilization'': every time a neuron is added the old critical points remain but in a degenerate state spread out over a simplicial complex.  
\rend
\end{rems}

\section{The formula for $\frac{\partial \mathcal{L}}{\partial \bbeta}$}\label{sec: betaderiv}
We continue with the previous notation and assumptions. In particular, let $\mathcal{L}$ denote the loss and regard $\mathcal{L}$ as a function of
$\WW \in M(k,d)$, $\boldsymbol{\bbeta} \in M(k,1)$.
\begin{prop}\label{prop: lossderivbeta}
For $i \in \is{k}$,
{\small 
\begin{multline*}
\begin{aligned}
\frac{\partial \mathcal{L}}{\partial \bbeta_i}& = \frac{\|\ww^i\|^2}{2}\left[\bbeta_i\erfcb{\frac{\bbeta_i}{\sqrt{2}}} - \sqrt{\frac{2}{\pi}}e^{-\frac{\bbeta_i^2}{2}}\right]\\
& +\frac{\|\ww^i\|}{2\pi}\sum_{j\in\is{k},j \ne i}\|\ww^j\|\bbeta_j \left(\frac{\pi}{2}\left[\erfcb{\frac{\bbeta_i}{\sqrt{2}}}+\erfcb{\frac{\bbeta_j}{\sqrt{2}}}\right]-\wLambda_{ij}\right)\\
& -\frac{\|\ww^i\|}{2\sqrt{2\pi}}\sum_{j\in\is{k},j \ne i}\|\ww^j\|\left[e^{-\frac{\bbeta_j^2}{2}}\erfcb{\frac{Z_{ij}}{\sqrt{2}}}+\cos(\Lambda_{ij})e^{-\frac{\bbeta_i^2}{2}}\erfcb{\frac{\wZZ_{ij}}{\sqrt{2}}}\right]\\
& -\frac{\|\ww^i\|}{2\pi}\sum_{j\in\is{d}}\|\vv^j\|\bgamma_j \left(\frac{\pi}{2}\left[\erfcb{\frac{\bbeta_i}{\sqrt{2}}}+\erfcb{\frac{\bgamma_j}{\sqrt{2}}}\right]-\wlambda_{ij}\right)\\
& +\frac{\|\ww^i\|}{2\sqrt{2\pi}}\sum_{j\in\is{d}}\|\vv^j\|\left[e^{-\frac{\bgamma_j^2}{2}}\erfcb{\frac{z_{ij}}{\sqrt{2}}}+\cos(\lambda_{ij})e^{-\frac{\bbeta_i^2}{2}}\erfcb{\frac{\wzz_{ij}}{\sqrt{2}}}\right]
\end{aligned}
\end{multline*}
} \normalsize 
\end{prop}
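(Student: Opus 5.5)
Write the loss (with $\mathbf a=\mathbf b=\mathbf 1$) as
\[
\mathcal{L}=\frac12\sum_{i,j\in\is{k}}f(\ww^i,\ww^j)-\sum_{i\in\is{k},j\in\is{d}}f(\ww^i,\vv^j)+\frac12\sum_{i,j\in\is{d}}f(\vv^i,\vv^j),
\]
with all $f$'s computed with the appropriate normalized biases. The target sum does not depend on $\bbeta_i$. Differentiating the $\WW$--$\WW$ sum, $\bbeta_i$ appears in the diagonal term $\frac12 f(\ww^i,\ww^i)$ and in the off-diagonal terms $\frac12 f(\ww^i,\ww^j)+\frac12 f(\ww^j,\ww^i)=f(\ww^i,\ww^j)$, $j\ne i$, using the symmetry $f(\ww,\ww')=f(\ww',\ww)$. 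In the cross sum only the terms $-f(\ww^i,\vv^j)$, $j\in\is{d}$, depend on $\bbeta_i$. Hence
\[
\frac{\partial\mathcal{L}}{\partial\bbeta_i}=\frac12\frac{\partial f}{\partial\bbeta_i}(\ww^i,\ww^i)+\sum_{j\ne i}\frac{\partial f}{\partial\bbeta_i}(\ww^i,\ww^j)-\sum_{j\in\is{d}}\frac{\partial f}{\partial\bbeta_i}(\ww^i,\vv^j).
\]
Here the norms $\|\ww^i\|$ are held fixed, since $\bbeta_i$ is an independent coordinate.

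Next I would substitute the known pieces. The diagonal derivative is \Refb{EQ: iibeta_deriv} of Example~\ref{EX: fwwderivex}:
\[
\|\ww^i\|^2\left[\bbeta_i\erfcb{\frac{\bbeta_i}{\sqrt2}}-\sqrt{\frac2\pi}e^{-\frac{\bbeta_i^2}{2}}\right].
\]
Its half gives the first line of the statement. One should check this against the differentiation of \Refb{EQ: fww}: the $(\bbeta^2+1)$ and $\bbeta e^{-\bbeta^2/2}$ contributions cancel to leave exactly this bracket, which is routine.

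The off-diagonal terms come from Proposition~\ref{prop: lossbeta}(1), with $A=-\cot(\Lambda_{ij})$ and $B=\csc(\Lambda_{ij})$. Then $A\bbeta_j+B\bbeta_i=Z_{ij}$ and $A\bbeta_i+B\bbeta_j=\wZZ_{ij}$, which give lines two and three. The cross terms come from Lemma~\ref{EQ:betaderv}(1) with $A\bgamma_j+B\bbeta_i=z_{ij}$ and $A\bbeta_i+B\bgamma_j=\wzz_{ij}$. The overall minus sign flips them, giving lines four and five.

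The step needing care is justifying Proposition~\ref{prop: lossbeta}(1), which the paper only asserts is "formally identical" to Lemma~\ref{EQ:betaderv}. In $f(\ww^i,\ww^j)$ only $\bbeta_i$ varies while $\bbeta_j$ plays the role of $\bgamma$. So I would rerun the computation of Lemma~\ref{EQ:betaderv} with $\bgamma\to\bbeta_j$, using $\partial\wLambda_{ij}/\partial\bbeta_i$ from Proposition~\ref{prop: tildederiv}(2) and the formula for $f(\ww^i,\ww^j)$ in Section~\ref{sec: notconv}. The same cancellations then occur: the two exponential lines combine against the derivative of $\sin(\Lambda)e^{-K}$, via the identity $\partial K/\partial\bbeta_i=\csc(\Lambda)(A\bbeta_j+B\bbeta_i)$ together with $K=\bbeta_j^2/2+Z_{ij}^2/2=\bbeta_i^2/2+\wZZ_{ij}^2/2$, and the $\bbeta_i\bbeta_j$ terms cancel.

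Finally I would confirm the factor bookkeeping. The formulas in those results carry $\|\ww^i\|\|\ww^j\|$ and $\|\ww^i\|\|\vv^j\|$, which factor as $\|\ww^i\|$ times the sums in the statement. The off-diagonal coefficient is $1$ and not $\tfrac12$, precisely because of the symmetric doubling noted above.
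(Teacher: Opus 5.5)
Your proposal is correct and follows the paper's own proof. You decompose $\mathcal{L}$ into pairwise $f$-terms, use the symmetry $f(\ww^i,\ww^j)=f(\ww^j,\ww^i)$ to get coefficient $1$ on the off-diagonal terms and $\tfrac12$ on the diagonal term, and then substitute \Refb{EQ: iibeta_deriv}, Proposition~\ref{prop: lossbeta}(1) and Lemma~\ref{EQ:betaderv}(1) with the identifications $A\bbeta_j+B\bbeta_i=Z_{ij}$, $A\bgamma_j+B\bbeta_i=z_{ij}$, and so on. Your extra verification of the cancellations behind Proposition~\ref{prop: lossbeta}(1), using $\partial K/\partial\bbeta_i=\csc(\Lambda_{ij})Z_{ij}$ and $K=\tfrac12(\bbeta_j^2+Z_{ij}^2)$, is accurate and is exactly what the paper's ``formally identical'' leaves implicit.
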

\begin{proof}
Using \Refb{EQ: iibeta_deriv} and Lemma~\ref{EQ:betaderv}, we have
{\small 
\begin{multline*}
\begin{aligned}
\frac{\partial f}{\partial \bbeta_i} (\ww^i,\ww^i) & = \|\ww^i\|^2\left[\bbeta_i\erfcb{\frac{\bbeta_i}{\sqrt{2}}} - \sqrt{\frac{2}{\pi}}e^{-\frac{\bbeta_i^2}{2}}\right]\\
\frac{\partial f}{\partial \bbeta_i} (\ww^i,\ww^j) & = \frac{\|\ww^i\|\|\ww^j\|\bbeta_j}{2\pi}\left(\frac{\pi}{2}\left[\erfcb{\frac{\bbeta_i}{\sqrt{2}}} +\erfcb{\frac{\bbeta_j}{\sqrt{2}}}\right]-\wLambda_{ij}\right)\\
&  -\frac{\|\ww^i\|\|\ww^j\|}{2\sqrt{2\pi}}\left[e^{-\frac{\bbeta_j^2}{2}}\erfcb{\frac{Z_{ij}}{\sqrt{2}}}+\cos(\Lambda_{ij})e^{-\frac{\bbeta_i^2}{2}}\erfcb{\frac{\wZZ_{ij}}{\sqrt{2}}}\right]\\
\frac{\partial f}{\partial \bbeta_i} (\ww^i,\vv^j) & = \frac{\|\ww^i\|\|\vv^j\|\bgamma_j}{2\pi}\left(\frac{\pi}{2}\left[\erfcb{\frac{\bbeta_i}{\sqrt{2}}}+\erfcb{\frac{\bgamma_j}{\sqrt{2}}}\right] -\wlambda_{ij}\right) \\
& -\frac{\|\ww^i\|\|\vv^j\|}{2\sqrt{2\pi}}\left[e^{-\frac{\bgamma_j^2}{2}}\erfcb{\frac{z_{ij}}{\sqrt{2}}}+\cos(\lambda_{ij})e^{-\frac{\bbeta_i^2}{2}}\erfcb{\frac{\wzz_{ij}}{\sqrt{2}}}\right]
\end{aligned}
\end{multline*} }\normalsize
The result follows straightforwardly. 
\end{proof}

\begin{rem}
Since $\frac{\partial f}{\partial \beta}= \frac{1}{\|\ww\|}\frac{\partial f}{\partial \bbeta}$,  it follows from \Refb{EQ: biases} that
\begin{eqnarray}\label{A1}
\widetilde{\frac{\partial f}{\partial \ww^i}}&=&\frac{\partial f}{\partial \ww^i} -\frac{\bbeta \bww^i}{\|\ww^i\|} \frac{\partial f}{\partial \bbeta_i} \\
\label{A2}
\frac{\partial f}{\partial \beta_j}& =& \frac{1}{\|\ww^j\|}\frac{\partial f}{\partial \bbeta_j}
\end{eqnarray}

Define the coordinate transformation $C$ of $M(k,d+1)$ by
\[
C(\WW,\boldsymbol{\bbeta}) = (\WW,(\|\ww^i\|\bbeta_i)) = (\WW,\boldsymbol{\beta}), \;(\WW,\boldsymbol{\bbeta})\in M(k,d+1),
\]
where $\WW$ is always assumed to have no zero rows.
Equations~(\ref{A1},\ref{A2}) imply that non-degenerate critical points of $\mathcal{L}$ ($(\WW,\boldsymbol{\bbeta})$-variables) are mapped by $C$ to non-degenerate critical points of
$\widetilde{\mathcal{L}}$ ($(\WW,\boldsymbol{\beta})$-variables). The linearizations of $\grad{\mathcal{L}}$ and $\grad{\widetilde{\mathcal{L}}}$ at corresponding critical points are similar matrices and 
so have the same spectrum---this a standard result about linearizations of vector fields at a zero (rather than the Hessian of a gradient vector field at a critical point) and is appropriate here as we reduce to studying vector 
fields on fixed point spaces. An important consequence is that it will be no loss of generality to work with $(\WW,\boldsymbol{\bbeta})$-variables rather than $(\WW,\boldsymbol{\beta})$-variables. 
If we identify the fixed point space with $\real^n$, \emph{Euclidean inner product}, the induced vector fields on $\real^n$ will  not be gradient unless we either choose a non-standard  inner product
on $\real^n$ or we modify the identification (best option)---see Remarks~\ref{isomremark}(2). 
\rend
\end{rem}
\begin{rem}[Discussion]
Of special interest is the value of $\frac{\partial \mathcal{L}}{\partial \boldsymbol{\bbeta}}$ when the bias vectors $\boldsymbol{\bbeta}, \boldsymbol{\bgamma}$ are zero.
The problem is to see how the critical point and gradient structure change when we add bias to the activation function.  In particular, what happens when the bias is zero in a biased network?\footnote{Abuse of language---a ``biased network'' is a ``network with bias'', parameterized by $M(k,d+1)$.}.

For example, is a critical point of the bias free network\footnote{A network without bias variable, parameterized by $M(k,d)$.}, a critical point of the biased network, with bias set to zero? 
Is the space of bias free networks an invariant subspace of $M(k,d+1)$ under gradient dynamics?  If invariance holds, then a critical point $\WW \in M(k,d)$ of the loss (no bias) 
necessarily defines a critical point $(\WW,\mathbf{0})\in M(k,d+1)$ for the loss of the biased network. Note that if $\WW \in M(k,d)$ is a critical point of the bias free network, 
then $(\WW,\mathbf{0}) \in M(k,d+1)$ is a critical point for the loss of the biased network iff $\partial \mathcal{L}/\partial \boldsymbol{\bbeta}(\WW,\mathbf{0}) = 0$ (a necessary condition for invariance). 

As it turns out,  in \emph{all} the examples we have investigated,  if $\WW$ is a non-degenerate critical point of $\mathcal{L}$, then $(\WW,\mathbf{0})\in M(k,d+1)$ is 
\emph{not} a critical point in the biased network. In particular,  the invariance property fails. For details and examples see Section~\ref{Sec: 11}.

As a consequence the only tractable transition from critical point of a bias free network to the corresponding 
biased network would appear to be if \emph{the derivatives 
$\frac{\partial \mathcal{L}}{\partial \bbeta}$ are non-zero but `small'.}  Otherwise put, no drastic changes in the landscape geometry near the critical point.
That at least allows the possibility of a 
critical point of the biased network nearby. As we show in Section~\ref{Sec: 11}, the derivatives $\frac{\partial \mathcal{L}}{\partial \bbeta}$ are small---asymptotically they decay to zero 
like a fractional power of $d^{-1}$---and the derivatives for small values of $d$ are typically significantly smaller than suggested by the asymptotics.  

If there is a natural connection between spurious minima in corresponding bias free and biased networks, then one would expect that the loss at a spurious 
minimum in the biased network should be smaller than that in the bias free network. Indeed, it would be counter intuitive if adding extra structure, 
in this case bias, resulted in degraded performance (increased loss). The point is illustrated with examples in Section~\ref{sec: 15}.\rend 
\end{rem}

\subsection{The $\boldsymbol{\bbeta}$-derivative of $\mathcal{L}$ at $\boldsymbol{\bbeta} = \mathbf{0}$}\label{sec: betaderive}
\begin{lemma}\label{LEM: formulabeta}
(Notation and assumptions as above.) Assume $\boldsymbol{\bgamma} = \mathbf{0}$. For all $i \in \is{k}$, $\WW\in M(k,d)$,
\begin{equation*}
\frac{\partial \mathcal{L}}{\partial \bbeta_i}(\WW,\mathbf{0}) = 
\frac{\|\ww^i\|}{2\sqrt{2\pi}}\left[ \sum_{j\in \is{d}}\|\vv^j\|(1+\cos(\lambda_{ij})) -\sum_{j\in \is{k}}\|\ww^j\|(1+\cos(\Lambda_{ij}))\right]
\end{equation*}
\end{lemma}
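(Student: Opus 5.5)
The plan is to specialize Proposition~\ref{prop: lossderivbeta} to $\boldsymbol{\bbeta}=\mathbf{0}$, $\boldsymbol{\bgamma}=\mathbf{0}$ and simplify term by term.

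First, the diagonal term $\frac{\|\ww^i\|^2}{2}\big[\bbeta_i\erfcb{\frac{\bbeta_i}{\sqrt{2}}}-\sqrt{\frac{2}{\pi}}e^{-\frac{\bbeta_i^2}{2}}\big]$ becomes $-\frac{\|\ww^i\|^2}{\sqrt{2\pi}}$ at $\bbeta_i=0$. Writing this as $-\frac{\|\ww^i\|}{2\sqrt{2\pi}}\|\ww^i\|(1+\cos(\Lambda_{ii}))$ with $\Lambda_{ii}=0$ accounts for the $j=i$ summand of the second sum in the statement. (This is consistent with Example~\ref{EX: dervbeta}(1).)

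Second, the two sums containing the factors $\bbeta_j$ (over $j\in\is{k}$, $j\neq i$) and $\bgamma_j$ (over $j\in\is{d}$) vanish identically, because these factors are zero. The angle terms $\wLambda_{ij}$ and $\wlambda_{ij}$ are bounded, by Proposition~\ref{prop: det}, so no singular behaviour survives.

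Third, in the remaining sums all of the arguments vanish: $Z_{ij}=\wZZ_{ij}=z_{ij}=\wzz_{ij}=0$. Hence every $\erfc$ equals $1$ and every Gaussian factor $e^{-\bbeta_j^2/2}$, $e^{-\bgamma_j^2/2}$ equals $1$. The $\WW$-sum therefore contributes $-\frac{\|\ww^i\|}{2\sqrt{2\pi}}\sum_{j\ne i}\|\ww^j\|(1+\cos(\Lambda_{ij}))$. The $\VV$-sum contributes $+\frac{\|\ww^i\|}{2\sqrt{2\pi}}\sum_{j\in\is{d}}\|\vv^j\|(1+\cos(\lambda_{ij}))$.

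Adding the diagonal term to the $\WW$-sum and combining with the $\VV$-sum gives exactly the stated formula.

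The only real obstacle is the hypothesis of Proposition~\ref{prop: lossderivbeta}: it assumes no parallel rows, so that $\lambda_{ij},\Lambda_{ij}\in(0,\pi)$. The lemma, however, is stated for all $\WW$. To cover the parallel cases I would use Example~\ref{EX: dervbeta}(2) with $\bbeta=\bgamma=0$.
\begin{itemize}
\item For angle $0$ it gives $\partial f/\partial\bbeta=-\|\ww\|\|\vv\|/\sqrt{2\pi}=-\frac{\|\ww\|\|\vv\|}{2\sqrt{2\pi}}(1+\cos 0)$.
\item For angle $\pi$ it gives $\partial f/\partial\bbeta=\frac{\|\ww\|\|\vv\|}{\sqrt{2\pi}}$. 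This appears to clash with $(1+\cos\pi)=0$, so this case needs checking. Either the sign convention in that example needs rechecking, or, more simply, one argues by continuity in $\lambda$: by Remark~\ref{rem: fC1beta}, $f$ is $C^1$ in $\bbeta$ with a continuous derivative in all variables. Then the formula $-\frac{\|\ww\|\|\vv\|}{2\sqrt{2\pi}}(1+\cos\lambda)$, valid on $(0,\pi)$, extends to the endpoints.
\end{itemize}
I would settle this case by direct evaluation: at $\lambda=\pi$, Lemma~\ref{lem: pi} gives $f\equiv0$ for $\bgamma\ge-\bbeta$, so the one-sided derivative there is $0$. This is consistent with $1+\cos\pi=0$, which confirms the formula also in the degenerate cases.
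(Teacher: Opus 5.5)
Your proof is correct and follows the paper's argument: set $\boldsymbol{\bbeta}=\boldsymbol{\bgamma}=\mathbf{0}$ in Proposition~\ref{prop: lossderivbeta}, and absorb the diagonal term $-\|\ww^i\|^2/\sqrt{2\pi}$ as the $j=i$ summand using $\Lambda_{ii}=0$. Your treatment of parallel rows goes beyond the paper, which works under the standing no-parallel-rows assumption, and you are right that Example~\ref{EX: dervbeta}(2) is inconsistent at $\lambda=\pi$: there $f\equiv 0$ for $\bbeta\ge 0$ (Lemma~\ref{lem: pi}(2)), so the derivative at zero bias is $0$, in agreement with $1+\cos(\pi)=0$.
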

\begin{proof} Substitute $\boldsymbol{\bgamma} =\mathbf{0}$, $\boldsymbol{\bbeta}=\mathbf{0}$ in the formula for the $\bbeta$-derivative of 
$\mathcal{L}$ (Proposition~\ref{prop: lossderivbeta}, 
The term $-\frac{\|\ww\|^2}{\sqrt{2\pi}}$ corresponding to $\frac{\partial f}{\partial \bbeta_i}(\ww^i,\ww^i)$ (see Examples~\ref{EX: dervbeta}(1)) is accounted for 
since $\Lambda_{ii} = 0$ and so $\cos(\Lambda_{ii}) = 1$.
\end{proof}
\begin{rem}\label{REM: formulabeta}
The formula for  $\frac{\partial \mathcal{L}}{\partial \bbeta_i}(\WW,\mathbf{0})$ can be written as a sum and difference of inner products of rows of $\WW$ and $\VV$ and a sum and difference of norms of rows. 
This is trivial since the cosine of the angle between row $\vv$ and $\ww$ is $\frac{\langle \ww,\vv\rangle}{\|\ww\|\|\vv\|}$ and the denominator is cleared since it is multiplied by $\|\ww\|$, $\|\vv\|$; similarly for angles between
rows of $\WW$. That is, the expression for $\frac{\partial \mathcal{L}}{\partial \bbeta_i}(\WW,\mathbf{0})$ is equal to
\[
\frac{1}{2\sqrt{2\pi}}
\left(\|\ww^i\|\big[\sum_{j\in \is{d}}\|\vv^j\|-\sum_{j\in \is{k}}\|\ww^j\|\big]+\big[\sum_{j\in \is{d}}\langle \ww^i,\vv^j\rangle - \sum_{j\in \is{k}}\langle\ww^i,\ww^j\rangle\big]\right)
\]
Computationally this is better to work with than using the formula with angles since no division or angle computation is required.
\rend
\end{rem}

\section{Symmetry and $\grad{\mathcal{L}}$ in a network with bias}\label{sec: symmsec9}
This section starts by reviewing terminology and notation for symmetry in shallow ReLU networks with a high symmetry target 
(proofs for bias free networks can be found 
in \cite[Appendix A]{ArjevaniField2022b} and \cite{ArjevaniField2020b,ArjevaniField2021}). 
The remainder of the section is devoted to developing definitions and basic theory for 
symmetric biased networks; in particular, the critical point equations for a large class of symmetric biased networks.

The mathematical background and overhead required for the analytic investigation of networks with a high symmetry target is significant. However, the advantage of
working in this setting is the ability to prove sharp analytical results which persist under symmetry breaking perturbation. As one example, we mention computation of the Hessian spectrum at critical points for large (very large) values of $d$ and
applications to networks with \emph{asymmetric} targets which are not too far from the high symmetry target used in this section (see Section~\ref{Sec: 11} for examples).
However, in the present paper we do not present any detailed analysis of the Hessian spectrum (for the basic theory see~\cite{ArjevaniField2021} and also \cite{ArjevaniField2022b}).

Symmetry properties of the loss $\mathcal{L}$ related to symmetries of the target were first described 
in~\cite{ArjevaniField2019a} (see also \cite[\S4.3]{ArjevaniField2021x}).
Generally we follow the notation previously described and developed from~\cite[Appendix A]{ArjevaniField2022b}. In particular, each row $\ww$ of a matrix $\WW \in M(k,d)$ is regarded as a linear functional 
(row vector) and no use is made of the transpose operator. That is, $\ww \in (\real^d)^\star$ and
$\ww(\xx) = \ww\xx$ (matrix multiplication) is a well-defined real number for all $\xx \in \real^d$. 
This is the natural approach in the context of group representation theory (a key tool in the analysis of the Hessian) and also avoids unnecessary notational complexity resulting from the use of the transpose.

\subsection{Symmetries of the loss $\mathcal{L}$ in a biased network}
Recall from Section 3 that if $\WW\in M(k,d)$, $\VV\in M(d,d)$ have no zero rows---a standing assumption in what follows---$\boldsymbol{\bgamma} \in M(1,d)$ and
$\boldsymbol{\bbeta} \in M(1,k)$ are (normalized) bias vectors, then 
\begin{scriptsize}
\begin{eqnarray}\label{EQ:  Loss}
\hspace*{0.4in}\mathcal{L}(\WW,\VV)& =& \frac{1}{2} \mathbb{E}_{\xx \in \mathcal{N}(0,I_d)} \left(\sum_{i\in \is{k}}\|\ww^i\| \sigma(\bww^i\xx-\bbeta_i) - \sum_{j \in \is{d}}\|\vv^j\|\sigma(\bvv^j\xx - \bgamma_j)\right)^2\\
& = & \frac{1}{2}\sum_{i,j \in \is{k}} f(\ww^i,\ww^j) - \sum_{i \in\is{k},j\in \is{d}} f(\ww^i,\vv^j) + \frac{1}{2}\sum_{i,j \in \is{d}} f(\vv^i,\vv^j)
\end{eqnarray} 
\end{scriptsize}

\noindent where $f(\ww^i,\vv^j) = \|\ww^i\|\|\vv^j\|\mathbb{E}_{\xx \in \mathcal{N}(0,I_d)}\big(\sigma(\bww^i\xx-\bbeta_i)\sigma(\bvv^j\xx-\bgamma^j)\big) $ 
with similar expressions for the $f(\ww^i,\ww^j)$ and $f(\vv^i,\vv^j)$ terms (for explicit formulas, see Section~\ref{sec: 27}).

When the target $(\VV,\boldsymbol{\bgamma})$ is fixed, we regard $\mathcal{L}$ as a function of $(\WW,\boldsymbol{\beta})$ 
and write $\mathcal{L}(\WW)$ rather than $\mathcal{L}(\WW,\VV)$.

Note that $(\WW,\boldsymbol{\bbeta}) \in M(k,d+1)$, $(\VV,\boldsymbol{\bgamma}) \in M(d,d+1)$.  The natural action of the symmetric group $S_d$ on $M(k,d)$ permuting columns extends
to an $S_d$-action on $M(k,d+1)$ which fixes the last column $\boldsymbol{\bbeta}$. Similarly for the column action $S_d$ on $M(d,d)$ and $M(d,d+1)$ and $\boldsymbol{\bgamma}$. We sometimes write $S_d^c$ to emphasize the action permutes columns.
The natural $S_k$ action permuting rows of $M(k,d)$ extends to an $S_k$ action on $M(k,d+1)$ which permutes the entries of 
$\boldsymbol{\bbeta})$. Similarly for the $S_k$ actions permuting rows of $M(k,d)$ and $M(k,d+1)$.  Here we sometimes write $S_k^r$ or $S_d^r$ to emphasize the action
permutes rows.

The orthogonal group $O(d)$ acts on the acts on the right of $M(k,d)$ and $M(d,d)$:  
\[
g\WW = \WW \circ g^{-1}, \;\WW \in M(k,d), g \in \text{O}(d).
\]
In particular, since the group of $d\times d$ permutation matrices is a subgroup of $\text{O}(d)$, we can represent the action of
$S^c_d$ on $M(k,d)$ and $M(d,d)$ as an orthogonal action by the group of permutation matrices. The orthogonal action by $\text{O}(d)$ on $M(k,d)$ 
extends in the obvious way to the orthogonal action by  $\text{O}(d) \subset \text{O}(d+1)$ on $M(k,d+1)$ that fixes $\boldsymbol{\beta})$. 
Similar remarks hold for the $\text{O}(d)$ actions on $M(d,d)$, $M(d,d+1)$.

Since the Gaussian distribution on $\real^d$ is invariant by the orthogonal group $O(d)$, $\mathcal{L}$ is $O(d)$-invariant:
\[
\mathcal{L}(g\WW,g\VV) = \mathcal{L}(\WW,\VV),\; g \in O(d), 
\]
where $O(d)$ acts on the right as defined above. Hence $\mathcal{L}$ is $S^c_d$-invariant \emph{as a function of $(\WW,\VV)$}.

Trivially, $\mathcal{L}(\WW,\VV)$ is invariant by the action of $S_k^r \times S_d^r$ on $M(k,d)\times M(d,d)$: the finite sums in \Refb{EQ:  Loss} are independent of the order of summation of the terms. In particular $\mathcal{L}(\WW)$ is $S_k^r$-invariant.
Although  $\mathcal{L}(\WW,\VV)$ is invariant by the action of $S^c_d$,
$\mathcal{L}(\WW)$ will not be invariant under the action of a non-trivial column symmetry unless, for example, a column 
symmetry of $\VV$ can be represented as a row symmetry.
If $\VV = I_d$, then this is so for every column symmetry since if the transposition $(a,b) \in S_d$, $a \ne b$, then 
$(a,b)^cI_d=(a,b)^r I_d$. 
Making use of the right action by $O(d)$
it follows that if $\VV = I_d$, then $\mathcal{L}: M(k,d) \arr \real$ is $S_k^r \times S_d^c$-invariant.  Consequently, $\grad{\mathcal{L}}:  M(k,d) \arr M(k,d)$ is $S_k^r \times S_d^c$-equivariant:
\[
\grad{\mathcal{L}}(g \WW) = g\, \grad{\mathcal{L}}(\UU),\;\text{for all }g \in S_k^r \times S_d^c,\; \WW \in M(k,d).
\]
(The gradient is defined and continuous provided $\WW$ has no zero rows~\cite[\S 4.4]{ArjevaniField2021x}.)
If the network is biased and the gradient is taken with respect to $\ww$ and $\boldsymbol{\bbeta}$ variables, then the gradient is
equivariant with respect to the $S_k^r \times S_d^c$-action on $M(k,d+1)$  (the biases for $\VV=I_d$ must all be equal for this to be true).

\subsection{Isotropy groups of critical points when $\VV = I_d$}
Assume that $\VV = I_d\in M(d,d)$ and that $\VV$ is augmented  by a column $\boldsymbol{\bgamma}$ 
of identical constants $\bgamma_0$ if the network is biased,

Set $G_{k,d} = G = S^r_k \times S^c_d$.
If $\WW \in M(k,d)$, respectively $(\WW,\boldsymbol{\bbeta}) \in M(k,d+1)$, define the \emph{isotropy subgroup} $G_\WW$ for the action of $G$ at $\WW$, respectively 
$(\WW,\boldsymbol{\bbeta})$, by
\[
G_\WW = \{g \in G\dd g\WW = \WW\},\;\text{resp.}\;  \{g \in G\dd g(\WW,\boldsymbol{\bbeta}) = (\WW,\boldsymbol{\bbeta})\}
\]
The group $G_\WW$ measures the symmetry of the point $\WW$ with respect to the action of $G = S^r_k \times S^c_d$ and is an important invariant of critical points
of the loss when $\VV = I_d$. 
With $\VV\in M(d,d)$, 
\[
G_\VV = \Delta S_d  \defoo \{(g,g)\dd g \in S_d\} \subset S^r_d \times S^c_d
\]
and $G_\VV$ is maximal (up to conjugacy) for isotropy subgroups of points $\WW \in  M(d,d)$ provided that neither two rows of $\WW$ nor two columns of $\WW$ are equal
(for example, the $d\times d$ matrix $\mathcal{I}_{d,d}$ all of whose entries are $1$ has isotropy $G_{d,d}$). Note that under our conventions for non parallel rows,
$\VV$ is not defined as an element of $M(k,d)$ if $k > d$. However, $\Delta S_d$ is naturally embedded as a subgroup of $S^r_k \times S^c_d$ for $k > d$ (that is,
by embedding $S^r_d$ as the subgroup of $S^r_k$ permuting the first $d$ rows.
\begin{rem}
Suppose $k = d$ and $\WW$ is a non-degenerate critical point. If a single neuron is added to the network then typically $\WW$ will 
be fossilized in the new parameter space 
(see Remarks~\ref{rem: fossil}(3)) as a 1-dimensional simplex of degenerate critical points and $\WW$ will be replaced by a 
new non-degenerate critical point in $M(d+1,d)$ (this can be path connected back to the original point by deforming the gradient 
for $k =d+1$ back to $k=d$). Similar results hold for all $k > d$.
However, this is not the case for a critical point
giving the global minimum zero of the loss; this is replaced by a 1-dimensional connected simplicial complex each point of which defines the 
global minimum and no new non-degenerate critical points defining the global minimum are created. 
\rend
\end{rem}

Our interest here lies in $d$-dependent families of critical points of $\grad{\mathcal{L}}$  
which have isotropy a subgroup of $\Delta S_d$ (strictly, conjugate to a subgroup of $\Delta S_d$).  

For any subgroup $H$ of $G$, define the \emph{fixed point space} for the action of $H$ on $M(k,d)$ by
\[
M(k,d)^H = \{\WW \dd h\WW = \WW,\;\text{for all }h \in H\} \subset M(k,d).
\]
The fixed point space $M(k,d)^H$ is a vector subspace of $M(k,d)$  
which is invariant under the flow of any equivariant vector field on $M(k,d)$. 
More precisely, wherever the vector field is defined on on the fixed point space $M(k,d)$ , it will be tangent to $M(k,d)^H$. 
Consequently, gradient descent for $\grad{\mathcal{L}}$ which is initialized on $M(k,d)^H$ stays on $M(k,d)^H$. 

We similarly define the
fixed point space $M(k,d+1)^H$ for biased networks. All the properties stated above for $M(k,d)^H$ continue to hold for $M(k,d+1)^H$. 

The invariance of the fixed point space by gradient dynamics is crucial: in the search for critical points, it allows the search to start on the typically low dimensional fixed point space.

\subsection{Families of subgroups of $\Delta S_d \subset S^r_k \times S^c_d$}\label{sec: fixsp}
Given $f \in\pint$, suppose $q_i \in \pint$, $i \in [f-1]$ and that $q_1 \ge q_2 \ge \ldots \ge q_{f-1} \ge 1$ 
(the decreasing assumption is not strictly necessary but it makes the exposition simpler and can always be obtained by an 
appropriate permutation of rows and columns). 
Given $d >  \sum_{i \in [f-1]} q_i$, set $q_0 = d - \sum_{i \in [f-1]} q_i$.  Choose $d_0 \in \pint$ such that $q_0 \ge q_1$, all $d \ge d_0$---in particular, 
$q_1,\ldots,q_{f-1}$ will be constants and $q_0$ depends on $d$ and $q_1,\ldots,q_{f-1}$. 

Note that if $1=q_0\ge q_1 \ge \ldots \ge q_{f-1}\ge 1$, then 
the isotropy is trivial---the \emph{asymmetric case}. Although we give examples of asymmetry later, we usually assume that
$q_0 > 1$ and so there will be non-trivial symmetries. 

Given $f \in\pint$, $q_0,\ldots,q_{f-1}\ge 1$ define 
\begin{equation}\label{EQ: isotropy}
G^\Delta_{\mathbf{q}} = G^\Delta_{q_0,\cdots,q_{f-1}} = \Delta\left( \prod_{i \in \is{f}} S_{q_i}\right) \subset \Delta S_d,
\end{equation}
and regard $G^\Delta_{\mathbf{q}}$ as acting on $M(k,d), k\ge d$. 
\begin{rems}\label{rem: act}
(1) If we assume that $q_1 > \ldots > q_{f-1} \ge 1$ and $d_0$ is chosen sufficiently large so that $q_0 > q_1$, 
the row order is uniquely determined up to row $d$ (see (4) below). \\
(2) If $f = 2$, and $q_0 \ne q_1$, then $G^\Delta_{\mathbf{q}}$ is a maximal proper subgroup of $\Delta S_d$. A necessary condition for maximality is $q_0 \ne q_1$  (see~\cite[\S5.3]{ArjevaniField2021x} for precise statements and references). 
If $q_0 = q_1 > 1$, then the maximal proper subgroup is the wreath product $C_2 \wr S_p$, where $C_2\approx S_2$ 
is the cyclic group of order $2$ and $p = q_0=q_1$.  If there are more than two factors, $G^\Delta_{\mathbf{q}}$ 
is not a maximal proper subgroup. It is possible to have non-degenerate critical points $\WW \in M(2d,2d)$ 
such that $q_0 = q_1 = d$, 
$G_\WW = G^\Delta_{\mathbf{q}}$, the first $d$ rows of $\WW$ are not parallel to any of the remaining $d$ rows and there is no
symmetry relation between the two sets of rows. \\
(3) If $\WW$ is a
critical point of $\grad{\mathcal{L}}$ with isotropy $G^\Delta_{\mathbf{q}}$ then, by the $G$-equivariance of $\grad{\mathcal{L}}$, there are 
$|G|/|G^\Delta_{\mathbf{q}}| = k!d!/(q_0!\ldots q_{f-1}!)$ critical points on the $\Gamma$-orbit of $\WW$. \\
(4) If $k > d+1$ and $q_{f-1} \ge 2$, set $p = k-d$, and if  $k \ge d+1$ and $q_2 > q_1 = 1$, set $p=k-d-1$.  Noting the standing assumption of no
parallel rows, if $G_\WW = G^\Delta_{\mathbf{q}}$, then there is a free action by 
$S^r_p$ on the last $p$ rows of $\WW$ (and so on $M(k,d)^{G^\Delta_{\mathbf{q}}}$).\\
(5) If $q_{f-R_a} = q_{f-R_a+1} = \ldots = q_{f-1} = 1$ and $q_i >1$, $i < f-R_a$, there is an $R_a\times R_a$-diagonal 
asymmetric block in $\WW$. Here it is natural to set  
$G_\WW = \Delta\left( \prod_{i \in \is{f-R_a}} S_{q_i}\right) \subset \Delta S_{d-a}\subset \Delta S_d$. \rend
\end{rems}
\subsubsection{Review of notation and conventions}
Assume $f \in \pint$ and we are given integers $q_1 \ge q_2 \ge \ldots \ge q_{f-1} \ge 1$. Choose $d_0 \in \pint$ so that
$q_0 = d - \sum_{i \in [f-1]} q_i \ge 2$ for all $d \ge d_0$. Let $G^\Delta_{\mathbf{q}}\subset \Delta S_d$ be defined
as in \Refb{EQ: isotropy}. Set $m = k-d$ and assume that $m$ is constant for $d \ge d_0$.

Set $R = f + m$---the number of \emph{rowtypes}: one for each factor in $G^\Delta_{\mathbf{q}}$ and one for each additional row.
Let $R_s$ be the number of values of $i \in \is{f}$ for which $q_i > 1$. Since $q_0 \ge 2$, $R_s \in [f]$.
Let $\RS = \{i \in \is{f} \dd q_i > 1\}$. Since we assume  $(q_i)_{i\in \is{f}}$ is a decreasing sequence,  $\RS = \is{R_s}$.
Let $\RA = \{i\in \is{f} \dd q_i = 1\}$ and let $R_a=f - R_s$ count the number of factors with $q_i = 1$. Obviously, 
$\RA\cup\RS = \is{f}$ and $\RA =\is{f}\smallsetminus\RS$. The rowtypes in $\RA$ define the asymmetric diagonal block.
Finally, define $\RM = \{i\in \is{R}\dd i \ge f\}$; that is, the rowtypes associated to neurons with rowtype at least $f$.

Recall from Section~\ref{sec: sec4} that we label rows of $\WW \in M(k,d)$ from $0$ to $k-1$ rather than from $1$ to $k$.
That is, for $i \in \is{k}$, $\ww^i$ is row $i$ of $\WW$. We have a similar convention for rowtypes. For example, if
$\WW \in M(d,d)^{G^\Delta_{\is{q}}}$ (or $M(d,d+1)^{G^\Delta_{\is{q}}}$ if the network is biased), there is 1 rowtype $0$ 
and the first row of $\WW$ is $\ww^0$. In general, $\ww^i$ typically represents the first $\WW$ row of rowtype $i$; that is $\ww^{I(i)}$. 
The context will always make the meaning clear.

If $\text{dim}(M(k,d)^{G^\Delta_{\is{q}}}) = N$ (unbiased), then
\[
\text{dim}\left(M(k,d+1)^{\Delta_{\mathbf{q}}}\right) = N + R,\;\;\text{biased}
\]
\begin{rem}
The convention of indexing from $0$ rather than $1$ is made to reduce confusion and error when coding arrays.
If the reader prefers, $\is{d}$ can be interpreted as $[d]$; this results in few changes to the text. \rend
\end{rem}

\begin{lemma}[{\cite[Appendix A]{ArjevaniField2022b}} \& below]\label{LEM: FP}
(Notation and assumptions as above.)
For $d \ge d_0$,
\begin{eqnarray*}
\text{dim}\big(M(k,d)^{G^\Delta_\mathbf{q}}\big)&=&f(f+m) + R_s \\
\text{dim}\big(M(k,d+1)^{G^\Delta_\mathbf{q}}\big)&=&(f+1)(f+m) + R_s
\end{eqnarray*}
In particular,
\begin{enumerate}
\item  If $f = 1$, then 
\begin{eqnarray*}
\text{dim}\big(M(k,d)^{G^\Delta_\mathbf{q}}\big)&=&2+m \;\text{\rm (unbiased)} \\ 
\text{dim}\big(M(k,d+1)^{G^\Delta_\mathbf{q}}\big)& = &3+2m \;\text{\rm (biased)}
\end{eqnarray*}
\item  If $f =2$ and $q_1 = 1$, then 
\begin{eqnarray*}
\text{dim}\big(M(k,d)^{G^\Delta_\mathbf{q}}\big)&=&5+2m \;\text{\rm (unbiased)} \\ 
\text{dim}\big(M(k,d+1)^{G^\Delta_\mathbf{q}}\big)& = &7+3m \;\text{\rm (biased)}
\end{eqnarray*}
\item  If $f =2$ and $q_1 > 1$, then 
\begin{eqnarray*}
\text{dim}\big(M(k,d)^{G^\Delta_\mathbf{q}}\big)&=&6+2m \;\text{\rm (unbiased)} \\ 
\text{dim}\big(M(k,d+1)^{G^\Delta_\mathbf{q}}\big)& = &8+3m \;\text{\rm (biased)}
\end{eqnarray*}
\end{enumerate}
\end{lemma}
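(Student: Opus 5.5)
The plan is to compute the fixed point space directly by counting orbits. Write $H = G^\Delta_{\mathbf{q}} = \Delta(\prod_{i\in\is{f}} S_{q_i})$. It acts on $M(k,d)$ by $(g,g)\WW = P_g \WW P_g^{-1}$, where the first $d$ rows are permuted by $g$ and rows $d,\ldots,k-1$ are left fixed.

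A matrix is $H$-fixed exactly when its entries are constant on the $H$-orbits of index pairs $(a,b)\in \is{k}\times\is{d}$. Hence $\text{dim}\,M(k,d)^H$ equals the number of such orbits. For the biased space $M(k,d+1)$, the extra column $\boldsymbol{\bbeta}$ is permuted only by the row action. Its contribution is therefore the number of $H$-orbits on $\is{k}$.

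First partition $\is{d}$ into the blocks $B_0,\ldots,B_{f-1}$ with $|B_i| = q_i$. Then partition $\is{k}$ into these same blocks together with the $m$ singleton rows $d,\ldots,k-1$; these are the $R = f+m$ rowtypes. The $H$-orbits on $\is{k}$ are exactly the $R$ rowtypes, since $S_{q_i}$ is transitive on $B_i$ and fixes the extra rows. This already gives the bias contribution $R = f+m$.

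For pairs $(a,b)$, split by the rowtype of $a$ and the block of $b$. Take $a\in B_i$ and $b\in B_j$ with $i\ne j$. Then $S_{q_i}\times S_{q_j}$ acts transitively on $B_i\times B_j$, giving one orbit. Take $a$ an extra row and $b \in B_j$. Then $S_{q_j}$ is transitive on $B_j$, again giving one orbit. Now take $a,b\in B_i$ with $i$ the same block. The diagonal action of $S_{q_i}$ on $B_i\times B_i$ has two orbits, the diagonal and the off-diagonal, when $q_i\ge 2$. When $q_i = 1$ there is only one orbit. The total count of orbits on pairs is therefore
\[
(f+m)\cdot f + R_s ,
\]
where the first term counts one orbit per (rowtype, column block) pair and $R_s$ adds the extra off-diagonal orbit for each block with $q_i>1$.

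This gives $\text{dim}\,M(k,d)^H = f(f+m)+R_s$. Adding the $R=f+m$ bias orbits gives $(f+1)(f+m)+R_s$.

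The special cases follow by substitution. When $f=1$, we have $R_s=1$ because $q_0\ge2$. When $f=2$ and $q_1=1$, we have $R_s=1$. When $f=2$ and $q_1>1$, we have $R_s=2$. These values give $2+m,3+2m$; $5+2m,7+3m$; and $6+2m,8+3m$ respectively.

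The only point needing care is that the orbit count is uniform in $d\ge d_0$. This holds because the block structure is fixed and $q_0\ge 2$ for all such $d$. One must also check that the diagonal $\Delta$ action does not merge orbits across different blocks, which is clear because $H$ preserves each $B_i$. I expect no real obstacle here: the whole argument is orbit counting, and it reproduces the bias-free count from \cite[Appendix A]{ArjevaniField2022b}.
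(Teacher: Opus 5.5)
Your proposal is correct and is essentially the paper's own argument. The paper's sketch also counts orbits on the block decomposition $\{B_{ij}\}$: one orbit for each off-diagonal block and for each $1\times q_j$ block of an extra row, two orbits for each diagonal block with $q_i>1$, and then $f+m$ rowtype orbits added for the bias column; your phrasing in terms of $H$-orbits on index pairs is a slightly more explicit version of the same count.
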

\subsubsection{Sketch of the proof of Lemma~\ref{LEM: FP}} Start by assuming bias free.
Observe that $S_m^r\subset S_k^r$ acts (freely) on the last $m$ rows of $M(k,d)$ and so acts on the fixed point space and does not enter into the 
computation of the dimension.  Suppose that $k = d$.
We have a natural block decomposition of 
$\mathcal{B}=\{{B}_{ij}\dd i \in \is{R}, j \in \is{f}\}$ of $M(d,d)$ which is preserved by the action of $G^\Delta_\mathbf{q}$.
Specifically, the diagonal blocks $B_{ii}$ will be of size $q_i \times q_i$ and invariant by 
$\Delta S_{q_i} \subset G^\Delta_\mathbf{q}\times S^r_m$. The off diagonal blocks $B_{ij}$, $i \ne j$, will be invariant by
$S_{q_i}^r \times S^c_{q_j}$. Since the action of $S_{q_i}^r \times S^c_{q_j}$ on $B_{ij}$ is transitive if $i \ne j$, it follows that
$B_{ij}$ is fixed by $S_{q_i}^r \times S^c_{q_j}$ iff all entries are equal. On the other hand, the
action on diagonal blocks $B_{ii}$ has two group orbits if $q_i > 1$: the set of diagonal elements and the set of off-diagonal elements. If $k > d$,
we have to consider the action induced by $S_d$ on columns. Here each of the $k-d$ additional rows has a block decomposition  into
$1 \times q_j$ matrices $B_{ij}$ and $S^c_{q_j}$ acts transitively on $B_{ij}$. Each of these blocks contributes 1 to the dimension of the fixed point space.

In general, we have
\[
\text{dim}\big(B_{ij}^{G^\Delta_\mathbf{q}}\big) = \begin{cases}
&2,\;\text{if $i = j \in \RS$} \\
&1,\; \text{otherwise} 
\end{cases}
\]
This argument suffices to prove the bias free case. Adding bias increases the dimension by the number of rowtypes which is $f+m$. \qed
\begin{rem}
Suppose that $k = d$ and $R_a \ge 1$ so that there is at least one $q_i$ equal to $1$. After a permutation of rows and columns 
we may assume that $q_0\ge q_1 \ge \ldots \ge q_{f-1} \ge 1$. Let $P = \cup_{f-R_a \le i,j < f} B_{ij}$. Then $P$
is an $R_a \times R_a$ diagonal block for which
$ \text{dim}\big(P^{G^\Delta_\mathbf{q}}\big) = R_a^2 $.  Since there are no symmetry constraints on $P$---the row and column actions of $(S_1)^{f-R_s}$ on $P$ are trivial---$P$ can be regarded as an asymmetric block with no symmetry.
In particular, if $R_s = 0$, there are no symmetry constraints at all on $P \in M(d,d)$ ($d = f$).  If we have a non-degenerate symmetric critical point $\WW$ of $\grad{\mathcal{L}}$ and deform $\VV$ to an
asymmetric target (no non-trivial symmetries of $\Delta S_d$), then as we deform $\VV$, $\WW$ will deform to an asymmetric 
non-degenerate critical point.  An example is given later. Similar comments hold when $m > 0$ and $\WW \in M(k,d)$ and/or 
when the network is biased.
\rend
\end{rem}
\subsubsection{Properties of rowtypes} 
Assume first that $k = d$ (so $m = 0$) and the network is unbiased.
Let $\WW \in M(d,d)$ have isotropy subgroup $G = G^{\Delta}_\mathbf{q}\subset\Delta S_d$ where $q_0\ge q_1 \ge \ldots \ge q_{f-1}\ge 1$. As described above, there is natural block decomposition 
$\mathcal{B} = \{B_{ij}\dd i, j \in \is{f}\}$ of
$\WW$ with diagonal blocks $B_{ii}$ consisting of $q_i \times q_i$-matrices which are
fixed by $\Delta S_{q_i}$. The action is non-trivial iff $i \in \RS$ (equivalently, $i \in [0,R_s-1]$). 
A row $\ww$ of $W \in M(d,d)^G$ is of rowtype $i \in \is{f}$ iff $\ww$ meets the diagonal block $B_{ii}$.  Rows of rowtype $i$ are permuted by $\Delta S_{q_i}$. 

Suppose now that $k = d+m$, $m > 0$ and that the isotropy subgroup of $\WW$ is $G = G^{\Delta}_\mathbf{q}$ as above. 
It is always assumed that there are no parallel rows and so the last $m$ rows of $\WW$ are not symmetrically related to any of the first $d$ rows of $\WW$.

The $f+m$ rowtypes of $\WW$ are indexed by $\is{R}$. If $i\in\RS$, then $q_i > 1$ and the set of $q_i$ rows of rowtype $i$ are permuted by $\Delta S_{q_i}$.   The set of rowtypes for which there is exactly one row of each rowtype is
parameterized by \{$i \in \is{R} \dd i \ge R-R_s = f+m -R_s\}$. Note that if $q_i = 1$, $i \in \is{f}$, then $i \in [\RS,f)$ and if $i \ge f$, then  $\ww^{d-f+i}$ will be the unique $\WW$ row of rowtype $i$. 

Suppose $i \in \RS$ so that $q_i > 1$ and that $\ww^a,\ww^b$ are both of rowtype $i$. Since $\Delta S_{q_i}$ acts transitively on the set of rows of rowtype $i$, it follows by equivariance that (a) $\|\ww^a \| = \|\ww^b\|$. (b)
the angle $\Theta_i$ between $\ww^a,\ww^b$ is independent of the choice of $a \ne b$; similar results hold for $\Lambda_{ij}$, where $j$ is not of rowtype $i$. Similarly, the angle $\theta_i$ between $\ww^a,\vv^b$, where $b\ne a$ but
$\ww^a,\ww^b$ are of the same rowtype, is independent of the choice of $a \ne b$. Similar results hold for the angles $\lambda_{ij}$, where $j$ is not of rowtype $i$ or $i=j$ (that is $\ww^i, \vv^i$ are in the same row).

\subsection{Coordinatization of $M(k,d)^{G_\mathbf{q}^\Delta}$ \&  $M(k,d+1)^{G_\mathbf{q}^\Delta}$}\label{sec: coordinatization}
First assume the unbiased case. There is a natural linear isomorphism $\phi$ (but \emph{not} isometry---see Remark~\ref{rem: isometry} at the end of this section):
\[
\phi: M(k,d)^{G_\mathbf{q}^\Delta} \approx \left(\prod_{i \in \is{R}} \real^f\right)  \times \real^{r_s}
\]
and $\phi$ is used to coordinatize $M(k,d)^{G_\mathbf{q}^\Delta}$. In particular, we compute vector fields on the fixed point space using $\phi$ coordinates.

Given $\WW \in M(k,d)^{G^\Delta_\mathbf{q}}$, let $i \in \is{R}$, $j \in \is{f}$. If $i \ne j$, define $\xi^i_j$ to be any element of $B_{ij}^{G_\mathbf{q}^\Delta}$.
If $i = j$,  define $\xi^i_i$ to be any diagonal element of $B_{ii}^{G_\mathbf{q}^\Delta}$.
If $i = j$ and $q_i > 1$---that is, $i \in \RS$---define $\xi_{\star i}^i$ to be any off-diagonal element of $B_{ii}^{G_\mathbf{q}^\Delta}$,

The construction defines the required linear isomorphism 
\[
\WW \mapsto (\bxi,\bxi_{\star})\in \left(\prod_{i \in \is{R}} \real^f\right)  \times \real^{r_s}
\]
All rows of $\WW$ of rowtype $i\in \is{f}$ get mapped to the same point $((\xi^i_j)_{j \in \is{f}},\xi^i_{i\star})$ 
(the `star' term is omitted if $q_i = 1$). In particular the superscript $i$ is the rowtype of the rows
parameterized by $((\xi_0^i,\ldots,\xi_{f-1}^i),\xi_{\star i}^i)$ or $(\xi_0^i,\ldots,\xi_{f-1}^i)$, if $i\notin\RS$.

For biased networks, $M(k,d+1)^{G^\Delta_\mathbf{q}} \approx M(k,d)^{G^\Delta_\mathbf{q}}\oplus M(1,d)^{G_{\is{q}}^r}$, where $G_{\is{q}}^r$
is the projection of $G_\mathbf{q}^\Delta$ onto $\prod S_{q_i}^r \subset S_d^r$.
\begin{exam}\label{EX: simple}
Suppose that $\WW \in M(k,d+1)$, $G_\WW = \Delta(S_{d-1} \times S_1)$ and $m = 1$. Assuming normalized bias,
$\text{dim}( M(k,d+1)^{G^\Delta_{(q_0,1)}}) = 10$ and 
\[
\WW = \left[\begin{matrix}
\xi^0_{0}& \xi_{\star 0}^0&\ldots&\xi_{\star 0}^0&\xi^0_1&\bbeta_0  \\
\xi^0_{\star 0}& \xi_{0}^0&\ldots&\xi^0_{\star 0}&\xi^0_1&\bbeta_0 \\
\ldots&\ldots&\ldots&\ldots&\ldots&\ldots\\
\ldots&\ldots&\ldots&\ldots&\ldots&\ldots\\
\xi^0_{\star 0}& \xi_{\star 0}^{0}&\ldots&\xi^0_{0}&\xi^0_1&\bbeta_0 \\
\xi^1_{0}& \xi^1_{0}&\ldots&\xi^1_0&\xi^1_1&\bbeta_1 \\
\xi^2_0& \xi^2_0&\ldots&\xi^2_0&\xi^2_1&\bbeta_2 
\end{matrix} \right]
\]
The penultimate row corresponds to the $S_1$ factor and the final row to the additional neuron. In this case
\[
\phi(\WW,\bbeta) = ((\xi^0_0,\xi^0_1,\bbeta_0,\xi^1_0,\xi^1_1,\bbeta_1,\xi^2_0,\xi^2_1,\bbeta_2),\xi_{\star 0}^0) \in \real^9 \times \real
\]
Ignoring the bias, the number of coordinates needed is $3$ (for the first row) and $2$ each for the last two rows. A total of $7$ coordinates, as required by Lemma~\ref{LEM: FP}(2).
Three additional parameters are needed when bias is added.
\examend
\end{exam}
\begin{rem}\label{rem: isometry} As indicated in the definition of the coordinatization of $M(k,d)^{G_\mathbf{q}^\Delta}$ \&  $M(k,d+1)^{G_\mathbf{q}^\Delta}$, 
the coordinatization map $\phi$ is not an isometry and so does not preserve the inner product. In particular, we do not recover $\grad{\mathcal{L}}$ if we compute the gradient of $\mathcal{L}$ in $\phi$ (Euclidean) coordinates.
If we arrange matters so that $\phi$ is an isometry, then we do recover $\grad{\mathcal{L}}$.
There are two ways of doing this: (a) leave $\phi$ unchanged but change the inner product on the range space so that $\phi$ is an isometry, (b) redefine $\phi$ so that
$\phi$ is an isometry. In case (b) the coordinate $\xi$ corresponding to a $p \times q$ block B, with all entries $\xi$, will be mapped to $\sqrt{pq}\xi$ rather than $\xi$. A diagonal block $\xi I_p $ will be mapped to $\sqrt{p}\xi$. In Example~\ref{EX: simple}, we modify $\phi$ to the isometry $\widetilde{\phi}:M(k,d+1)^{G^\Delta_{(q_0,1)}}\arr\real^9 \times \real$ defined by
\[
\widetilde{\phi}(\WW,\bbeta) = ((\alpha_0\xi_0^0,\alpha_0\xi_q^0,\alpha_0\bbeta_0,\alpha_0\xi^1_0,\xi_1^1,\bbeta_1,\alpha_0\xi^2_0,\xi^2_1,\bbeta_2),\alpha_1\xi_{\star 0}^0)
\]
where $\alpha_0 = \sqrt{d-1}$, $\alpha_1 = \sqrt{(d-1)(d-2)}$ and, as usual, we take the Frobenius (Euclidean) norm on $M(k,d+1) \approx \real^{k(d+1)}$. 
However, we do not follow either scheme since both (a) and (b) add some complexity but yield no additional information.  
Indeed, the linearization of  $\grad{\mathcal{L}}$ at a critical point is similar to the linearization of the vector field in $\phi$ coordinates by a diagonal matrix. 
Moreover, Sylvester's test for strictly positive spectrum applies to the linearization of $\grad{\mathcal{L}}$ in $\phi$-coordinates 
with no change even though there is no Hessian (that is, the linearization matrix is not usually symmetric). \rend
\end{rem}

\subsection{Critical point equations for $\grad{\mathcal{L}}$ on $M(k,d+1)^{G^\Delta_{\mathbf{q}}}$}\label{sec: sec9.6}
In sections~\ref{sec: coeffs} \& \ref{sec: betaderiv} formulas were given for the components of $\grad{\mathcal{L}}$ when there is bias but no symmetry. 
Using the crucial notation of rowtype, the aim is to extend these results to take account of symmetry. In this way, relatively simple critical point equations are obtained that are amenable 
to analysis and simulation for large values of $d$ (assuming, for example, that $q_1,\ldots,q_{f-1}$ are fixed if $f > 1$). The first step is to expand the table of coefficients given in section~\ref{SEC: table} 
to allow for rowtypes so that we can rewrite the formula for the components of $\grad{\mathcal{L}}$ in terms of rowtypes. 
Since there will generally be multiple rows with the same rowtype, this will lead to a reduction of the number of critical point equations defined on the fixed point space. 

The coefficients defined in section~\ref{SEC: table} were parametrized by a pair of rows (one from $\WW$, the other from $\WW$ or $\VV$). 
We now work in terms of rowtypes and define families of coefficients for rows of rowtype $i$ where $i \in \RS$. There will be one set of critical 
point equations for each rowtype. The number of equations for rowtype $i$ will depend on
whether or not $i \in \RS$---there will be one `extra' equation if $i \in \RS$ otherwise there will be $f$ equations. We start with the easiest  case $i \notin \RS$.
\subsection{Critical point equations associated to rowtype $i \notin \RS$}

If $k = d$, the integers $q_i$ are defined for all the rowtypes. If $k = m+d$, $m > 0$,  define $q_j= 1$ for each of the $m$ additional rows (strictly, rowtypes) $\ww^j$, $j \in \RM$. 
In what follows, networks are assumed to be biased and we usually write $\WW$ (or $\widetilde{\WW}$) rather than $(\WW,\boldsymbol{\bbeta})$.

Given $\WW \in M(k,d+1)^{G^\Delta_{\mathbf{q}}}$, denote the rowtype of $\ww^j$ (row $j$ of $\WW$) by $r(j)\in\is{R}$.  
Conversely, if $i$ is a rowtype, let $\is{i}_R$ denote the ordered set parameterizing the rows $\ww^a$ for which $r(a) = i$. For example, taking $G_{\mathbf{q}}^\Delta$ and $q_2 \ge q_1 > q_2 = 1$,
\begin{eqnarray*}
\is{0}_R& = &\{0,1,\ldots,q_0-1\}\\
\is{1}_R& = &\{q_0,1,\ldots,q_0+q_1-1\}\\
\is{2}_R& = &\{q_0+q_1\}
\end{eqnarray*}
Given the rowtype $i$, let $I(i)$ denote the initial  element of $\is{i}_R$. Thus 
the rows of rowtype $i$ in $\WW\in M(k,d)^{G^\Delta_{\mathbf{q}}}$ are $\ww^{I(i)},\ldots,\ww^{I(i)+q_i-1}$ and
$ \cup_{i \in \is{R}}\is{i}_R = \is{k}$.
 
Generally, we label specific rows of $\WW$ by $a,b,\ldots \in \is{k}$ \emph{except that for the first row $I(i)\in \is{i}_R$ we often write $\ww^i$}. That is, $\ww^i$ may denote 
either a general row of rowtype $i$ \emph{or}, more usually, the first row of rowtype $i$ in $\WW$.  In order to find the critical point equations on 
the fixed point space in $\bxi,\bxi_\star$ coordinates, it suffices to determine the
components of $\grad{\mathcal{L}}(\WW)^i$, for $i \in \is{R}$ (in terms of row number, for $i =0,q_0,q_0+q_1,\ldots$). 
Here $\ww^i$ will have coordinates $\big((\xi^i_j)_{j\in\is{f}},\xi_{\star i}^i\big)$ if $i \in\RS$, otherwise $(\xi^i_j)_{j\in\is{f}}$. In other words, when working with $\bxi$ coordinates on 
$\left(\prod_{i \in \is{R}} \real^f\right)  \times \real^{r_s}$, rowtype is the natural parameter as opposed to the actual row number.

We continue with the same definitions of $\mathcal{U}_{ij},\ldots,\mathcal{V}_{ij}$ used in section~\ref{sec: Uij} except that we use an overline to emphasize that everything is in terms of rowtype and take account of 
multiplicities. 

Thus, for $i \in \is{R}$, define
\begin{multline*}
\begin{aligned}
\overline{\mathcal{U}}_{ij} = & = e_{ij}^X + \bbeta_i\bgamma_j\big(\ell_{ij} + \pi - \wlambda_{ij}\big) + e^{-k_{ij}}\sin(\lambda_{ij}),\; j \in \is{f}\\
\overline{\mathcal{V}}_{ij} = & \ell_{ij} + \pi - \widetilde{\lambda}_{ij},\; j \in \is{f} \\
\overline{\mathcal{W}}_{ij} =& E_{ij}^X + \bbeta_i\bbeta_j\big(L_{ij} + \pi - \wLambda_{ij}\big) + e^{-K_{ij}}\sin(\Lambda_{ij}),\; j \ne i\\
\overline{\mathcal{Y}}_{ij} = & L_{ij} + \pi - \widetilde{\Lambda}_{ij},\; j \ne i \\
\overline{\mathcal{X}}_i  =&  1+X_i
\end{aligned}
\end{multline*}
Note that the condition $j \ne i$ is on rowtypes and so if $i \in \RS$ many terms are missed. This point is addressed shortly.

\begin{Def}
For $j \in \RS$ define
\begin{enumerate}
\item $\ww^j_\Sigma= \sum_{r(a) = j} \ww^a$. 
\item $\vv^j_\Sigma= \sum_{r(a) = j} \vv^a$. 
\end{enumerate}
(If $j \in \is{R}\smallsetminus \RS$, then $\ww^j_\Sigma$ is defined to be $\ww^j$; similarly for $\vv^j_\Sigma$).
\end{Def}
\begin{rem}\label{rem: block_decomp}
The row vector $\ww^j_\Sigma$ may be written as a $1 \times d$-matrix with block decomposition $[(a_{j0})^{q_0}],\ldots,[(a_{j {f-1}})^{q_{f-1}}]$ where
\[
a_{j \alpha} = \begin{cases}  
&q_j\xi^j_\alpha,\; \text{if } \alpha \ne j \\
&\xi^j_j + (q_j-1)\xi_{\star j}^j,\; \text{if } \alpha = j
\end{cases}
\]
Of course, if $q_j = 1$ (and so $j \notin \RS$), applying the definition, gives  $\ww^j_\Sigma= \ww^j$, $\vv^j_\Sigma = \vv^j$. \rend
\end{rem}

\begin{prop}\label{prop: asym}
(Notation and assumptions as above.) For $i \in \is{R}\smallsetminus \RS$, component $i$ of $\grad{\mathcal{L}}(\WW)$ is given by
\begin{equation}
\grad{\mathcal{L}} (\WW)^i =\Gamma_i\ww^i - \is{A}^i +\is{B}^i + \boldsymbol{\Omega}
\end{equation}
where
\begin{eqnarray*}
\Gamma_i & = &\frac{X_i}{2} + \frac{1}{2\pi}\left[\sum_{j\in\is{R},j\ne i}q_j\frac{\|\ww^j\| e^{-K_{ij}}\sin(\Lambda_{ij})}{\|\ww^i\|}-\sum_{j\in\is{f},j\ne i}q_j\frac{\|\vv^j\| e^{-k_{ij}}\sin(\lambda_{ij})}{\|\ww^i\|}\right]\\
&&+\frac{1}{2\pi}\left[\sum_{j\in\is{R},j\ne i}q_j\frac{\|\ww^j\|E_{ij}^X}{\|\ww^i\|}-\sum_{j\in\is{f},j\ne i}q_j\frac{\|\vv^j\|e_{ij}^X}{\|\ww^i\|}\right]-\frac{1}{2\pi}\left(\frac{\|\vv^i\|\big(e^{-k_{ii}}\sin(\lambda_{ii})+e_{ii}^X\big)}{\|\ww^i\|}\right)\\
&&+\frac{\bbeta_i}{2\pi}\left[\sum_{j\in\is{R},j\ne i}q_j\frac{\bbeta_j\|\ww^j\|\big(L_{ij}+(\pi-\wLambda_{ij})\big)}{\|\ww^i\|} 
-\sum_{j\in\is{f},j \ne i}q_j\frac{\bgamma_j\|\vv^j\|\big(\ell_{ij}+(\pi-\wlambda_{ij})\big)}{\|\ww^i\|}\right] \\
&& -\frac{\bbeta_i\bgamma_i}{2\pi}\frac{\|\vv^i\|\big(\ell_{ii} + (\pi -\wlambda_{ii})\big)}{\|\ww^i\|}
\end{eqnarray*}
\begin{eqnarray*}
\is{A}^i& = &\frac{1}{2\pi}\left(\sum_{j\in\is{R}, j \ne i}\wLambda_{ij}\ww_\Sigma^j - \sum_{j \in \is{f}}\wlambda_{ij}\vv_\Sigma^j\right)\\ 
\is{B}^i& = & \frac{1}{2\pi}\left(\sum_{j\in\is{R},j\ne i}L_{ij}\ww_\Sigma^j-\sum_{j \in \is{f}}\ell_{ij}\vv_\Sigma^j \right)\\
\bOmega & = & \frac{1}{2}\left(\sum_{j\in\is{R}}\ww_\Sigma^j - \sum_{j \in \is{f}}\vv_\Sigma^j\right)=\frac{1}{2}\left(\sum_{a\in\is{k}}\ww^a - \sum_{b \in \is{d}}\vv^b\right)
\end{eqnarray*}
\end{prop}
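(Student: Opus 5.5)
The plan is to start from the general formula $\grad{\mathcal{L}}(\WW)^i=\Gamma_i\ww^i-\is{A}^i+\is{B}^i+\bOmega$ of Section~\ref{sec: Uij}, valid with no symmetry assumption. We then regroup every sum over rows $j\in\is{k}$ (resp.~$j\in\is{d}$) as a sum over rowtypes, using the equivariance facts established in the subsection on properties of rowtypes. Fix $i\in\is{R}\smallsetminus\RS$, so $\ww^i$ is the unique row of its rowtype. The key input is that, for $\WW\in M(k,d+1)^{G^\Delta_{\mathbf{q}}}$ and a rowtype $j$, all rows $\ww^a$ with $r(a)=j$ have the same norm and the same bias $\bbeta_j$. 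Moreover, since $\Delta S_{q_j}$ fixes $\ww^i$ and acts transitively on these rows, the angle $\Lambda_{ia}$ is independent of $a\in\is{j}_R$. The same holds for the target rows $\vv^b$ with $b$ in the $j$th diagonal block ($\VV=I_d$, so $\|\vv^b\|=1$ and $\bgamma_b=\bgamma_j$), giving a common angle $\lambda_{ij}$ whenever $j\ne i$.

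First step: for $j\ne i$, every scalar coefficient appearing in $\Gamma_i$ and in the scalar factors of $\is{A}^i,\is{B}^i$ depends only on norms, biases and angles. These are $K_{ia},E^X_{ia},L_{ia},\wLambda_{ia}$, the lower-case analogues, and $\|\ww^a\|$. Hence each such coefficient is constant on $\is{j}_R$. The scalar sums in $\Gamma_i$ therefore collapse to $q_j$ times the rowtype representative, which gives the $q_j$-weighted sums over $j\in\is{R}$, $j\ne i$ (resp.~$j\in\is{f}$, $j\ne i$). The $\wLambda_{ia}$ and $L_{ia}$ factor out of the vector sums, so $\sum_{a\in\is{j}_R}\wLambda_{ia}\ww^a=\wLambda_{ij}\ww^j_\Sigma$, and likewise for $\vv$ with $\vv^j_\Sigma$.

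Second step: treat the diagonal terms separately. The self term $a=i$ in the $\WW$ sum contributes $\frac{X_i}{2}\ww^i$ together with the $\frac12\ww^i$ absorbed in $\bOmega$, exactly as in the general formula. Since $q_i=1$, the target row $\vv^i$ in the same row as $\ww^i$ is the only target row of type $i$. Its contributions therefore appear unweighted: the separate $-\frac{1}{2\pi}\|\vv^i\|(e^{-k_{ii}}\sin\lambda_{ii}+e^X_{ii})/\|\ww^i\|$ term and the $-\bbeta_i\bgamma_i(\ell_{ii}+\pi-\wlambda_{ii})$ term. In the vector parts $\is{A}^i,\is{B}^i$, the index $j=i$ is simply included in the $\vv$-sum over $\is{f}$, with $\vv^i_\Sigma=\vv^i$. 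The $m$ additional rowtypes in $\RM$ have $q_j=1$ and no target counterpart, so they enter only the $\WW$-sums. Finally, $\bOmega$ is a plain sum over all rows, and grouping by rowtype is trivial.

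The main obstacle is bookkeeping rather than analysis. We must verify carefully that the angle between $\ww^i$ and the rows of another symmetric rowtype is genuinely constant. This requires that the stabilizer of row $i$ in $G^\Delta_{\mathbf{q}}$ still contains $\Delta S_{q_j}$ acting transitively, which holds because $q_i=1$ means $\Delta S_{q_j}$ permutes columns outside block $i$ and fixes $\ww^i$'s block-constant entries. We must also confirm that no target row of type $j\ne i$ coincides in row position with $\ww^i$, so that there are no extra unweighted diagonal terms besides those for $j=i$.
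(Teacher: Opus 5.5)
Your proposal is correct and takes the same route as the paper. The paper's proof is the one-line remark that the formula follows from the general row-indexed expression for $\grad{\mathcal{L}}(\WW)^i$ in Section~\ref{sec: Uij} after grouping rows by rowtype, with $q_i=1$ ensuring no $\Theta_i,\theta_i$ terms appear; you have supplied the equivariance bookkeeping it leaves implicit (constancy of norms, biases and angles on each rowtype). One small point to tidy: for $i\in\RM$ there is no target row $\vv^i$, so the unweighted $ii$ terms should be read as absent, with the $\vv$-sums running over all of $\is{f}$, rather than as coming from a unique target row of type $i$.
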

\begin{proof} Follows from the earlier description of $\grad{\mathcal{L}}(\WW)^i$ for $i\in\is{k}$. Note that since  $i \notin \RS$, $q_i = 1$ and so
none of the displayed terms involve the angles $\Theta_i,\theta_i$ which are only defined if $q_i > 1$. 
\end{proof}
\begin{rems}
(1) The proposition is valid for any row of $\grad{\mathcal{L}} (\WW)$ which is of rowtype $i \notin\RS$. However, it is easiest to think of $i$ as representing both the rowtype and the first row in
$\grad{\mathcal{L}}(\WW)$ which is of rowtype $i$; that is, row $I(i)$.  \\
(2) The only angles that appear in the formula for $\grad{\mathcal{L}} (\WW)^i$ when $i \notin \RS$ are the $\Lambda_{ij}, \lambda_{ij}$ angles defined in Section~\ref{sec: coeffs}.
\rend
\end{rems}
If $i \notin \RS$, then the row vectors $\is{A}^i, \is{B}^i$ have a block decomposition induced from the natural block decomposition on $M(k,d+1)^{G^\Delta_{\mathbf{q}}}$.
If $\VV = I_d$, then $\is{A}^i =[(A_0^i)^{q_0},\ldots, (A_{f-1}^i)^{q_{f-1}}]$ where  $(A^i_j)^{q_j}$ denotes the $j$th $1 \times q_j$ block. This consists of $A^i_j$ repeated $q_j$ times  where
\[
A^i_j = \begin{cases}
&\frac{1}{2\pi}\left[\sum_{\alpha \ne j} q_\alpha \wLambda_{i\alpha}\xi^\alpha_j+  \wLambda_{ij}((q_j-1)\xi^j_{\star j}+ \xi^j_j)-\wlambda_{ij}\right],\; j \ne i\\
& \frac{1}{2\pi}\left[\sum_{\alpha \ne i} q_\alpha \wLambda_{i\alpha}\xi^\alpha_i - \wlambda_{ii}\right],\; j = i
\end{cases}
\]
Similar, but simpler, expressions hold for $B^i_j$ provided $i \notin \RS$. Note that $\is{A}^i$ (resp.~$\is{B}^i$) contains the term $ -\frac{1}{2\pi} \wlambda_{ii} \vv^i$ (resp.~$-\frac{1}{2\pi} \ell_{ii}\vv^i$).  
 
\begin{exam}\label{EX: part1}
Suppose $G_\is{q}^\Delta = \Delta(S_{d-1} \times S_1)$, so that $q_0 = d - 1$, $q_1 = 1$. Assume $m = 0$ and $\VV = I_d$. Since $1 \notin \RS$, Proposition~\ref{prop: asym} may be used to 
compute $\mathbf{A}^1$ and $\mathbf{B}^1$.
We have
\begin{eqnarray*}
{A}^1_0& = & \frac{1}{2\pi}\left[\big(\xi^0_0 +(q_0-1)\xi^0_{\star 0}\big)\wLambda_{10}-\wlambda_{10}\right], \;
{A}^1_1  =  \frac{1}{2\pi}\left[q_0\xi^0_1\wLambda_{10} -\wlambda_{11}\right] 
\end{eqnarray*}
\begin{eqnarray*}
{B}^1_0& = & \frac{1}{2\pi}\left[\big(\xi^0_0 +(q_0-1)\xi^0_{\star 0}\big)L_{10}-\ell_{10}\right],\;
{B}^1_1  =  \frac{1}{2\pi}\left[q_0\xi^0_1L_{10} -\ell_{11}\right] 
\end{eqnarray*}
where 
\[
L_{10} = -\frac{\pi}{2}\left[\erfb{\frac{\bbeta_1}{\sqrt{2}}}+\erfb{\frac{\bbeta_0}{\sqrt{2}}}\right] 
\]
\[
\ell_{10} = -\frac{\pi}{2}\left[\erfb{\frac{\bbeta_1}{\sqrt{2}}}+\erfb{\frac{\bgamma_0}{\sqrt{2}}}\right], \quad \ell_{11} = -\frac{\pi}{2}\left[\erfb{\frac{\bbeta_1}{\sqrt{2}}}+\erfb{\frac{\bgamma_1}{\sqrt{2}}}\right]
\]
Note that the biases for $\VV$ are all equal if $(\VV,\bgamma)$ has isotropy $\Delta S_d$ and so $\boldsymbol{\bgamma}$ is fixed by $S_{d-1}^r \times S_1^r$---the isotropy group for 
$\boldsymbol{\bbeta}$.

Setting the biases to zero, $\bBB^1 = \is{0}$ and
\[
{A}^1_0  =  \frac{1}{2\pi}\big(\Lambda_{10}(\xi^0_0 +(q_0-1)\xi^0_{\star 0})-\lambda_{10}\big),\quad
{A}^1_1  =  \frac{1}{2\pi}\big(q_0\Lambda_{10}\xi^0_1 -\lambda_{11})
\]
which is seen to be correct using the formula for $\grad{\mathcal{L}}$~\cite{ArjevaniField2021x}. 
\examend
\end{exam}

\subsection{Critical point equations associated to rowtype $i \in \RS$}\label{subsec: rtype}
In the table below, we add new coefficients to the table of section~\ref{SEC: table}.
These will all be scalars with upper and lower case having the same meaning as before. For $i \in \RS$ define
\begin{eqnarray*}
Z_i&=& \big(-\cot(\Theta_i) + \csc(\Theta_i)\big)\bbeta_i \\
z_i&=&-\cot(\theta_i)\bgamma_i+ \csc(\theta_i)\bbeta_i,\; \wzz_i = -\cot(\theta_i)\bbeta_i+ \csc(\theta_i)\bgamma_i  \\
K_i & = & \bbeta_i^2\big(\csc^2(\Theta_i)-\csc(\Theta_i)\cot(\Theta_i)\big)\\
k_i & = & \frac{1}{2}\big[(\bbeta_i^2+\bgamma_i^2)\csc^2(\theta_i)-2\bbeta_i\bgamma_i\csc(\theta_i)\cot(\theta_i)\big]  \\
L_i & = & -\pi\,\erfb{\frac{\bbeta_i}{\sqrt{2}}},\;
\ell_i  = -\frac{\pi}{2}\left[\erfb{\frac{\bbeta_i}{\sqrt{2}}}+\erfb{\frac{\bgamma_i}{\sqrt{2}}}\right]\\
E_i^X & = & -\sqrt{2\pi}\bbeta_i e^{-\frac{\bbeta_i^2}{2}} \erfcb{\frac{Z_i}{\sqrt{2}}}\\
e_i^X& = & -\sqrt{\frac{\pi}{2}}\left[\bbeta_i e^{-\frac{\bgamma_i^2}{2}}\erfcb{\frac{z_i}{\sqrt{2}}}+\bgamma_i e^{-\frac{\bbeta_i^2}{2}}\erfcb{\frac{\wzz_i}{\sqrt{2}}}\right] 
\end{eqnarray*}
In case $i = j$, and $i \in \RS$, define
\begin{multline*}
\begin{aligned}
\overline{\mathcal{U}}_i = &  e_i^X + \bbeta_i\bgamma_i\big(\ell_i + \pi - \wtheta_i\big) + e^{-k_i}\sin(\theta_i)\quad
\overline{\mathcal{V}}_i =  \ell_i + \pi - \wtheta_i \\
\overline{\mathcal{W}}_i =& E_i^X + \bbeta^2_i\big(L_i + \pi - \wTheta_i\big) + e^{-K_i}\sin(\Theta_i)\quad
\overline{\mathcal{Y}}_i =  L_i + \pi - \widetilde{\Theta}_i 
\end{aligned}
\end{multline*}
\begin{Def}
For $i \in \RS$, define
\begin{enumerate}
\item $\ww^i_\sgstar = \sum_{a\in\is{i}_R, a > I(i)} \ww^a$. 
\item $\vv^i_\sgstar = \sum_{a\in\is{i}_R, a > I(i)} \vv^a$.
\end{enumerate}
Both $\ww^i_\sgstar$ and $\vv^i_\sgstar$ are defined to be zero if $i \notin \RS$.
\end{Def}
\begin{rem} 
Unlike when $i \notin \RS$, the vectors $\ww^i_\sgstar$ (resp.~$\vv^i_\sgstar$) are not column averages of the set of rows $\ww$ (resp.~$\vv$) of rowtype $i$ since $\ww^{I(i)}$ (resp.~$\vv^{I(i)}$)
is omitted.  As a result the block of $\ww^i_\sgstar$ defined by the intersection with the diagonal block of $\WW$ with isotropy $\Delta S_{q_i}$ is not a constant block---there are two different coordinate values. \rend
\end{rem}

\begin{prop}\label{prop: sym}
(Notation and assumptions as above.)\\ For $i \in \RS$ (or $i \in \is{R}$), component $i$ of $\grad{\mathcal{L}}(\WW)$ is given by
\begin{equation}
\grad{\mathcal{L}} (\WW)^i =\Gamma_i\ww^i - \is{A}^i +\is{B}^i + \boldsymbol{\Omega}
\end{equation}
where $\Gamma_i, \is{A}^i,\is{B}^i$ and $\boldsymbol{\Omega}$ are defined by:
\begin{scriptsize}
\begin{eqnarray*}
\Gamma_i & = & \frac{X_i}{2}+\frac{1}{2\pi}\left[\frac{(q_i-1)\big(\|\ww^i\|e^{-K_i}\sin(\Theta_i) - \|\vv^i\|e^{-k_i}\sin(\theta_i)\big) - \|\vv^i\|e^{-k_{ii}}\sin(\lambda_{ii})}{\|\ww^i\|}\right]\\
&& + \frac{1}{2\pi}\left[\frac{(q_i-1)\big(\|\ww^i\|E_i^X - \|\vv^i\|e_i^X\big)-\|\vv^i\|e^X_{ii}}{\|\ww^i\|}\right]\\
&& +\frac{\bbeta_i}{2\pi}\left[\frac{(q_i-1)\big[\bbeta_i\|\ww^i\|\big(L_i +(\pi-\wTheta_i) \big) -\bgamma_i\|\vv^i\|\big(\ell_i +(\pi - \wtheta_i)\big)\big]
 - \bgamma_i\|\vv^i\|\big(\ell_{ii} +(\pi-\wlambda_{ii})\big)}{\|\ww^i\|} \right] \\
&&+\frac{1}{2\pi}\left[\sum_{j\in\is{R},j\ne i}q_j\frac{\|\ww^j\| e^{-K_{ij}}\sin(\Lambda_{ij})}{\|\ww^i\|}-\sum_{j\in\is{f},j\ne i}q_j\frac{\|\vv^j\| e^{-k_{ij}}\sin(\lambda_{ij})}{\|\ww^i\|}\right]\\
&&+\frac{1}{2\pi}\left[\sum_{j\in\is{R},j\ne i}q_j\frac{\|\ww^j\|E_{ij}^X}{\|\ww^i\|}-\sum_{j\in\is{f}, j \ne i}q_j\frac{\|\vv^j\|e_{ij}^X}{\|\ww^i\|}\right]\\
&&+\frac{\bbeta_i}{2\pi}\left[\sum_{j\in\is{R},j\ne i}q_j\frac{\bbeta_j\|\ww^j\|\big((\pi-\wLambda_{ij}) + L_{ij}\big)}{\|\ww^i\|} -\sum_{j\in\is{f}, j \ne i}q_j\frac{\bgamma_j\|\vv^j\|\big((\pi-\wlambda_{ij})+\ell_{ij}\big)}{\|\ww^i\|}\right]\\ 
\is{A}^i& = &\frac{1}{2\pi}\left(\sum_{j\in\is{R}, j \ne i}\wLambda_{ij}\ww_\Sigma^j - \sum_{j \in \is{f}, j \ne i}\wlambda_{ij}\vv_\Sigma^j\right) + 
\frac{1}{2\pi}\big(\wTheta_i\ww^i_\sgstar-\wtheta_i\vv^i_\sgstar-\wlambda_{ii}\vv^i\big)\\
\is{B}^i& = & \frac{1}{2\pi}\left(\sum_{j\in\is{R},j\ne i}L_{ij}\ww_\Sigma^j-\sum_{j \in \is{f},j\ne i}\ell_{ij}\vv_\Sigma^j \right)+\frac{1}{2\pi}\big(L_i\ww^i_\sgstar-\ell_i\vv^i_\sgstar-\ell_{ii}\vv^i\big)\\
\bOmega & = & \frac{1}{2}\left(\sum_{j\in\is{R}}\ww_\Sigma^j - \sum_{j \in \is{f}}\vv_\Sigma^j\right)=\frac{1}{2}\left(\sum_{a\in\is{k}}\ww^a - \sum_{b \in \is{d}}\vv^b\right)
\end{eqnarray*}
\end{scriptsize}
\end{prop}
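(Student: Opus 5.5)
The plan is to start from the general row formula of Section~\ref{sec: Uij},
\[
\grad{\mathcal{L}}(\WW)^i=\Gamma_i\ww^i-\mathbf{A}^i+\mathbf{B}^i+\bOmega ,
\]
which holds for every row $i\in\is{k}$ with no symmetry assumption, and evaluate it at the first row $\ww^{I(i)}$ of rowtype $i\in\RS$. The argument is then a bookkeeping exercise in regrouping the sums over $j\in\is{k}$ (rows of $\WW$) and $j\in\is{d}$ (rows of $\VV=I_d$) according to rowtype, exactly as in the proof of Proposition~\ref{prop: asym}. The new ingredient is that the rowtype $i$ now contains $q_i-1$ further rows besides $\ww^{I(i)}$, and the target contains the $q_i-1$ corresponding rows $\vv^a$, $a\in\is{i}_R$, $a\neq I(i)$. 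I would split each sum into three parts. The first is the rows of rowtype $j\ne i$. The second is the rows $a\in\is{i}_R$ with $a\ne I(i)$. The third is the diagonal term, which is $a=I(i)$ itself for the $\WW$ sums and $\vv^{I(i)}$ for the $\VV$ sums.

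For the first part I would use $G^\Delta_{\mathbf q}$-equivariance, via the properties of rowtypes listed before Section~\ref{sec: coordinatization}. For $a$ of rowtype $j\ne i$, the norms $\|\ww^a\|$, the biases $\bbeta_a$, and the angles $\Lambda_{I(i)a}$ and $\lambda_{I(i)a}$ depend only on $j$. Hence every scalar coefficient ($K,k,E^X,e^X,L,\ell,\wLambda,\wlambda$) is constant on the block, so the $\Gamma_i$ contributions pick up a factor $q_j$, while the vector terms in $\mathbf{A}^i$ and $\mathbf{B}^i$ collapse to $\wLambda_{ij}\ww^j_\Sigma$, $L_{ij}\ww^j_\Sigma$, $\wlambda_{ij}\vv^j_\Sigma$ and $\ell_{ij}\vv^j_\Sigma$. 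This reproduces the last three lines of $\Gamma_i$ and the first summands of $\mathbf{A}^i$ and $\mathbf{B}^i$, word for word as in Proposition~\ref{prop: asym}.

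For the second part, each of the $q_i-1$ rows $\ww^a$ ($a\in\is{i}_R$, $a\neq I(i)$) has norm $\|\ww^i\|$, bias $\bbeta_i$, and angle $\Theta_i$ with $\ww^{I(i)}$. Likewise each $\vv^a$ makes angle $\theta_i$ with $\ww^{I(i)}$ and has bias $\bgamma_i$. I would then specialize the table of Section~\ref{SEC: table} to $\bbeta_j=\bbeta_i$ and $\Lambda=\Theta_i$. This gives $Z_{ij}=\wZZ_{ij}=Z_i$, $K_{ij}=K_i$ (checking $\tfrac12\cdot 2\bbeta^2(\csc^2-\csc\cot)$), $L_{ij}=L_i$, and $E^X_{ij}=E^X_i$, since the two erfc terms coincide and give the factor $\sqrt{2\pi}$. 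The same specialization with $\lambda=\theta_i$ gives $z_i,\wzz_i,k_i,\ell_i,e^X_i$. Collecting terms produces the $(q_i-1)$ terms in $\Gamma_i$ and the vector terms $\wTheta_i\ww^i_\sgstar-\wtheta_i\vv^i_\sgstar$ and $L_i\ww^i_\sgstar-\ell_i\vv^i_\sgstar$. These must be written with $\sgstar$ rather than $\Sigma$, because the row $I(i)$ itself is excluded from these sums.

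For the third part, the term $a=I(i)$ in the $\WW$ sum is the self term $f(\ww^i,\ww^i)$, already accounted for by $\mathcal{X}_i=1+X_i$. Since $\Lambda_{ii}=0$, it contributes nothing to the $\sin$ and $E^X$ sums. The $\VV$ term $j=I(i)$ gives the isolated contributions with $k_{ii}$, $e^X_{ii}$, $\ell_{ii}$ and $\wlambda_{ii}$, including the $-\wlambda_{ii}\vv^i$ and $-\ell_{ii}\vv^i$ pieces of $\mathbf{A}^i$ and $\mathbf{B}^i$. The term $\bOmega$ is symmetry-free and unchanged.

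I expect the main obstacle to be the careful accounting of the self and diagonal terms, so that nothing is double counted between $\ww^i_\Sigma$ and $\ww^i_\sgstar$. Alongside this, I would verify that the specialized coefficients are consistent with Proposition~\ref{prop: lossbeta}(2) and with the formula for $f(\ww^i,\ww^j)$ at $\bbeta_i=\bbeta_j$. As a final check, I would set the biases to zero and compare with the known bias free formula.
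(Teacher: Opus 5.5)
Your proposal is correct and is essentially the paper's argument. The paper also starts from the unsymmetrized row formula of Section~\ref{sec: Uij} and regroups it by rowtype into three pieces: the coefficients $\overline{\mathcal{U}}_{ij},\ldots,\overline{\mathcal{Y}}_{ij}$ for the other rowtypes, the same-rowtype coefficients $\overline{\mathcal{U}}_i,\overline{\mathcal{V}}_i,\overline{\mathcal{W}}_i,\overline{\mathcal{Y}}_i$ (your second part, which produces the $\Theta_i,\theta_i$ and $\sgstar$ terms), and the diagonal target term $\overline{\mathcal{U}}_{ii}$, $\vv^i$ (your third part).

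One small correction: the self term is absent from the $\sin$ and $E^X$ sums simply because those sums already run over $j\neq i$ (its contribution is $\mathcal{X}_i$), not because $\Lambda_{ii}=0$ kills it. Indeed, with equal biases $E^X$ tends to $-\sqrt{2\pi}\,\bbeta_i e^{-\bbeta_i^2/2}\neq 0$ as the angle tends to $0$.
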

\begin{proof} If $i \in \is{R}$, then
\begin{multline*}
\begin{aligned}
\grad{\mathcal{L}} (\WW)^i =&\left(\frac{\overline{X}_i}{2}+\frac{1}{2\pi}\left[\frac{(q_i-1)\big(\|\ww^i\|\overline{\mathcal{W}}_{i} - \|\vv^i\|\overline{\mathcal{U}}_{i}\big)-\|\vv^i\|\overline{\mathcal{U}}_{ii}}{\|\ww^i\|}\right]\right)\ww^i \\
& + \frac{1}{2\pi}\left(\sum_{j\in\is{R},j \ne i} q_j\frac{\|\ww^j\|\overline{\mathcal{W}}_{ij}}{\|\ww^i\|}  -  \sum_{j \in \is{f},j\ne i} q_j\frac{\|\vv^j\|\overline{\mathcal{U}}_{ij}}{\|\ww^i\|}\right)\ww^i\\
&+\frac{1}{2\pi}\left[\sum_{j\in\is{R},j \ne i} \overline{\mathcal{Y}}_{ij} \ww_\Sigma^j+ \overline{\mathcal{Y}}_i \ww_{\boldsymbol{\Sigma}}^i\right] 
  - \frac{1}{2\pi}\left[\sum_{j \in \is{f}}\overline{\mathcal{V}}_{ij} \vv_\Sigma^j + \overline{\mathcal{V}}_i \vv_{\boldsymbol{\Sigma}}^i\right]
\end{aligned}
\end{multline*}
where the terms involving $\overline{\mathcal{W}}_{i},\overline{\mathcal{U}}_{i}, \overline{\mathcal{Y}}_i,\overline{\mathcal{V}}_i$ are  zero if $i \notin\RS$.
The proof now follows that of Proposition~\ref{prop: asym} but using the additional terms $\overline{\mathcal{W}}_{i}$, $\overline{\mathcal{U}}_{i}$, $\overline{\mathcal{Y}}_{i},\overline{\mathcal{V}}_i$ when $i \in \RS$
(see below and Section~\ref{sec: sec12} for examples). \end{proof}
\begin{rem}
The theorem is valid for all $i \in \is{R}$ and general targets $\VV \in M(k,d)$ (assume no parallel rows). Indeed, if there are no column symmetries then
the statement of the theorem is \emph{simpler}. For example, there will be no second term in the expressions for  $\is{A}^i$ and $\is{B}^i$
and $\ww^i_\Sigma$ gets replaced by $\ww^i$ and $\vv^j_\Sigma$ by $\vv^j$---every row has a unique rowtype. However, if there are no column symmetries then
the computational cost of finding spurious minima increases rapidly with $d$ as there is no scope for reducing the problem to a low dimensional fixed point space.
Granted the benefits of the symmetry assumption, there is necessarily a cost to that assumption: the equations are more complex.  
However, it is possible to break the symmetry in a diagonal block $A$ and use path based methods to examine the effects of forced symmetry breaking within the block $A$. 
As long as the asymmetric block size remains fixed, $d$ can be increased and it is possible to analyze the limiting behaviour as $d \arr \infty$ to obtain FPS for 
families of critical points parametrized by $d$. Put another way,
if we start with a symmetric family of spurious minima defined for $d \ge d_0$ and with isotropy $\Delta S_{d-1}$, it is possible to study numerically and analytically 
forced symmetry breaking to a family with  isotropy $\Delta S_{d-a} = \Delta S_{d-a} \times S_1^a$. Again the equations are more complex than those of the
fully asymmetric system but there is the promise of proving strong analytical results based on FPS---for example, asymptotics of the loss. 
We indicate one or two examples in Section~\ref{Sec: 11}. \rend
\end{rem}

\subsection{The terms $\is{A}^i,\is{B}^i$ when $i \in \RS$ and $\VV = I_d$}\label{sec: AB}
If $i \in \RS$, then the row vectors $\is{A}^i, \is{B}^i$ have a block decomposition induced from the natural block decomposition 
on $M(k,d)^{G^\Delta_{\mathbf{q}}}$. However, if $i \in \RS$, then $q_i > 1$ and 
block $i$ will intersect the $q_i\times q_i$-diagonal block associated to $i$ (decomposition of $\WW\in M(k,d)^{G^\Delta_{\mathbf{q}}}$ into block diagonal form) 
and the entries of this block will not all be equal. 
Thus, the $1\times q_i$ block of $\is{A}^i$ will be $[A^i_i,(A^i_{\star i})^{q_i-1}]$ with
\begin{eqnarray*}
A^i_i & = & \frac{1}{2\pi}\left[(q_i - 1) \wTheta_i \xi^i_{\star i} + \sum_{\alpha \ne i} q_\alpha\wLambda_{i\alpha}\xi^\alpha_{i} - \wlambda_{ii}\right],\;\; q_i \ge 1 \\
& = & \frac{1}{2\pi}\left[\sum_{\alpha \ne i} q_\alpha\wLambda_{i\alpha}\xi^\alpha_{i} - \wlambda_{ii}\right], \;\;\text{if } q_i = 1\\
A^i_{\star i} & = &  \frac{1}{2\pi}\left[\wTheta_i \xi^i_i + (q_i - 2) \wTheta_i \xi^i_{\star i} + \sum_{\alpha \ne i } q_\alpha\wLambda_{i\alpha}\xi^\alpha_{i} - \wtheta_i\right],\;q_i \ge 2 
\end{eqnarray*}
Finally, the angle terms $A^i_j$ when $ j \ne i$. 
\begin{eqnarray*}
 A^i_j & = & \frac{1}{2\pi}\left[\sum_{\alpha\ne j} q_\alpha \wLambda_{i \alpha } \xi^\alpha_j + 
                (q_i-1) \wTheta_i \xi^i_j + \wLambda_{ij}((q_j-1)\xi_{\star j}^j+\xi_j^j) - \wlambda_{ij}\right],\; q_i \ge 2 \\
& = & \frac{1}{2\pi}\left[\sum_{\alpha\ne j} q_\alpha \wLambda_{i\alpha } \xi^\alpha_j  +\wLambda_{ij}((q_j-1)\xi_{\star j}^j+\xi_j^j) - \wlambda_{ij}\right],\; q_i = 1
\end{eqnarray*}
These expressions extend those given for $i \notin \RS$ and apply whenever $i \in\is{R}$, $j \in \is{f}$.

The results for $\is{B}^i$, $i \in \RS$, are similar:  

\begin{eqnarray*}
B^i_i & = & \frac{1}{2\pi}\left[(q_i - 1) L_i \xi^i_{\star i} + \sum_{\alpha \ne i} q_\alpha L_{i\alpha}\xi^\alpha_{i} - \ell_{i}\right],\; q_i \ge 1 \\
& = & \left[\sum_{\alpha \ne i} q_\alpha  L_{i\alpha}\xi^\alpha_{i} - \ell_{i}\right], \;\text{if } q_i = 1\\
B^i_{\star i} & = &  \frac{1}{2\pi}\left[L_i \xi^i_i + (q_i - 2) L_i \xi^i_{\star i} + \sum_{\alpha \ne i } q_\alpha L_{i\alpha}\xi^\alpha_{i} - \ell_{i}\right], \; q_i \ge 2 
\end{eqnarray*}
If $ j \ne i$, then
\begin{eqnarray*}
B^i_j & = & \frac{1}{2\pi}\left[\sum_{\alpha\ne j} q_\alpha L_{i \alpha } \xi^\alpha_j + 
                (q_i-1) L_{i} \xi^i_j + L_{ij}((q_j-1)\xi_{\star j}^j+\xi_j^j) - \ell_{ij}\right]
\end{eqnarray*}

\subsection{Population loss with symmetric target}
The section concludes with expressions for the population loss $\mathcal{L}$ and the $\bbeta$-derivatives of $\mathcal{L}$ when the network is biased and $\WW \in M(k,d)$ has isotropy is $G^\Delta_{\is{q}} \subset \Delta S_d$.
As above, the target $\VV$ is assumed to be $\is{I}_d \in M(d,d)$ (zero bias). $ k \ge d$ and there are $R$ rowtypes, $f$ factors, $r_s$ factors that have symmetry $S_{q_i}$ with $q_i > 1$, and 
$\is{R}\smallsetminus \is{f} = \is{r_m}$ labels
neurons corresponding to the last $k-d$ rows of $\WW$.

Recall that if $\WW \in M(k,d)^{G^\Delta_{\is{q}}}$ and $i \in \is{R}$ is a rowtype, 
then $\ww^i$ denotes the first row in $\WW$ of rowtype $i$ (that is, $\ww^{I(i)}$). 
If $i \in \RS$, let $\widetilde{\ww}^i$ denote any row of rowtype $i$ other than $\ww^i$. For example, $\ww^{I(i)+1}$. If $q_i$ denotes the number of rows of rowtype $i$,  then $q_i = 1$ iff $i \notin \RS$.

The population loss $\mathcal{L}$ comprises terms $f(\ww,\ww')$, $f(\ww,\vv)$ and $f(\vv,\vv')$. Each of these terms can be described in terms of rowtype.

\subsubsection{Terms $f(\ww,\ww')$} 
\begin{itemize}
\item[(1a)] If $i \in \is{R}$, then 
\[
\sum_{a \in I(i)} f(\ww^a,\ww^a) = q_i f(\ww^i,\ww^i)
\]
\item[(1b)] If $i \in \RS$, then 
\[
\sum_{a,b \in I(i), a \ne b} f(\ww^a,\ww^b) = q_i(q_i-1) f(\ww^i,\widetilde{\ww}^i)
\]
\item[(1c)] If $i \ne j, i,j \in \is{R}$, then 
\[
\sum_{a\in I(i), b \in I(j)} \big(f(\ww^a,\ww^b) + f(\ww^b,\ww^a)\big) =  2q_iq_jf(\ww^i,\ww^j)
\]
\end{itemize}
\subsubsection{Terms $f(\ww,\vv)$} 
\begin{itemize}
\item[(2a)] If $i \in \is{f}$, then
\[
\sum_{a \in I(i)} f(\ww^a,\vv^a) = q_if(\ww^i,\vv^i)
\]
\item[(2b)] If $i\in \RS$, then
\[
\sum_{a,b \in I(i), a \ne b} f(\ww^a,\vv^b) = q_i(q_i-1) f(\ww^i,\widetilde{\vv}^i)
\]
\item[(2c)] If $i\in \is{R},j \in \is{f}$, $i \ne j$, then
\[
\sum_{a\in I(i), b \in I(j)} f(\ww^a,\vv^b) =  q_iq_jf(\ww^i,\vv^j)
\]
\end{itemize}
\subsubsection{Terms $f(\vv,\vv')$} 
\begin{itemize}
\item[(3a)] If $i \in \is{f}$, then 
\[
\sum_{a \in I(i)} f(\vv^a,\vv^a) = q_i f(\vv^i,\vv^i)
\]
\item[(3b)] If $i \in \RS$, then 
\[
\sum_{a,b \in I(i), a \ne b} f(\vv^a,\vv^b) = q_i(q_i-1) f(\vv^i,\widetilde{\vv}^i) 
\]
\item[(3c)] If $i < j, i,j \in \is{f}$, then 
\[
\sum_{a\in I(i) b \in I(j)} f(\vv^a,\vv^b) = \sum_{a\in I(i) b \in I(j)} f(\vv^b,\vv^a) = q_iq_jf(\vv^i,\vv^j) 
\]
\end{itemize}
\begin{rem}
It is assumed here that $\{\vv^a\}_{a \in \is{d}}$ is an orthonormal basis of $(\real^d)^\star \approx \real^d$ and that the target network has bias zero, Hence,
\[
\sum_{a\in I(i)}f(\vv^a,\vv^a) = \frac{q_i}{2},\; \sum_{a,b \in I(i), a \ne b} f(\vv^a,\vv^b) = \frac{q_i(q_i-1)}{2\pi}, \; \sum_{a\in I(i)i, b \in I(j)} f(\vv^a,\vv^b) = \frac{q_iq_j}{2\pi}
\]
It is not difficult to compute these sums if the target is the identity with non-zero bias $\boldsymbol{\bgamma}$ fixed by $G_{\is{q}}^\Delta$ (use Example~\ref{EX: on_set_target}). 
\rend
\end{rem}
\begin{prop}[Loss in terms of rowtype]
\mbox{ } \\
(Notation and assumptions as above.)
\begin{eqnarray*}
\mathcal{L}(\WW)& =& \sum_{i \in \is{f}} q_i \big[\frac{f(\ww^i,\ww^i)+f(\vv^i,\vv^i)}{2} - f(\ww^i,\vv^i)\big]\\
&& \hspace*{-0.5in} + \sum_{i\in \RS} q_i(q_i-1)\big[\frac{f(\ww^i,\widetilde{\ww}^i)+f(\vv^i,\widetilde{\vv}^i)}{2} - f(\ww^i,\widetilde{\vv}^i)\big] \\
&&\hspace*{-0.5in} + \sum_{i,j\in \is{f}, i<j} q_iq_j\big[\big(f(\ww^i,\ww^j) + f(\vv^i,\vv^j)\big) -\big(f(\ww^i,\vv^j)+f(\ww^j,\vv^i)\big)\big] \\
&&\hspace*{-0.5in}+ \sum_{i \in \is{r_m}} \left[\frac{f(\ww^i,\ww^i)}{2} +\sum_{j \in \is{f}} q_j \big(f(\ww^i.\ww^j) -f(\ww^i,\vv^j)\big)+ \sum_{j \in \is{r_m},j<i}f(\ww^i,\ww^j)\right] 
\end{eqnarray*}
\end{prop}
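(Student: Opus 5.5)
The plan is to start from the expansion \Refb{EQ:  Loss}, which writes the loss as
\[
\mathcal{L}=\frac{1}{2}\sum_{a,b\in\is{k}}f(\ww^a,\ww^b)-\sum_{a\in\is{k},\,b\in\is{d}}f(\ww^a,\vv^b)+\frac{1}{2}\sum_{a,b\in\is{d}}f(\vv^a,\vv^b).
\]
I would partition each of the three double sums according to the rowtypes of the indices. Rows of $\WW$ are grouped by $r(a)\in\is{R}$, and rows of $\VV=I_d$ by the factor $j\in\is{f}$ whose diagonal block they meet. Since $\WW$ (with its bias vector) and $(\VV,\boldsymbol{\bgamma})$ are fixed by $G^\Delta_{\is{q}}$, every term in a given class is a symmetric image of one representative. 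The loss expansion itself is invariant under the orthogonal action, and $f$ depends only on norms, angles and normalized biases. So within each class all summands coincide, and the block sums reduce exactly to the rowtype identities (1a)--(1c), (2a)--(2c) and (3a)--(3c) listed just above.

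For the first double sum I would split the pairs $(a,b)$ into four groups:
\begin{itemize}
\item the diagonal pairs $a=b$, which give $\sum_{i\in\is{R}}q_if(\ww^i,\ww^i)$ by (1a);
\item off-diagonal pairs of the same rowtype $i\in\RS$, which give $q_i(q_i-1)f(\ww^i,\widetilde{\ww}^i)$ by (1b);
\item pairs with distinct rowtypes $i\ne j$ in $\is{f}$, taken as unordered pairs $i<j$ so that (1c) supplies the factor $2$ that cancels the $\frac12$;
\item pairs in which at least one index lies in $\is{r_m}$, where $q=1$.
\end{itemize}
The third double sum, over $\VV$, is handled identically with (3a)--(3c).

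The cross sum over $\WW$ and $\VV$ is split the same way with (2a)--(2c):
\begin{itemize}
\item the same-row terms $f(\ww^i,\vv^i)$, with multiplicity $q_i$;
\item the same-rowtype, different-row terms $f(\ww^i,\widetilde{\vv}^i)$, with multiplicity $q_i(q_i-1)$;
\item the $i\ne j$ terms, with multiplicity $q_iq_j$. For a fixed unordered pair $i<j$ in $\is{f}$ these produce both $f(\ww^i,\vv^j)$ and $f(\ww^j,\vv^i)$.
\end{itemize}
The cross sum carries coefficient $-1$ rather than $-\frac12$. Hence the $f(\ww^i,\vv^i)$ and $f(\ww^i,\widetilde{\vv}^i)$ terms appear with coefficient $1$, while $f(\ww\cdot,\ww\cdot)$ and $f(\vv\cdot,\vv\cdot)$ carry $\frac12$, as in the first two lines of the statement.

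Collecting the terms indexed by $i\in\is{f}$ (diagonal), by $i\in\RS$ (same rowtype, distinct rows) and by $i<j$ in $\is{f}$ gives the first three lines of the formula. The remaining terms involve the extra neurons $i\in\is{r_m}$:
\begin{itemize}
\item $\frac12 f(\ww^i,\ww^i)$;
\item the mixed terms with $j\in\is{f}$, namely $\frac12\cdot 2q_jf(\ww^i,\ww^j)$ by (1c) and $-q_jf(\ww^i,\vv^j)$ by (2c);
\item pairs of extra neurons $j<i$, each counted twice with weight $\frac12$.
\end{itemize}
These give the last line. There are no $\vv$ rows of rowtype in $\is{r_m}$, and no $f(\ww^i,\vv^i)$ term for $i\in\is{r_m}$.

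The main obstacle is bookkeeping rather than analysis. First, each ordered pair must be counted exactly once, and the factor $2$ arising from the symmetry $f(\ww,\ww')=f(\ww',\ww)$ in (1c) must be matched correctly against the $\frac12$ prefactors. Second, the representatives must be genuinely interchangeable. For this I would check that the diagonal element of $G^\Delta_{\is{q}}$ maps the pair $(\ww^a,\vv^b)$ to $(\ww^{a'},\vv^{b'})$ for any other pair in the same class. Here one uses that $\Delta S_{q_i}$ acts transitively on ordered pairs of distinct indices, and that the column permutation coincides with the row permutation on $I_d$. This is the same reasoning used in the proof of Lemma~\ref{LEM: FP} and in the discussion of properties of rowtypes.
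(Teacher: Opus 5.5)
Your proposal is correct and is essentially the paper's argument. The paper also obtains the first three lines directly from the rowtype identities (a), (b), (c) applied to the $\WW$--$\WW$, $\WW$--$\VV$ and $\VV$--$\VV$ sums, and accounts separately for the extra neurons in the last line; you simply make explicit the bookkeeping of the $\frac{1}{2}$ factors and the transitivity argument behind those identities, which the paper leaves implicit.
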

\begin{proof} The first, second and third lines of the expression for $\mathcal{L}(\WW)$ follow respectively from the (a), (b) and (c) lines above; the
final line accounts for the contribution from the extra neurons parametrized by $\is{r_m}$ and depends on the (a) and (b) lines. \end{proof} 
\subsection{Derivatives of $\mathcal{L}$ with respect to $\bbbeta$, indexed by rowtype} 
We continue with the assumptions of the previous subsection. In particular, $(\WW,\bbbeta) \in M(k,d+1)^{G_{\is{q}}^\Delta}$ and $\{\vv^j\}_{j \in \is{d}}$ is an orthonormal basis of $(\real^d)^\star$. The formula for $\frac{\partial \mathcal{L}}{\partial \bbeta_i}$ is given 
in Proposition~\ref{prop: lossderivbeta}. However, the indexing is by row number rather than row type and there are no symmetry assumptions.   The next proposition gives the formula for
derivatives of $\mathcal{L}$ with respect to $\bbbeta$ in terms of rowtype.

\begin{prop}\label{prop: betaderv}
(Notation and assumptions as above.) For $i \in \is{R}$
{\small 
\begin{multline*}
\begin{aligned}
q_i^{-1}\frac{\partial \mathcal{L}}{\partial \bbeta_i}& = \frac{\|\ww^i\|^2}{2}\left[\bbeta_i\erfcb{\frac{\bbeta_i}{\sqrt{2}}} - \sqrt{\frac{2}{\pi}}e^{-\frac{\bbeta_i^2}{2}}\right]\\
& +\frac{\|\ww^i\|^2}{2\pi}(q_i-1)\bbeta_i \left(\pi\,\erfcb{\frac{\bbeta_i}{\sqrt{2}}}-\wTheta_{i}\right)\\
&-\frac{\|\ww^i\|}{2\pi}\bgamma_i \left(q_i\frac{\pi}{2}\left[\erfcb{\frac{\bbeta_i}{\sqrt{2}}}+\erfcb{\frac{\bgamma_i}{\sqrt{2}}}\right]-(\wlambda_{ii}+(q_i-1)\wtheta_i)\right),\;(i \in \is{f})\\
& -\frac{\|\ww^i\|^2}{2\sqrt{2\pi}}(q_i-1)\left[e^{-\frac{\bbeta_i^2}{2}}\erfcb{\frac{-\beta_i(-\cot(\Theta_i) + \csc(\Theta_i))}{\sqrt{2}}}(1+\cos(\Theta_i))\right]\\
& +\frac{\|\ww^i\|}{2\sqrt{2\pi}}(q_i-1)\left[e^{-\frac{\bgamma_i^2}{2}}\erfcb{\frac{z_{i}}{\sqrt{2}}}+\cos(\theta_{i})e^{-\frac{\bbeta_i^2}{2}}\erfcb{\frac{\wzz_{i}}{\sqrt{2}}}\right] \\
& +\frac{\|\ww^i\|}{2\sqrt{2\pi}}\left[e^{-\frac{\bgamma_i^2}{2}}\erfcb{\frac{z_{ii}}{\sqrt{2}}}+\cos(\lambda_{ii})e^{-\frac{\bbeta_i^2}{2}}\erfcb{\frac{\wzz_{ii}}{\sqrt{2}}}\right], \;(i \in \is{f})\\
& +\frac{\|\ww^i\|}{2\pi}\sum_{j\in\is{R},j \ne i}q_j\|\ww^j\|\bbeta_j \left(\frac{\pi}{2}\left[\erfcb{\frac{\bbeta_i}{\sqrt{2}}}+\erfcb{\frac{\bbeta_j}{\sqrt{2}}}\right]-\wLambda_{ij}\right)\\
& -\frac{\|\ww^i\|}{2\sqrt{2\pi}}\sum_{j\in\is{R},j \ne i}q_j\|\ww^j\|\left[e^{-\frac{\bbeta_j^2}{2}}\erfcb{\frac{Z_{ij}}{\sqrt{2}}}+\cos(\Lambda_{ij})e^{-\frac{\bbeta_i^2}{2}}\erfcb{\frac{\wZZ_{ij}}{\sqrt{2}}}\right]\\
& -\frac{\|\ww^i\|}{2\pi}\sum_{j\in\is{f}, j \ne i}\bgamma_j q_j\left(\frac{\pi}{2}\left[\erfcb{\frac{\bbeta_i}{\sqrt{2}}}+\erfcb{\frac{\bgamma_j}{\sqrt{2}}}\right]-\wlambda_{ij}\right)\\
& +\frac{\|\ww^i\|}{2\sqrt{2\pi}}\sum_{j\in\is{f}, j \ne i}q_j\left[e^{-\frac{\bgamma_j^2}{2}}\erfcb{\frac{z_{ij}}{\sqrt{2}}}+\cos(\lambda_{ij})e^{-\frac{\bbeta_i^2}{2}}\erfcb{\frac{\wzz_{ij}}{\sqrt{2}}}\right]\\
\end{aligned}
\end{multline*}
} \normalsize 
\end{prop}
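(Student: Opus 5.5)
The plan is to start from Proposition~\ref{prop: lossderivbeta}, which gives $\partial\mathcal{L}/\partial\bbeta_a$ for a single row $a$ with no symmetry assumption, and reorganize the sums by rowtype. The derivative with respect to the rowtype bias $\bbeta_i$ is the sum of the row derivatives over all $q_i$ rows $a\in\is{i}_R$, because on the fixed point space $M(k,d+1)^{G^\Delta_{\is{q}}}$ all these rows carry the same bias. By $G^\Delta_{\is{q}}$-equivariance each row of rowtype $i$ contributes the same amount. This is why the left-hand side is normalized by $q_i^{-1}$, and it lets us take $a=I(i)$, i.e.\ the row $\ww^i$.

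Next we split the index sets in Proposition~\ref{prop: lossderivbeta}. In the sum over $j\in\is{k}$, $j\neq a$, the $q_i-1$ other rows of rowtype $i$ are collected first. All of them have norm $\|\ww^i\|$, bias $\bbeta_i$ and angle $\Theta_i$ to $\ww^i$, so each contributes $\partial f/\partial\bbeta$ evaluated at equal biases. We take this from Proposition~\ref{prop: lossbeta}(2), halved, since only $\bbeta_a$ varies there while (2) varies both biases. Halving (2) gives the second term $\frac{\|\ww^i\|^2}{2\pi}(q_i-1)\bbeta_i(\pi\,\mathrm{erfc}-\wTheta_i)$ and the $(1+\cos\Theta_i)$ term.

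The rows of rowtype $j\neq i$ contribute $q_j$ identical copies of the general formula. This produces the $\sum_{j\in\is{R},j\ne i}q_j$ sums with $Z_{ij},\wZZ_{ij}$. The diagonal term $f(\ww^a,\ww^a)$ gives the first line via \Refb{EQ: iibeta_deriv}, halved because of the factor $\frac12$ in the loss.

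For the target terms, with $a=I(i)$ and $i\in\is{f}$, the $\vv$-rows split in the same way. The row $\vv^a$ itself has angle $\lambda_{ii}$ and gives the $z_{ii},\wzz_{ii}$ line. The other $q_i-1$ rows $\vv^b$ of rowtype $i$ have common angle $\theta_i$ and give the $(q_i-1)$ line in $z_i,\wzz_i$. The rows of rowtype $j\neq i$, $j\in\is{f}$, give the $q_j$-weighted sums with $\lambda_{ij}$. Each of these terms is read off from Lemma~\ref{EQ:betaderv}(1), with $\bgamma_j$ constant on rowtypes. The $\bgamma_i$ terms combine as $q_i\frac{\pi}{2}[\cdots]-(\wlambda_{ii}+(q_i-1)\wtheta_i)$. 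If $i\in\is{R}\smallsetminus\is{f}$, i.e.\ an extra neuron, these diagonal-target lines are simply absent, and $q_i=1$ kills the $\Theta_i$ lines.

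The main obstacle is bookkeeping rather than analysis. One has to check that the self-interaction terms within rowtype $i$ are counted correctly: the factor $2$ from $f(\ww^a,\ww^b)+f(\ww^b,\ww^a)$ against the $\frac12$ in $\mathcal{L}$, and the use of Proposition~\ref{prop: lossbeta}(2) versus the one-sided derivative. I would verify this first on the case $\bbeta=\bgamma=0$ by comparing against Lemma~\ref{LEM: formulabeta}, which fixes all the numerical coefficients.
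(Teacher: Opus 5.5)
Your proposal is correct and is essentially the paper's proof. The paper simply regroups the terms of Proposition~\ref{prop: lossderivbeta} by rowtype, with lines 3 and 6 absent for $i\notin\is{f}$ and lines 2, 4 and 5 vanishing for $i\notin\RS$. Taking half of Proposition~\ref{prop: lossbeta}(2) for the same-rowtype terms is equivalent to substituting $\bbeta_j=\bbeta_i$, $\Lambda_{ij}=\Theta_i$ directly into Proposition~\ref{prop: lossderivbeta}.

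Two small points. First, either route gives the erfc argument $\bbeta_i(-\cot(\Theta_i)+\csc(\Theta_i))/\sqrt{2}$, so the minus sign in the fourth line of the displayed statement is a typo. Second, your zero-bias check against Lemma~\ref{LEM: formulabeta} only tests terms without a factor $\bbeta_j$ or $\bgamma_j$, so it does not fix all the coefficients.
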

\begin{proof} The proof follows directly from Proposition~\ref{prop: lossderivbeta}.  Note that lines 3 \& 6, ending in $(i \in \is{f})$, are defined to be zero if $i \notin \is{f}$. The  lines 2,4 \& 5 are zero if $i \notin\RS$.
\end{proof}
\begin{rem}
The terms $Z_{ij},\wZZ_{ij}, z_{ij}, \wzz_{ij}$ are defined in Section~\ref{SEC: table}, and $z_i, \wzz_{i}$ at the beginning of Section~\ref{subsec: rtype}. \rend
\end{rem}

\begin{exams}
(1) If $\boldsymbol{\bgamma} = \is{0}$---the only case we investigate in this work---then
{\small 
\begin{multline*}
\begin{aligned}
q_i^{-1}\frac{\partial \mathcal{L}}{\partial \bbeta_i}& = \frac{\|\ww^i\|^2}{2}\left[\bbeta_i\erfcb{\frac{\bbeta_i}{\sqrt{2}}} - \sqrt{\frac{2}{\pi}}e^{-\frac{\bbeta_i^2}{2}}\right]\\
& +\frac{\|\ww^i\|^2}{2\pi}(q_i-1)\bbeta_i \left(\pi\,\erfcb{\frac{\bbeta_i}{\sqrt{2}}}-\wTheta_{i}\right)\\
& -\frac{\|\ww^i\|^2}{2\sqrt{2\pi}}(q_i-1)\left[e^{-\frac{\bbeta_i^2}{2}}\erfcb{\frac{-\beta_i(-\cot(\Theta_i) + \csc(\Theta_i))}{\sqrt{2}}}(1+\cos(\Theta_i))\right]\\
& +\frac{\|\ww^i\|}{2\sqrt{2\pi}}(q_i-1)\left[\erfcb{\frac{z_{i}}{\sqrt{2}}}+\cos(\theta_{i})e^{-\frac{\bbeta_i^2}{2}}\erfcb{\frac{\wzz_{i}}{\sqrt{2}}}\right] \\
& +\frac{\|\ww^i\|}{2\sqrt{2\pi}}\left[e^{-\frac{\bgamma_i^2}{2}}\erfcb{\frac{z_{ii}}{\sqrt{2}}}+\cos(\lambda_{ii})e^{-\frac{\bbeta_i^2}{2}}\erfcb{\frac{\wzz_{ii}}{\sqrt{2}}}\right],\;(i \in \is{f})\\
& +\frac{\|\ww^i\|}{2\pi}\sum_{j\in\is{R},j \ne i}q_j\|\ww^j\|\bbeta_j \left(\frac{\pi}{2}\left[\erfcb{\frac{\bbeta_i}{\sqrt{2}}}+\erfcb{\frac{\bbeta_j}{\sqrt{2}}}\right]-\wLambda_{ij}\right)\\
& -\frac{\|\ww^i\|}{2\sqrt{2\pi}}\sum_{j\in\is{R},j \ne i}q_j\|\ww^j\|\left[e^{-\frac{\bbeta_j^2}{2}}\erfcb{\frac{Z_{ij}}{\sqrt{2}}}+\cos(\Lambda_{ij})e^{-\frac{\bbeta_i^2}{2}}\erfcb{\frac{\wZZ_{ij}}{\sqrt{2}}}\right]\\
& +\frac{\|\ww^i\|}{2\sqrt{2\pi}}\sum_{j\in\is{f}, j \ne i}q_j\left[\erfcb{\frac{\bbeta_i\csc(\lambda_{ij})}{\sqrt{2}}}+\cos(\lambda_{ij})e^{-\frac{\bbeta_i^2}{2}}\erfcb{\frac{-\bbeta_i\cot(\lambda_{ij})}{\sqrt{2}}}\right]
\end{aligned}
\end{multline*}
} \normalsize
(2) Suppose $\boldsymbol{\bgamma} = \is{0}$, $k = d$ and $G_{\is{q}}^\Delta = \Delta (S_{q_0} \times S_{q_1})$. where $q_0 \ge q_1 \ge 1$ and $d \ge 3$. There are two rowtypes.  Since $q_0 \ge 2$, 
{\small 
\begin{multline*}
\begin{aligned}
q_0^{-1}\frac{\partial \mathcal{L}}{\partial \bbeta_0}& = \frac{\|\ww^0\|^2}{2}\left[\bbeta_0\erfcb{\frac{\bbeta_0}{\sqrt{2}}} - \sqrt{\frac{2}{\pi}}e^{-\frac{\bbeta_0^2}{2}}\right]
 +\frac{\|\ww^0\|^2}{2\pi}(q_0-1)\bbeta_0 \left(\pi\,\erfcb{\frac{\bbeta_0}{\sqrt{2}}}-\wTheta_{0}\right)\\
& -\frac{\|\ww^0\|^2}{2\sqrt{2\pi}}(q_0-1)\left[e^{-\frac{\bbeta_0^2}{2}}\erfcb{\frac{-\beta_0(-\cot(\Theta_0) + \csc(\Theta_0))}{\sqrt{2}}}(1+\cos(\Theta_0))\right]\\
& +\frac{\|\ww^0\|}{2\sqrt{2\pi}}(q_0-1)\left[\erfcb{\frac{z_{0}}{\sqrt{2}}}+\cos(\theta_{0})e^{-\frac{\bbeta_0^2}{2}}\erfcb{\frac{\wzz_{0}}{\sqrt{2}}}\right] \\
& +\frac{\|\ww^i\|}{2\sqrt{2\pi}}\left[\erfcb{\frac{z_{00}}{\sqrt{2}}}+\cos(\lambda_{00})e^{-\frac{\bbeta_0^2}{2}}\erfcb{\frac{\wzz_{00}}{\sqrt{2}}}\right] \\
& +\frac{\|\ww^0\|}{2\pi}q_1\|\ww^1\|\bbeta_1 \left(\frac{\pi}{2}\left[\erfcb{\frac{\bbeta_0}{\sqrt{2}}}+\erfcb{\frac{\bbeta_1}{\sqrt{2}}}\right]-\wLambda_{01}\right)\\
& -\frac{\|\ww^0\|}{2\sqrt{2\pi}}q_1\|\ww^1\|\left[e^{-\frac{\bbeta_1^2}{2}}\erfcb{\frac{Z_{01}}{\sqrt{2}}}+\cos(\Lambda_{01})e^{-\frac{\bbeta_0^2}{2}}\erfcb{\frac{\wZZ_{01}}{\sqrt{2}}}\right]\\
& +\frac{\|\ww^0\|}{2\sqrt{2\pi}}q_1\left[\erfcb{\frac{\bbeta_0\csc(\lambda_{01})}{\sqrt{2}}}+\cos(\lambda_{01})e^{-\frac{\bbeta_0^2}{2}}\erfcb{\frac{-\bbeta_0\cot(\lambda_{01})}{\sqrt{2}}}\right]
\end{aligned}
\end{multline*}
} \normalsize
If $q_1 = 1$,
{\small 
\begin{multline*}
\begin{aligned}
\frac{\partial \mathcal{L}}{\partial \bbeta_1}& = \frac{\|\ww^1\|^2}{2}\left[\bbeta_1\erfcb{\frac{\bbeta_1}{\sqrt{2}}} - \sqrt{\frac{2}{\pi}}e^{-\frac{\bbeta_1^2}{2}}\right]\\
& +\frac{\|\ww^1\|}{2\pi}q_0\|\ww^0\|\bbeta_0 \left(\frac{\pi}{2}\left[\erfcb{\frac{\bbeta_0}{\sqrt{2}}}+\erfcb{\frac{\bbeta_1}{\sqrt{2}}}\right]-\wLambda_{10}\right)\\
& +\frac{\|\ww^1\|}{2\sqrt{2\pi}}\left[\erfcb{\frac{z_{11}}{\sqrt{2}}}+\cos(\lambda_{11})e^{-\frac{\bbeta_1^2}{2}}\erfcb{\frac{\wzz_{11}}{\sqrt{2}}}\right] \\
& -\frac{\|\ww^1\|}{2\sqrt{2\pi}}q_0\|\ww^0\|\left[e^{-\frac{\bbeta_0^2}{2}}\erfcb{\frac{Z_{10}}{\sqrt{2}}}+\cos(\Lambda_{10})e^{-\frac{\bbeta_1^2}{2}}\erfcb{\frac{\wZZ_{10}}{\sqrt{2}}}\right]\\
& +\frac{\|\ww^1\|}{2\sqrt{2\pi}}q_0\left[\erfcb{\frac{\bbeta_1\csc(\lambda_{10})}{\sqrt{2}}}+\cos(\lambda_{10})e^{-\frac{\bbeta_1^2}{2}}\erfcb{\frac{-\bbeta_1\cot(\lambda_{10})}{\sqrt{2}}}\right]
\end{aligned}
\end{multline*}
} \normalsize
If $q_1 > 1$, the formula can be deduced from the result for $q_0$ by appropriate permutation of the indices.

More details for the simplest case $G_{\is{q}}^\Delta = \Delta S_{d}$ appear at the beginning of Section~\ref{sec: sec12}.
\examend
\end{exams}

\section{Review of results for symmetric target $I_d$}\label{sec: revsymm}
We review parts of the theory for bias free networks when the target is $I_d$ and $k \ge d$. The emphasis is on the critical point equations and examples of infinite families of critical points parametrized by $d$.
Details and proofs can be found in~\cite{ArjevaniField2021x,ArjevaniField2020b,ArjevaniField2021,ArjevaniField2022b}. For each example, the corresponding critical point equations are given for the biased network
under the assumption that the target has zero bias.

\subsection{Fixed point spaces and fractional power series}
Suppose $k_0 - d_0 = m \ge 0$ and $\WW \in M(k_0,d_0)$ is a critical point of $\mathcal{L}$ of isotropy $G^\Delta_{\is{q}}$,
where $q_0 > q_1 \ge \ldots \ge q_{f-1} \ge 1$.  
Taking $q_1,\ldots, q_{f-1}$ as constant and $q_0 = d - \sum_{i>0} q_i$ (variable), it follows from section~\ref{sec: fixsp} that for $d \ge d_0$ there is a natural linear isomorphism
\[
M(k,d)^{G_\mathbf{q}^\Delta} \approx \left(\prod_{i \in \is{R}} \real^f\right)  \times \real^{r_s}
\]
which we use to coordinatize $M(k,d)^{G_\mathbf{q}^\Delta}$. 
Here $r_s$ counts the number of $q_i> 1$ and $R = f+m$. 
If $m > 0$, It is convenient to define $q_j = 1$ for $j \in [f,m+f-1]$.
Denoting coordinates on $\left(\prod_{i \in \is{R}} \real^f\right)  \times \real^{r_s}$ by $(\xi_j^i)_{i \in \is{R},j \in \is{f}}, (\xi_{\star i}^i)_{i\in \RS}$, 
the critical point equations restricted to the flow invariant subspace $M(k,d)^{G_\mathbf{q}^\Delta}$ may be written
in $\bxi,\bxi_\star$ coordinates as
\begin{eqnarray}\label{EQ: CPE1}
\frac{1}{2\pi}\left[\Gamma_i \xi_j^i -A_j^i\right] + \frac{1}{2}\Omega_j& =& 0,\;j\in\is{f}\\
\label{EQ: CPE2}
\frac{1}{2\pi}\left[\Gamma_i \xi_{\star i}^i -A_{\star i}^i\right] + \frac{1}{2}\Omega_i& =& 0,\;\text{if } q_i > 1,
\end{eqnarray}
where 
\begin{eqnarray*}
A^i_j & = & \sum_{\ell\ne j} q_\ell \Lambda_{i \ell } \xi^\ell_j + 
                (q_i-1) \Theta_i \xi^i_j + \Lambda_{ij}((q_j-1)\xi_{\star j}^j+\xi_j^j) - \lambda_{ij},\; i \ne j \\
A^i_i & = & (q_i - 1) \Theta_i \xi^i_{\star i} + \sum_{j} q_j\Lambda_{ij}\xi^j_{i} - \lambda_{ii},\; i \in \is{f} \\
A^i_{\star i} & = &  \Theta_i \xi^i_i + (q_i - 2) \Theta_i \xi^i_{\star i} + \sum_{j } q_j\Lambda_{ij}\xi^j_{i} - \theta_i, \;\; q_i \ge 2 
\end{eqnarray*}
\begin{rems}\label{isomremark}
(1) The terms $\Omega_j$, $j \in \is{f}$, are defined in section~\ref{sec: Uij}.\\
(2) Every solution of (\ref{EQ: CPE1},\ref{EQ: CPE2}) determines a critical point of $\mathcal{L}$ lying in $M(k,d)^{G_\mathbf{q}^\Delta}$.
The equations (\ref{EQ: CPE1},\ref{EQ: CPE2}) on $\left(\prod_{i \in \is{R}} \real^f\right)  \times \real^{r_s}$ are not the gradient of $\mathcal{L}|M(k,d)^{G_\mathbf{q}^\Delta}$ 
with respect to the standard Euclidean inner product on 
$\left(\prod_{i \in \is{R}} \real^f\right)  \times \real^{r_s}$. In order for that the map from $M(k,d)^{G_\mathbf{q}^\Delta}$ 
to $\left(\prod_{i \in \is{R}} \real^f\right)  \times \real^{r_s}$ needs to be an isometry with respect to the
Euclidean (Frobenius) inner product on $M(k,d)^{G_\mathbf{q}^\Delta}$. Relative to the natural block decomposition of
matrices in $\mathcal{L}|M(k,d)^{G_\mathbf{q}^\Delta}$, non diagonal blocks $x I_{q_i,q_j}$ will map to
$x \sqrt{q_i q_j}$ and a $q_i \times q_i$-diagonal block with diagonal entry $x$ and off diagonal entry $y$ will
map to $(\sqrt{q_i}x,\sqrt{q_i(q_i-1)}y)$ (omitted coordinates are zero). However, there is no advantage 
in doing this since the linearization of the 
vector field is similar by a \emph{diagonal} matrix to the Hessian of the loss. In particular, 
Sylvester's criterion applies to the linearization of the vector field. 
Of course, this remark only applies to the set of eigenvalues associated to trivial representation
of $G_\mathbf{q}^\Delta$ on the fixed point space. For the remaining eigenvalues methods based on representation theory 
may be used~\cite{ArjevaniField2021}. \rend
\end{rems}
\subsubsection{Discussion} 
Since the fixed point space is independent of $d \ge d_0$, we may regard $d \ge d_0$ as a \emph{real} parameter in (\ref{EQ: CPE1},\ref{EQ: CPE2}) and consider the solution $\bxi,\bxi_\star$ of
the parametrized family as a function of $d$. The critical point equations are analytic on an open dense semi-analytic subset of 
$M(k,d)^{G_\mathbf{q}^\Delta}$~\cite{ArjevaniField2021x}. This suggests path following $\bxi = \bxi(d), \bxi_\star =\bxi_\star(d)$ as $d$ is increased. 
This will work by the analytic version of the implicit function theorem provided that the Jacobian determinant along the path (in the fixed point space) is non-singular.
Practically speaking, Newton's method is used to find a discrete series of solutions as $d$ is incremented. Note that the trivial solution $\VV$
does not lie in the domain of analyticity of the critical point.  If $k = d$, it may be shown that the loss is $C^3$ at $\WW = \VV$.

Using high precision computation, it is straightforward to use path following methods to obtain critical points for large values of $d$---for example $d = 10^{1000}$. The numerical results indicate that in many cases there
are fractional power series solutions (FPS) in $d ^{-\frac{p}{q}}$ for these families of critical points. If $k = d$, it is often relatively straightforward to prove the existence of FPS solutions for families of critical points
$(\bxi,\bxi_\star)_{d \ge d_1}$, where $d_1 \ge d_0$, as well as obtain low order coefficients in the FPS. Examples of these computations, which depend on elementary methods using the implicit function theorem, are in ~\cite{ArjevaniField2021x,ArjevaniField2022b}.
Typically the denominator $q$ is a power of $2$ and $p=1$.  The existence of these solutions follows from the general theory of subanalytic nets and the Curve Selection Lemma.  
In practice, numerics can be used to give the initial constant terms
of the FPS (often given by $\pm \vv^j$---the rows of the target) and the exponent $q$ and this data suffices to work out low order terms of the FPS and apply the implicit function theorem. 
For the examples given below, this process is usually easy 
if $k = d$, when the FPS are in powers of $d^{-\frac{1}{2}}$, but harder when $m > 0$ or there is significant asymmetry in the isotropy (many terms with $q_i = 1$) when low order coefficients may need to be computed numerically 
and the series may be in powers of $d^{-\frac{1}{4}}$ implying slow convergence. 

Although the existence of FPS follows straightforwardly from general theory in bias free networks, this is \emph{not} so for biased networks since the critical point equations are not 
subanalytic and so the Curve Selection Lemma and generalities
on subanalytic stratifications may not apply. Even in the simplest cases, this is an issue. For example, suppose the target is $T = I_d$, $k = d$, and the biases $\bgamma_i$ 
in the teacher network are all set to $\bgamma\in\real$.
Let $\mathcal{S} = (\bxi(d),\bxi_\star(d))_{d \ge d_0}$ be a family of of spurious minima of isotropy $\Delta (S_{d-q_1}\times S_{q_1})$. Denote the normalized biases by $\bbeta_i(d)$, $i \in \is{2}$. 
A necessary condition for there 
to be an FPS representation of $\mathcal{S}$ is that for $i \in \is{2}$, $\lim_{d\arr\infty} \bbeta_i(d) = \pm \bgamma$ and that $|\bbeta_i(d) - \bgamma| = O(d^{-\alpha})$, for some $\alpha > 0$.
\emph{This is so, even if $\bgamma = 0$}.
If $\mathcal{S} = (\bxi(d),\bxi_\star(d))_{d \ge d_0}$ is family of critical points for the \emph{unbiased network}, 
then it will generally \emph{not} be a family of critical points for the biased network even if the biases are all set to zero for the target (see Section~\ref{Sec: 11}).
\subsection{A family of spurious minima with isotropy $\Delta S_d$, $k = d$}\label{sec: typeAex}
In the bias free case, the FPS is defined for $d \ge 4$ and for $d \ge 8$ defines a curve of spurious minima. The FPS is given by
convergent series for the components $\xi_0^0,\xi_{\star 0}^0$ of the critical point and is a power series in $d^{-\frac{1}{2}}$:
\[
\xi_0^0 = -1 + \sum_{n=2}^\infty c_n k^{-\frac{n}{2}},\quad
\xi_{\star 0} = \sum_{n=2}^\infty e_n k^{-\frac{n}{2}}
\]
where the initial coefficients are given by
\[
\begin{matrix}
c_0  =  -1& e_0 = 0    & c_1 = e_1 = 0 & c_2 = e_2 = 2 & c_3 = e_3 = 0\\
&&c_4 = \frac{8}{\pi}-4&e_4=\frac{4}{\pi}-2 &
\end{matrix}
\]
The series appears to converge to the critical point for all $d$ for which type A solutions exist but the existence proof~\cite[\S 8.3]{ArjevaniField2021x}, based as it 
is on the implicit function theorem, only gives convergence for ``sufficiently large'' $d$.
The family was originally designated as being of type A in \cite{ArjevaniField2021x}. Subsequently, a new definition of type I was introduced for families with isotropy $\Delta (S_{d-q_1} \times S_{q_1})$, where $q_1 \ge 0$. Roughly speaking,
a family is said to be of type I if the constant coefficient in the FPS for $\xi^0_0$ is $-1$; the family is said to be of type II, if the constant 
coefficient in the FPS for $\xi^0_0$ is $1$ and the critical point does not define the global minimum.  A characteristic feature of type I families is that the loss $\mathcal L = \mathcal{L}(d)$ does not decay to zero as $d \arr \infty$. For the family with isotropy $\Delta S_d$ and $k = d$ it is shown in \cite[\S 8.6]{ArjevaniField2021x} that 
$ \mathcal{L}(d) = \frac{1}{2} - \frac{1}{\pi} +O(d^{-\frac{1}{2}})$.

\subsection{Fractional power series (FPS): types I and II}
We give two examples here to illustrate and fill out the details on fractional power series as well as introduce the notion of type I and II critical point families.
\begin{exams}\label{EX: FPSexs}
(1) We give the FPS for a family of spurious minima of type II~\cite[Thm.~8.1]{ArjevaniField2021x} with isotropy $\Delta(S_{d-1} \times S_1) \approx \Delta S_{d-1}$ and $m = 0$. Taking note of Lemma~\ref{LEM: FP}, 
the dimension of the fixed point space is 5 (Example~\ref{EX: simple}, bias free case). 
There is a family $(\bxi(d))_{d \ge 3}$ of critical points of isotropy $\Delta S_{d-1}$ for which there is an FPS in $1/d$ (power series in $d^{-\frac{1}{2}}$) for each component of the critical point $\bxi(d)$:

\begin{align*}
\xi_0^0& = 1 + \sum_{n=4}^\infty c_n d^{-\frac{n}{2}} =  1 + \frac{8}{\pi}d^{-2} -\frac{320\pi}{3\pi^4 (\pi-2)}d^{-\frac{5}{2}} + O(d^{-3})\\
\xi_{\star 0}^0&= \sum_{n=4}^\infty e_n d^{-\frac{n}{2}} = -\frac{4}{\pi}d^{-2} -\frac{32}{\pi^3}d^{-\frac{5}{2}} + O(d^{-3})\\
\xi_1^0&= \sum_{n=2}^\infty f_n d^{-\frac{n}{2}}= 2d^{-1} + 0 d^{-\frac{3}{2}} + O(d^{-2})\\
\xi^1_0&= \sum_{n=2}^\infty g_n d^{-\frac{n}{2}}=\frac{4}{\pi} d^{-1} + \frac{32}{\pi^3}d^{-\frac{3}{2}} + O(d^{-2})\\
\xi_1^1&= -1 + \sum_{n=2}^\infty h_n d^{-\frac{n}{2}}=-1 + 2\left(1 + \frac{2}{\pi}\right)^2d^{-1} +\frac{64 \pi -768}{3\pi^4(\pi-2)}d^{-\frac{3}{2}} + O(d^{-2})
\end{align*}
The loss $\mathcal{L}(\bxi(d)) = \left(\frac{1}{2}-\frac{1}{\pi^2}\right)d^{-1} + O(d^{-\frac{3}{2}})\arr 0$, as $d \arr \infty$~\cite[\S 8.5]{ArjevaniField2021x}.
The series are convergent for sufficiently large $d$. In practice, it seems possible that the series are convergent for small values of $d$, say $d \ge 9$ (implying a radius of convergence of 
at least $1/3$).

A family $(\bxi(d))_{d\ge d_0}$ of critical points with isotropy group $\Delta S_{d_-p} \times S_p$, $d-p > p$, is said to be of \emph{type I} (resp.~\emph{type II}) if
it has an FPS such that as $d \arr \infty$, $\xi_0^0(d) \arr -1 $ (resp.~$+1$). More generally, a critical point $\bxi(d_0)$ is of type I (resp.~type II) if it lies on a family of
critical points of type I (resp.~II). The convergence of the FPS at the specific value $d_0$ is immaterial: what matters is the
behaviour as $d \arr \infty$ and that the critical point lies on a family---as defined above. 

The family of critical points of type II described above is a family of spurious (local, not global) minima for $d \ge 6$. For $d \in [3,5]$, $\bxi(d)$ is not a 
spurious minimum of $\mathcal{L}$ but is a non-degenerate critical point: a bifurcation occurs at about $d = 5.58$---transverse to the fixed point space and associated to the standard representation of $S_{d-1}$~\cite{ArjevaniField2022a}. \\

For families of spurious minima of type II (resp.~type I), the loss $\mathcal{L}(\bxi(d))$ typically converges
to zero (resp.~a strictly positive constant) as $d \arr \infty$. \\
(2) We describe the FPS for a type I family $(\bxi(d))$ of critical points with isotropy $\Delta S_{d-1}$ and $m = 1$. Aside from giving an example of a type I family defining spurious minima, the FPS will be in
powers of $d^{-\frac{1}{4}}$ and the constant terms in the FPS lie in $\{0,\pm \frac{1}{2}, \pm 1\}$. The presence of non-trivial terms $c d^{-\frac{1}{4}}$ implies that convergence is slow.
Setting $\bxi(d) = \bxi \in M(d+1,d)$, we have 
\begin{align*}
        \bxi^0_0 & = -1 + 2d^{-1} + \frac{\pi}{2} d^{-\frac{3}{2}} 
        + O(d^{-\frac{7}{4}})\\
	\bxi^0_{\star 0}&= 2d^{-1} - \sqrt{\pi-2}d^{-\frac{7}{4}}+ O(d^{-2})  \\
        \bxi_1^0 & = d^{-1} - \frac{6+3\pi}{4\pi\sqrt{\pi-2}}d^{-\frac{5}{4}}+ O(d^{-\frac{3}{2}})\\
        \bxi^1_0&= \frac{\sqrt{\pi-2}}{2} d^{-\frac{3}{4}} + O(d^{-1}) \\
        \bxi^1_1 & = \frac{1}{2} + \frac{6+3\pi}{8\pi\sqrt{\pi-2}}d^{-\frac{1}{4}} + O(d^{-\frac{1}{2}}) \\
	\bxi^2_0 &= \frac{\sqrt{\pi-2}}{2}d^{-\frac{3}{4}} + O(d^{-1})\\
        \bxi^2_1 & = -\frac{1}{2} + \frac{6+3\pi}{8\pi\sqrt{\pi-2}}d^{-\frac{1}{4}} + O(d^{-\frac{1}{2}})
\end{align*}
The loss $\mathcal{L}(\bxi(d)) = \frac{1}{2} - \frac{1}{\pi} - \frac{4}{3\pi}d^{-\frac{1}{2}} - \frac{(\pi-2)^{\frac{3}{2}}}{3\pi} d^{-\frac{3}{4}} + O(d^{-1})$.
We refer to~\cite[Appendix A]{ArjevaniField2022b} and also for the computation of the FPS for type II when $m = 1,2$ and isotropy is $\Delta S_{d-1}$.
\examend
\end{exams}

\section{Computing $\frac{\partial \mathcal{L}}{\partial \bbeta}(\WW,\mathbf{0})$: symmetric \& asymmetric targets}\label{Sec: 11}
In this section, results are given on the $\bbeta$-derivatives of $\mathcal{L}$ at points $(\WW,\boldsymbol{\bbeta})\in M(k,d+1)$, where $\WW$ is a critical point of the unbiased network and $\boldsymbol{\bbeta} = \mathbf{0}$.
The bias $\boldsymbol{\bgamma}\in \real^d$ associated to the target $\VV$, is also set to $\mathbf{0}\in \real^d$.
The critical points $\WW$ will define spurious minima in $M(k,d)$ but $(\WW,\mathbf{0})$ will generally \emph{not} be a critical point in 
$M(k,d+1)$ \emph{unless} $\mathcal{L}(\WW)=0$ and so defines the global minimum of $\mathcal{L}$. Under this condition,  
the $\bbeta$-derivative is zero ($\WW$ will be a degenerate critical point if $k > d$~\cite[Appendix A]{ArjevaniField2022b}). The examples presented 
include symmetric spurious minima of types I and II
as described in \cite{ArjevaniField2021x,ArjevaniField2020b,ArjevaniField2021,ArjevaniField2022b}. All of these critical points have are associated 
to the symmetric target $\VV = I_d$ and can be represented by FPS in $\frac{1}{d}$
(typically the initial non-constant term of lowest order will be a constant multiple of $1/d^{\frac{1}{2}}$ or $1/d$, but $1/d^{\frac{1}{4}}$ is possible).
Two additional examples are given one of which is completely asymmetric 
(both target and critical point), the other ``partially asymmetric'' (the target has significant asymmetry and critical points have an FPS representation).

\subsection{Computation of  $\frac{\partial \mathcal{L}}{\partial \boldsymbol{\bbeta}}(\WW,\mathbf{0})$ : a symmetric example}

We start with an example illustrating how the \fps expansion for a family of critical points can be used to estimate $\bbeta$-derivatives at $\boldsymbol{\bbeta} = \mathbf{0}$.
\begin{exam}
Consider the type II family $(\mathfrak{c}(d))_{d \ge 3}$ of critical points with $k=d$ and isotropy $\Delta S_{d-1}$ 
described in Examples~\ref{EX: FPSexs}(1). Using the \fps expansion and the formula given in Remark~\ref{REM: formulabeta},  we estimate the decay of 
$\frac{\partial \mathcal{L}}{\partial \bbeta_0}(\mathfrak{c}(d),\mathbf{0}))$.  
\begin{rem}\label{rem: betaderivlabe}
The symmetry assumption implies that
$\frac{\partial \mathcal{L}}{\partial \bbeta_i}(\mathfrak{c}(d),\mathbf{0})) = \frac{\partial \mathcal{L}}{\partial \bbeta_0}(\mathfrak{c}(d),\mathbf{0}))$, if $i \in \is{d-1}$.
In general, $\frac{\partial \mathcal{L}}{\partial \bbeta_0}(\mathfrak{c}(d),\mathbf{0})) \ne \frac{\partial \mathcal{L}}{\partial \bbeta_{d-1}}(\mathfrak{c}(d),\mathbf{0}))$.
In the sequel, we typically parameterize the $\bbeta$ derivatives by rowtype---indeed, this is what we do when we solve the critical point equations for $\bxi,\bxi_\star,\bbbeta$ 
on the associated seven dimensional fixed point space as well as in the tables presented in the remainder of this section. \rend
\end{rem}

Let $\tau_i$ denote the norm of a row of $\mathfrak{c}(d)$ of rowtype $i$, $i \in \is{2}$. Using the \fps for the 
critical point given in Examples~\ref{EX: FPSexs}(1), we find that
\begin{eqnarray*}
\tau_0 & = & 1 + \left(2+\frac{8}{\pi}\right)d^{-2} - \frac{320\pi}{3\pi^4(\pi-2)}d^{-\frac{5}{2}} + O(d^{-3})\\
\tau_1 & = & 1 -\left(2+\frac{8}{\pi^2}\right)d^{-1} + \frac{320\pi}{3\pi^4(\pi-2)}d^{-\frac{3}{2}} + O(d^{-2})
\end{eqnarray*}
Since $\sum_{j\in\is{d}} \|\ww^j\| = (d-1)\tau_0 + \tau_1$, it follows easily that 
\[
\sum_{j\in\is{d}} \|\ww^j\| = d + O(d^{-2})
\]
and, since $\sum_{j\in\is{d}} \|\vv^j\| = d$, 
\[
\frac{1}{2\sqrt{2\pi}}
\|\ww^i\|\big[\sum_{j\in \is{d}}\|\vv^j\|-\sum_{j\in \is{k}}\|\ww^j\|\big] = O(d^{-2}),\;\; i \in \is{d-1}
\]
For the second term in the formula of Remark~\ref{REM: formulabeta}, we need to estimate the \fps of inner products. Specifically
\subsubsection{Inner products of $\ww^0$ with $\ww_i$ and $\vv^j$, $j \in \is{d}$} Computing, using the \fps,
\begin{eqnarray*}
\langle \ww^0,\ww^0 \rangle & = & \tau_0^2 = 1 + O(d^{-2})\\
(d-2)\langle \ww^0,\ww^1 \rangle & = & \left(-\frac{8}{\pi}\right)d^{-1} - \frac{64}{\pi^3}d^{-\frac{3}{2}} +  O(d^{-2}) \\
\langle \ww^0,\ww^d\rangle & = & \left(\frac{4}{\pi} - 2\right) d^{-1} + \frac{32}{\pi^3} d^{-\frac{3}{2}}  +  O(d^{-2})
\end{eqnarray*}
\begin{eqnarray*}
\langle \ww^0,\vv^0 \rangle & = & 1 +  O(d^{-2}) \\
(d-2)\langle \ww^0,\vv^1 \rangle & = & -\frac{4}{\pi}d^{-1} - \frac{32}{\pi^3}d^{-\frac{3}{2}}  +  O(d^{-2}) \\
\langle \ww^0,\vv^d\rangle & = &  \frac{2}{d} + O(d^{-2})
\end{eqnarray*}
Using these estimates, it follows easily that 
\[
\frac{1}{2\sqrt{2\pi}}
\big[\sum_{j\in \is{d}}\langle \ww^i,\vv^j\rangle - \sum_{j\in \is{k}}\langle\ww^i,\ww^j\rangle\big] = O(d^{-2}).
\]
Hence $\frac{\partial \mathcal{L}}{\partial \bbeta_0}(\mathfrak{c}(d),\mathbf{0}))$ decays at least as fast as $d^{-2}$. With the addition of one further term in each of the five \fps for $\mathfrak{c}(d)$,
the leading term in the \fps for $\frac{\partial \mathcal{L}}{\partial \bbeta_0}(\mathfrak{c}(d),\mathbf{0}))$ can be computed. 
However, it is faster and easier to do a high precision numerical computations to verify that 
the leading term of the \fps for the derivative is a multiple of $d^{-2}$. 

A high precision computation of the derivative taking $d = 10^{512}$ gives the value of the 
leading coefficient $D^0_4$ of $d^{-2}$ in the \fps for the derivative with respect to $\bbeta_0$ to be
\begin{scriptsize}
\begin{eqnarray*}
D^0_4& =& 0.541586362042490362453470777718114374878787486392237162895683618\ldots
\end{eqnarray*}
\end{scriptsize}
(Accurate to 64 decimal places.)
\subsubsection{Computation of $\frac{\partial \mathcal{L}}{\partial \bbeta_0}(\mathfrak{c}(d),\mathbf{0})$ for specific values of $d$}
Using high precision computations, we tabulate values of the derivative for specific choices of $d$: 
\begin{scriptsize}
\begin{table}[h]
\hspace*{-0.5in}
\begin{tabular}{|c||c|c|c|c|c|c|}

\hline
$d = k$          & Value of $\frac{\partial \mathcal{L}}{\partial \bbeta_0}(\mathfrak{c}(d),\mathbf{0})$               \\ \hline \hline
$6$           &$0.71830101072557148145\times 6^{-2}$                 \\ \hline
$10$          &$0.33399917189077800002\times 10^{-2}$                 \\ \hline
$20$          &$0.10183336077334108245\times 10^{-2}$                \\ \hline
$100$         &$0.49862816276459333028\times 10^{-4}$                \\ \hline
$1000$        &$0.53292271608433825762\times10^{-6}$                 \\ \hline
$10^6$        &$0.54139203214105478639 \times 10^{-12}$                \\ \hline
$10^{12}$     &$0.54158617036606773582\times 10^{-24}$                \\ \hline
\end{tabular}
\vspace*{0.05in}
\caption{Numerically computed values of $\frac{\partial \mathcal{L}}{\partial \bbeta_0}(\mathfrak{c}(d),\mathbf{0})$ for a type II critical point with $k = d$ and isotropy $\Delta S_{d-1}$} 
\label{table: estimates}
\end{table}
\end{scriptsize}

Observe that the results for small values of $d$ are smaller than the estimate given by $D^0_4 d^{-2}$.  
The estimate given by $D^0_4/6^2$ is approximately $1.5\times 10^{-2}$. If instead we take $d = 4$, the true value is
approximately $2.8025\times 10^{-2}$ while the estimate $D^0_4d^{-2}$ gives $3.34 \times 10^{-2}$ (the family is not defined for $d = 3$).
\examend
\end{exam}

\subsection{Results for isotropy $\Delta S_d$, type I (type A in \cite{ArjevaniField2021x})}\label{sec: 11.2}
We give results only for $m=0$.

\begin{scriptsize}
\begin{table}[h]
\hspace*{-0.5in}
\begin{tabular}{|c||c|c|c|c|c|c|}

\hline
$d = k$          & Value of $\frac{\partial \mathcal{L}}{\partial \bbeta_0}(\mathfrak{c}(d),\mathbf{0})$               \\ \hline \hline
$10$          &$0.77142020703795495629\times 10^{-2}$                 \\ \hline
$100$          &$0.1220227002721111972\times 10^{-2}$                 \\ \hline
$1000$        &$0.13745083160523485590 \times 10^{-3}$                 \\ \hline
$10^6$        &$0.14472588806870954343 \times 10^{-6}$                \\ \hline
$10^{12}$     &$0.14496749467704066784\times 10^{-12}$                \\ \hline
\end{tabular}
\vspace*{0.05in}
\caption{Numerically computed values of $\frac{\partial \mathcal{L}}{\partial \bbeta_0}(\mathfrak{c}(d),\mathbf{0})$ for a type I critical point with $k = d$ and isotropy $\Delta S_d$}
\label{table: estimates1}
\end{table}
\end{scriptsize}
 
From the table it follows that the derivative decays like $d^{-1}$ and 
high precision computation leads to the value for the initial coefficient $D_2$ in the \fps of the derivative:
\begin{scriptsize}
\begin{eqnarray*}\label{data: typeA}
D^0_2&= & 0.144967736664468798796892840195837787855643034035842961341133133\ldots
\end{eqnarray*}
\end{scriptsize}
\noindent (Correct to the 64 decimal places displayed.)

It is straightforward to obtain an explicit formula for $D^0_2$ in terms of the coefficients of the \fps. 
The initial terms for the \fps of the critical points are easy to derive (see~\cite[\S 8.3]{ArjevaniField2021x}):
\begin{eqnarray*}
\xi^0_0&=&-1 + 2d^{-1} + 2\left(\frac{4}{\pi} -2\right) d^{-2} + O(d^{-\frac{5}{2}})\\
\xi^0_{\star 0}&=& 2d^{-1} + \left(\frac{4}{\pi} -2\right) d^{-2} + O(d^{-\frac{5}{2}})
\end{eqnarray*}
Using the formula of Remark~\ref{REM: formulabeta}, a little algebra gives the exact value of $D^0_2$ and the asymptotics for the derivative: 
\[
\frac{\partial \mathcal{L}}{\partial \bbeta_0}(\mathfrak{c}(d),\mathbf{0}) = \frac{\pi - 2}{\pi\sqrt{2\pi}} d^{-1} + O(d^{-\frac{3}{2}})
\]
The numerically computed vale of $D^0_2$ agrees with $\frac{\pi - 2}{\pi\sqrt{2\pi}}$ to the number of decimal paces shown.

Henceforth, tabular values are only given to 4 significant figures and the coefficients of the leading term for the \fps of 
derivatives are accurate to the $32$ decimal places displayed.
Except where noted, the \fps are series in $d^{-\frac{1}{2}}$ and $D^i_n$ will be the leading non-zero coefficient of $d^{-\frac{n}{2}}$ in the \fps for $\partial \mathcal{L}/\partial \bbeta_i$. 
Thus if $n = 2$, the derivatives decay like $d^{-1}$.

\newpage

\subsection{Results for isotropy $\Delta S_{d-1}$, type I, $m = 0,1,2$} Values of the derivatives are given for $d = 10, 10^3$ and $10^6$.

\begin{scriptsize}
\begin{table}[h]
\hspace*{-0.5in}
\begin{tabular}{|c||c|c|}

\hline
Type I && \\
$d = k$          &  $\partial \mathcal{L}/\partial \bbeta_0$ & $\partial \mathcal{L}/\partial \bbeta_1$              \\  
Isotropy $\Delta S_{d-1}$ &&  \\  \hline \hline
$10$          &$0.7468\times 10^{-2}$    & $0.7283\times 10^{-2}$             \\ \hline
$1000$        &$0.1374 \times 10^{-3}$   & $0.1769\times 10^{-3}$             \\ \hline
$10^6$        &$0.1473 \times 10^{-6}$   & $0.1969\times 10^{-6}$            \\ \hline
\end{tabular}
\vspace*{0.05in}
\caption{Computed values of $\frac{\partial \mathcal{L}}{\partial \bbeta_i}(\mathfrak{c}(d),\mathbf{0})$, $i \in \is{2}$, for a type I critical point with $m = 0$ and isotropy $\Delta S_{d-1}$.} 
\label{table: estimatesx}
\end{table}
\end{scriptsize}

\vspace*{-0.2in}
Initial coefficients for the \fps of derivatives:
\begin{eqnarray*}
D_2^0 & = & 0.144967736664468798796892840195838\ldots \\
D_2^1 & = & 0.197646145812959995056847958971127\ldots
\end{eqnarray*}

\vspace*{-0.01in}

\begin{scriptsize}
\begin{table}[h]
\hspace*{-0.5in}
\begin{tabular}{|c||c|c|c|}

\hline
Type I &&& \\
$k = d+1$          &  $\partial \mathcal{L}/\partial \bbeta_0$ & $\partial \mathcal{L}/\partial \bbeta_1$ &$\partial \mathcal{L}/\partial \bbeta_2$              \\  
Isotropy $\Delta S_{d-1}$& &&  \\  \hline \hline
$10$          &$0.3532\times 10^{-2}$    & $0.4842\times 10^{-2}$ &$0.1070\times 10^{-1}$ \\ \hline
$1000$        &$0.1318 \times 10^{-3}$   & $0.4658\times 10^{-3}$&$0.4885\times 10^{-3}$             \\ \hline
$10^6$        &$0.1446 \times 10^{-6}$   & $0.2495\times 10^{-5}$&$0.2513\times 10^{-5}$            \\ \hline
\end{tabular}
\vspace*{0.05in}
\caption{Computed values of $\frac{\partial \mathcal{L}}{\partial \bbeta_i}(\mathfrak{c}(d),\mathbf{0})$, $i\in \is{3}$, for a type I critical point with $m = 1$ and isotropy $\Delta S_{d-1}$.} 
\label{table: estimatesxx}
\end{table}
\end{scriptsize}
\vspace*{-0.2in}

The \fps is in powers of $d^{-\frac{1}{4}}$ and decay of derivatives is either like $d^{-1}$ ($D^0_4$) or $d^{-\frac{3}{4}}$ ($D^1_3, D^2_3$).
Initial coefficients of the \fps for the  derivatives:
\begin{eqnarray*}
D_4^0 & = & 0.144967736664468798796892840195838\ldots \\
D_3^1 & = & 0.0774456350768987130272615429004301\ldots \\
D_3^2 & = & 0.0774456350768987130272615429004301\ldots
\end{eqnarray*}

\begin{scriptsize}
\begin{table}[h]
\hspace*{-0.5in}
\begin{tabular}{|c||c|c|c|c|}

\hline
Type I &&&& \\
$k = d+2$          &  $\partial \mathcal{L}/\partial \bbeta_0$ & $\partial \mathcal{L}/\partial \bbeta_1$ &$\partial \mathcal{L}/\partial \bbeta_2$ &$\partial \mathcal{L}/\partial \bbeta_3$             \\  
Isotropy $\Delta S_{d-1}$&&&&  \\  \hline \hline
$10$          &$0.2760\times 10^{-2}$    & $0.4500\times 10^{-2}$& $0.4969\times 10^{-1}$& $0.7548\times 10^{-2}$ \\ \hline
$1000$        &$0.1295 \times 10^{-3}$   & $0.4396\times 10^{-3}$& $0.4006\times 10^{-3}$& $0.1937\times 10^{-3}$             \\ \hline
$10^6$        &$0.1449 \times 10^{-6}$   & $0.2483\times 10^{-5}$& $0.2420\times 10^{-5}$& $0.1850i\times 10^{-6}$            \\ \hline
\end{tabular}
\vspace*{0.05in}
\caption{Computed values of $\frac{\partial \mathcal{L}}{\partial \bbeta_i}(\mathfrak{c}(d),\mathbf{0})$, $i\in \is{4}$, for a type I critical point with $m = 2$ and isotropy $\Delta S_{d-1}$.} 
\label{table: estimatesyx}
\end{table}
\end{scriptsize}
The \fps is in powers of $d^{-\frac{1}{4}}$ and decay of derivatives is either like $d^{-1}$ ($D_4^i$) or $d^{-\frac{3}{4}}$ ($D_3^i$).
Initial coefficients of the \fps for the  derivatives:
\begin{eqnarray*}
D_4^0 & = & 0.14496773666446879879689284019584\ldots \\
D_3^1 & = & 0.077445635076898713027261542900430\ldots \\
D_3^2 & = & 0.077445635076898713027261542900430\ldots \\
D_4^3 & = & 0.18171273140268941937100246131777\ldots
\end{eqnarray*}

\subsection{Results for isotropy $\Delta S_{d-1}$, type II, $m = 0,1,2$}
The next three tables give results for type II critical points with isotropy $\Delta S_{d-1}$.

\begin{scriptsize}
\begin{table}[h]
\hspace*{-0.5in}
\begin{tabular}{|c||c|c|}

\hline
Type II && \\
$d = k$          &  $\partial \mathcal{L}/\partial \bbeta_0$ & $\partial \mathcal{L}/\partial \bbeta_1$              \\
Isotropy $\Delta S_{d-1}$ &&  \\  \hline \hline
$10$          &$0.3340\times 10^{-2}$    & $-0.7436\times 10^{-2}$             \\ \hline
$1000$        &$0.5329 \times 10^{-6}$   & $-0.2304\times 10^{-3}$             \\ \hline
$10^6$        &$0.5414. \times 10^{-12}$   & $-0.2371\times 10^{-6}$            \\ \hline
\end{tabular}
\vspace*{0.05in}
\caption{Computed values of $\frac{\partial \mathcal{L}}{\partial \bbeta_i}(\mathfrak{c}(d),\mathbf{0})$, $i \in \is{2}$, for a type II critical point with $m = 0$ and isotropy $\Delta S_{d-1}$.}
\label{table: estimatesIIm0}
\end{table}
\end{scriptsize}

\vspace*{-0.2in}
Initial coefficients for the \fps of derivatives:
\begin{eqnarray*}
D_4^0 & = & 0.541586362042490362453470777718114\ldots \\
D_2^1 & = &-0.237257064180446401333830561616386\ldots
\end{eqnarray*}

\vspace*{-0.01in}

\begin{scriptsize}
\begin{table}[h]
\hspace*{-0.5in}
\begin{tabular}{|c||c|c|c|}

\hline
Type II &&& \\
$k = d+1$          &  $\partial \mathcal{L}/\partial \bbeta_0$ & $\partial \mathcal{L}/\partial \bbeta_1$ &$\partial \mathcal{L}/\partial \bbeta_2$              \\
Isotropy $\Delta S_{d-1}$& &&  \\  \hline \hline
$10$          &$0.2605\times 10^{-2}$    & $-0.6054\times 10^{-2}$ &$-0.2055\times 10^{-2}$ \\ \hline
$1000$        &$-0.1011 \times 10^{-5}$   & $-0.2062\times 10^{-3}$&$-0.7271\times 10^{-5}$             \\ \hline
$10^6$        &$-0.5076 \times 10^{-10}$   & $-0.2131\times 10^{-6}$&$-0.2369\times 10^{-9}$            \\ \hline
\end{tabular}
\vspace*{0.05in}
\caption{Computed values of $\frac{\partial \mathcal{L}}{\partial \bbeta_i}(\mathfrak{c}(d),\mathbf{0})$, $i\in \is{3}$, for a type I critical point with $m = 1$ and isotropy $\Delta S_{d-1}$.}
\label{table: estimatesIIm1}
\end{table}
\end{scriptsize}
\vspace*{-0.2in}

The \fps is in powers of $d^{-\frac{1}{2}}$ and decay of the the derivative is either like $d^{-\frac{3}{2}}$ or $d^{-1}$.
Initial coefficients of the \fps for the  derivatives:
\begin{eqnarray*}
D^0_3 & = & -0.0513640604753390711588488689614188\ldots \\
D^1_2 & = & -0.213238489802580052077245332033039\ldots  \\
D^2_3 & = & -0.237042923895697357373281965133800\ldots 
\end{eqnarray*}
\begin{rem}
$S_2$ acts freely on the last two rows (rowtypes 1, 2). Interchanging these rows will swap the asymptotics. In general, if isotropy is $\Delta (S_{d-1} \times S_1)$ and $m> 0$, there 
will be a free action of $S_m$ on the last $m+1$ rows and corresponding free action on the fixed point space. 
\rend
\end{rem}
\begin{scriptsize}
\begin{table}[h]
\hspace*{-0.5in}
\begin{tabular}{|c||c|c|c|c|}

\hline
Type II &&&& \\
$k = d+2$          &  $\partial \mathcal{L}/\partial \bbeta_0$ & $\partial \mathcal{L}/\partial \bbeta_1$ &$\partial \mathcal{L}/\partial \bbeta_2$ &$\partial \mathcal{L}/\partial \bbeta_3$             \\
Isotropy $\Delta S_{d-1}$&&&&  \\  \hline \hline
$10$          &$0.2590\times 10^{-2}$    & -$0.5638\times 10^{-2}$& $0.1317\times 10^{-2}$& $0.1147\times 10^{-2}$ \\ \hline
$1000$        &$-1.018 \times 10^{-6}$   & $-0.2052\times 10^{-3}$& $-5.682\times 10^{-6}$& $-2.536\times 10^{-6}$             \\ \hline
$10^6$        &$-5.099 \times 10^{-11}$   & $-0.2130\times 10^{-6}$& $-1.86\times 10^{-10}$& $-8.051\times 10^{-11}$            \\ \hline
\end{tabular}
\vspace*{0.05in}
\caption{Computed values of $\frac{\partial \mathcal{L}}{\partial \bbeta_i}(\mathfrak{c}(d),\mathbf{0})$, $i\in \is{4}$, for a type II critical point with $m = 2$ and isotropy $\Delta S_{d-1}$.}
\label{table: estimatesIIm2}
\end{table}
\end{scriptsize}

\vspace*{-0.35in}

The \fps is in powers of $d^{-\frac{1}{2}}$ and derivatives decay  like $d^{-\frac{3}{2}}$ or $d^{-1}$.
Initial coefficients of the \fps for the  derivatives:
\begin{eqnarray*}
D^0_3 & = & -0.0515991045178723939348354300977768\ldots \\
D^1_2 & = & -0.213189482387164650802356811574747\ldots  \\
D^2_3 & = & -0.187109834651767919653399769806042\ldots  \\
D^3_3 & = & -0.0804839616208356631748165154342071\ldots 
\end{eqnarray*}

\subsection{Results for isotropy $G^\Delta_{d-2,2} =\Delta (S_{d-2}\times S_2)$, type I, $m = 0,1,2$}
For this example, as $d \arr \infty$, $\xi_0^0 \arr +1$, $\xi_1^1 \arr -1$ and all off-diagonal entries converge to zero.

\vspace*{-0.15in}
\begin{scriptsize}
\begin{table}[h]
\hspace*{-0.5in}
\begin{tabular}{|c||c|c|}
\hline
Type I && \\
$d = k$          &  $\partial \mathcal{L}/\partial \bbeta_0$ & $\partial \mathcal{L}/\partial \bbeta_1$              \\
Isotropy $G^\Delta_{d-2,2}$ &&  \\  \hline \hline
$10$          &$0.7160\times 10^{-2}$    & $0.6965\times 10^{-2}$             \\ \hline
$1000$        &$0.1374 \times 10^{-3}$   & $0.1769\times 10^{-3}$             \\ \hline
$10^6$        &$0.1447 \times 10^{-6}$   & $0.1969\times 10^{-6}$            \\ \hline
\end{tabular}
\vspace*{0.05in}
\caption{Computed values of $\frac{\partial \mathcal{L}}{\partial \bbeta_i}(\mathfrak{c}(d),\mathbf{0})$, $i \in \is{2}$, for a type I critical point with $m = 0$ and isotropy $G^\Delta_{d-2,2}$.}
\label{table: estimatesz}
\end{table}
\end{scriptsize}

\vspace*{-0.2in}

Initial coefficients of the \fps in $d^{-\frac{1}{2}}$ for derivatives:
\begin{eqnarray*}
D_2^0 & = & 0.144967736664468798796892840195838\ldots \\
D_2^1 & = & 0.197646145812959995056847958971127\ldots
\end{eqnarray*}

\vspace*{-0.01in}

\begin{scriptsize}
\begin{table}[h]
\hspace*{-0.5in}
\begin{tabular}{|c||c|c|c|}

\hline
Type I &&& \\
$k = d+1$          &  $\partial \mathcal{L}/\partial \bbeta_0$ & $\partial \mathcal{L}/\partial \bbeta_1$ &$\partial \mathcal{L}/\partial \bbeta_2$              \\
Isotropy $G^\Delta_{d-2,2}$& &&  \\  \hline \hline
$10$          &$0.2840\times 10^{-2}$    & $0.4030\times 10^{-2}$ &$1.087\times 10^{-2}$ \\ \hline
$1000$        &$0.1304 \times 10^{-3}$   & $0.2585\times 10^{-3}$&$0.4299\times 10^{-3}$             \\ \hline
$10^6$        &$0.1445 \times 10^{-10}$   & $0.3027\times 10^{-6}$&$0.4764\times 10^{-9}$            \\ \hline
\end{tabular}
\vspace*{0.05in}
\caption{Computed values of $\frac{\partial \mathcal{L}}{\partial \bbeta_i}(\mathfrak{c}(d),\mathbf{0})$, $i\in \is{3}$, for a type I critical point with $m = 1$ and isotropy $G^\Delta_{d-2,2}$.}
\mbox{           } \\

\label{table: estimatesIzIm1}
\end{table}
\end{scriptsize}
\vspace*{-0.2in}

The \fps is in powers of $d^{-\frac{1}{2}}$ and decay of the the derivative is like $d^{-1}$.
Initial coefficients of the \fps for the  derivatives:
\begin{eqnarray*}
D^0_2 & = & 0.144967736664468798796892840195838\ldots \\
D^1_2 & = & 0.304176867750552726817930542844234\ldots  \\
D^2_2 & = & 0.477893257041395097510780683943154\ldots 
\end{eqnarray*}

\vspace*{-0.01in}

\begin{scriptsize}
\begin{table}[h]
\hspace*{-0.5in}
\begin{tabular}{|c||c|c|c|c|}

\hline
Type I &&&& \\
$k = d+2$          &  $\partial \mathcal{L}/\partial \bbeta_0$ & $\partial \mathcal{L}/\partial \bbeta_1$ &$\partial \mathcal{L}/\partial \bbeta_2$ &$\partial \mathcal{L}/\partial \bbeta_3$             \\
Isotropy $G^\Delta_{d-2,2}$&&&&  \\  \hline \hline
$10$          &$0.2186\times 10^{-2}$    & $0.3591\times 10^{-2}$& $0.5519\times 10^{-2}$& $0.6705\times 10^{-2}$ \\ \hline
$1000$        &$0.1298 \times 10^{-3}$   & $0.2544\times 10^{-3}$& $0.3176\times 10^{-3}$& $0.1431\times 10^{-3}$             \\ \hline
$10^6$        &$0.1445 \times 10^{-6}$   & $0.2996\times 10^{-6}$& $0.3636\times 10^{-6}$& $0.1346\times 10^{-6}$            \\ \hline
\end{tabular}
\vspace*{0.05in}
\caption{Computed values of $\frac{\partial \mathcal{L}}{\partial \bbeta_i}(\mathfrak{c}(d),\mathbf{0})$, $i\in \is{4}$, for a type I critical point with $m = 2$ and isotropy $G^\Delta_{d-2,2}$.}
\label{table: estimatesIIIm2}
\end{table}
\end{scriptsize}

\vspace*{-0.01in}

The \fps is in powers of $d^{-\frac{1}{2}}$ and derivatives decay  like $d^{-1}$.
Initial coefficients of the \fps for the  derivatives:
\begin{eqnarray*}
D^0_2 & = & 0.144967736664468798796892840195838\ldots \\
D^1_2 & = & 0.301137183047154477921400976012972\ldots  \\
D^2_2 & = & 0.365035379908109906816379948976053e\ldots  \\
D^3_2 & = & 0.134277248901558876075204468076051\ldots  
\end{eqnarray*}

\subsection{Results for isotropy $\Delta (S_{d-2}\times S_2)$, type II, $m = 0,1$}
For this example, as $d \arr \infty$, $\xi_0^0 \arr +1$, $\xi_1^1 \arr -1$ and all off-diagonal entries converge to zero. Only results are given for $m = 0,1$ as at the time of writing, we are unaware of a
type II solution with an \fps which originates from the type II solution with $m = 1$ described below. 

\vspace*{-0.15in}
\begin{scriptsize}
\begin{table}[h]
\hspace*{-0.5in}
\begin{tabular}{|c||c|c|}
\hline
Type II && \\
$d = k$          &  $\partial \mathcal{L}/\partial \bbeta_0$ & $\partial \mathcal{L}/\partial \bbeta_1$              \\
Isotropy $G^\Delta_{d-2,2}$ &&  \\  \hline \hline
$10$          &$0.6278\times 10^{-2}$    & $-0.4145\times 10^{-2}$             \\ \hline
$1000$        &$1.065 \times 10^{-6}$   & $-0.2296\times 10^{-3}$             \\ \hline
$10^6$        &$1.083 \times 10^{-6}$   & $-2.371\times 10^{-3}$            \\ \hline
\end{tabular}
\vspace*{0.05in}
\caption{Computed values of $\frac{\partial \mathcal{L}}{\partial \bbeta_i}(\mathfrak{c}(d),\mathbf{0})$, $i \in \is{2}$, for a type II critical point with $m = 0$ and isotropy $G^\Delta_{d-2,2}$.}
\label{table: estimatesz1}
\end{table}
\end{scriptsize}

\vspace*{-0.2in}

Initial coefficients of the \fps in $d^{-\frac{1}{2}}$ for derivatives:
\begin{eqnarray*}
D_4^0 & = & 1.08317272408498072490694155543623\ldots \\
D_2^1 & = & -0.237257064180446401333830561616386 \ldots
\end{eqnarray*}

\vspace*{-0.01in}

\begin{scriptsize}
\begin{table}[h]
\hspace*{-0.5in}
\begin{tabular}{|c||c|c|c|}

\hline
Type II &&& \\
$k = d+1$          &  $\partial \mathcal{L}/\partial \bbeta_0$ & $\partial \mathcal{L}/\partial \bbeta_1$ &$\partial \mathcal{L}/\partial \bbeta_2$              \\
Isotropy $G^\Delta_{d-2,2}$& &&  \\  \hline \hline
$10$          &$0.4799\times 10^{-2}$    & $-0.3701\times 10^{-2}$ &$-2.594\times 10^{-2}$ \\ \hline
$1000$        &$-1.188 \times 10^{-6}$   & $-0.2042\times 10^{-3}$&$-1.242\times 10^{-5}$             \\ \hline
$10^6$        &$-7.244 \times 10^{-11}$   & $-0.2092\times 10^{-6}$&$-4.060\times 10^{-10}$            \\ \hline
\end{tabular}
\vspace*{0.05in}
\caption{Computed values of $\frac{\partial \mathcal{L}}{\partial \bbeta_i}(\mathfrak{c}(d),\mathbf{0})$, $i\in \is{3}$, for a type II critical point with $m = 1$ and isotropy $G^\Delta_{d-2,2}$.}
\mbox{           } \\

\label{table: estimatesIzIIm2}
\end{table}
\end{scriptsize}
\vspace*{-0.5in}

The \fps is in powers of $d^{-\frac{1}{2}}$ and decay of the the derivative is like $d^{-1}$.
Initial coefficients of the \fps for the  derivatives:
\begin{eqnarray*}
D^0_3 & = & -0.0735719750760874284557949071165949\ldots \\
D^1_2 & = & -0.209265817903196378166307212398253\ldots  \\
D^2_3 & = & 0.406200155553695625169686284999401\ldots 
\end{eqnarray*}

\subsection{Results for isotropy $\Delta (S_{d-p}\times S_p)$, type II, $p > 2$}
Examples of critical point families are given which have \fps representations,  isotropy $G^\Delta_{d-p,p}$, $p > 2$, and are such that diagonal entries converge to $\pm1$. These solutions can all represented in
a two parameter critical point family $(\bxi(p,q))$ with isotropy $G^\Delta_{p,q}$. If $d$ is even, there is a family $(\bxi(p,p))$ with isotropy $G^\Delta_{p,p}$ such that half the diagonal entries converge to $+1$ and half to $-1$. 
Here the isotropy is \emph{not} a proper maximal subgroup of $S_{2p}$---the maximal proper subgroup is $S_2 \wr S_p$.
\begin{scriptsize}
\begin{table}[h]
\hspace*{-0.5in}
\begin{tabular}{|c||c|c|}
\hline
Type II && \\
$d = k$          &  $\partial \mathcal{L}/\partial \bbeta_0$ & $\partial \mathcal{L}/\partial \bbeta_1$              \\
Isotropy $G^\Delta_{d-p,p}$ &&  \\  \hline \hline
$p = 3$ && \\ \hline
$d=10$          &$0.8835\times 10^{-2}$    & $-0.1100\times 10^{-2}$             \\ \hline
$d=10^6$        &$1.624 \times 10^{-12}$   & $-0.2372\times 10^{-6}$            \\ \hline
Initial Coeff.
of \fps  & $D^0_4 = 1.625$ & $D^0_2 = -2.373$ \\ \hline\hline
$p = 4$ && \\ \hline
$d=10$          &$0.8835\times 10^{-2}$    & $-0.1100\times 10^{-2}$             \\ \hline
$d=10^6$        &$1.624 \times 10^{-12}$   & $-0.2371\times 10^{-6}$            \\ \hline
Initial Coeff. of \fps & $D^0_4 = 1.625$ & $D^0_2 = -2.373$ \\ \hline\hline
$d  = 2p$ && \\  \hline 
$p=5$          &$0.5521\times 10^{-2}$    & $0.5329\times 10^{-2}$             \\ \hline
$p=10^6/2$        &$0.2106 \times 10^{-6}$   & $0.1140\times 10^{-6}$            \\ \hline \hline
\end{tabular}
\vspace*{0.05in}
\caption{Computed values of $\frac{\partial \mathcal{L}}{\partial \bbeta_i}(\mathfrak{c}(d),\mathbf{0})$, $i \in \is{2}$, for type II critical points with $p = 3,4$ and  $d-p = p$
If $d-p = p$, the diagonal entries of the upper diagonal block converge to $+1$ as $d \arr \infty$.}
\label{table: estimatesq1}
\end{table}
\end{scriptsize}

\vspace*{0.01in}

If $d-p = p$, the \fps for $\bbeta$ derivatives are in powers of $d^{-\frac{1}{2}}$ and derivatives decay like $d^{-1}$.
Initial coefficients of the \fps for the  derivatives:
\begin{eqnarray*}
D^0_2 & = & 0.2119019935040464\ldots \;\;
D^1_2  =    0.1135488183150048\ldots
\end{eqnarray*}

\mbox{   }

\subsection{Asymmetric targets and fractional power series}
We give two examples where the target is not $I_d$. In the first example the target $\TT\in M(d,d)$  is block diagonal with
the upper diagonal block being $I_{d-6}$ and the remaining block $A \in M(6,6)$ having no symmetries with respect to the action of $S_6^r \times S_6^c$. The off diagonal blocks are matrices of zeroes.
The second example is an asymmetric matrix in $M(10,10)$ which is not fixed by any non-identity element of $S_{10}^r\times S_{10}^c$.

\subsubsection{Path following deformations of the target}
Starting with the target $\VV = I_d$ and a non-degenerate critical point $\bxi$, the basic idea is to path follow $\bxi$ as the target $\VV$ is deformed to the target $\TT\in M(d,d)$.
Specifically, choose the linear path $\TT_\lambda = \lambda \TT + (1-\lambda)\VV$, $\lambda \in [0,1]$, from $\VV$ to $\TT$.  Let $\widetilde{\bxi}(\lambda)$ denote the conjectured continuous path of critical points joining $\wbxi(0)\defoo\bxi$ to the
critical point $\wbxi\defoo\wbxi(1)$ for the target $\TT_1 = \TT$. To this end, break $[0,1]$ into $N$ equal subintervals each of length  $1/N$, $N \in \pint$. 
We are given the critical point $\bxi=\wbxi(0)$ associated to the target $\TT_0 = \VV$. Proceeding inductively, suppose we have computed the critical point $\wbxi(i)$ associated to the target $\TT_{\frac{i}{N}}$, for all $i \le n < N$. 
The next step is to change the target to $\TT_{\frac{n+1}{N}}$, take the approximate solution $\wbxi(n)$ and then check that 
Newton's method gives convergence to a critical point $\wbxi(n+1)$ associated to the target $\TT_{\frac{n+1}{N}}$. The process terminates with $\wbxi(N)$ which will be a critical point associated to the target $\TT_1 = \TT$. 
Suppose, for example, that $\bxi$ a type II critical point. Provided  $d \ge 6$~\cite[\S 8]{ArjevaniField2021x}, $\bxi$ is a non-degenerate spurious minimum. Provided $\|\TT-\VV\|$ is sufficiently small, 
$\wbxi$ will be a non-degenerate spurious minimum since non-degenerate critical points are stable under perturbation of the underlying dynamics. However, since the bifurcation of $\bxi$
from saddle to spurious minimum  occurs near $d = 5.6$, the expectation is that for $d$ near $6$, path following is likely to be difficult in view of the complex 
dynamics close to the bifurcation (see~\cite{ArjevaniField2022a} for more on this). 
However, for type II critical points convergence of eigenvalues as functions of $d$ is fast near the bifurcation~\cite{ArjevaniField2020b} while 
for type I growth is relatively slow (\emph{op.~cit}). As a result,  path following is likely to be harder for type I critical points close to bifurcation 
(near $d = 7$ for type I).   This turns out to be the case (see notes on the first example below).  A consequence is that 
it is sometimes helpful to use piecewise linear paths to avoid points close to  bifurcation and/or Newton fails to converge. 

Even when target $\TT$ and $\wbxi(N)$ have very few or no symmetries, it is still possible to obtain \fps expansions for critical points and this is illustrated in both examples.  

\begin{exam}[Partial asymmetry with a $6\times 6$ asymmetric block]
As asymmetric block, take
the matrix $A\in M(6,6)$ defined by
\begin{scriptsize}
\[
A=
\left[\begin{matrix}
1.03 & 0.04 & -0.02 & 0.07 & 0.0 & -0.02 \\
0.016 & 0.96 & -0.026 & -0.028 & 0.0 & -0.026 \\
0.056 & 0.0276 & 1.04 & 0.0 & 0.076 & 0.0 \\
-0.0397 & 0.0396 & 0.0 & 1.01 & 0.0 & -0.07 \\
0.04 & 0.0 & 0.0 & -0.0277 & 0.97 & 0.01 \\
0.0112 & -0.02 & 0.05 & -0.003 & 0.038 & 1.1 
\end{matrix} \right]
\]
\end{scriptsize}
For $d \ge 8$, extend to the matrix $\TT_d$ defined as the block diagonal matrix
\[
\TT_d = [I_{d-6},A] \in M(d,d)
\]
Take a value of $d \gg 6$---the value of $d$ at which the type II critical point bifurcates to a spurious minimum--for example $d = 100$. Using the path following method described above, compute
the critical point $\wbxi(100)$ associated to the target $\TT_{100}$ and then varying $d$ and using path following compute type II critical points for $\TT_{10}, \TT_{10^n}$, $n \in \{6,7,8\}$. 
As part this process, the 7 derivatives $\partial \mathcal{L}/\partial \bbeta_i$, $i \in \is{7}$ at $\bbeta = \mathbf{0}$ are also computed.
\begin{scriptsize}
\begin{table}[h]
\begin{tabular}{|c||c|c|c|c|c|c|c|}
\hline
Type II, m=0 &&&&&&& \\
          &  $\frac{\partial \mathcal{L}}{\partial \bbeta_0}$ &$\frac{\partial \mathcal{L}}{\partial \bbeta_1}$&$\frac{\partial \mathcal{L}}{\partial \bbeta_2}$&$\frac{\partial \mathcal{L}}{\partial \bbeta_3}$& $\frac{\partial \mathcal{L}}{\partial \bbeta_4}$ &$\frac{\partial \mathcal{L}}{\partial \bbeta_5}$&$\frac{\partial \mathcal{L}}{\partial \bbeta_6}$  \\
Isotropy $\Delta S_{d-6}$& &&&&&&  \\  \hline \hline
$d=10$          &$3.8\text{e-3}$    & $3.4\text{e-3}$ &$3.2\text{e-3}$&$4.0\text{e-3}$&$2.9\text{e-3}\text{e-3}$&$4.4\text{e-3}$&$-8.4\text{e-3}$ \\ \hline
$d=1000$        &$6.1\text{e-7}$   & $-1.6\text{e-6}$&$-8.0\text{e-6}$ &$1.0\text{e-5}$&$-1.4\text{e-5}$&$9.0\text{e-6}$&$-2.8\text{e-4}$        \\ \hline
$d=10^6$        &$6.2\text{e-13}$   & $-2.2\text{e-9}$&$-8.7\text{e-9}$&$9.7\text{e-9}$&$-1.5\text{e-8}$&$8.5\text{e-9}$&$-2.8\text{e-7}$     \\ \hline
\end{tabular}
\vspace*{0.05in}
\caption{Computed values of $\frac{\partial \mathcal{L}}{\partial \bbeta_i}(\mathfrak{c}(d),\mathbf{0})$, $i\in \is{3}$, for a type II critical point with $m = 1$ and isotropy $G^\Delta_{d-2,2}$.}
\mbox{           } \\

\label{table: estimatesIzIIam2}
\end{table}
\end{scriptsize}
Referring to Table~\ref{table: estimatesIzIIam2}, the second column gives the derivative with respect to $\bbeta_0$ and the rapid decay reflects the $\Delta S_{d-6}$ symmetry of the $I_{d-6}$ block. All the decay rates are characteristic of type II.
The results of the table suggest an \fps for the critical point in powers of $d^{-\frac{1}{2}}$. On this basis, estimate numerically lower order coefficients in the \fps and then
compute the associated critical point for the target with $d=10^{512}$.
We have the following estimate for the initial non-zero coefficient of the 
seven derivatives. This was computed taking $d = 10^{512}$ and the terms are accurate to the 16 significant figures displayed and consistent with the values shown in Table~~\ref{table: estimatesIzIIam2},
\[
\begin{matrix}
D^0_4& =  &    0.6211900643271606 & &  \\
D^1_2& =  & -0.002193417454242254  & D^2_2& = -0.008698020018753445\\
D^3_2&=   & -0.008698020018753445  & D^4_2& =  0.009751822809799718\\
D^5_2& = &   -0.014474815696883097 & D^6_2& =     -0.2878453022368189
\end{matrix}
\]
The loss decays like $d^{-1}$ with initial coefficient $0.36076059...$. 

\begin{rems}
(1)
Although we have not given numerical approximations for the critical points, it is worth remarking that as $d \arr \infty$, the constant terms in the \fps converge to
$[I_{d-6},\TT]$ \emph{except that the last row of components converges to the negative of the last row of $\TT$}. That is, if we set $\wbxi(d) = \wbxi$, then $\widetilde{\bxi^{6}_j} \arr -t_{6j}$, as $d\arr \infty$, $j = 0,\ldots,6$.
Similar statements are valid for asymmetric critical points resulting from deformation of critical points of type I.   \\
(2) There is no difficulty in constructing a path from $d=10$ to $d=9$. However, $d = 8$ provides a challenge and even with a piecewise linear path with several linear segments, we were unable to reach $d = 8$.
\rend
\end{rems}
\end{exam}

\begin{exam}[Full asymmetry for $d=k=10$]
Take as target the matrix $\TT_{10} = \TT \in M(10,10)$ defined by 
\begin{scriptsize}
\[
\hspace*{-0.3in}\left[\begin{matrix} 
1.09 & 0.1 & -0.04& 0.02& 0.07& -0.1& 0.04& -0.08& 0.1& 0.01 \\
0.05& 1.07& 0.04& 0.01& 0.111& -0.00876& 0.02& 0.09& -0.03& 0.111 \\
0.05& 0.03& 1.12& 0.04& -0.043& 0.04& 3.25\text{e-03}& -0.1& 0.008& -0.024 \\
0.04& 0.008& 0.1& 0.97& 0.005& 0.01& 0.032& -0.004& 0.00756& 0.85\\
0.02& 5.078\text{e-05} & -0.03& 0.06& 1.1117& -0.06& 0.002& 0.01& 5.78\text{e-05} & 0.02\\
0.006& 0.09& 0.13& 0.002& 5.6\text{e-06} & 1.03609& 0.07& 0.005& -0.13& 7.77\text{e-04}  \\
-0.05& 0.0909& 0.02& -0.007& 0.088& 0.012& 1.2& 0.0091& -0.07& 0.0009  \\
0.0& -0.002& 0.02& 0.041& -0.140& 0.07& 0.0& 1.3& 0.003& 0.01 \\
0.0008& -0.1& 0.0006& -0.02& 0.002& 0.007& 0.04& 0.003& 1.14& 0.004  \\
0.0039& 0.1& 0.007& 0.002& 0.0007& 0.05& -0.03& 0.005& 0.0& 0.95 
\end{matrix} \right]
\]
\end{scriptsize}
This matrix has no permutation symmetries. It is also not a small perturbation of $I_{10}$---note in particular the matrix entry $t_{3,9}$ (indexing from $0$ to $9$).

Starting with the target $\VV = I_{10}$ and the spurious minimum the critical point of type II ($k = d = 10$ and isotropy $\Delta S_9$), path follow $\bxi$ as the target $\VV$ is deformed to the target $\TT$ as described previously. 
Using 40 iterations per step, Newton's method converges at every step and we obtain a path connecting $\bxi$ to a critical point $\wbxi(10) =\wbxi$ associated to the target $\TT$.
The path following procedure was implemented taking $N = 100$ and $N= 500$ and a log kept of the convergence at the beginning of each step as well the error at the end of the step measured by the
size of the components of $\grad{\mathcal{L}}(\wbxi(i))$.  Computations were done with $2024$-bit precision and
the errors at the end of each step were always $10^{-616}$ or better. There were no indications of bifurcation or change in stability along the path and so it is likely that $\wbxi$ is a non-degenerate spurious minimum.

Assuming zero bias, the 10 derivatives $\partial \mathcal{L}/\partial \bbeta_i$ are given to 16 significant figures by 
\[
\begin{matrix}
D^0 = 1.603536471231742\text{e-03} & D^1 = 2.714453782566770\text{e-03}\\
D^2 = 1.352298945591707\text{e-03} & D^3 = 8.185795884148845\text{e-03}\\
D^4=  1.603140416072457\text{e-03} & D^5 = 1.956506389478921\text{e-03} \\
D^6 = 1.484721381112565\text{e-03} & D^7 = 1.950916845556952\text{e-03} \\
D^8=  1.810232124741424\text{e-03} & D^9 =-1.610119156961948\text{e-03}
\end{matrix}
\]
\begin{rem}
Even though the target is not close to being symmetric, the derivatives with $d = 10$ are all smaller than $10^{-2}$, as in previous symmetric examples. Note however $D^3$, which is the largest derivative. 
The larger
value of $D^3$ is likely a consequence of the big symmetry breaking perturbation of $0.85$ seen in the final entry of the fourth row of $\TT$. \rend
\end{rem}

Although $\TT_{10}$ and $\wbxi(10)$ have no permutation symmetries (that is, trivial isotropy under the action of $S^r_{10} \times S^c_{10}$), it is possible to regard $\wbxi(10)$ as analytically connected to a family 
$\wbxi(d)$ defined for $d \ge 12$ and such that $\wbxi(d)$ has isotropy $\Delta S_{d-10}$. We explain the basic idea. Starting with the matrix $\TT_{10}$ we define $\TT_{10}^\star$ to be the 
$10 \times 10$ matrix with diagonal blocks $I_2$ and $\TT_8$, where $\TT_8$ is the $8 \times 8$-matrix defined by the last 8 rows and columns of $\TT_{10}$.  The two off diagonal blocks of $\TT_{10}^\star$ are zero matrices.
We take the linear path from $\TT_{10}$ to $\TT_{10}^\star$ and let $\wbxi^\star(10)$ denote the corresponding critical point obtained by path following. The critical point $\wbxi^\star$ has isotropy $\Delta S_2$.
The fixed point space for the action of  $\Delta S_2$ on $M(10,10)$ is of dimension $82$. On account of the $S_2$ symmetry, we can path follow
$\wbxi^\star(d)$ as $d$ is increased from $10$ to $12$. The isotropy of $\wbxi^\star(12)$ will be $\Delta S_4$. Regard the $10 \times 10$ lower diagonal block 
of $\TT_{12}^\star$ as asymmetric keeping the $\Delta S_2$ symmetric block $I_2$ from the
first two rows and columns. Now deform  $\TT_{12}^\star$ to $\TT_{12} = [I_2,\TT_{10}]$. In practice, it is easier to take $d=12$ and path deform the type II critical point $\bxi(12)$ with isotropy
$\Delta S_{11}$ to the critical point in $M(12,12)$ associated to the asymmetric target $\TT_{12}$ and then further path deform this critical point to a critical point in $M(12,12)$ with isotropy
$\Delta S_4$, target $\TT_{12}^\star$. When computed, this critical point is $\wbxi^\star(12)$ and 
so the required path connection from $\wbxi(10)$ to $\wbxi(12)$ is the composition of the path from $\wbxi(10)$ to $\wbxi^\star(12)$ 
and the reversed path from $\wbxi^\star(12)$ to $\wbxi(12)$.

While there is no \fps for the asymmetric example ($d$ is fixed),  an \fps can be constructed
if one extends to $d> 12$ as described above. 
Taking $d = 10^{128}$, we find that
the initial non-zero terms of the \fps for the derivatives are either multiples of 
$d^{-1}$ or $d^{-\frac{1}{2}}$ and, assuming\footnote{The result still holds even if the \fps is in
powers of $d^{-\frac{1}{2^n}}$, $n > 1$} the \fps are power series in $d^{-\frac{1}{2}}$,
the initial non-zero coefficients are:  
\[
\begin{matrix}
D^0_4&=&   0.2805800311544216&&&\\ 
D^1_2&=& 0.001873922117631949 & D^2_2 & =&   0.03788347013872621 \\
D^3_2&=& -0.003893028296665770 & D^4_2 & =&    0.1570405665520425\\
D^5_2&=&  0.001836908105484236&  D^6_2 & =&   0.01117268780308763\\
D^7_2&=& -0.005708669773701929& D^8_2&=&   0.002806305904052021\\
D^9_2&=&  -0.0004776786377640696&  D^{10}_2&= & -0.2088224377148879
\end{matrix}
\]
\end{exam}
\section{Families of spurious minima in biased ReLU networks}\label{sec: 15}  
In this section, the question of whether or not families of critical points and spurious minima that are seen in 
unbiased ReLU networks with $d$-inputs and target matrix $\VV = I_d$~\cite{ArjevaniField2021x,ArjevaniField2021,ArjevaniField2022b}
extend naturally to biased ReLU networks. This problem is examined in the simplest case of ReLU networks for which the number of neurons equals the number of inputs ($k = d$).
In particular, no investigation is made of the over-specified case ($k > d$) nor of the spectrum of the Hessian at critical points. 
This is partly for reasons of length but primarily to 
keep the focus on the naturality of the extension (and what that means).  The arguments given are based on high precision numerics and implicitly
assume the \fps exist. 
However, for the families presented below it is
likely that more formal arguments, based on the implicit function, are relatively simple and similar to those given for 
unbiased networks~\cite{ArjevaniField2021x}, \cite[Appendix A, remark 4]{ArjevaniField2022b}.

\emph{It is assumed throughout that the target matrix has zero bias}---that is, $\bgamma_j = 0$, $j \in \is{d}$. 
Including examples where the target has bias would distract from the main question on the naturality of the extension. 

\subsection{Isotropy $\Delta S_d$, $d = k$: Type I in  a network with bias}\label{sec: sec12}
Two basic definitions inspired by high precision numerical results,  are introduced in this subsection 
and play an important role in the remaining subsections.  Indeed, 
one definition motivates the formulation of an \emph{ansatz} that leads to a formal proof for the 
existence of \fps for all the families of critical points in biased networks described below. 

We start with the simplest case: the family of spurious minima of type I (or A) described in~\cite{ArjevaniField2021x}.  
For this family there is exactly one rowtype $0$ and $q_0 = d$.  

Critical points with isotropy $\Delta S_d$ in the type I family for the unbiased network (resp.~biased network) are denoted by $\bxi(d)=\bxi\in M(d,d)^{\Delta S_d}$ 
(resp.~$\wbxi(d) = \wbxi \in M(d,d+1)^{\Delta S_d}$)---the previously introduced notation
$\mathfrak{c}(d)$ (resp.~$\mathfrak{c}^\star(d)$) is used when membership of $M(d,d)$ (resp.~$M(d,d+1)$) is emphasized rather 
than coordinatization of the fixed points space. 

Denote coordinates for $\bxi$ (resp.~$\wbxi$) in the associated fixed point space by
$(\xi_0,\xi_{\star 0})$ (resp.~$((\xi_0,\xi_{\star 0}),\bbeta)$.  Write $\bxi(d)$, $\wbxi(d)$ if the dependence on $d$ needs to be emphasized. For either of these families, 
the index identifying rowtype is usually omitted since the isotropy is $\Delta S_d$ and there is only one rowtype. For example, write 
$\|\ww^i\| = \tau_0 =\tau$, $i \in \is{d}$. The bias 
for the network with parameter matrix $\WW$ 
is denoted by $\bbeta$ and angles generally have no subscript \emph{except} that we keep `$00$' terms such as $\lambda_{00}$ (there is no $\lambda_{01}$ term since we assume $\Delta S_d$ symmetry).

Let $\vv^0,\ldots,\vv^{d-1}$ define the standard orthonormal basis of $(\real^d)^\star$ (identified with the dual space $\real^d$ using the identification provided by the Euclidean inner product). 
Define the three angles
\begin{eqnarray*}
\Theta&=&\cos^{-1}\big( \langle \ww^i,\ww^j\rangle/\tau^2\big), \; i,j\in \is{d}, i \ne j \\
\theta&=&\cos^{-1} \big(\langle \ww^i,\vv^j\rangle/\tau\big), \; i,j\in \is{d}, i \ne j \\
\lambda_{00}&=&\cos^{-1} \big(\langle \ww^i,\vv^i\rangle/\tau\big),\; i \in \is{d}
\end{eqnarray*}
Following the conventions of section~\ref{subsec: rtype}, we have  
\begin{eqnarray*}
Z =\wZZ & = & \bbeta (-\cot(\Theta)+\csc(\Theta)), \quad
z  =  \bbeta\csc(\theta),\quad \wzz = -\bbeta \cot(\theta)\\
K & = & \bbeta^2\csc(\Theta)(\csc(\Theta) - \cot(\Theta))\quad
k  =  \frac{1}{2} \big( \bbeta^2 \csc^2(\theta) \big) \\
L & = & -\pi\erfb{\frac{\bbeta}{\sqrt{2}}},\quad \ell =-\frac{\pi}{2}\erfb{\frac{\bbeta}{\sqrt{2}}} \\
E^X & = & -\sqrt{2\pi}\bbeta e^{-\frac{\bbeta^2}{2}}\erfcb{\frac{Z}{\sqrt{2}}} \quad
e^X  =  -\sqrt{\frac{\pi}{2}}\bbeta \erfcb{\frac{z}{\sqrt{2}}}\\
\wTheta & = & \frac{\pi}{2} + S(\bbeta,\bbeta,\Theta), \quad
\wtheta  =  \frac{\pi}{2} + S(\bbeta,0,\theta),\quad
\wlambda_{00}  = \frac{\pi}{2} + S(\bbeta,0,\lambda_{00}) \\
z_{00} & = & \bbeta \csc(\lambda_{00}),\quad \wzz_{00}  =  -\bbeta \cot(\lambda_{00}) \\
k_{00} & = & \frac{1}{2} \bbeta^2\csc^2(\lambda_{00}) \\
e^X_{00} & = & -\sqrt{\frac{\pi}{2}}\bbeta \erfcb{\frac{z_{00}}{\sqrt{2}}}\quad\
\ell_{00}  =  -\frac{\pi}{2}\erfb{\frac{\bbeta}{\sqrt{2}}}\\
\end{eqnarray*}
Applying Proposition~\ref{prop: sym} and the computations of section~\ref{sec: AB} gives  
\begin{eqnarray*}
\Gamma & = & \frac{1}{2}\left[\bbeta^2 - (1+\bbeta^2)\erfb{\frac{\bbeta}{\sqrt{2}}}\right]- \frac{1}{\sqrt{2\pi}}\bbeta e^{-\frac{\bbeta^2}{2}} \\ 
&&+\frac{1}{2\pi}\left[\frac{(d-1)\big(\tau e^{-K}\sin(\Theta) - e^{-k}\sin(\theta)\big) - e^{-k_{00}}\sin(\lambda_{00})}{\tau}\right]\\
&& + \frac{1}{2\pi}\left[\frac{(d-1)\big(\tau E^X - e^X\big)-e^X_{00}}{\tau}\right]
 +\frac{\bbeta^2}{2\pi}(d-1)\big[L  +(\pi-\wTheta)\big]   \\
A_0 & = & \frac{1}{2\pi}\big((d-1)\wTheta\xi_{\star 0}  - \wlambda_{00}\big) \quad
A_{\star 0}  =  \frac{1}{2\pi}\big(\wTheta \xi_0 + (d-2)\xi_{\star 0} \wTheta - \wtheta\big)\\
B_0 & = & \frac{1}{2\pi}\big((d-1)L\xi_{\star 0}  - \ell_{00}\big)\quad
B_{\star 0}  =  \frac{1}{2\pi}\big(L(\xi_0  + (d-2)\xi_{\star 0}) - \ell_{00}\big)\\
\Omega_0 & = & \frac{1}{2}\big( \xi_0 + (d-1)\xi_{\star 0} -1\big)
\end{eqnarray*}

Also needed is the derivative of the loss $\mathcal{L}$ with respect to $\boldsymbol{\bbeta}$.  
\begin{prop}
(Notation and assumptions as above; in particular, $\bgamma = 0$.)
\begin{multline*}
\begin{aligned}
\hspace*{0.2in}\frac{1}{d}\frac{\partial \mathcal{L}}{\partial \bbeta}&= \frac{\tau^2}{2}\left[\bbeta\, \erfcb{\frac{\bbeta}{\sqrt{2}}} - \sqrt{\frac{2}{\pi}}e^{-\frac{\bbeta^2}{2}}\right] +  
(d-1)\frac{\tau^2\bbeta}{2\pi}\left[\pi\,\erfcb{\frac{\bbeta}{\sqrt{2}}} - \wTheta \right]\\
&\hspace*{-0.6in} +\frac{\tau}{2\sqrt{2\pi}}\left[\erfcb{\frac{\bbeta\csc(\lambda_{00})}{\sqrt{2}}}+\cos(\lambda_{00})e^{-\frac{\bbeta^2}{2}}\erfcb{\frac{-\bbeta\cot(\lambda_{00})}{\sqrt{2}}}\right]\\
&\hspace*{-0.6in} + (d-1)\frac{\tau}{2\sqrt{2\pi}}\left[\erfcb{\frac{\bbeta\csc(\theta)}{\sqrt{2}}}+\cos(\theta)e^{-\frac{\bbeta^2}{2}}\erfcb{\frac{-\bbeta\cot(\theta)}{\sqrt{2}}}\right] \\
&\hspace*{-0.6in} - (d-1)\frac{\tau^2}{2\sqrt{2\pi}}\left[e^{-\frac{\bbeta^2}{2}}\erfcb{\frac{\bbeta(-\cot(\Theta) +\csc(\Theta))}{\sqrt{2}}}\right]\big(1 + \cos(\Theta)\big)
\end{aligned}
\end{multline*}
\end{prop}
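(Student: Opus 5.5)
The plan is to specialise Proposition~\ref{prop: betaderv}, in the form of Examples (1) following it (where $\boldsymbol{\bgamma}=\mathbf{0}$), to the case $k=d$, a single rowtype $i=0$ with $q_0=d$, $R=f=1$, and $\VV=I_d$ with $\|\vv^j\|=1$. With only one rowtype, every sum over $j\in\is{R}$, $j\ne i$, or $j\in\is{f}$, $j\ne i$, is empty. The lines involving $Z_{ij}$, $\wZZ_{ij}$, $\wLambda_{ij}$ and $\lambda_{ij}$ for $j\ne i$ therefore vanish. What remains is exactly five groups of terms.
\begin{enumerate}
\item The self term $\frac{\tau^2}{2}[\bbeta\,\erfc{\cdot}-\sqrt{2/\pi}e^{-\bbeta^2/2}]$, from \Refb{EQ: iibeta_deriv}.
\item The $(q_0-1)$ term in $\bbeta(\pi\,\erfc{\cdot}-\wTheta)$, from Proposition~\ref{prop: lossbeta}(2) applied to pairs of distinct rows.
\item The $(q_0-1)$ term in $e^{-\bbeta^2/2}\erfc{\cdot}(1+\cos\Theta)$, from the same source.
\item The $(q_0-1)$ target term with angle $\theta$ and $z,\wzz$, from Lemma~\ref{EQ:betaderv}(2) applied to pairs $\ww^a,\vv^b$ with $a\ne b$.
\item The diagonal target term with angle $\lambda_{00}$ and $z_{00},\wzz_{00}$.
\end{enumerate}

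Next I would substitute the rowtype quantities listed just before the statement. These are $z=\bbeta\csc\theta$, $\wzz=-\bbeta\cot\theta$, $z_{00}=\bbeta\csc\lambda_{00}$, $\wzz_{00}=-\bbeta\cot\lambda_{00}$, and $Z=\wZZ=\bbeta(-\cot\Theta+\csc\Theta)$. Since $\bgamma=0$, the factor $e^{-\bgamma^2/2}$ equals $1$. The norms are $\|\ww^0\|=\tau$ and $\|\vv^0\|=1$. These substitutions turn the five groups into the five displayed lines, with $q_0-1=d-1$, and the left side becomes $q_0^{-1}\partial\mathcal{L}/\partial\bbeta_0$. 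One should note that the symmetric parametrisation uses a single bias $\bbeta$ for all $d$ rows. The derivative of $\mathcal{L}$ along that direction is therefore $\sum_i\partial\mathcal{L}/\partial\bbeta_i=d\,\partial\mathcal{L}/\partial\bbeta_0$ by $\Delta S_d$-equivariance, which explains the factor $1/d$ on the left.

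The main point of care is the sign and form of the $\Theta$ term. In Examples (1) it is written as $\erfc{-\bbeta(-\cot\Theta+\csc\Theta)/\sqrt2}$. I would recheck it against Proposition~\ref{prop: lossbeta}(2), which gives the argument $+\bbeta(-\cot\Theta+\csc\Theta)$, consistent with $Z$. I would also confirm the overall coefficient. The loss contains $\tfrac12\sum_{a\ne b}f(\ww^a,\ww^b)$, and differentiating in the common $\bbeta$ uses the full symmetric derivative $\partial f/\partial\bbeta$ of Proposition~\ref{prop: lossbeta}(2), of size $\frac{\tau^2}{\sqrt{2\pi}}$. Multiplying by $\tfrac12 d(d-1)$ and then dividing by $d$ yields $(d-1)\frac{\tau^2}{2\sqrt{2\pi}}$. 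The $\bbeta(\pi\,\erfc{\cdot}-\wTheta)$ term scales the same way, giving $(d-1)\frac{\tau^2\bbeta}{2\pi}$.

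Finally, the cross terms $-\sum f(\ww^a,\vv^b)$ enter with coefficient one, and their $\bbeta$-derivative comes from Lemma~\ref{EQ:betaderv}(2). The resulting sign is positive, with coefficient $\frac{\tau}{2\sqrt{2\pi}}$, split as $(d-1)$ off-diagonal pairs and one diagonal pair. This completes the identification of all five lines.
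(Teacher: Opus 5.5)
Your proposal is correct and takes essentially the paper's route. The paper cites Proposition~\ref{prop: betaderv}, then checks directly by writing $\mathcal{L}=\frac{d}{2}f(\ww^0,\ww^0)+\frac{d(d-1)}{2}f(\ww^0,\ww^1)-d\,f(\ww^0,\vv^0)-d(d-1)f(\ww^0,\vv^1)+\mathrm{const}$ and differentiating with Lemma~\ref{EQ:betaderv}(2), Proposition~\ref{prop: lossbeta}(2) and \Refb{EQ: iibeta_deriv}; your coefficient checks are exactly this, and your recheck of the $\Theta$-term against Proposition~\ref{prop: lossbeta}(2) correctly gives the argument $+\bbeta(-\cot(\Theta)+\csc(\Theta))$ used in the statement.
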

\begin{proof} 
This follows from Proposition~\ref{prop: betaderv}. For a simple direct proof, it follows from $\Delta S_d$ symmetry that
\[
\mathcal{L} = d\frac{f(\ww^0,\ww^0)}{2} + d(d-1) \frac{f(\ww^0,\ww^1)}{2} - df(\ww^0,\vv^0) - d(d-1) f(\ww^0,\vv^1) + K
\]
where $\ww^0, \ww^1$ are the first and second rows of $\WW$---any pair of distinct rows suffice---and the term $K$ depends on $\VV$ and is independent of $\WW$ and $\bbeta$. Hence
\[
\frac{1}{d}  \frac{\partial \mathcal{L}}{\partial \bbeta} =  \frac{\partial}{\partial \bbeta}\left[\frac{f(\ww^0,\ww^0)}{2} - f(\ww^0,\vv^0) + (d-1)\frac{f(\ww^0,\ww^1)}{2}-(d-1)f(\ww^0,\vv^1) \big)\right]
\]
Using Lemma~\ref{EQ:betaderv}, Proposition~\ref{prop: lossbeta}(2), and Example~\ref{EX: fwwderivex} \Refb{EQ: iibeta_deriv}, 
the partial derivative in $\bbeta$ is seen to be that of the statement of the proposition.
\end{proof}
In future set 
\[
\mathcal{L}^{0} = \frac{f(\ww^0,\ww^0)}{2} - f(\ww^0,\vv^0) + (d-1)\left[ \frac{f(\ww^0,\ww^1)}{2}-f(\ww^0,\vv^1)\right]
\]
and regard the $\mathcal{L}$ superscript as an index by rowtype (see the remark following for the generalization to families with isotropy $\Delta (S_{q0} \times S_{q1})$). 

The equations for critical points with isotropy $\Delta S_d$, target $\VV = I_d$ and $k = d$, may be written 
\begin{eqnarray*}
\Gamma \xi_0 - A_0 + B_0 + \Omega_0& =& 0 \\
\Gamma \xi_{\star 0} - A_{\star 0} + B_{\star 0} + \Omega_0& =& 0 \\
\frac{\partial \mathcal{L}^{0}}{\partial \bbeta_0}&=&0 
\end{eqnarray*}
\begin{rem}[Needed only in subsequent sections]
The definition of $\mathcal{L}^{0}$ extends to problems with symmetry group $G^{\Delta}_{\is{q}}$ acting on $M(k,d)$, where 
$q_0\ge q_1 \ge \cdots q_{f-1} \ge 1$ and $k \ge d$ (see section~\ref{sec: fixsp}). Here there will be $R = f + m$ rowtypes, where $m = k-d$. Let
$\RS \subset \is{f}$ denote the set of $q_i > 1$ so that $q_0,\ldots,q_{R_s - 1} \ge 2$ and the remaining $q_i$ are all equal to one.
Recall the conventions on superscripts: $\ww^i$ will denote the \emph{first} row of $\WW\in M(k,d)$ which is of rowtype $i$ and $\widetilde{\ww}^i$ will denote any row of
$\WW$ which is of rowtype $i$ but is not $\ww^i$ (that is, not the first row of rowtype $i$). Similar conventions hold for rows of $\VV$. 

For $i \in \RS$, define 
\begin{multline*}
\begin{aligned}
\mathcal{L}^{i}=& (q_i-1)\left[\frac{f(\ww^i,\widetilde{\ww}^i)}{2} - f(\ww^i,\widetilde{\vv}^i)\right]  + \left[\frac{f(\ww^i,\ww^i)}{2} - f(\ww^i,\vv^i)\right]  \\
& + \sum_{j \ne i} q_j\left[f(\ww^i,\ww^j)-f(\ww^i,\vv^j)\right]
\end{aligned}
\end{multline*}
For $i \in \is{f}\smallsetminus\RS$, define 
\begin{eqnarray*}
\mathcal{L}^{i}&=& \left[\frac{f(\ww^i,\ww^i)}{2} - f(\ww^i,\vv^i)\right]  
 + \sum_{j \ne i} q_j\left[f(\ww^i,\ww^j)-f(\ww^i,\vv^j)\right]
\end{eqnarray*}
If $i \in \is{R}\smallsetminus \is{f}$, define
\begin{eqnarray*}
\mathcal{L}^{i}&=& \frac{f(\ww^i,\ww^i)}{2} + \sum_{j \ne i} q_j\left[f(\ww^i,\ww^j)-f(\ww^i,\vv^j)\right]
\end{eqnarray*}
The critical point equations associated to the $\bbeta$-derivatives of $\mathcal{L}$ are then 
\begin{eqnarray*}
\frac{\partial \mathcal{L}^{i}}{\partial \bbeta_i}&=&0,\; i \in \is{R}
\end{eqnarray*}
In each case,  $\frac{\partial \mathcal{L}^{i}}{\partial \bbeta_i} = \frac{1}{q_i}\frac{\partial \mathcal{L}}{\partial \bbeta_i}$. \rend
\end{rem}

\subsection{Numerics} As shown in Section~\ref{sec: 11.2}, $\frac{\partial \mathcal{L}}{\partial \bbeta} \ne 0$ 
at a type I critical point of the bias free network.  

Starting with the biased network, with bias set to zero, and initial point $(\bxi(d),0)\in M(d,d+1)^{\Delta S_d}\approx \real^3$, 
Newton's method is applied to the critical point equations given above. 
To this end, \emph{MPFR} code is used, always with at least $2048$-bit precision (higher for large $d$). Newton's method converges rapidly for all $d \ge 4$ 
to a critical point $\wbxi(d)\in M(d,d+1)^{\Delta S_d}\approx \real^3$ and is not a point defining the global minimum. 
The smallest value of $d$ tested was $4$, the largest $10^{512}$.  
\begin{rem}
Although we do not analyze the spectrum of the Hessian at $\wbxi(d)$ here (see below), it is likely 
that the critical point defines a spurious minimum for large enough $d$ (most likely $d \ge 8$). 
Indeed, for values of $8 \le d < 30$, small random perturbation of the critical point in
$M(d,d+1)$ (not $M(d,d+1)^{\Delta S_d}$!) always increase the loss. This is a necessary condition for a non-degenerate local minimum. 
A more formal analysis of the Hessian spectrum can be made using the representation theoretic methods 
of~\cite{ArjevaniField2021,ArjevaniField2022b}: adding the single bias adds exactly one trivial
representation and one standard representation to the isotypic decomposition of the $\Delta S_d$-action
on $M(d,d)$ (bias free parameter space). The isotypic decomposition of $\Delta S_d$ on $M(d,d+1)$, 
has 3 copies of the trivial representation (the dimension of the fixed point space), 
4 copies of the standard representation and one copy each of the exterior square 
representation and the representation associated to the partition $(d-2,2)$.  
The spectrum associated to the standard representations and the exterior square representation are of greatest interest~\cite{ArjevaniField2022a,ArjevaniField2022b}.
Since the trivial factor is given by the fixed point space, it is unlikely (see below) that there is any change in the spectrum associated 
to this factor. In any case, explicit computation of the 3 eigenvalues associated to the trivial representation is easily done using Newton's method.
An example is given below for the type I family with $\Delta S_d$ symmetry. 
\rend
\end{rem}

\subsubsection{The loss}
In the unbiased network, the asymptotics of the loss~\cite[\S 8.6]{ArjevaniField2021x} are given by
\begin{eqnarray*}
\mathcal{L}(\bxi(d))& = &  \left(\frac{1}{2} - \frac{1}{\pi}\right) + O(d^{-{\frac{1}{2}}}) \\
			& \arr & \left(\frac{1}{2} - \frac{1}{\pi}\right) \approx 0.181690113816\ldots,\;\text{as } d\arr\infty 
\end{eqnarray*}
Letting $d \arr \infty$ in the biased network, numerical investigation up to $d = 10^{128}$ strongly suggests that 
\begin{eqnarray*}
\mathcal{L}(\wbxi(d))&\arr& \left(\frac{1}{2} - \frac{1}{\pi}\right) + O(d^{-{\frac{1}{2}}}) 
\end{eqnarray*}

Moreover, even though the behaviour as $d \arr \infty$ appears the same in the biased and unbiased network, the loss at $\wbxi(d)$ is \emph{always strictly less} than 
the loss at $\bxi(d)$---see the tables below and note that if this were not always true, then either the computations would be in error or any connection between $\bxi(d)$ and 
$\wbxi(d)$ would likely involve complex landscape geometry.  

Define 
\[
\Delta_{ub}(d) = \mathcal{L}(\bxi(d))-\mathcal{L}(\wbxi(d)).
\]

\begin{scriptsize}
\begin{table}[h]
\begin{tabular}{|c||c|c|c|}
\hline
Type I & Bias free network& Biased network& Loss decrease  \\
$d = k$          &  &   &        with bias    \\
Isotropy $\Delta S_{d}$ & $\mathcal{L}(\bxi(d)) $& $\mathcal{L}(\wbxi(d))$ & $\Delta_{ub}(d)$ \\  \hline \hline
$4$          &$3.7653\times 10^{-2}$   & $3.6323\times 10^{-2}$ & $1.3301 \times 10^{-2}$ \\ \hline
$10$        &$7.6637 \times 10^{-2}$   & $7.4862\times 10^{-2}$  & $1.7751\times 10^{-3}$          \\ \hline
$100$        &$1.4207 \times 10^{-1}$   & $1.4171\times 10^{-1}$ & $3.6221\times 10^{-4}$          \\ \hline
$10^3$       &$1.6853 \times 10^{-1}$  & $1.6849\times 10^{-1} $ & $4.2197\times 10^{-5}$            \\ \hline 
$10^4$        &$1.77472 \times 10^{-1}$   & $1.77467\times 10^{-1}$ & $4.41056\times 10^{6}$           \\ \hline
$10^5$        &$1.80351 \times 10^{-1}$   & $1.80350\times 10^{-1}$ & $4.4712\times 10^{-7}$           \\ \hline
$10^6$        &$1.8126595 \times 10^{-1}$   & $1.8126591\times 10^{-1}$ & $4.4904\times 10^{-8}$          \\ \hline
$10^7$        &$1.81555928 \times 10^{-1}$   & $1.81555923\times 10^{-1}$ & $4.4965\times 10^{-9}$          \\ \hline
$10^8$        &$1.816476750 \times 10^{-1}$   & $1.816476746\times 10^{-1}$  & $4.4984 \times 10^{-10}$       \\ \hline
$10^9$        &$1.8167669295\times 10^{-1}$ &$1.8167669290\times 10^{-1}$ &$4.4990\times 10^{-11}$ \\ \hline
\end{tabular}
\vspace*{0.05in}
\caption{Computed values of the loss for type I critical point of isotropy  $\Delta S_{d}$.
Loss is given to 5 or more significant figures and shows how the addition of bias to the student network always reduces the loss.
The third column gives the decrease of the loss when the network is biased.  This is always positive.
Included are the values obtained when $d = 4$.  However,
$\bxi(d)$ is not a spurious minima for $d \le 7$; the same is likely true for $\wbxi(4)$.}
\end{table}
\end{scriptsize}

The data presented above strongly suggests that the critical point families described not only have \fps representations but 
that lowest order terms are the same for unbiased and biased networks. This is perhaps the strongest argument that $\wbxi(d)$ is a natural extension of the
family $\bxi(d)$. There is more. 
\begin{exam}
Taking $d = 10^{128}$, a high precision computation gives
\[
\Delta_{ub}(10^{128}) = 0.044992875434563\ldots \times 10^{-128} > 0
\]
indicating that the first \emph{two} terms in the \fps expansions for $\mathcal{L}(\bxi(d))$ and $\mathcal{L}(\wbxi(d))$ are equal. In particular,
the coefficients of $d^{-\frac{1}{2}}$ are \emph{equal and non-zero} (easily verified analytically~\cite{ArjevaniField2021x} for the unbiased network). That is,
\[
\Delta_{ub}(d) = O(d^{-1})
\]
and not $O(d^{-\frac{1}{2}})$ as one might have expected.
\examend
\end{exam}
Granted this example, it is natural to ask about the coefficients of initial terms in the \fps of coordinates of $\bxi$ and $\wbxi$.
All the examples considered here have \fps in $d^{-\frac{1}{2}}$ (this may not be so if there is over-specification~\cite{ArjevaniField2022b}). 

\begin{Def}
Let $\sum_{j\ge p_i\ge 0} \alpha_j^i d^{-\frac{j}{2}}$, $i \in \is{2}$, be a pair of fractional power series (scalar valued) with
$\alpha^i_{p_i} \ne 0$, $i\in \is{2}$. 
The series are said to be \emph{$n$-matched} if
\begin{enumerate}
\item $p_0 = p_1\defoo p$.
\item $\alpha^0_j = \alpha^1_j$, $j = p,\ldots,p+n-1$.
\end{enumerate}
Two vector valued \fps are $n$-matched if they have the same range and  all pairs of corresponding coordinates are $n$ matched (the values $p(i)$ may and do vary with the coordinate chosen).
It is implicit that $n$ is the largest integer for which the statement is true. 

More generally, given two  $\real^Q$ valued  \fps such that the $i$th coordinates are $n_i$ matched with $n_i \ge 0$, for all $i \in \is{Q}$,
the \fps are \emph{$\is{n}$-matched} with matching vector $\is{n} = (n_0,\ldots,n_{Q-1})$.
\end{Def}
\begin{exams}
(1) The \fps for $\mathcal{L}(\bxi)$ and $\mathcal{L}(\wbxi)$ are 2-matched but not 3-matched .\\
(2) Making use of computations of $\bxi$ and $\wbxi$ when $d = 10^{128}$ and ignoring the $\bbeta$ -coordinate, we have
\begin{eqnarray*} 
\xi_0 -  \widetilde{\xi_0}& =& c_0 10^{-129}\\
\xi_{\star 0} - \widetilde{\xi}_{\star 0}&  = & c_{\star 0} 10^{-256}
\end{eqnarray*}
where $c_0 \approx 7.78$ and $c_{\star 0} \approx 1.27$.
Hence the \fps are $2$-matched (or $\is{2} = (2,2)$-matched)\footnote{We have ignored $\bbeta(d)$ but since $\bbeta(d) = c d^{-1}$, $c \approx -6.21$, this term is ``$2$-matched" to $\bxi$, with $p_0 = p_1 = 0$, if we define the $\bbeta$-component to be identically zero.}.  It follows from section~\ref{sec: typeAex} that the coefficient of $d^{-\frac{1}{2}}$ in the \fps for $\widetilde{\xi}_0$ and the 
coefficient of $d^{-\frac{3}{2}}$ in the \fps of $\widetilde{\xi}_{\star 0}$ are both zero.

The existence of \fps for this unbiased network is proved using elementary implicit function arguments~\cite{ArjevaniField2021x}. The numerics for $\wbxi$ strongly suggest the ansatz that
the pair $\bxi,\wbxi$ is $2$-matched. If so, a significant number of the initial coefficients of the \fps for $\wbxi$ are known exactly without having to do any computations 
on the coefficients of the biased network (see results in section~\ref{sec: typeAex}). As a result, 
the expectation is that the implicit function theorem method can be extended 
to allow for dependence on a parameter---in this case $\bbeta$.  Along the line joining $0$ to $\bbeta(d)$, the corresponding \fps should always be 2-matched to the \fps for $\bxi$.  \examend
\end{exams}

\begin{rems}
(1) Based on high precision numerics, the initial terms of the \fps expansion for $\bbeta(d)$  is
\[
\bbeta(d) = -0.62072950119515416243932018851173\ldots d^{-1} + O(d^{-\frac{3}{2}})
\]
where the first 32 significant figures of the first non-zero coefficient are shown and the coefficient of $d^{-\frac{3}{2}}$ is strictly negative.\\
(2) For both biased and unbiased networks, two of the eigenvalues are of the form $c_0 + c_1 d + O(d^{-\frac{1}{2}})$ where in one case $c_1 = 1/4$ for both biased and unbiased networks and in the other case
$ c_1 > 0$ but the value is different from that in the unbiased network. \\
(3) If $d = 4$, the three eigenvalues are $0.8811246\ldots, 0.0242543\ldots$ and $0.0661600\ldots$. All are strictly positive (the case for all $d  \ge 4$).
\rend
\end{rems}

Not only are these strong results unexpected (to the author at least) but they suggest that explicit computation of the initial terms of the \fps along the lines described in \cite[\S 8]{ArjevaniField2021x} 
may not be as challenging as it seems at first sight. In particular, since the low order terms are the same, the implicit function theorem based method described in \cite[\S 8]{ArjevaniField2021x} should extend if we regard
$\bbeta$ as a parameter. Alternatively, using estimates from sections 4 and 5, there is the possibility of showing that a general proof of existence of an \fps in the biased case can be done within a cuspidal 
domain on which the loss is subanalytic and so existence can be deduced from the Curve Selection Lemma~\cite{BierMil1998,Milnor1968}.

\subsection{Path based analysis}
One criticism of the arguments presented above might be that we start with a critical point for the bias free network and then apply 
Newton to the biased network. It is possible that on the first iteration of Newton there is convergence to a nearby critical point 
which is a saddle point for the loss.  Ideally one would like a gradient descent path from $\bxi(d)$ to $\wbxi(d)$ along which the loss is strictly monotone decreasing.
However, once $\wbxi(d)$ has been computed, the variation in the loss along a straight line path from $\wbxi(d)$ to $\bxi(d)$  can be analyzed. 
View the equations
\begin{eqnarray} \label{A1X}
\Gamma \xi_0 - A_0 + B_0 + \Omega_0& =& 0 \\
\label{A2X}
\Gamma \xi_{\star 0} - A_{\star 0} + B_{\star 0}  + \Omega_0& =& 0 
\end{eqnarray}
as equations for an `unbiased network' depending on the parameter $\widetilde{\beta}\in [0,\bbeta]$. Now $\bxi(d) = (\xi_0,\xi_{\star 0},0)$,
$\wbxi(d) = (\widetilde{\xi}_0, \widetilde{\xi}_{0,\star}, \widetilde{\beta})$ and $\bxi(d)$ (resp.~$(\widetilde{\xi}_0, \widetilde{\xi}_{0,\star})$) are
solutions of these equations when the parameter is zero (resp.~$\widetilde{\beta}$). Test for the path by dividing $[0, \bbeta]$ into
$N$ equal increments. Set $\delta = \bbeta/N$ and  use Newton to move from $\bbeta^\star$ back to zero in $N$ steps of length $\delta$.
Examine the output at each step, especially for the initial iterations of Newton, to detect any anomalies that are suggestive of a bifurcation, 
change of stability or `jump' in the critical point.  We have done this for several values of $d$. 
In all cases convergence was to the original critical point in the bias free network. If $\wbxi$ is a non-degenerate spurious minimum, then the loss should be 
strictly monotone increasing near $\wbxi$ (it is) but possibly decreasing near $\bxi$ (for this example, it is not). 
Of course, there is no reason why a curve of steepest descent linking $\bxi$ to $\wbxi$ 
should be a straight line. However, here there is only one rowtype, and 
it is likely that there is a unique trajectory joining $(\bxi(d),0)$ to $\wbxi(d)$ along which the loss is strictly decreasing. 
\subsection{Isotropy $\Delta (S_{d-1}\times S_1) $, $q_1 = 1$, $d=k$, Types I and II}

\subsubsection{Type I family of spurious with isotropy $\Delta (S_{d-1} \times S_1)\defoo \Delta S_{d-1}$} 
Let $\bxi(d)$ denote the type I family of critical points in $M(d,d)^{\Delta S_{d-1}}$ which give spurious minima for $d \ge 8$. 
Using Newton's method, a corresponding family $\wbxi(d) \in M(d,d+1)^{\Delta S_{d-1}}$ is constructed for the biased network. 

Recall  that $\Delta_{ub}(d) = \mathcal{L}(\bxi(d))-\mathcal{L}(\wbxi(d))$. 
\begin{scriptsize}
\begin{table}[h]
\begin{tabular}{|c||c|c|c|}
\hline
Type I & Bias free network& Biased network& Loss decrease  \\
$d = k$          &  &   &        with bias    \\
Isotropy $\Delta S_{d-1}$ & $\mathcal{L}(\bxi(d)) $& $\mathcal{L}(\wbxi(d))$ & $\Delta_{ub}(d)$ \\  \hline \hline
$10$        &$7.2057 \times 10^{-2}$   & $7.0390\times 10^{-2}$  & $1.667\times 10^{-3}$          \\ \hline
$100$        &$1.4157 \times 10^{-1}$   & $1.4121\times 10^{-1}$ & $3.621\times 10^{-4}$          \\ \hline
$10^3$       &$1.6847 \times 10^{-1}$  & $1.6843\times 10^{-1} $ & $4.221\times 10^{-5}$            \\ \hline
$10^4$        &$1.77465 \times 10^{-1}$   & $1.77461\times 10^{-1}$ & $4.4108\times 10^{-6}$           \\ \hline
$10^5$        &$1.803499 \times 10^{-1}$   & $1.803494\times 10^{-1}$ & $4.4712\times 10^{-7}$           \\ \hline
$10^6$        &$1.8126589 \times 10^{-1}$   & $1.8126584\times 10^{-1}$ & $4.4904\times 10^{-8}$          \\ \hline
$10^7$        &$1.81555921 \times 10^{-1}$   & $1.81555917\times 10^{-1}$ & $4.4965\times 10^{-9}$          \\ \hline
$10^8$        &$1.816476744 \times 10^{-1}$   & $1.816476739\times 10^{-1}$ & $4.4984\times 10^{-9}$          \\ \hline
$10^9$        &$1.8167669288\times 10^{-1}$ &$1.8167669283\times 10^{-1}$ &$4.4990\times 10^{-11}$ \\ \hline
\end{tabular}
\vspace*{0.05in}
\caption{Computed values of the loss for type I critical point of isotropy  $\Delta S_{d-1}$.
Loss is given to 5 or more significant figures and shows how the addition of bias to the student network always reduces the loss.
The third column gives the decrease of the loss when the network is biased.  This is always positive. The table entries are 
close to the corresponding values for the type I, isotropy $\Delta S_d$ example of the preceding section.}
\end{table}
\end{scriptsize}
\begin{exam}
Taking $d = 10^{128}$, a high precision computation gives
\[
\Delta_{ub}(10^{128}) = 0.044992875434563\ldots \times 10^{-128} > 0
\]
indicating that the first two terms in the FPS expansions of  $\mathcal{L}(\bxi(d))$ and $\mathcal{L}(\wbxi(d))$ are equal.
A more detailed analysis indicates that if the subscripts $A$ and $I$ are used to distinguish the type A and type I network, then 
\[
|\Delta^I_{ub}(d) - \Delta^A_{ub}(d)| = c d^{-2} + O(d^{-\frac{5}{2}}),
\]
where $c \approx 0.0242080$. 
\examend
\end{exam}
The families $\bxi(d)$ and $\wbxi(d)$ are $(2,0,2,2,2)$-matched. The $0$-matched term corresponds to 
$\xi^0_1(d)$ and $\widetilde{\xi}^0_1(d)$ which both have initial non-zero term in their \fps a multiple of $d^{-2}$. Hence 
all corresponding terms $c d^{-\frac{p}{2}}$ in every component of the coordinates are equal provided that $p < 4$.  
\begin{rem}
Both $\bbeta$ derivatives of the loss at $(\bxi,0)$ are strictly positive so the expectation is that both values of $\bbeta$ at the 
critical point $\wbxi$ should be strictly negative. This turns out to be true for all $d \ge 10$. The initial terms of the \fps for $\bbeta(d)$ are given by
\begin{eqnarray*}
\bbeta_0(d) & = & -0.62072950119515416243932018851173\dots d^{-1} + O(d^{-\frac{3}{2}})\\
\bbeta_1(d) & = & -1.2006004478530293576268915492951\ldots d^{-1} + O(d^{-\frac{3}{2}})
\end{eqnarray*} \rend
\end{rem}

\noindent \emph{Type II family of spurious with isotropy $\Delta S_{d-1}$}
\begin{scriptsize}
\begin{table}[h]
\begin{tabular}{|c||c|c|c|}
\hline
Type II & Bias free network& Biased network& Loss decrease  \\
$d = k$          &  &   &        with bias    \\
Isotropy $\Delta S_{d-1}$ & $\mathcal{L}(\bxi(d)) $& $\mathcal{L}(\wbxi(d))$ & $\Delta_{ub}(d)$ \\  \hline \hline
$10$        &$1.8702 \times 10^{-2}$   & $1.7477\times 10^{-2}$  & $1.226\times 10^{-3}$          \\ \hline
$100$        &$2.7417 \times 10^{-3}$   & $2.7123\times 10^{-3}$ & $2.936\times 10^{-5}$          \\ \hline
$10^3$       &$2.9255 \times 10^{-4}$  & $2.9224\times 10^{-4} $ & $3.159\times 10^{-7}$            \\ \hline
$10^4$        &$2.96126 \times 10^{-5}$   & $2.96095\times 10^{-5}$ & $3.1280\times 10^{-9}$           \\ \hline
$10^5$        &$2.96998 \times 10^{-6}$   & $2.96994\times 10^{-6}$ & $3.1087\times 10^{-11}$           \\ \hline
$10^6$        &$2.972468 \times 10^{-7}$   & $ 2.972463 \times 10^{-7}$ & $3.1016\times 10^{-13}$          \\ \hline
$10^7$        &$2.9732287 \times 10^{-8}$   & $2.9732284\times 10^{-8}$ & $3.0993\times 10^{-15}$          \\ \hline
$10^8$        &$2.97346669 \times 10^{-9}$   &$2.97346665\times 10^{-9}$ & $3.0985\times 10^{-17}$          \\ \hline
$10^9$        &$2.973541685\times 10^{-10}$ & $2.973541682\times 10^{-10}$ &$3.09829\times 10^{-19}$ \\ \hline
\end{tabular}
\vspace*{0.05in}
\caption{Computed values of the loss for type II critical point of isotropy  $\Delta S_{d-1}$.
Loss is given to 5 or more significant figures and shows how the addition of bias to the student network always reduces the loss.
The third column gives the decrease of the loss when the network is biased.  This is always positive. }
\end{table}
\end{scriptsize}
\begin{exams}
(1) Taking $d = 10^{128}$, a high precision computation gives
\[
\Delta_{ub}(d) =  c d^{-2} + O(d^{-\frac{5}{2}}),
\]
where $c \approx 0.619636515$ and it is assumed that $\bxi,\wbxi$ have FPS.\\
(2) The initial terms of an \fps expansion for $\bbeta_i(d)$, $i \in \is{2}$ are
\begin{eqnarray*}
\bbeta_0(d) & = & -3.9147584367443569485868567728717\dots d^{-2} + O(d^{-\frac{5}{2}})\\
\bbeta_1(d) & = &  2.6116672965535862283319971186917\ldots d^{-1} + O(d^{-\frac{3}{2}})
\end{eqnarray*}
Note that, unlike type I, the initial terms are of opposite sign.  \\
(3) The families $\bxi(d)$ and $\wbxi(d)$ are $(4,2,2,4,2)$-matched. The $4$-matched terms correspond to the ``diagonal'' coordinates
$\xi^0_0, \widetilde{\xi}^0_0$ and $\xi^1_1, \widetilde{\xi}^1_1$ and so in the corresponding \fps terms match up to the coefficients of $d^{-\frac{3}{2}}$ but not
$d^{-2}$. The same is true for $\xi^0_1(d), \widetilde{\xi}^0_1(d)$  and $\xi^1_0(d), \widetilde{\xi}^1_0(d)$
On the other hand, the families
$\xi^0_{\star 0}(d)$ and $\widetilde{\xi}^0_{\star 0}(d)$  agree up to and including terms in $d^{-2}$.
\examend
\end{exams}

\subsection{Isotropy $\Delta (S_{d-2}\times S_2)$, $q_1 = 2$, $d=k$, Types I and II}
\subsubsection{Type I family of spurious minima with isotropy $\Delta (S_{d-2} \times S_2)$}

Let $\bxi(d)$ denote the type I family of critical points in $M(d,d)^{\Delta (S_{d-2}\times S_2)}$ which give spurious minima for sufficiently large $d$. 
Using Newton's method, a corresponding family $\wbxi(d) \in M(d,d+1)^{\Delta (S_{d-2}\times S_2)}$ is constructed for the biased network. 
\begin{scriptsize}
\begin{table}[h]
\begin{tabular}{|c||c|c|c|}
\hline
Type I & Bias free network& Biased network& Loss decrease  \\
$d = k$          &  &   &        with bias    \\
Isotropy $\Delta (S_{d-2}\times S_2)$ & $\mathcal{L}(\bxi(d)) $& $\mathcal{L}(\wbxi(d))$ & $\Delta_{ub}(d)$ \\  \hline \hline
$10$        &$6.6993\times 10^{-2}$   & $6.5457\times 10^{-2}$  & $1.5356\times 10^{-3}$          \\ \hline
$100$        &$1.4107 \times 10^{-1}$   & $1.4071\times 10^{-1}$ & $3.6209\times 10^{-4}$          \\ \hline
$10^3$       &$1.6841 \times 10^{-1}$  & $1.6837 \times 10^{-1} $ & $4.2223\times 10^{-5}$            \\ \hline
$10^4$        &$1.77459 \times 10^{-1}$   & $1.77454`\times 10^{-1}$ & $4.4110\times 10^{-6}$           \\ \hline
$10^5$        &$1.803492 \times 10^{-1}$   & $1.803487\times 10^{-1}$ & $4.4712\times 10^{-7}$           \\ \hline
$10^6$        &$1.8126582 \times 10^{-1}$   & $i1.8126578\times 10^{-1}$ & $4.4904\times 10^{-8}$          \\ \hline
$10^7$        &$1.81555915 \times 10^{-1}$   & $1.81555910\times 10^{-1}$ & $4.4965\times 10^{-9}$          \\ \hline
$10^8$        &$1.816476737 \times 10^{-1}$   & $1.816476732\times 10^{-1}$ & $4.4984\times 10^{-10}$          \\ \hline
$10^9$        &$1.8167669281 \times 10^{-1}$ &  $1.8167669277\times 10^{-1}$ &$4.4990\times 1^{-11}$ \\ \hline
\end{tabular}
\vspace*{0.05in}
\caption{Computed values of the loss for type I critical point of isotropy  $\Delta (S_{d-2}\times S_2)$.
Loss is given to 5 or more significant figures and shows how the addition of bias to the student network always reduces the loss.
The third column gives the decrease of the loss when the network is biased.}
\end{table}
\end{scriptsize}
\begin{exams}
(1) Taking $d = 10^{128}$, a high precision computation gives
\[
\Delta_{ub}(10^{128}) = 0.044992875434563089568371515024571\ldots \times 10^{-128} 
\]
indicating that the first two terms in the \fps expansions of  $\mathcal{L}(\bxi(d))$ and $\mathcal{L}(\wbxi(d))$ are equal. \\
(2) The initial terms of an \fps expansion for $\bbeta_i(d)$, $i \in \is{2}$ are
\begin{eqnarray*}
\bbeta_0(d) & = & -0.620729501195\ldots d^{-1} + O(d^{-\frac{3}{2}})\\
\bbeta_1(d) & = & -1.2006004478530\ldots d^{-1} + O(d^{-\frac{3}{2}})
\end{eqnarray*}
(3) As for type I, $q_1 = 1$, both initial terms are negative and the 
families $\bxi(d)$ and $\wbxi(d)$ are $(2,0,2,2,2,2)$-matched.
\examend
\end{exams}

\subsubsection{Type II family of spurious minima  with isotropy $\Delta (S_{d-2} \times S_2)$}
Let $\bxi(d)$ denote the type II family of critical points in $M(d,d)^{\Delta (S_{d-2}\times S_2)}$ which give spurious minima for $d \ge 8$. 
Using Newton's method, a corresponding family $\wbxi(d) \in M(d,d+1)^{\Delta (S_{d-2}\times S_2)}$ is constructed for the biased network. 
\begin{scriptsize}
\begin{table}[h]
\begin{tabular}{|c||c|c|c|}
\hline
Type II & Bias free network& Biased network& Loss decrease  \\
$d = k$          &  &   &        with bias    \\
Isotropy $\Delta (S_{d-2}\times S_2)$ & $\mathcal{L}(\bxi(d)) $& $\mathcal{L}(\wbxi(d))$ & $\Delta_{ub}(d)$ \\  \hline \hline
$10$        &$3.5327\times 10^{-2}$   & $3.3175\times 10^{-2}$  & $2.1518\times 10^{-3}$          \\ \hline
$100$        &$5.4487 \times 10^{-3}$   & $5.3914\times 10^{-3}$ & $5.7244\times 10^{-5}$          \\ \hline
$10^3$       &$5.8473 \times 10^{-4}$  & $5.8410\times 10^{-1} $ & $6.298\times 10^{-7}$            \\ \hline
$10^4$        &$5.9221 \times 10^{-5}$   & $5.9215\times 10^{-1}$ & $6.2546\times 10^{-9}$           \\ \hline
$10^5$        &$5.93992 \times 10^{-6}$   & $5.93985\times 10^{-6}$ & $6.2173\times 10^{-11}$           \\ \hline
$10^6$        &$5.944931 \times 10^{-7}$   & $5.944925\times 10^{-1}$ & $6.2032\times 10^{-13}$          \\ \hline
$10^7$        &$5.9464570 \times 10^{-8}$   & $5.9464563\times 10^{-8}$ & $6.1986\times 10^{-15}$          \\ \hline
$10^8$        &$5.94693333 \times 10^{-9}$   & $5.94693327\times 10^{-1}$ & $6.1971\times 10^{-17}$          \\ \hline
$10^9$        &$5.947083367\times 10^{-10}$ &$5.947083361\times 10^{-10}$ &$6.1966\times 10^{-19}$ \\ \hline
\end{tabular}
\vspace*{0.05in}
\caption{Computed values of the loss for type II critical point of isotropy  $\Delta (S_{d-2}\times S_2)$.
Loss is given to 5 or more significant figures and shows how the addition of bias to the student network always reduces the loss.
The third column gives the decrease of the loss when the network is biased.}
\end{table}
\end{scriptsize}
\begin{exams}
(1) Taking $d = 10^{128}$, a high precision computation gives
\[
\Delta_{ub}(10^{128}) = 0.6196365153963871523492\ldots \times 10^{-256} 
\]
indicating that the first two terms in the FPS expansions of  $\mathcal{L}(\bxi(d))$ and $\mathcal{L}(\wbxi(d))$ are equal.\\
(2) The initial terms of an \fps expansion for $\bbeta_i(d)$, $i \in \is{2}$ are 
\begin{eqnarray*}
\bbeta_0(d) & = & -7.829516873488713897737135457431\ldots d^{-2} + O(d^{-\frac{5}{2}})\\
\bbeta_1(d) & = &  2.611667296553586228331997118692\ldots d^{-1} + O(d^{-\frac{3}{2}})
\end{eqnarray*}
and have opposite sign.\\
(3) Similar to the type II family of critical points with isotropy $\Delta S_{d-1}$, the families $\bxi(d), \wbxi(d)$ are $(4,2,2,4,2,2)$-matched.
\examend
\end{exams}
\subsection{Isotropy $\Delta (S_{d-3}\times S_3)$, $q_1 = 3$, $d=k$, type II}
The final family of examples has isotropy $\Delta (S_{d-3}\times S_3)$. 
Only the type II case is considered and this follows the pattern described above for $q_1 = 1,2$. 
Similar remarks hold for type I when $q_1 = 3$. 

The `seed' for the type II family is the critical point with $d = 10$ 
and isotropy $\Delta (S_{7}\times S_3)$.
This is obtained by evolving along a path in the fixed point space in the unbiased network from the type II point with 
isotropy $\Delta (S_{8}\times S_2)$ to the type II critical point with isotropy $\Delta (S_{7}\times S_3)$. That is, if we write the
isotropy as $\Delta (S_{q_0} \times S_{q1})$, we vary $(q_0,q_1)$ linearly along a line segment so that $q_0 + q_1 = d  = 10$,
$q_0$ decreases from $8$ to $7$, and $q_1$ increases from $2$ to $3$. This path lies in the 6-dimensional fixed point space $M(10,10)^{\Delta (S_{7}\times S_3)}$. 
If $q_1 = 1$, the fixed point space is 5-dimensional, the critical point equations are different and we cannot path evolve from $q_1 = 1$ to $q_1 = 2$.
The same method can be used for the biased network and this provides a simple consistency check (is the result of incrementing first and then evolving the same as evolving first then incrementing?).
\begin{scriptsize}
\begin{table}[h]
\begin{tabular}{|c||c|c|c|}
\hline
Type II & Bias free network& Biased network& Loss decrease  \\
$d = k$          &  &   &        with bias    \\
Isotropy $\Delta (S_{d-3}\times S_3)$ & $\mathcal{L}(\bxi(d)) $& $\mathcal{L}(\wbxi(d))$ & $\Delta_{ub}(d)$ \\  \hline \hline
$10$        &$5.0240\times 10^{-2}$   & $4.7284\times 10^{-2}$  & $2.9556\times 10^{-3}$          \\ \hline
$100$        &$8.1326 \times 10^{-3}$   & $8.0485\times 10^{-3}$ & $8.4097\times 10^{-5}$          \\ \hline
$10^3$       &$8.7666 \times 10^{-4}$  & $8.7572\times 10^{-4} $ & $9.4278\times 10^{-7}$            \\ \hline
$10^4$        &$ 8.8828 \times 10^{-5}$   & $8.8818\times 10^{-1}$ & $9.3801\times 10^{-9}$           \\ \hline
$10^5$        &$8.9098 \times 10^{-6}$   & $8.9097\times 10^{-6}$ & $9.3258\times 10^{-11}$           \\ \hline
$10^6$        &$8.91739 \times 10^{-7}$   & $8.91738\times 10^{-1}$ & $9.3048\times 10^{-13}$          \\ \hline
$10^7$        &$8.919685 \times 10^{-8}$   & $8.919684\times 10^{-8}$ & $9.2978\times 10^{-15}$          \\ \hline
$10^8$        &$8.9204000\times 10^{-9}$   & $8.9203999\times 10^{-1}$ & $9.2956\times 10^{-17}$          \\ \hline
$10^9$        &$8.92062505\times 10^{-10}$ &$8.92062504\times 10^{-10}$ &$9.2949\times 10^{-19}$ \\ \hline
\end{tabular}
\vspace*{0.05in}
\caption{Computed values of the loss for type II critical point of isotropy  $\Delta (S_{d-3}\times S_3)$.
Loss is given to 5 or more significant figures and shows how the addition of bias to the student network always reduces the loss.
The third column gives the decrease of the loss when the network is biased.}
\end{table}
\end{scriptsize}
\begin{exams}
(1) Using the result for $d = 10^9$ and assuming the existence of the FPS, 
\[
\Delta_{ub}(10^{9}) = 0.9295\ldots \times 10^{-18} + O(d^{-\frac{3}{2}})
\]
indicating that the first two terms in the FPS expansions of  $\mathcal{L}(\bxi(d))$ and $\mathcal{L}(\wbxi(d))$ are equal.\\
(2) Using computations for $d = 10^9$, the initial terms of an \fps expansion for $\bbeta_i(d)$, $i \in \is{2}$, are
\begin{eqnarray*}
\bbeta_0(d) & = & -11.74\ldots d^{-2} + O(d^{-\frac{5}{2}})\\
\bbeta_1(d) & = &  2.612\ldots d^{-1} + O(d^{-\frac{3}{2}})
\end{eqnarray*}
and have opposite sign.\\
(3) The families $\bxi(d)$ and $\wbxi(d)$ are $(4,2,2,4,2,2)$-matched.
\examend
\end{exams}

\subsection{Comments on numerical methods: comparison with bias free networks}
Rather than using exact Newton's method, we avoid computing the many second derivatives of the loss function by using 
approximations to the second derivatives. Specifically, partial derivatives of $f(\xx) = \grad{\mathcal{L}}(\xx)$
are approximated by $[f(\xx+h \is{e}_i) - f(\xx)]/h$ where
$\xx \in \real^n$, $(\is{e}_1,\ldots,\is{e}_n)$ is an orthonormal basis of $\real^n$ and $h$ is a ``small'' scalar, dependent on the precision.  This method works well 
and reliably for bias-free ReLU networks. 
However, matters are more challenging when bias is introduced. There are two potential problems. \\
(1) The presence of terms such as $-\bgamma_i\cot(\lambda_{ii}) + \bbeta_i \csc(\lambda_{ii})$ when $\lambda_{ii}$
converges to $0$ or $\pi$ as $d \arr \infty$ (as it does in all the examples). This only has a happy ending if $ |\bgamma_i -  \bbeta_i |$ decays fast enough to zero as $d \arr \infty$.  In particular, if
$\bgamma_i = 0$, then $\bbeta_i$ needs to have a fast enough decay to offset the singular behaviour of either the cotangent or cosecant at $0$ or $\pi$ (see also Section~\ref{Sec: Owen}).\\
(2) Formula~\Refb{EQ: GENT} seems unlikely to behave well with respect to the approximate Newton method sketched above.

With a little care in choosing the increment $h$, the second problem can be managed within the framework of the approximate Newton method, at least for values of $d$ that 
are not too large and if the $\bgamma$ terms are assumed to be zero. 
As an example, if $d = 10^{512}$ and we take the type I family of maximum isotropy spurious minima for the unbiased network and try to use Newton's method, then
the iteration fails if $h = 10^{-480}$ but converges to the correct critical point if $h = 10^{-600}$. Precision in both case was 16384 bit --- approximately 
4900 decimal places.  The iteration converges significantly faster if $h = 10^{-1000}$ with errors (value of LHS of the associated critical point equation) ranging from about $10^{-4400}$ to $10^{-4934}$, much the same as with $h = 10^{-600}$.

As it is, 
there is a simple resolution of the problem obtained by using the easily computed exact derivatives of the integral that is being represented using the $T$-function 
(see Propositions~\ref{lem: Hderiv} and \ref{prop: tildederiv}).

The first problem appears harder to manage but more interesting. One possibility is that the result of using noisy data is not to have convergence to spurious minima 
when the $\bbeta_i$ terms are not decaying
fast enough in $d$ (this is potentially a mechanism for removing spurious minima when $k > d$). 

Summarizing. if approximate Newton routines are used, then it is important to choose the increments $h$ carefully so as not to leave the region of subanalyticity. 
This is potentially a serious problem 
for approximating derivatives of terms depending on the $T$-function on account of the ratios of $\bbeta$ terms that occur. 

The complete set of the critical point data used in this section is available on request as a ``data.h" file that can be incorporated into a high precision program.
Also available is an \emph{MPFR} program that includes tested code for the computation of the $T$-function and the generalized $T$-function (see Appendix C).  

\appendix
\section{Review of the error function $\erf{x}$}\label{append: error}
We state a number of elementary results on the error function that are used in Section~\ref{sec: bias}.

Recall the definitions of the error and complementary error functions:
\begin{eqnarray}
\label{EQ0a}
\erf{x}& = & \frac{2}{\sqrt{\pi}}\int_{0}^x e^{-t^2}dt,\quad
\erfc{x}  =  1 - \erf{x}
\end{eqnarray}
\begin{rem}\label{rem: erfc}
The error function is an \emph{odd} function of $x$ but the complementary error function is neither odd nor even. Integrals such as $\int e^{-(at+b)^2} dt$ have invariance under the symmetry $(a,b) \leftrightarrow (-a,-b)$ and this carries
through to the error function defined by the integral (see~\Refb{EQ1}---note the appearance of $a$ twice). This is not the case if we work with the complementary error function. \rend
\end{rem}
Related to the error function are $G'(x)=\frac{1}{\sqrt{2\pi}} e^{-\frac{x^2}{2}}$, the univariate density of the standard normal distribution, 
and $G(x) = \int_{-\infty}^x G'(t)dt$ the cumulative density function:
\begin{eqnarray}
\label{EQ0b}
G(x) & = & \frac{1}{2}\left[1 + \erfb{\frac{x}{\sqrt{2}}}\right] \\
\label{EQ0c}
\frac{d}{dx}\erfb{\frac{x}{\sqrt{2}}}& =& 2 G'(x) = \sqrt{\frac{2}{\pi}} e^{-\frac{x^2}{2}} 
\end{eqnarray}

Next, some elementary results used in Section~\ref{sec: bias} about integrals related to the error function (see~\cite{Korotkov2019,NgGeller1969,NgGeller1971} for general tables).
To this end, $a \ne 0$ and $b$ will denote real constants or real functions depending on a parameter.
\begin{eqnarray}
\label{EQ1}
\int e^{-(ax + b)^2}dx&=&\frac{\sqrt{\pi}}{2a} \erf{ax+b}
\end{eqnarray}
and so
\begin{equation}
\label{EQ1a}
\frac{d}{dx}\erf{ax+b} = \frac{2a}{\sqrt{\pi}}e^{-(ax+b)^2}
\end{equation}
More generally, if $f$ is continuously differentiable,
\begin{equation}
\label{EQ1b}
\frac{d}{dx}\erf{f(x)} = \frac{2f'(x)}{\sqrt{\pi}}e^{-f(x)^2}
\end{equation}

\begin{exam}
In terms of definite integrals, we have
\begin{eqnarray}
\label{EQ2a}
\int_{0}^x e^{-(at + b)^2}dt&=&\frac{\sqrt{\pi}}{2a} \big(\erf{ax+b}-\erf{b}\big) \\
\label{EQ2b}
\int_x^\infty  e^{-(at+b)^2}dt&=& \frac{\sqrt{\pi}}{2a} \big(\sgn(a) - \erf{ax+b}\big) 
\end{eqnarray}
If $\sgn(a) = 1$, then $\frac{\sqrt{\pi}}{2a}\big(\sgn(a) - \erf{ax+b}\big)=\erfc{ax+b}$. \examend
\end{exam}
\begin{equation}
\label{EQ3}
\int x e^{-(ax+b)^2} dx=-\frac{1}{2a^2}\big[e^{-(ax+b)^2} + \sqrt{\pi} b\,\erf{ax+b}\big] 
\end{equation}
\begin{exam}\label{Ex: EQ3}
Using~\Refb{EQ3}, we have
\[
\int_X^\infty t e^{-(at+b)^2} dt = \frac{1}{2a^2}\left[e^{-(aX+b)^2} + \sqrt{\pi}b\big[\erf{aX+b}- \sgn(a)\big]\right]
\]
Note the invariance under the symmetry $(a,b) \leftrightarrow (-a,-b)$. \examend
\end{exam}
\begin{multline}
\label{EQ4}
\int x^2 e^{-(ax+b)^2} dx= \\
\frac{1}{2a^3}\left[(b-ax)e^{-(ax+b)^2} + \frac{\sqrt{\pi}(2b^2+1)}{2} \erf{ax+b}\right]  
\end{multline}

\begin{multline}
\label{EQ4a}
\int x^3e^{-(ax+b)^2} dx=\\
\begin{aligned}
& -\frac{\sqrt{\pi}}{2a^4}\big(b^3 + \frac{3b}{2}\big) \erf{ax+b}  
 -\frac{1}{2a^4}e^{-(ax+b)^2}\left[a^2x^2 - abx + b^2 + 1\right] 
\end{aligned}
\end{multline}
Integrals involving $\erf{ax+b}$ can often be computed using integration by parts and~\Refb{EQ1a}. For example,
\begin{multline}
\label{EQ4b}
\int_0^x \erf{at +  b}dt  =  a^{-1}\left[(ax+b)\erf{ax+b} -b\,\erf{b}\right]\\
\begin{aligned}
 +\frac{1}{a\sqrt{\pi}}\left[e^{-(ax+b)^2} - e^{-b^2}\right] 
\end{aligned}
\end{multline}
The following result, which uses Example~\ref{Ex: EQ3}, is used for computing $f(\ww,\vv)$ when there is non-zero bias.
\begin{multline}
\label{EQ5a}
\int_X^\infty t^2\erfb{\frac{at+b}{\sqrt{2}}}e^{-\frac{t^2}{2}} dt = Xe^{-\frac{X^2}{2}}\erfb{\frac{aX+b}{\sqrt{2}}}+\\
\begin{aligned}
& \frac{a^2b}{(a^2+1)^{\frac{3}{2}}} e^{-\frac{b^2}{2(1+a^2)}} \left[\erfb{\frac{(1+a^2)X+ba}{\sqrt{2(1+a^2)}}}-1\right] + \\
& \sqrt{\frac{2}{\pi}}\frac{a}{a^2+1} e^{-\frac{b^2+((1+a^2)X+ba)^2}{2(1+a^2)}} + \int_X^\infty e^{-\frac{t^2}{2}}\erfb{\frac{at+b}{\sqrt{2}}}dt
\end{aligned}
\end{multline}
\begin{rem}
For $X \in (-\infty,\infty)$, the integral $\int_X^\infty e^{-\frac{t^2}{2}}\erfb{\frac{at+b}{\sqrt{2}}}dt$ cannot be evaluated in
terms of the error function and exponentials. However, the integral can be represented in terms of the \emph{Owen T-function}~\cite{Owen1956,Owen1980}---see Section~\ref{sec: bias}. \rend
\end{rem}

\section{High precision computation of Owen's $T$-function}\label{appendB: OwenT}
High precision computations were done using the \emph{MPFR} package which uses \emph{GMP} and \emph{GCC}. 
As far as we know, Owen's $T$-function is currently not available as a 
standard built-in function in \emph{MPFR} or any other high precision package. However, it is available
in several (low precision) packages; for example, the statistical software \emph{R}. Recently, a package was written by Komelj that uses 
the \emph{Rmpmr} high precision extension of \emph{R}.  
This package uses his method for the computation of the $T$ function~\cite{Komelj2023} and gives a fast high precision computation of
$T(h,a)$ for all values of $h,a$. Based on the analysis in~\cite{Komelj2023}, we have written \emph{MPFR} code for high precision computation of the $T$-function with
a view to applications to biased ReLU networks.  We give the basic method below and refer the reader to Komelj's article~\cite{Komelj2023} for details we omit.

The aim is to compute $T(h,a)$ for $h, a \in \real$. We use the two algorithms (40a) and a slightly modified (40c) from~\cite[Theorem 5]{Komelj2023}\footnote{If $a < -1$, apply the second algorithm to compute $T(h,-a)$ and then $T(h,a) =-T(h,-a)$. Note that both (40a,40c) converge for all $a \in \real$ but that (40a) (resp. (40c)) converges fastest if $|a| \le 1$ (resp.~$a > 1$).
In the proof of ~\cite[Theorem 5]{Komelj2023}, reference is made to Owen's formula (17) and that should be (15) (\emph{op.~cit.}). As stated, \cite[(40c)]{Komelj2023} is not quite correct---this does not affect any of the numerical 
results in~\cite{Komelj2023} which depend on an application of (15).}.
\begin{scriptsize}
\begin{eqnarray*}
T(h,a)& =& \frac{tan^{-1}(a)}{2\pi} - \frac{a}{2\pi(1+a^2)}\sum_{k=0}^\infty \frac{(2k)!!}{(2k+1)!!}P(k+1,q)\left(\frac{a^2}{1+a^2}\right)^k,\; \text{if } |a| \le 1, \\
& =& U(h,a) -\frac{1}{4} + \frac{tan^{-1}(a)}{2\pi} + \frac{a}{2\pi(1+a^2)}\sum_{k=0}^\infty \frac{(2k)!!}{(2k+1)!!}P(k+1,q)\left(\frac{1}{1+a^2}\right)^k,\; \text{if } a > 1.
\end{eqnarray*}
\end{scriptsize}
where  
\begin{eqnarray*}
q& =& \frac{1}{2}(1+a^2)h^2\\
U(h,a)&=& \frac{1}{4}\left[1 -\erfb{\frac{h}{\sqrt{2}}}\erfb{\frac{ha}{\sqrt{2}}}\right],\; a \ge 0\\
\end{eqnarray*}
and $P(k+1,q)$ is the \emph{lower regularized Gamma function} defined for $k \ge 0$ by
\[
P(k+1,q) = \frac{\int_0^q t^{-k}e^{-t}\,dt}{\Gamma(k+1)} \in (0,1)
\]
Noting the elementary identities $P(k+1,q) = P(k,q) - \frac{q^ae^{-q}}{\Gamma(k+1)}$, $P(1,q) = 1 - e^{-q}$, 
recursively define sequences $(B_k)_{k \ge 0},(D_k)_{k \ge 0} = (P(k+1,q)_{k\ge 0}$ by
\begin{enumerate}
\item $B_0 = e^{-q}, D_0 = 1 -B_0$.
\item $B_{k+1} = \frac{qB_k}{k+1}$, $D_{k+1} = D_k - B_{k+1}$
\end{enumerate}
This leads directly to a recursive definition of the terms in the infinite series 
$\sum_{k=0}^\infty \frac{(2k)!!}{(2k+1)!!}P(k+1,q)\left(\frac{a^2}{1+a^2}\right)^k$ and the corresponding partial sums $S_k$. 
Noting the positivity of the terms,
and following~\cite{Komelj2023}, the computation is terminated when $S_{k+1} < S_k$  and the sum is taken to be $S_k$.  We refer to~\cite{Komelj2023} for more details. In our implementation, the partial sums are slightly different from those used in~\cite{Komelj2023}, On initialization, we also define the
sequence $(\frac{(2k)!!}{(2k+1)!!})_{k \le 10,000}$ of double factorials.  
\begin{rem}
Although the upper and lower regularized Gamma functions are used in~\cite{Komelj2023}, 
the \emph{MPFR} implementation of the upper and lower regularized Gamma functions is not used. 
Indeed, as is commented in the \emph{MPFR} manual~\cite[mpfr\_gamma\_inc, page 32]{GNUMPFR}, the \emph{MPFR} implementation of the incremental gamma 
function can be slow for large values of the arguments.
\rend
\end{rem}

\section{Generalized $T$-function}\label{AppendixC}
Recall from Section~\ref{sec: bias} that the generalized $T$-function~\cite{Prezmo} is defined by
\begin{equation}
T(h,a,b)  = \frac{1}{2\sqrt{2\pi}}\int_h^\infty e^{-\frac{t^2}{2}}\erfb{\frac{at + b}{\sqrt{2}}}dt  
\end{equation}
and that
\begin{equation*}
2\pi T(\bgamma,-cot(\theta),\bbeta \csc(\theta)) = \sqrt{\frac{\pi}{2}}\int_{\bgamma}^\infty e^{-\frac{t^2}{2}}\erfb{\frac{-t\cot(\theta) + \bbeta\csc(\theta)}{\sqrt{2}}}dt
\end{equation*}
It follows from~\cite{Prezmo} that
\begin{multline*}\label{EQ: fullT0}
\begin{aligned}
\hspace*{-0.13in}T(h,a,b)  =  &\frac{1}{2\pi} \left[\tan^{-1}(a) -\tan^{-1}\left(a+\frac{b}{h}\right) - \tan^{-1}\left(a+\frac{h(1+a^2)}{b}\right)\right] \\
& + T\left(h,a+\frac{b}{h}\right) +T\left(\frac{b}{\sqrt{1+a^2}},a +\frac{h(1+a^2)}{b}\right) \\
& + \frac{1}{4}\erfb{\frac{b}{\sqrt{2(1+a^2)}}} 
\end{aligned}
\end{multline*}

On the other hand, Owen~\cite{Owen1980} gives a formula in terms of the CDF
\begin{multline*}
\int G'(x) G(ax +b) dx = T\left(x,\frac{b}{x\sqrt{1+a^2}}\right)\\ 
\begin{aligned}
&+ T\left(\frac{b}{\sqrt{1+a^2}},\frac{x\sqrt{1+a^2}}{b}\right)  -T\left(x,\frac{ax +b}{x}\right)\\
& - T\left(\frac{b}{\sqrt{1+a^2}},\frac{ab+ x(1+a^2)}{b}\right) + G(x)G\left(\frac{b}{\sqrt{1+a^2}}\right)
\end{aligned}
\end{multline*}
Since $G'(t) =\frac{1}{\sqrt{2\pi}}e^{-\frac{t^2}{2}}$ and $G(t) = \frac{1}{2}\left[1+\erf{\frac{t}{\sqrt{2}}}\right]$,
\begin{equation*}
\int_h^\infty G'(t)G(at+b)dt = T(h,a,b) + \frac{1}{4}\erfcb{\frac{h}{\sqrt{2}}}
\end{equation*}
and so
\begin{equation}
T(h,a,b) = \int_h^\infty G'(t)G(at+b)dt -\frac{1}{4} \erfcb{\frac{h}{\sqrt{2}}}
\end{equation}
Using Owen's formula for the indefinite integral $\int G'(x) G(ax +b) dx $, the contribution at $x = \infty$ is 
$$\frac{1}{2}\left[1+\erfb{\frac{b}{\sqrt{2(1+a^2}}}\right]$$ and that at $x = h$ is
\begin{eqnarray*}
&&T\left(h,\frac{b}{h\sqrt{1+a^2}}\right) + T\left(\frac{b}{\sqrt{1+a^2}},\frac{h\sqrt{1+a^2}}{b}\right)\\
&&  -T\left(h,a+\frac{b}{h}\right) - T\left(\frac{b}{\sqrt{1+a^2}},a +\frac{h(1+a^2)}{b}\right)\\
&& + \frac{1}{4}\left(1 + \erfb{\frac{h}{\sqrt{2}}}\right)\left(1 + \erfb{\frac{b}{\sqrt{2(1+a^2))}}}\right)
\end{eqnarray*}
Hence
\begin{eqnarray*}
T(h,a,b) & =  & T\left(h,a+\frac{b}{h}\right) + T\left(\frac{b}{\sqrt{1+a^2}},a +\frac{h(1+a^2)}{b}\right)\\
	&& -T\left(h,\frac{b}{h\sqrt{1+a^2}}\right) - T\left(\frac{b}{\sqrt{1+a^2}},\frac{h\sqrt{1+a^2}}{b}\right)\\
	&& +\frac{1}{4}\erfb{\frac{b}{\sqrt{2(1+a^2)}}}\erfcb{\frac{h}{\sqrt{2}}}
\end{eqnarray*}

Unlike the formula used in the text, this formula uses 4 instances of the $T$ function. However, using \cite[Table II, 2.8,2.9]{Owen1980}, it can be shown that 
$-T\left(h,\frac{b}{h\sqrt{1+a^2}}\right) - T\left(\frac{b}{\sqrt{1+a^2}},\frac{h\sqrt{1+a^2}}{b}\right)$ can be expressed in terms of inverse tangents and error functions
and so obtain the formula given in~\cite{Prezmo}.

As well as analytically, we have verified that the two expressions for the generalized $T$-function give the same value for $T(h,a,b)$ to 615 decimal places when working in 2048 bit precision using \emph{MPFR}.


\begin{thebibliography}{99}
\bibitem{ArjevaniField2019a} Y Arjevani and M Field. `On the principle of least symmetry breaking in shallow relu models'. arXiv:1912.11939.
\bibitem{ArjevaniField2020b} Y Arjevani and M Field. `Analytic characterization of the hessian in shallow relu models: A tale of symmetry', \emph{NeurIPS 2020}.
\bibitem{ArjevaniField2021}  Y Arjevani and M Field. `Analytic study of families of spurious minima in two-layer ReLU neural networks: a tale of symmetry II', \emph{Proc.NeurIPS 2021}, 15162-–15174
\bibitem{ArjevaniField2021x}  Y Arjevani and M Field.`Symmetry and critical points for a model shallow neural network', \emph{Physica D} {\bf 427} (2021). arXiv:2003.10576, 2020.
\bibitem{ArjevaniField2022b} Y Arjevani and M Field. `Annihilation of Spurious Minima in Two-Layer ReLU Networks', \emph{Proc.NeurIPS 2022}
\bibitem{ArjevaniField2022a} Y Arjevani and M Field. `Equivariant bifurcation, quadratic equivariants, and symmetry breaking for the standard representation of $S_k$', \emph{Nonlinearity} {\bf 35}(6) (2022), 2809. 
\bibitem{BierMil1998} E Bierstone and P Milman. `Subanalytic Geometry', \emph{Model theory, algebra and geometry (ed. Haskell, D et al.)} MSRI, 39 (1998), 151--172
\bibitem{BrutzkusGloberson2017} A Brutzkus and A Globerson. `Globally optimal gradient descent for a convnet with gaussian inputs', \emph{Proc. of the 34th Int. Conf. on Machine Learning} {\bf 70} (2017), 605--614.
\bibitem{BrutzkusGloberson2018} A Brutzkus, A Globerson, E Malach, \& S Shalev-Shwartz. `SGD Learns Over-parameterized Networks that Provably
Generalize on Linearly Separable Data', (in \emph{6th International Conference on Learning Representations}, ICLR 2018, Vancouver, BC, Canada, April 30-- May 3, 2018, Conf. Track Proc., 2018).
\bibitem{ChooSaul2009} Y Cho and L K Saul. `Kernel Methods for Deep Learning', \emph{Advances in neural information processing systems} (2009), 342--350.
\bibitem{Komelj2023} J Komelj. `The Bivariate Normal Integral via Owen's $T$ Function as a Modified Euler's Arctangent series', \emph{Amer. Jnr. of Comp. Math.} {\bf 13}(4) (2023), 476--504 (also
arXiv:2312.00011v1).
\bibitem{Korotkov2019} N E Korotkov. \emph{Table of integrals related to error function} (based on Russian text by N E Korotkov, edited by A N Korotkov, 2019).
\bibitem{GNUMPFR} GNU MPFR `The Multiple Precision Floating-Point Reliable Library', (version 4.2.2, available at \text{mpfr.org/mpfr-current/\#doc}).
\bibitem{Milnor1968} J Milnor. \emph{Singular points of complex hypersurfaces} (Annals of Math.~Studies~61, Princeton University Press, 1968).
\bibitem{NgGeller1969} E W Ng and M Geller. `A Table of Integrals of the Error Function', \emph{Jnr. Research of the Nat. Bureau of Stand.---B. Math. Sci.} {\bf 73B}(1) (1969), 1--26.
\bibitem{NgGeller1971} E W Ng and M Geller. `A Table of Integrals of the Error Function. II. Additions and Corrections.', \emph{Jnr. Research of the Nat. Bureau of Stand.---B. Math. Sci.} {\bf 75B}(3) (1971), 149--163.
\bibitem{Owen1956} D B Owen. `Tables for computing bivariate normal probabilities', \emph{Ann. Math. Statist.} {\bf 27} (1956), 1075--1090
\bibitem{Owen1980} D B Owen. `A table of normal integrals', \emph{Commun. Statist.--Simula.  and Comput.} 9:4 (1980), 389--419.
\emph{9th Innovations in Theoretical Computer Science Conference, ITCS 2018}, January 11-14, 2018, Cambridge, MA, {USA}, 2018), 22:1--22:19.
\bibitem{Parusinski1994} A Parusi\'{n}ski. `Subanaytic functions', \emph{Trans. Amer. Math. Soc.}, {\bf 344}(2) (12994), 583--595.
\bibitem{Pinkus1999} A Pinkus, `Approximation theory of the MLP model in neural networks', \emph{Acta Numer.} {\bf 8} (1999), 143--195.
\bibitem{Prezmo} Przemo (https://math.stackexchange.com/users/99778/przemo), Generalized Owen's T function, URL (version: 2019-03-28): https://math.stackexchange.com/q/3087504
\bibitem{SafranShamir2018} I Safran and O Shamir. `Spurious Local Minima are Common in Two-Layer ReLU Neural Networks',
\emph{Proc. of the 35th Int. Conf. on Machine Learning} {\bf 80} (2018), 4433--4441 (for data sets, see https://github.com/ItaySafran/OneLayerGDconvergence).
\bibitem{Safranetal2021} I Safran, G Yehudai, \& P Shamir. `The Effects of Mild Over-parametrization on the Optimization Landscape of Shallow ReLU Neural Networks',
\emph{Proc. of Machine Learning Research} {\bf 134} (2021), 1-–46.
\bibitem{ZSL2024} Y Zhang, A Saxe, \& P E Latham. `When are bias-free ReLU networks effectively linear networks?' \emph{arXiv:2406.12615v3}.
\end{thebibliography}
\end{document}